\DeclareMathOperator{\Avg}{Avg}
\def\arrvline{\hfil\kern\arraycolsep\vline\kern-\arraycolsep\hfilneg}
\def\fr#1#2{{\textstyle\frac{#1}{#2}}} % textstyle fraction
\newcommand{\cC}{\mathcal{C}}
\newcommand{\cD}{\mathcal{D}}
\newcommand{\cE}{\mathcal{E}}
\newcommand{\cF}{\mathcal{F}}
\newcommand{\cG}{\mathcal{G}}
\newcommand{\cL}{\mathcal{L}}
\newcommand{\cN}{\mathcal{N}}
\newcommand{\cP}{\mathcal{P}}
\newcommand{\cU}{\mathcal{U}}
\newcommand{\cX}{\mathcal{X}}
\newcommand{\cY}{\mathcal{Y}}
\newcommand{\cZ}{\mathcal{Z}}
\newcommand{\cW}{\mathcal{W}}
\newcommand{\cH}{\mathcal{H}}
\newcommand{\E}{\mathbb{E}}
\newcommand{\R}{\mathbb{R}}
\newcommand{\bI}{\mathbb{I}}
\newcommand{\Var}{\mathrm{Var}}
\DeclareMathOperator*{\argmin}{arg\,min}
\newcommand{\tx}{\tilde{x}}
\newcommand{\tS}{\tilde{S}}
\newcommand{\tcG}{\tilde{\cG}}
\newcommand{\tP}{\tilde{P}}
\newcommand{\tcP}{\tilde{\cP}}
\newcommand{\tig}{\tilde{g}}
\newcommand{\tih}{\tilde{h}}
\newcommand{\hnnS}[1]{h_{f,#1}^{\text{\tiny NCC}}}
\newcommand{\hnn}{\hnnS{S}}
\newcommand{\hS}{\hat{S}}
\newcommand{\hU}{\hat{U}}
\newcommand{\hcU}{\hat{\mathcal{U}}}
\newcommand{\bz}{{\bf z}}
\newcommand{\threepartdef}[6]
{
	\left\{
		\begin{array}{lll}
			#1 & \mbox{if } #2 \\
			#3 & \mbox{if } #4 \\
			#5 & \mbox{if } #6
		\end{array}
	\right.
}
\begin{document}

\title{Generalization Bounds for Few-Shot Transfer Learning with Pretrained Classifiers}

\author{\name Tomer Galanti \email galanti@mit.edu \\
       \addr McGovern Institute \\
       Massachusetts Institute of Technology\\
       Cambridge, MA, USA
       \AND
       \name Andr\'as Gy\"orgy \email agyorgy@google.com \\
       \addr Google DeepMind \\
       London, UK
       \AND
       \name Marcus Hutter \email mhutter@google.com \\
       \addr Google DeepMind \\
       London, UK}

\editor{}

\maketitle

\begin{abstract}
We study the ability of foundation models to learn representations for classification that are transferable to new, unseen classes. Recent results in the literature show that representations learned by a single classifier over many classes are competitive on few-shot learning problems with representations learned by special-purpose algorithms designed for such problems. We offer a theoretical explanation for this behavior based on the recently discovered phenomenon of class-feature-variability collapse, that is, that during the training of deep classification networks the feature embeddings of samples belonging to the same class tend to concentrate around their class means. More specifically, we show that the few-shot error of the learned feature map on new classes (defined as the classification error of the nearest class-center classifier using centers learned from a small number of random samples from each new class) is small in case of class-feature-variability collapse, under the assumption that the classes are selected independently from a fixed distribution. This suggests that foundation models can provide feature maps that are transferable to new downstream tasks, even with very few samples; to our knowledge, this is the first performance bound for transfer-learning that is non-vacuous in the few-shot setting. 
\end{abstract}

\begin{keywords}
Generalization bounds, foundation models, few-shot learning, transfer learning, neural collapse, class-features variability collapse.
\end{keywords}

%%%%%%%%%%%%%%%%%%%%%%%%%%%%%%%%%%%%%%%%%%%%%%%%%%%%%%%%%%%%%%%
\section{Introduction}
%%%%%%%%%%%%%%%%%%%%%%%%%%%%%%%%%%%%%%%%%%%%%%%%%%%%%%%%%%%%%%%

In a variety of machine learning applications, we have access to a limited amount of data from the task that we would like to solve, as labeled data is oftentimes scarce and/or expensive. In such scenarios, training directly on the available data is unlikely to produce a hypothesis that performs well on new, unseen test samples. A prominent solution to this problem is to apply transfer learning (see, e.g., \citealp{NIPS1994_0f840be9,pmlr-v27-bengio12a,NIPS2014_375c7134}). In transfer learning, we are typically given a large-scale source task (e.g., ImageNet ILSVRC, \citealp{ILSVRC15}) and a target task from which we encounter only a limited amount of data. While there are multiple approaches to transfer knowledge between tasks, a popular approach suggests to train a large neural network on a source classification task with a wide range of classes (such as ResNet50, \citealp{7780459}, MobileNet, \citealp{howard2017mobilenets} or the VGG network, \citealp{DBLP:journals/corr/SimonyanZ14a}), and then to train a relatively smaller network (e.g., a linear classifier or a shallow MLP) on top of the penultimate (also known as feature) layer of the pretrained model, using the data available in the target task.

Due to the effectiveness of this approach, transfer learning has become a central element in the machine learning toolbox. For instance, using pretrained feature maps is common practice in a variety of applications, including fine-grained classification \citep{Chen2019ThisLL,huang2020interpretable,yang2018learning}, object detection, \citep{Redmon_2016_CVPR,NIPS2015_14bfa6bb,He_2017_ICCV}, semantic segmentation \citep{7298965,conf/eccv/ChenZPSA18}, or medical imaging \citep{DBLP:journals/corr/abs-1904-00625}. 
In fact, due to the cumulative success of transfer learning, large pretrained models that can be effectively adapted to a wide variety of tasks~\citep{NEURIPS2020_1457c0d6,DBLP:journals/corr/abs-2102-12092} have recently been characterized as foundation models~\citep{DBLP:journals/corr/abs-2108-07258}, emphasizing their central role in solving various learning tasks.

Foundation models are typically pretrained on a source task at a time when the concrete details of the target task(s) are not -- or only partially -- available to the practitioner. Therefore, the ability to precondition or design the training regime of the foundation model to match the target task is limited. As a result, there may be a mismatch between the source and target tasks, such as a different number of classes or a different number of available samples. This suggests that a foundation model should be generic and applicable to a wide range of tasks.

On the other hand, when some specifics of the target tasks are known, often special-purpose algorithms are designed to utilize this information. Such an example is the problem of few-shot learning, when it is known in advance that the target problems come with a very small training set \citep{NIPS2016_90e13578,Ravi2017OptimizationAA,pmlr-v70-finn17a,lee2019meta}. While these specialized algorithms have significantly improved the state of the art, the recent work of \citet{DBLP:conf/eccv/TianWKTI20,Dhillon2020A} demonstrated that predictors trained on top of foundation models can also achieve state-of-the-art performance on few-shot learning benchmarks.

\paragraph{Contributions.} 

In previous work~\citep{galanti2022on}, we provided an explanation for the success of transfer learning between classification problems based on the phenomenon of neural collapse~\citep{Papyan24652}. Informally, neural collapse (see Section~\ref{sec:nc}) describes the training dynamics of deep networks for standard classification tasks, where multiple structural properties arise. A central property, known as class-features variability collapse, asserts that the features (the output of the penultimate layer of the deep network) associated with training samples belonging to the same class tend to concentrate around their class feature mean. We argued that this property generalizes to new data points and new classes (e.g., the target classes) when the model is trained on several classes with many samples each. Additionally, we showed that in the presence of neural collapse, training a linear classifier on top of the learned feature map can generalize well even with few samples. However, our analysis relied on assumptions about the feature maps that may be difficult to justify in practical situations.

In this paper we provide a stronger theoretical analysis for this problem without relying on this kind of assumptions. Instead of claiming that feature-variability collapse generalizes to new samples and classes, we argue that weaker notions of clustering generalize to new samples and classes. Specifically, we study the few-shot error of the learned feature map, which is the classification error of nearest class-center classifiers on top of the feature map with centers learned using a small number of random samples. We show that the few-shot error generalizes from training to test data (Lemma~\ref{prop:marginBoundA}), and upper bound the expected few-shot error between new classes using the average few-shot error between the source classes (Lemma~\ref{prop:marginBoundB}). Additionally, we show that the few-shot error on the training data can be upper bounded using the degree of feature-variability collapse (Lemmas~\ref{prop:cdnv_error}--\ref{prop:cdnv_errorG}). Based on this analysis, we prove that large neural networks that exhibit feature-variability collapse when trained on many classes and samples can be easily adapted to new downstream tasks with very few samples (Theorems~\ref{prop:marginBoundA} and~\ref{prop:marginBoundB}). While our results provide a proof of concept for the ability of pre-trained classifiers to easily adapt to new tasks with very few samples, our bounds are typically loose, akin to traditional generalization bounds.

\subsection{Related Work}

\paragraph{Generalization bounds for transfer and meta-learning.} Several theoretical frameworks have been proposed to study various multi-task learning settings \citep{baxter,10.5555/2946645.3007034,pmlr-v32-pentina14,Galanti2016ATF,NEURIPS2019_f4aa0dd9,du2021fewshot}. However, these frameworks all consider settings in which the learning algorithm is provided with a sequence of similar learning problems (e.g., problems selected independently from the same distribution over tasks), and therefore cannot be used to study the case where only a single source task is available, which is quite different from the target task.

In contrast, we propose a theoretical framework in which the learning algorithm is provided with one classification task and is required to learn a representation of the data that can be adapted to solve new unseen tasks using a limited amount of data. This framework is more closely aligned with the common practice of transfer learning using deep neural networks.

Furthermore, while the aforementioned papers indeed show that having a collection of similar source tasks are useful in learning feature vectors (essentially, as if the number of data points was the number of points in all the source tasks), they require that the final target task have enough samples to learn a classifier in the feature space. This is in stark contrast to our results as our bounds remain meaningful (and are especially derived for this case) when the number of samples in the target class is a small constant.

\paragraph{Feature clustering and transfer learning.} The connection between clustering in the feature space and few-shot transfer learning has been explored empirically by \citet{pmlr-v119-goldblum20a}: They demonstrated on multiple benchmark tasks that the more the feature embeddings of the classes are clustered the better the few-shot performance is. Motivated by this observation, they designed modified training algorithms that directly encourage clusterability, which led to better few-shot transfer performance for both in meta-learning and in classical training of foundation models.
As their notion of feature-clusterability is essentially the same as the class-features variability collapse we consider in this paper (although the mathematical details are somewhat different, with our measure being more local than theirs, see the discussion in Section~\ref{sec:nc}), our results also give a theoretical explanation to their empirical observations. 

\subsection{Notation} Throughout the paper we use the following notation. $\|\cdot\|$ denotes the Euclidean norm for vectors and the spectral norm for matrices. For matrices we also use $\|\cdot\|_F$ to denote the Frobenius norm. For a distribution $Q$ over $\cX \subset \R^d$ and $u: \cX \to \R^p$,  the mean and variance of $u(x)$ for $x\sim Q$ are denoted by $\mu_u(Q) := \E_{x \sim Q}[u(x)]$ and by $\Var_u(Q) := \E_{x \sim Q}[\|u(x)-\mu_u(Q)\|^2]$, where $\E_{x \sim Q}[\cdot]$ denotes the expectation operator according to $x$ with distribution $Q$ (we may omit $Q$ when it is clear from the context). For a finite set $A=\{a_i\}^{n}_{i=1}$, we denote $\Avg^{n}_{i=1}[a_i] := \frac{1}{n} \sum^{n}_{i=1}a_i$ and by $U[A]$ the uniform distribution over $A$. We write $x \sim A$ to denote $x \sim U[A]$ for a finite set $A$, and to simplify notation, in some similar cases we use $A$ in place of $U[A]$ where it does not cause confusion. For a positive integer $k$, we use $[k]:=\{1,\ldots,k\}$; to simplify the notation, we write $U[k]$ in place of $U[[k]]$.
For an event $E$, $\bI[E]$ denotes its indicator function  (equals 1 if the event holds and 0 otherwise). $\mathbf{0}$ denotes the zero-vector of appropriate dimension.
For a class of functions $\cH \subset \{h: \cW \to \cZ\}$ for some sets $\cW$ and $\cZ$ and a (multi-)set $W \subset \cW$, we denote $h(W):=\{h(w): w \in \cW\}$ for any $h \in \cH$ (i.e., $h$ is applied elementwise), and we also define $\cH(W):=\{h(w): w \in W, h \in \cH\}$. We use similar notation when $W$ is a vector. We use $\log$ to denote the natural logarithm.

%%%%%%%%%%%%%%%%%%%%%%%%%%%%%%%%%%%%%%%%%%%%%%%%%%%%%%%%%%%%%%%
\section{Problem Setup}\label{sec:setup}
%%%%%%%%%%%%%%%%%%%%%%%%%%%%%%%%%%%%%%%%%%%%%%%%%%%%%%%%%%%%%%%
We study a transfer learning setting in which a model is pretrained on a source task and later adapted to solve downstream tasks. To model this setup, we assume that the final downstream task is a balanced $k$-class classification problem, called the \emph{target} problem, and the auxiliary task, which is used to train the feature representation, is a balanced $l$-class classification problem, called the \emph{source} problem. Formally, a target task is defined by a distribution $P$ over samples $(x,y) \in \cX \times \cY_k$, where $\cX \subset \R^d$ is the instance space and $\cY_k := [k]$ is the label space. For a pair $(x,y)$ with distribution $P$, we denote by $P_c$ the class-conditional distribution of $x$ given $y=c$ (i.e., $P_c(\boldsymbol{\cdot}) = \Pr[x \in \boldsymbol{\cdot} \mid y=c]$).

Our goal is to learn a classifier $h:\cX\to \cY_k$ that minimizes the target test error
\begin{equation}
L_P(h) ~:=~ \E_{(x,y)\sim P}\big[\bI[h(x)\neq y]\big].
\end{equation}
The algorithm is provided with a labeled training dataset $S := \cup^{k}_{c=1}S_c$, where the sets $S_c:=\{x_{ci}\}^{n}_{i=1}$ are drawn from the class-conditional distributions $P_c$, but the true distributions $P_c$ (and $P$) are unknown. We assume that $P$ is a balanced distribution (i.e., $P[y=c]=1/k$) with class-conditional distributions $\{P_c\}^{k}_{c=1}$ and that the training data $S$ is also balanced over the $k$ classes (as described above).

When the number of training samples $n$ (for each class) is small, it may be difficult to train a classifier on $S$ that achieves good performance on the target task. To address this issue, we aim to learn a classifier of the form $h=g \circ f$, where $f \in \cF \subset \{f':\R^d \to \R^p\}$ is a feature map and $g \in \cG\subset \{g':\R^p \to \R^k\}$ is a classifier used on the feature space $\R^p$. The idea is to learn the feature map $f$ based on a different problem where more data is available, and then train the classifier $g$ to solve the hopefully simpler classification problem of predicting $y$ based on $f(x)$ instead of $x$. That is, $g$ is actually a function of the modified labeled training data $\cup^{k}_{c=1}\{f(x_{ci})\}_{i=1}^{n}$. To emphasize this dependence, we use the notation $g_{f,S}$ for the trained classifier based on the features, and $h_{f,S} = g_{f,S} \circ f$ for the full classifier.

We assume that the source task used to learn the feature map $f$ is a single $l$-class classification problem over the same sample space $\cX$, given by a distribution $\tP$. We are interested in finding a classifier $\tih:\cX \to \R^l$ of the form $\tih=\tig \circ f$, where $\tig \in \tcG \subset \{g':\R^p \to \R^l\}$ is a classifier over the feature space $f(\cX) := \{f(x):x \in \cX\}$. Given a training dataset $\tS:=\cup^{l}_{c=1}\{(\tx_{ci})\}_{i=1}^{m}$, we train the classifier $\tih$ on $\tS$ with the goal of minimizing the source test error, $L_{\tP}(\tih_{\tS})$.

Similarly to the target task, we assume that $\tP$ is balanced (i.e., $\tP[y=c]=1/l$) with class-conditionals $\{\tP_c\}^{l}_{c=1}$, and that the dataset $\tS=\cup^{l}_{c=1}\{\tx_{ci}\}^{m}_{i=1}$ is also balanced with $\tS_c := \{\tx_{ci}\}^{m}_{i=1} \sim \tP^{m}_c$. 
Finally, we assume that the \emph{classes in the source and target tasks are selected randomly}: that is, the elements of the sets of class-conditionals, $\tcP:=\{\tP_c\}^{l}_{c=1}$ and $\cP:=\{P_c\}^{k}_{c=1}$ for the source and target tasks, respectively, are independent and identically distributed (i.i.d.), drawn from a distribution $\cD$ over a set of class-conditional distributions $\cE$.
For simplicity, we could assume that the classification problems are all noiseless, that is, the supports of the class-conditional distributions in $\cE$ are disjoint. While this is not needed for our theorems to hold, it is needed for neural collapse to occur, which is essential so that our theoretical results provide an explanation for the success of few-shot learning. However, as our results are still meaningful under approximate neural collapse, which may still occur under some limited label noise, we keep our definition general.

In a typical transfer learning setting, the classifier $\tih$ is a deep neural network, and $f$ is the representation in the last internal layer of the network, a.k.a.\ the \emph{penultimate} or \emph{feature embedding} layer. The last layer of the network, $\tig$, is often a linear map, and the same is true for $g$ in the target problem. The learned feature map $f$ is called a \emph{foundation model} when it can be effectively used in a wide range of target tasks.

The effectiveness of a foundation model $f$ in dealing with downstream tasks can be measured by the transfer error of the trained classifier $h_{f,S}$, 
defined as the expected error $\E_{P\sim\cD}\E_{S: S_c \sim P_c^n}[L_P(h_{f,S})]$ taken over both the training and the evaluation on the random target task.
For concreteness, we focus on the simple \emph{nearest class-center (NCC)} classification rule
\begin{equation}
\hnn(x) ~:=~ \argmin_{c\in [C]} \|f(x) - \mu_{f}(S_c)\|,   \label{eq:ncc-classifier}
\end{equation}
that attributes $x$ to a class $c$ if its embedding $f(x)$ is closest to the empirical class center $\mu_f(S_c)$ amongst all centers $\{\mu_f(S_c)\}^{C}_{c=1}$.
Then the \emph{transfer error} of $f$ is defined as
\begin{equation}\label{eq:transfer_err}
\cL_{\cD}(f)~:=~\E_{P\sim\cD}\E_{S: S_c\sim P_c^n}[L_P(\hnn)],
\end{equation}
(the expectation is taken over the random choice of target tasks $P$ and the training data $S$ for the target class). Note that while the feature map $f$ is evaluated on the distribution of the target task $P$ determined by $\cD$, the training of $f$ in a foundation model is completely independent of this target. As the main contribution of the paper, we analyze the above setting and provide an explanation why transfer learning (or more precisely NCC classification) is effective through the concept of neural collapse.

\paragraph{Network architectures.} For concreteness, we focus on the case where $\cF$, the class of feature maps, is a set of depth-$q$ ReLU neural networks of the form $f(\boldsymbol{\cdot}) = W^q \sigma (W^{q-1} \dots \sigma(W^1 \boldsymbol{\cdot} )):\R^{d} \to \R^{p}$, where $\sigma(x):=\max\{0,x\}$ is the element-wise ReLU function, and $W^i \in \R^{d_{i+1}\times d_{i}}$ for $i \in [q]$, where $d_1=d$ and $d_{q+1}=p$. Our bounds depend on the norm of a network $f$, which is defined as 
\begin{equation}
\cC(f) ~:=~ 
\prod^{q}_{i=1} \|W^i\|_F,    
\end{equation}
where $\|\cdot \|_F$ is the Frobenius norm. This quantity upper bounds the Lipschitz constant of $f$ and is similar in fashion to other (slightly different) norms of neural networks \citep{golowich,10.5555/3295222.3295372}. 

\paragraph{}
Finally, we assume that the sample space $\cX$ is compact and define $B := \sup_{x\in \cX} \|x\|<\infty$. Combining this with the facts that $\cC(f)$ bounds the Lipschitz constant of a ReLU network $f\in \cF$ and that $f(\mathbf{0})=\mathbf{0}$, it follows that for any $f \in \cF$ and $x \in \cX$, 
\[
\|f(x)\| \le \cC(f)\cdot \|x\| \le \cC(f) B.
\]

%%%%%%%%%%%%%%%%%%%%%%%%%%%%%%%%%%%%%%%%%%%%%%%%%%%%%%%%%%%%%%%
\section{Neural Collapse}\label{sec:nc}
%%%%%%%%%%%%%%%%%%%%%%%%%%%%%%%%%%%%%%%%%%%%%%%%%%%%%%%%%%%%%%%

Neural collapse (NC) is a recently discovered phenomenon in deep learning \citep{Papyan24652}. It occurs in learning settings where, during the training of deep networks for standard classification tasks with deterministic labels, the feature embeddings (the output of the penultimate, or embedding, layer) associated with training samples belonging to the same class tend to concentrate around their means, or -- in other words -- the classes are clustered. Specifically, \citet{Papyan24652} observed that the ratio of the within-class variances and the distances between the class means converge to zero (NC1).\footnote{Note that condition NC1 of \citet{Papyan24652}, called \emph{variability collapse}, only requires that the average within-class variance tends to zero, and the condition on the normalized (or relative) variance we consider here only follows from having properly normalized and maximally distant class means as dictated by NC2. In the paper we refer to our more general and more meaningful notion as \emph{class-features variability collapse}.} They also noticed that, asymptotically, the class means are not only linearly separable, but are also maximally distant (and hence orthogonal) with the same norm when normalized properly (NC2). Furthermore, the weights of the last (linear) layer converge to the class means (NC3), and hence the behavior of the last-layer classifier (operating on the features) converges to that of the NCC decision rule (NC4). In other words, neural collapse is a phenomenon in which deep learning models exhibit a particular behavior during training, resulting in a concentration of feature embeddings around their mean for each class in the training set.

In this paper, we focus on two properties related to neural collapse: class-features variability collapse and the nearest class-center separability, both defined below. We will explore how these properties affect the performance of transfer learning and foundation models.

\paragraph{Class-features variability collapse.} In this work we use a simplified version of normalized class-features variability collapse, introduced in our initial paper \citep{galanti2022on}: For a feature map $f:\R^d \to \R^p$ and two distributions $Q_i,Q_j$ over $\cX \subset \R^d$, we define their \emph{class-distance normalized variance} (CDNV) as
\[
V_f(Q_i,Q_j) ~:=~ \frac{\Var_f(Q_i)}{\|\mu_f(Q_i)-\mu_f(Q_j)\|^2}. 
\]
Furthermore, the definition can be extended to finite sets $S_1, S_2 \subset \cX$ by defining 
\[
V_f(S_1,S_2)~:=~V_f(U[S_1],U[S_2]).
\]
This quantity essentially measures to what extent the feature vectors of samples from $Q_i$ and $Q_j$ are clustered in space. Suppose we have a learning algorithm that, given a training dataset $\tS$, returns a classifier $h^{(t)} = g^{(t)} \circ f^{(t)}$ after $t$ iterations, we say that it satisfies \emph{class-features variability collapse} if
\begin{equation}
\label{eq:nc1}
\lim_{t\to \infty} \Avg_{i\neq j \in [l]} [V_{f^{(t)}}(\tS_i,\tS_j)] = 0,    
\end{equation}
where $f^{(t)}$ is the penultimate layer of the neural network $h^{(t)} = g^{(t)} \circ f^{(t)}$ produced by the learning algorithm after $t$ iterations. Similarly to the measure proposed by \citet{Papyan24652}, in practice exact convergence to zero rarely occurs, but we show experimentally in Section~\ref{sec:experiments} that the left-hand side of the equation above does indeed decrease to some small value during training. \citet{https://doi.org/10.48550/arxiv.2202.09028} demonstrated empirically that the limit decreases significantly when the depth of the network $f$ is increased.

It is interesting to note that while our average CDNV measures pairwise separability of the classes (as we shall see), related measures of class-features clustering considered in the literature are much more global: In the context of measuring class-features clustering for few-shot learning,
\citet{pmlr-v119-goldblum20a} considered $\frac{l \cdot \Avg_{i \in [l]}[\Var_f(Q_i)]}{\Avg_{i\in [l]} [\|\mu_f(Q_i)-\mu_G\|^2]}$ where $\mu_G=\Avg_{i \in [l]} [\mu_f(Q_i)]$ is the global mean of the data, while \citet{Papyan24652} used the trace of $\Sigma_W \Sigma_B^\dagger/l$ to measure neural collapse where $\Sigma_W=\Avg_{i \in [l]} \E_{x \sim Q_i}[(x-\mu_f(Q_i))(x-\mu_f(Q_i))^\top]$ is the average within-class covariance matrix and
$\Sigma_B^\dagger$ is the Moore-Pensrose inverse of the average between-class covariance matrix $\Sigma_B=\Avg_{i \in [l]}[(\mu_f(Q_i)-\mu_G)(\mu_f(Q_i)-\mu_G)^\top]$.
Introducing a pairwise measure of separability instead of a global one greatly facilitates our analysis.

\paragraph{Nearest class-center separability.} A weaker notion of neural collapse, known as NCC separability, asserts that the features of samples from the same class can be accurately classified using the nearest class-center classification rule. 
To this end, the \emph{expected $\Delta$-margin NCC classification error} of a feature map $f:\R^{d} \to \R^{p}$ for two classes with class-conditional distributions $Q_i$ and $Q_j$ is defined as
\begin{equation}
\label{eq:EDelta}
E_{\Delta}(f;Q_i,Q_j) ~:=~ \E_{x_i \sim Q_i}\Big[\bI\big[\|f(x_i) - \mu_{f}(Q_j)\| \leq \|f(x_i) - \mu_{f}(Q_i)\| + \Delta\big]\Big],
\end{equation}
where $\Delta > 0$ is a margin of our choice. By setting $\Delta=0$, we simply obtain the expected classification error of the NCC classifier for the two-class classification problem of separating samples from $Q_i$ and $Q_j$ (not considering ties for the NCC classifier for simplicity).
In particular, given a dataset $\tS = \cup^{C}_{c=1} \tS_c$ and a mapping $f$, we can measure the degree of NCC separability by
$\Avg_{i\neq j \in [l]} E_{\Delta}(f;\tS_i,\tS_j)$ (where the average is taken over all ordered pairs of classes), which in turn is an upper bound on the average training loss. 

Suppose we have a learning algorithm that, given a training dataset $\tS$, returns a neural network $h^{(t)} = g^{(t)} \circ f^{(t)}$ after $t$ iterations, we say that it satisfies \emph{NCC separability} with margin $\Delta$ if the aforementioned upper bound on the classification error of the NCC classifier approaches zero as $t$ goes to infinity:
\begin{equation}
\label{eq:nc4}
\lim_{t\to \infty} \Avg_{i\neq j \in [l]}[E_\Delta(f^{(t)};\tS_i,\tS_j)] = 0.
\end{equation}
Here, again, $f^{(t)}$ is the penultimate layer of the neural network $h^{(t)} = g^{(t)} \circ f^{(t)}$.

As we show in Lemma~\ref{prop:cdnv_errorE}, NCC separability is a weaker notion of collapse than variability collapse, as the NCC error rate can be upper bounded in terms of the CDNV. However, the NCC error can be zero in cases where the CDNV is larger than zero (see the example after Lemma~\ref{prop:cdnv_errorE}).

\section{Generalization Bounds for Transfer Learning}\label{sec:analysis}

In our previous paper \citep{galanti2022on}, we introduced generalization bounds for transfer learning in the setting of Section~\ref{sec:setup}. Our analysis was divided into three main steps. First, we showed that if the CDNV on the training data, $\Avg_{i\neq j \in [l]}[V_f(\tS_i,\tS_j)]$, is small at the end of training, then we can expect that the CDNV over unseen samples, $\Avg_{i\neq j \in [l]}[V_f(\tP_i,\tP_j)]$, would also be small for the same classes when $m$ is sufficiently large (see Lemma~1 of \citealp{galanti2022on}). As a second step, we bounded the expected CDNV between two new target classes, $\E_{P_c\neq P_{c'}}[V_f(P_c,P_{c'})]$ (here the expectation is taken with respect to the random choice of the target class conditionals $P_c$ and $P_{c'}$ under the condition that they are different), using the CDNV between the source classes, $\Avg_{i\neq j \in [l]}[V_f(\tP_i,\tP_j)]$. Finally, we proved that $\cL_{\cD}(f) = \E_{P}\E_{S}[L_P(\hnn)] \leq C(k+\fr{k}{n}) \cdot \E_{P_c\neq P_{c'}}[V_f(P_c,P_{c'})]$, where $C>0$ is some constant and $\hnn$ is the NCC classifier defined in \eqref{eq:ncc-classifier}.

However, there are several limitations to this argument. The first issue arises from the fact that $\E_{P_c\neq P_{c'}}[V_f(P_c,P_{c'})]$ may be very large even when $\cL_{\cD}(f)$ is very small. For example, if the support $\cE$ of $\cD$ is infinite and $f$ is bounded, then the set ${\mu_f(P_c)}_{P_c \in \cE} \subset \R^{p}$ has at least one limit point, and therefore, $\inf_{P_{1}\neq P_{2} \in \cE} \| \mu_f(P_1)-\mu_f(P_2)\|=0$ and $\E_{P_c\neq P_{c'}}[V_f(P_c,P_{c'})]=\infty$. In addition, the upper bound on $\E_{P_c\neq P_{c'}}[V_f(P_c,P_{c'})]$ scales with the inverse of $\inf_{f \in \cF} \inf_{P_1 \neq P_2 \in \cE}\|\mu_f(P_1)-\mu_f(P_2)\|$, which may be very large even if $\cE$ is finite (for example, if $\cF$ contains a constant function).  

In this paper we aim to circumvent these issues and provide an upper bound on the transfer error $\cL_{\cD}(f)$ of $f$ which does not suffer from these limitations. Instead of bounding the term $\E_{P_c\neq P_{c'}}[V_f(P_c,P_{c'})]$, we bound $\cL_{\cD}(f)$ with the average margin error of the few-shot NCC classifier between pairs of source classes $i\neq j$ (Lemma~\ref{prop:marginBoundA}). As a next step, we show that each of these error terms can be bounded using the average margin error of few-shot NCC classifiers over the training data of classes $i$ and $j$ (Lemma~\ref{prop:marginBoundB}). Finally, we provide various upper bounds to these terms based on the CDNV between the two samples, $V_f(\tS_i,\tS_j)$ (Lemmas~\ref{prop:cdnv_error}-\ref{prop:cdnv_errorG}).
Combining these bounds we obtain the main results of this paper: upper bounds on the transfer error of $f$ in terms of the average CDNV between pairs of source classes, $\Avg_{i\neq j}[V_f(\tS_i,\tS_j)]$, which typically yields small values, as demonstrated empirically in Section~\ref{sec:experiments}.

Our first results, Theorem~\ref{thm:full_bound} covers the general case, while Theorem~\ref{thm:full_bound2} provides somewhat better results when the feature embeddings satisfy certain symmetry properties. The main proof components (i.e., the aforementioned lemmas) are presented in the next section, while the full proofs of the theorems are given in Section~\ref{app:fullbound}.

\begin{restatable}{theorem}{fullBound}\label{thm:full_bound} Let $\delta \in (0,1)$, $l \geq 2$, $1 \leq n \leq \sqrt{m}$ and let $\cF$ be a class of ReLU neural networks of depth $q$. Let $\Lambda := \min_{i\neq j \in [l]}\|\mu_f(\tS_i)-\mu_f(\tS_j)\|$ and define $\kappa:=\frac{m}{m-n}$. Then, with probability at least $1-\delta$ over the selection of the source class-conditionals $\tcP = \{\tP_c\}^{l}_{c=1}$ and the corresponding source training data $\tS_1\dots,\tS_l$, for any $f \in \cF$, we have
\begin{equation*}
\begin{aligned}
\mathcal{L}_{\cD}(f) &\le C_1 (k-1) \kappa \Avg_{i\neq j \in [l]}[V_f(\tS_i,\tS_j)]
+ (k-1)\kappa \delta^2 \\
&\quad+ 
C_2 \frac{(k-1)\lceil \cC(f) \rceil B \kappa n^2}{\Lambda\sqrt{m-n}}
\sqrt{(q+\log(p))\log(m)} \\
&\quad+ 
C_3 \frac{(k-1)\lceil \cC(f) \rceil B \kappa}{\Lambda\sqrt{m-n}}\sqrt{\log\left(\fr{l^2(\lceil \cC(f) \rceil+1)^2 \left(\left|\log \left(\fr{\Lambda}{40B}\right)\right|+2\right)^2}{\delta}\right)} \\
&\quad +
C_4 \frac{(k-1)\lceil \cC(f) \rceil B}{\Lambda \sqrt{l}}
\left(\!\sqrt{\log(l)(q + \log(p))} + \sqrt{\log\left(\fr{(\lceil \cC(f) \rceil+1)\left(\left|\log \left(\fr{\Lambda}{40B}\right)\right|+2\right)}{\delta}\right)}\ \right)\!,
\end{aligned}
\end{equation*}
where $C_1,C_2,C_3,C_4>0$ are appropriate constants.
\end{restatable}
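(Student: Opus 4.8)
The plan is to chain together the three families of lemmas flagged in Section~\ref{sec:analysis}, peeling the multiclass transfer error down to pairwise CDNV terms on the source training data. First I would reduce the multiclass NCC rule to its pairwise behaviour: by \eqref{eq:ncc-classifier}, the classifier $\hnn$ misclassifies a target sample from class $c$ only when some rival center $\mu_f(S_{c'})$ is at least as close as $\mu_f(S_c)$, so a union bound over the $k-1$ competing classes produces the factor $(k-1)$ common to all four terms and rewrites $\cL_{\cD}(f)$ as a sum of expected pairwise few-shot NCC errors over target class pairs.

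Because both the source class-conditionals $\tcP$ and the target class-conditionals $\cP$ are i.i.d.\ samples from the same $\cD$, the expected pairwise error over a random target pair coincides with its expectation over a $\cD$-drawn pair, which Lemma~\ref{prop:marginBoundA} lets me approximate by the empirical average $\Avg_{i\neq j\in[l]}[\cdot]$ over the $l$ source classes. This estimate is a U-statistic in the $l$ source classes; its deviation, controlled uniformly over $\cF$, yields the $C_4$ term that decays as $1/\sqrt{l}$, with complexity factor $\sqrt{\log(l)(q+\log p)}$ coming from a covering argument for depth-$q$ networks and the remaining logarithm from the confidence level $\delta$.

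Lemma~\ref{prop:marginBoundB} then replaces each population pairwise error with the $\Delta$-margin empirical error $E_{\Delta}(f;\tS_i,\tS_j)$ of \eqref{eq:EDelta}. Two sampling effects surface here: the NCC centers are built from only $n$ of the $m$ source samples per class, which both forces the correction factor $\kappa=m/(m-n)$ and contributes a center-concentration term growing like $n^2/\sqrt{m-n}$ (the $C_2$ term, kept bounded by the hypothesis $n\le\sqrt m$), while converting the empirical margin error on the held-out samples to its population value adds the $C_3$ term. Each scales as $\lceil\cC(f)\rceil B/\Lambda$, since $\cC(f)B$ bounds the Lipschitz displacement of the embeddings and the margin is taken proportional to $\Lambda=\min_{i\neq j}\|\mu_f(\tS_i)-\mu_f(\tS_j)\|$, the smallest inter-center gap, so that the margin slack is neither vacuous nor dominant. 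A final application of Lemmas~\ref{prop:cdnv_error}--\ref{prop:cdnv_errorG} bounds each $E_{\Delta}(f;\tS_i,\tS_j)$ by a constant times the CDNV $V_f(\tS_i,\tS_j)$, up to the additive $\delta^2$ slack, producing the leading term $C_1(k-1)\kappa\,\Avg_{i\neq j}[V_f(\tS_i,\tS_j)]$. Summing the four contributions and optimizing $\Delta$ over a suitable grid yields the theorem.

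The principal difficulty is the uniform convergence behind Lemmas~\ref{prop:marginBoundA} and~\ref{prop:marginBoundB}. Since $f$ is chosen adaptively to fit the source task, every concentration estimate must hold simultaneously for all $f\in\cF$, which requires covering-number control of the margin-thresholded NCC loss composed with depth-$q$ ReLU networks -- the source of the $\sqrt{q+\log p}$ factors. Worse, the bound is stated through the data-dependent quantities $\cC(f)$ and $\Lambda$ rather than worst-case constants, so I would stratify the argument over a geometric grid of candidate values of $\cC(f)$ (hence $\lceil\cC(f)\rceil$ and the $\log(\lceil\cC(f)\rceil+1)$ contributions) and of $\Lambda$ (hence the $\bigl|\log(\Lambda/40B)\bigr|$ contributions), taking a union bound over the grid. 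Keeping these stratification penalties at the displayed logarithmic order while simultaneously balancing the margin $\Delta$ against both the center-estimation error and the CDNV bound is the delicate accounting the full proof in Section~\ref{app:fullbound} must complete.
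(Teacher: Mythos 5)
Your proposal follows essentially the same route as the paper's proof: the union bound over the $k-1$ rival classes, Lemma~\ref{prop:marginBoundA} to pass from random target pairs to the average over source pairs (giving the $C_4$ term with $1/\sqrt{l}$ decay), Lemma~\ref{prop:marginBoundB} applied to every pair $i\neq j$ with a union bound over pairs (giving $\kappa$ and the $C_2$, $C_3$ terms), and finally a CDNV bound with margin taken proportional to $\Lambda$ (the paper fixes $\Delta=0.025\Lambda$ and uses only the first part of Lemma~\ref{prop:cdnv_error}), with the stratification over grids of $\cC(f)$ and $\Delta$ handled inside the lemmas exactly as you describe. The only slips are cosmetic: the intermediate empirical quantity passed from Lemma~\ref{prop:marginBoundB} to the CDNV step is the few-shot soft-margin error $\ell_{2\Delta}(f;\tS_i,\tS_j)$, not the margin error $E_{\Delta}(f;\tS_i,\tS_j)$ of \eqref{eq:EDelta}, and the additive $\delta^2$ slack originates in Lemma~\ref{prop:marginBoundA}, not in the CDNV bounds.
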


In the above theorem, we bound the transfer error $\mathcal{L}_{\mathcal{D}}(f)$, which is the quantity that measures the performance of $f$ in adapting to new tasks with very few samples. For simplicity, we stated the bound without explicitly specifying the constants, which are given in \eqref{eq:thm6_consts} in Section~\ref{app:fullbound}.

To interpret the bound, note that we are interested in the regime where $n$ is a constant (or when $n =o(m^{1/4})$), hence $\kappa=\fr{m}{m-n} \approx 1+\mathcal{O}(\fr{1}{m})$ (or $1+o(m^{-3/4})$). The bound is composed of multiple terms. The first term is proportional to $k$ (the number of target classes) times the average CDNV over the source training data, $\Avg_{i\neq j \in [l]} [V_{f}(\tS_i,\tS_j)]$, which is expected to be small in the presence of neural collapse. The second term scales as $\mathcal{O}\left(\kappa k \delta^2\right)$ that depends on a free parameter $\delta$ that can be selected to be very small, as the other terms scale (poly-)logarithmically with respect to $1/\delta$. The third term scales as $\tilde{\mathcal{O}}\left(\fr{n^2k\sqrt{q}\lceil \mathcal{C}(f)\rceil B}{\Lambda\sqrt{m}}\right)$ dominating the fourth term which is a factor $1/n^2$ smaller, while the last term scales as $\tilde{\mathcal{O}}\left(\fr{k\sqrt{q}\lceil\mathcal{C}(f)\rceil B}{\Lambda\sqrt{l}}\right)$. 

Thus, selecting, e.g., $\delta=1/m$, the terms other than the CDNV in the bound are of order
$\fr{k\sqrt{q}\lceil\mathcal{C}(f)\rceil B}{\Lambda}\cdot\left(\fr{n^2}{\sqrt{m}}+\fr{1}{\sqrt{l}}\right)$ up to logarithmic factors, which goes to zero with $m$ and $l$ increasing as desired (with the square-root rate expected from standard concentration arguments), while in the numerator we have typical complexity terms. The two unusual terms are $n^2$ in the numerator and $\Lambda$ in the denominator. 
The presence of $n$ is somewhat undesired and is probably the artifact of our proof technique as one would expect better performance as the number of samples in the target problems increases; however, since we are interested in the regime where $n$ is very small (a small constant, typically below $10$), this is only a slight problem.
$\Lambda$ is a data-dependent term, measuring the minimum pairwise distance of the class-means. According to previous work on neural collapse \citep{Papyan24652,mixon2020neural,zhu2021geometric,4880,pmlr-v162-tirer22a}, under the usual conditions required for neural collapse (such as, training with weight decay, the features are properly normalized and $p \geq l$), the function $f$ converges to a solution such that $\{\mu_f(\tilde{S}_i)\}^{l}_{i=1}$ forms a simplex equiangular tight frame, meaning that the class centers $\mu_f(\tilde{S}_i)$ are of equal length and the distances $\|\mu_f(\tilde{S}_i)-\mu_f(\tilde{S}_j)\|$ are also equal and maximized for all $i\neq j$. This property, which is typically referred to as NC2 in the literature, implies that $\Lambda=\sqrt{2}
\|\mu_f(\tS_i)\|$. To empirically validate that $\Lambda$ is indeed maximized, we evaluated $\Lambda$ during our experiments (reported in Section~\ref{sec:distances}), and we can observe that it significantly increases during training. Therefore, if $n=\mathcal{O}(1)$ and $l$ and $m$ are large, and if neural collapse is present in the source data (i.e., $\Avg_{i\neq j \in [l]} [V{f}(\tS_i,\tS_j)]$ is small) and $\cC(f)$ is bounded (as a function of $m$ and $l$, i.e., if the size of the neural network $f$ only mildly increases with the amount of the input data, as per the practitioner's choice), then the few-shot transfer error $\mathcal{L}_{\mathcal{D}}(f)$ is also small.

Next we present two improvements for Theorem~\ref{thm:full_bound}, both aiming to improve the dependence on the CDNV, $\Avg_{i\neq j \in [l]} [V{f}(\tS_i,\tS_j)]$. The first idea is to measure neural collapse with the weaker notion of the average $\Delta$-margin error $\Avg_{i\neq j \in [l]} [E_{8\Delta}(\tS_i,\tS_j)]$ for some specific choice of $\Delta$. In fact, if the feature vectors are sufficiently clustered during training, then the $\Delta$-margin error can easily vanish while the CDNV remains positive. 

\begin{restatable}{theorem}{fullBound1}\label{thm:full_bound1} Let $\delta \in (0,1)$, $l \geq 2$, $1 \leq n \leq \sqrt{m}$ and let $\cF$ be a class of ReLU neural networks of depth $q$. Furthermore, let $\Lambda_{ij} := \|\mu_f(\tS_i)-\mu_f(\tS_j)\|$ for $i,j\in[l]$, $\Lambda := \min_{i\neq j \in [l]} \Lambda_{ij}$ as before,  and define $\kappa:=\frac{m}{m-n}$. Then, with probability at least $1-\delta$ over the selection of the source class-conditionals $\tcP = \{\tP_c\}^{l}_{c=1}$ and the corresponding source training data $\tS_1\dots,\tS_l$, for any $f \in \cF$ and $\phi>0$, we have
\begin{equation*}
\begin{aligned}
\mathcal{L}_{\cD}(f) &\le C_1 (k-1) \kappa 
\left(\Avg_{i\neq j \in [l]}[E_{\phi\Lambda}(f;\tS_i,\tS_j)]
+\frac{C_2}{\phi^2 n^2}
\Avg_{i\neq j \in [l]}\left[\frac{\Lambda_{ij}^2}{\Lambda^2}V_f(\tS_i,\tS_j)\right]\right) \\
&\quad + (k-1)\kappa \delta^2
+ 
C_3 \frac{(k-1)\lceil \cC(f) \rceil B \kappa n^2}{\phi \Lambda\sqrt{m-n}}
\sqrt{(q+\log(p))\log(m)} \\
&\quad+ 
C_4 \frac{(k-1)\lceil \cC(f) \rceil B \kappa}{\phi \Lambda\sqrt{m-n}}\sqrt{\log\left(\fr{l^2(\lceil \cC(f) \rceil+1)^2 \left(\left|\log \left(\fr{\phi \Lambda}{40B}\right)\right|+2\right)^2}{\delta}\right)} \\
&\quad +
C_5 \frac{(k-1)\lceil \cC(f) \rceil B}{\phi \Lambda \sqrt{l}}
\left(\!\sqrt{\log(l)(q + \log(p))} + \sqrt{\log\left(\fr{(\lceil \cC(f) \rceil+1)\left(\left|\log \left(\fr{\phi \Lambda}{40B}\right)\right|+2\right)}{\delta}\right)}\ \right)\!,
\end{aligned}
\end{equation*}
where $C_1,C_2,C_3,C_4,C_5>0$ are appropriate constants.
\end{restatable}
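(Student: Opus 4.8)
The plan is to follow the three-stage reduction behind Theorem~\ref{thm:full_bound}, replacing only the last stage. First I would apply Lemma~\ref{prop:marginBoundA} to bound the target transfer error $\cL_{\cD}(f)$ by the expected few-shot NCC error over a \emph{random pair of source classes}: because the source and target class-conditionals are i.i.d.\ draws from $\cD$, the expected error on a fresh target pair equals that on a random source pair, and a union bound over the $k-1$ competing centers in the $k$-class rule together with the few-shot (without-replacement) bookkeeping yields the leading factor $(k-1)\kappa$ with $\kappa=m/(m-n)$. Next I would invoke Lemma~\ref{prop:marginBoundB} to pass from the source distributions $\tP_i,\tP_j$ to the source samples $\tS_i,\tS_j$, paying a uniform-convergence gap over the ReLU class $\cF$; covering/Rademacher estimates in terms of $\cC(f)$, $B$, depth $q$ and width $p$ produce the third through fifth summands, while a union bound over a dyadic grid of margin scales (needed because $\Lambda$ is data dependent) contributes the $\big|\log(\phi\Lambda/40B)\big|$ factors, the whole event holding with probability at least $1-\delta$.

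The one new ingredient relative to Theorem~\ref{thm:full_bound} is the final bound on the \emph{empirical} few-shot error. Rather than bounding it directly by the CDNV---which is what the NCC-error-to-CDNV estimate (Lemma~\ref{prop:cdnv_errorE}) gives and what underlies Theorem~\ref{thm:full_bound}---I would use the sharper decomposition from the appropriate lemma among Lemmas~\ref{prop:cdnv_error}--\ref{prop:cdnv_errorG}. Concretely, I split a few-shot misclassification of $x\sim\tS_i$ into (i) $x$ violating the $\phi\Lambda$-margin with respect to the \emph{full-sample} centers $\mu_f(\tS_i),\mu_f(\tS_j)$, which contributes $E_{\phi\Lambda}(f;\tS_i,\tS_j)$; and (ii) the $n$-sample center perturbations $\mu_f(S_c)-\mu_f(\tS_c)$ being large enough to flip an otherwise margin-correct point. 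Writing the squared-distance discriminant $M(x)=\|f(x)-\mu_f(S_j)\|^2-\|f(x)-\mu_f(S_i)\|^2$, expanding about the full-sample centers, and noting $\E\|\mu_f(S_c)-\mu_f(\tS_c)\|^2\asymp \Var_f(\tS_c)/n$, a second-moment estimate against the squared-scale margin $\sim\phi\Lambda\,\Lambda_{ij}$ bounds event (ii) by a term of the form $\tfrac{C}{\phi^2}\tfrac{\Lambda_{ij}^2}{\Lambda^2}V_f(\tS_i,\tS_j)$ divided by a power of $n$. Averaging (i) and (ii) over the $l(l-1)$ ordered source pairs and collecting constants then reproduces the bracketed term and the $1/\phi$ scaling throughout.

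I expect the main obstacle to be the sharp $n$-dependence in event (ii): a direct Chebyshev bound on $\|\mu_f(S_c)-\mu_f(\tS_c)\|$ against a margin of order $\phi\Lambda$ gives only $1/n$, whereas the statement claims $1/n^2$, so a more careful argument is required---for instance, isolating the dominant zero-mean component of the perturbation (the part aligned with $\mu_f(\tS_i)-\mu_f(\tS_j)$), verifying that the quadratic remainder $\|\mu_f(S_i)-\mu_f(\tS_i)\|^2-\|\mu_f(S_j)-\mu_f(\tS_j)\|^2$ is genuinely lower order, and exploiting higher-moment (or density-of-margin) structure to gain the extra factor of $n$. A secondary difficulty is threading the high-probability guarantee of the second stage through the data-dependent quantities $\Lambda$ and $\Lambda_{ij}$ without circularity, which is precisely what forces the discretization over scales in Lemma~\ref{prop:marginBoundB}.
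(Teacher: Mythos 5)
Your proposal follows the paper's own proof almost step for step. Stages one and two are identical to the proof of Theorem~\ref{thm:full_bound}: apply Lemma~\ref{prop:marginBoundA} (with $\delta/2$) and Lemma~\ref{prop:marginBoundB} (to every ordered pair of source classes, with $\delta/(2l(l-1))$ each, plus a union bound) to obtain \eqref{eq:transfer_risk_avg}, valid simultaneously for all $f\in\cF$ and $\Delta>0$. Your stage-three decomposition---(i) a margin violation with respect to the \emph{full-sample} centers, contributing $E_{\phi\Lambda}(f;\tS_i,\tS_j)$, plus (ii) the event that the $n$-sample centers drift by a constant fraction of the margin, controlled by a second-moment bound---is precisely the proof of the first part of Lemma~\ref{prop:cdnv_errorE}, which the paper invokes with $\Delta=\phi\Lambda/8$ so that the empirical term $\ell_{4\Delta}(f;\tS_i,\tS_j)$ from stage two becomes $E_{\phi\Lambda}(f;\tS_i,\tS_j)$ plus the CDNV correction in the theorem's bracket. (One small mislabeling: the lemma underlying Theorem~\ref{thm:full_bound} is Lemma~\ref{prop:cdnv_error}, not Lemma~\ref{prop:cdnv_errorE}.)

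The obstacle you single out is real, and it is a defect of the paper rather than of your reasoning. In the proof of the first part of Lemma~\ref{prop:cdnv_errorE} the paper writes $\Pr\left[\|\hat\mu_i-\mu_i\|\ge 0.5\Delta\right]\le \fr{4\Var(\hat\mu_i)}{\Delta^2}=\fr{4\Var(u_i)}{n^2\Delta^2}$, where $u_i=f(x_i)$ and $\hat\mu_i$ is the mean of $n$ i.i.d.\ draws; this takes the variance of an $n$-sample mean to be $\Var(u_i)/n^2$, whereas the correct identity is $\Var(\hat\mu_i)=\Var(u_i)/n$ (the paper itself uses the correct $1/n$ scaling in steps (F)--(G) of the proof of Lemma~\ref{prop:cdnv_error}). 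Chebyshev therefore yields only a $1/n$ factor, exactly as you computed, and the $\fr{C_2}{\phi^2 n^2}$ term in Theorem~\ref{thm:full_bound1} is not actually established by the paper's argument; what is supported is $\fr{C_2}{\phi^2 n}$. Moreover, neither of your proposed rescues can recover the missing factor of $n$: projecting the center perturbation onto the direction $\mu_f(\tS_i)-\mu_f(\tS_j)$ still leaves a sample-mean variance of order $\Var/n$ along that direction (projection buys a dimension factor $1/p$ under spherical symmetry---this is what the symmetric cases of Lemmas~\ref{prop:cdnv_error} and~\ref{prop:cdnv_errorE} exploit---not a factor of $n$), and no higher-moment structure is available for the generic empirical distributions $U[\tS_c]$. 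So your plan, carried out honestly, proves the theorem with $1/n$ in place of $1/n^2$, which is in fact all that the paper's own proof establishes.
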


If the features are sufficiently clustered during training, the $E_{\phi\Lambda}$ terms vanish (or become very small) for $\phi$ sufficiently small. Furthermore, when neural collapse happens, the class-mean distances are typically of the same order (i.e., $\Lambda_{ij} \approx \Lambda$ for all $i\neq j \in [l]$), and so the second term involving the $V_f$ terms becomes $1/n^2$-times the CDNV, significantly improving the dependence on the bound on the CDNV compared to Theorem~\ref{thm:full_bound}. This comes at the price of multiplying the rest of the terms with a $1/\phi>1$ factor.

A further improvement is possible if the distribution of the embeddings of the training samples are spherically symmetric. In case they are, the CDNV term in the bound can be multiplied with an additional factor $(1/p + 1/n)$ (compared to Theorem~\ref{thm:full_bound}), again assuming that the class-mean distances are of the same order (i.e., $\Lambda_{ij} \approx \Lambda$ for all $i\neq j \in [l]$). This can be a substantial improvement when the number of samples $n$ from the target classes is moderate, as the embedding dimension $p$ is typically much larger (at least a few hundred in practical situations).

\begin{restatable}{theorem}{fullBoundSec}\label{thm:full_bound2} Let $\delta \in (0,1)$, $l \geq 2$, $1 \leq n \leq \sqrt{m}$ and let $\cF$ be a class of ReLU neural networks of depth $q$. Furthermore, let $\Lambda_{ij} := \|\mu_f(\tS_i)-\mu_f(\tS_j)\|$ for $i,j\in[l]$, $\Lambda := \min_{i\neq j \in [l]} \Lambda_{ij}$ as before, and $s:=p$ if the set $\{f \circ \tS_c\}$ is spherically symmetric for all $c \in [l]$ and $s:=1$ otherwise. Then, with probability at least $1-\delta$ over the selection of the source class-conditionals $\tcP = \{\tP_c\}^{l}_{c=1}$ and the corresponding source training data $\tS_1\dots,\tS_l$, for any $f \in \cF$, we have
\begin{equation*}
\begin{aligned}
\mathcal{L}_{\cD}(f) &\le C_1 (k-1) \kappa \Avg_{i\neq j \in [l]}\left[ \left(\fr{1}{s} + \fr{\Lambda^2_{ij}}{n\Lambda^2}
+ \fr{\Lambda^2}{n\Lambda^2_{ij}}\right) \cdot V_f(\tS_i,\tS_j)\right]
+ (k-1)\kappa \delta^2 \\
&\quad+ 
C_2 \frac{(k-1)\lceil \cC(f) \rceil B \kappa n^2}{\Lambda\sqrt{m-n}}
\sqrt{(q+\log(p))\log(m)} \\
&\quad+ 
C_3 \frac{(k-1)\lceil \cC(f) \rceil B \kappa}{\Lambda\sqrt{m-n}}\sqrt{\log\left(\fr{l^2(\lceil \cC(f) \rceil+1)^2 \left(\left|\log \left(\fr{\Lambda}{40B}\right)\right|+2\right)^2}{\delta}\right)} \\
&\quad +
C_4 \frac{(k-1)\lceil \cC(f) \rceil B}{\Lambda \sqrt{l}}
\left(\!\sqrt{\log(l)(q + \log(p))} + \sqrt{\log\left(\fr{(\lceil \cC(f) \rceil+1)\left(\left|\log \left(\fr{\Lambda}{40B}\right)\right|+2\right)}{\delta}\right)}\ \right)\!,
\end{aligned}
\end{equation*}
where $C_1,C_2,C_3,C_4>0$ are appropriate constants.
\end{restatable}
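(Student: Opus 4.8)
The plan is to obtain Theorem~\ref{thm:full_bound2} by re-running the proof of Theorem~\ref{thm:full_bound} essentially verbatim and changing only the single step in which the few-shot training error of a source pair is bounded by the class-distance normalized variance. This is justified by comparing the two statements: the three trailing terms carrying $C_2,C_3,C_4$ (and the $(k-1)\kappa\delta^2$ term) are \emph{identical}; the assumption of spherical symmetry alters only the leading CDNV summand, which acquires the factor $\tfrac1s+\tfrac{\Lambda_{ij}^2}{n\Lambda^2}+\tfrac{\Lambda^2}{n\Lambda_{ij}^2}$ in place of the bare $V_f(\tS_i,\tS_j)$. The proof is therefore modular: the machinery producing the trailing terms is geometry-agnostic and is inherited unchanged, and all the new content lives in a sharpened CDNV estimate.

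Concretely, I would first apply Lemma~\ref{prop:marginBoundA} to pass from the transfer error $\cL_\cD(f)$ to (up to a constant and the factor $(k-1)\kappa$) the average $\Delta$-margin NCC error over source class pairs evaluated on their class-conditionals; since only $l$ classes are sampled i.i.d.\ from $\cD$, this step incurs the $1/\sqrt{l}$ residual, i.e.\ the $C_4$ term. I would then apply Lemma~\ref{prop:marginBoundB} to replace that population error by its empirical counterpart $\Avg_{i\neq j\in[l]}[E_\Delta(f;\tS_i,\tS_j)]$ over the training sets; the within-class concentration from $m$ samples together with uniform convergence over the class $\cF$ of depth-$q$ ReLU networks (via covering numbers and the Lipschitz/Frobenius bound $\cC(f)$ with $\|x\|\le B$) produces the $1/\sqrt{m-n}$ residuals, i.e.\ the $C_2,C_3$ terms, while optimizing the free margin parameter gives the $(k-1)\kappa\delta^2$ term. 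Because none of these steps sees the within-class geometry of the embeddings, they are unaffected by spherical symmetry and can be quoted from the proof of Theorem~\ref{thm:full_bound}.

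The essential new step is a spherically-symmetric refinement of the training-error-to-CDNV bound (the symmetric variant among Lemmas~\ref{prop:cdnv_error}--\ref{prop:cdnv_errorG}). Here I would express the (margin) misclassification of a test point $x\sim\tS_i$ against the few-shot centers $\hat c_i,\hat c_j$ -- each an average of $n$ samples -- as a one-dimensional threshold crossing for the signed score $\big(f(x)-\tfrac12(\hat c_i+\hat c_j)\big)^\top(\hat c_i-\hat c_j)$, whose expectation is of order $\Lambda_{ij}^2$. Writing $f(x)=\mu_f(\tS_i)+\xi$ and $\hat c_c=\mu_f(\tS_c)+\eta_c$ with $\xi,\eta_i,\eta_j$ independent and mean zero, I would apply Chebyshev's inequality and decompose the variance of the score into a \emph{test-point} part, governed by the variance of $\xi$ projected onto the fixed direction $\mu_f(\tS_i)-\mu_f(\tS_j)$, and \emph{center-estimation} parts of order $1/n$ governed by $\eta_i,\eta_j$. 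Under spherical symmetry the covariance of $\xi$ is isotropic, so the projected test-point variance is exactly $\Var_f(\tS_i)/p$ of the total -- this is precisely the source of the $\tfrac1s=\tfrac1p$ gain. The center-estimation parts, by contrast, enter through full-dimensional quantities such as $\|\eta_i\|^2$ that do not contract under isotropy and therefore keep their $1/n$ scaling; measured against the transferred margin (calibrated to the global minimum separation $\Lambda$) and against the true pair separation $\Lambda_{ij}$, they produce the two reciprocal ratios $\tfrac{\Lambda_{ij}^2}{n\Lambda^2}$ and $\tfrac{\Lambda^2}{n\Lambda_{ij}^2}$. Dividing each variance contribution by the squared signal $\asymp\Lambda_{ij}^4$ converts it into the corresponding multiple of $V_f(\tS_i,\tS_j)=\Var_f(\tS_i)/\Lambda_{ij}^2$, and averaging over ordered pairs gives the leading term of the theorem.

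I expect the main obstacle to be exactly this variance bookkeeping in the refined lemma. One must control the quadratic cross terms $\xi^\top(\eta_i-\eta_j)$ and $\|\eta_i\|^2-\|\eta_j\|^2$ in addition to the leading linear terms, handle sampling \emph{without} replacement (the source of the $\kappa=m/(m-n)$ factors, as the $n$ center samples and the held-out test point come from the same finite set $\tS_c$), and -- most delicately -- verify that isotropy genuinely \emph{decouples} the test-point term from the center-estimation terms, so that the $1/p$ contraction applies to the former alone while the latter retain the $1/n$ scaling that yields the two $\Lambda$-versus-$\Lambda_{ij}$ ratios. A secondary subtlety is to confirm that the margin $\Delta$ introduced in Lemmas~\ref{prop:marginBoundA}--\ref{prop:marginBoundB} is calibrated consistently with this $\Lambda/\Lambda_{ij}$ split, so that after averaging the stated factor emerges and the three trailing terms coincide exactly with those of Theorem~\ref{thm:full_bound}.
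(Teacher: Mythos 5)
Your proposal matches the paper's own proof both in skeleton and in substance: the paper likewise combines Lemma~\ref{prop:marginBoundA} (at confidence $\delta/2$) with Lemma~\ref{prop:marginBoundB} applied to every ordered pair via a union bound to obtain the master inequality \eqref{eq:transfer_risk_avg}, whose trailing terms are geometry-agnostic, and then changes only the CDNV step by invoking the second part of Lemma~\ref{prop:cdnv_errorE} --- which is precisely the spherically symmetric refinement you sketch (projection of the test-point fluctuation onto a fixed direction for the $1/s$ gain, Markov/Chebyshev for the $1/n$ center-estimation terms) --- with the margin calibrated as $\Delta = \Lambda/\sqrt{n}$, which is exactly the ``secondary subtlety'' you flag and which makes the factor $\frac{1}{s}+\frac{\Lambda_{ij}^2}{n\Lambda^2}+\frac{\Lambda^2}{n\Lambda_{ij}^2}$ emerge after averaging. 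Two inconsequential misattributions: the $(k-1)\kappa\delta^2$ term comes directly from Lemma~\ref{prop:marginBoundA} (its Markov argument over the class sample), not from optimizing the margin, and the ratio $\frac{\Lambda^2}{n\Lambda_{ij}^2}$ arises in the lemma from the test-point ball-confinement term $\frac{\Delta^2}{\Lambda_{ij}^2}V_f(\tS_i,\tS_j)$, not from center estimation.
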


Finally, although the bounds in the above theorems tend to zero as $m$ and $l$ tend to infinity and when class-features variability collapse happens during training, similarly to standard generalization bounds in the deep learning literature \citep[e.g.,][]{golowich,10.5555/3295222.3295372,pmlr-v40-Neyshabur15,https://doi.org/10.48550/arxiv.1806.05159}, these bounds may be quite loose in practical situations. The aim of these results is to serve as a ``proof of concept'' that few-shot learnability is possible by training a large model that exhibits class-features variability collapse with respect to the training data when trained on many source classes and many samples for each class.

\section{Main Components of the Analysis}

In this section we present the main ingredients of the proofs of our main results, proving our main theorems in Section~\ref{app:fullbound}.

\subsection{Generalization to New Classes}

In this section we derive a generalization bound on the transfer error $\mathcal{L}_{\mathcal{D}}(f)$ as a function of an average two-class soft-margin error of the NCC classifier (based on $f$) over the training classes. We start by reducing the problem to two-class classification: using the union bound, we can easily show (see the proof of Lemma~\ref{prop:marginBoundA}) that
\begin{equation}
\label{eq:pairwise}
\mathcal{L}_{\cD}(f) \le (k-1) \cdot \E_{P_1,P_2}\E_{S_1,S_2}\E_{x_1 \sim P_1}\big[\bI\left[\|f(x_1) - \mu_f(S_2)\| < \|f(x_1) - \mu_f(S_1)\|\right] \big].
\end{equation}

\begin{wrapfigure}{r}{0.48\textwidth}
\begin{tikzpicture}[
  declare function={
    func(\x)= (\x < -1) * (0) + and(\x >= -1, \x < 0) * (1+\x) + (\x >= 0) * (1);
    g(\x) = (\x < -1) * (0) + (\x >= -1) *(1);
    h(\x) = (\x < 0) * (0) + (\x >= 0) *(1);
  }
]
\begin{axis}[
  axis x line=middle, axis y line=middle,
  ymin=-0.2, ymax=1.5, ytick={-1,...,1}, ylabel={}, ymax=1.3, yticklabels={},
  xmin=-2, xmax=1.5, xtick={-1,...,0}, xticklabels={$-\Delta$,0}, xlabel=$r$,
  domain=-2.5:2,samples=400, 
  axis equal image,
]
\addplot [black,thick,dashed]{h(x)};
\addplot [black,thick,dashed]{g(x)};
\addplot [black,thick] {func(x)};
\end{axis}
\node at (1.1,1.8) {$\bI[r > -\Delta]$};
\node at (4.7,1.3) {$\bI[r > 0]$};
\node at (5.3,2.7) {$\ell_{\Delta}(r)$};
\node at (3.7,2.6) {$1$};
\end{tikzpicture} 
\vspace{-0.5cm}
\caption{The soft-margin loss $\ell_{\Delta}$.
\label{fig:ellDelta}}
\end{wrapfigure}
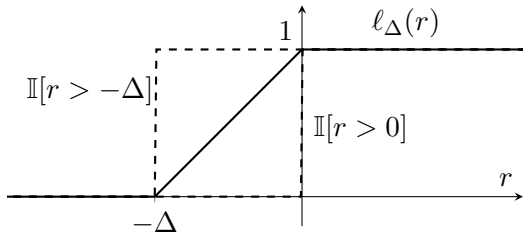
The interesting part of the analysis is to relate the expected two-class classification error above, where the expectation is taken over the random selection of the class-conditionals $P_1$ and $P_2$ of the target problem, to a similar quantity which depends on the training classes only; in particular, we aim to replace the expectation $\E_{P_1,,P_2}[\cdot]$ by $\Avg_{\tP_i \neq \tP_j \in [l]}[\cdot]$, the average over the pairs of the source classes.
Notice that analyzing such an average has two unusual components: First, we treat class-conditional distributions as data points on which the feature map $f$ is trained. Second, the average is not a standard linear statistics over the data points (i.e., class conditionals $\{tP_c\}_{c=1}^l$), since although we assume that both the source and target class-conditionals, $\{\tP_c\}^{l}_{c=1}$ and $\{P_c\}^{k}_{c=1}$ are i.i.d.\ samples from $\cD$, the individual terms in the average depend on two distributions, and hence they are interdependent. To be able to analyze their concentration properties, we build on the work of \citet{Maurer2019UniformCA}, who considered the deviation of nonlinear functions of independent random variables from their means. More specifically, we apply their Corollary~3 to bound the deviation of the average pairwise loss from its expectation. While the corollary generally applies to any bounded loss functions, certain terms that arise from this bound can be more easily quantified when the loss function is Lipschitz continuous. Therefore, we replace the zero-one loss function $\bI[r > 0]$ with the ``soft-margin'' loss function $\ell_\Delta:\R \to [0,\infty)$, defined as follows:
\begin{equation}
\label{eq:ellDelta}
\ell_\Delta(r) ~:=~ \threepartdef
{0} {r < -\Delta;}
{1+\frac{r}\Delta} {r \in [-\Delta,0];}
{1} {r>0.}
\end{equation}
It is clear that for all $\Delta>0$ and $r \in \R$, we have $\bI[r > 0]\le \ell_\Delta(r)\le \bI[r > -\Delta]$ and $\ell_{\Delta}$ is a $\fr{1}{\Delta}$-Lipschitz continuous function (see Figure~\ref{fig:ellDelta}). Thus, the expected pairwise error of few-shot trained NCC classifiers $\hnn$ for any two classes $P_1, P_2$ can be bounded by the soft-margin error $\ell_\Delta(f;P_1,P_2)$ given below: for a feature map $f:\R^{d}\to\R^{p}$ and two class-conditional distributions $Q_i,Q_j$, the \emph{$n$-shot} (or, informally, \emph{few-shot})  \emph{soft-margin NCC classification error} of $f$ is defined as
\begin{equation*}
\ell_{\Delta}(f;Q_i,Q_j) ~:=~ \E_{\hS_i\sim Q^n_i}\E_{\hS_j \sim Q^n_j}\E_{x_i \sim Q_i}\left[\ell_{\Delta}(\|f(x_i) - \mu_{f}(\hS_i)\| - \|f(x_i) - \mu_{f}(\hS_j)\|)\right].
\end{equation*} 
This quantity measures the expected soft-margin test error of the NCC classifier when using $n$ samples per-class to estimate the centers of $f$ over $Q_i$ and $Q_j$. The next result, proved in Appendix~\ref{app:marginBoundA}, is obtained by bounding 
$\E_{P_1,P_2}[\ell_\Delta(f;P_1,P_2)]$ from the right-hand side of \eqref{eq:pairwise} by $\Avg_{i\neq j\in [l]}[\ell_{2\Delta}(f; \tP_i,\tP_j)$, the expected few-shot soft-margin error of $f$ over the source classes, plus additional terms, which depend on the complexity of $f$ and the number of classes (also note the technical change of the margin from $\Delta$ to $2\Delta$):

\begin{restatable}{lemma}{marginBoundA}\label{prop:marginBoundA} Let $\delta \in (0,1)$, $l \geq 2$, and let $\mathcal{F}$ be a class of ReLU neural networks of depth $q$. With probability at least $1-\delta$ over the selection of i.i.d. class-conditional distributions $\tcP = \{\tP_c\}^{l}_{c=1} \sim \cD^{l}$, for any $f \in \mathcal{F}$ and $\Delta > 0$, it holds that
\begin{align*}
\mathcal{L}_{\cD}(f) ~&\le~ (k-1) \left( \Avg_{i\neq j\in [l]}[\ell_{2\Delta}(f; \tP_i,\tP_j)] + \delta^2 \right) \\
&\quad\quad +
(k-1)
\frac{\lceil \cC(f) \rceil B}{\Delta \sqrt{l}}
\Bigg(355 \sqrt{\log(l)(q\log(2)+\log(p))} \\
&\qquad\qquad\qquad\qquad\qquad\qquad+ 16.5\sqrt{\log\left(\fr{\sqrt{3}(\lceil \cC(f) \rceil+1)(|\log (\Delta/B)|+2)}{\delta}\right)}\Bigg).
\end{align*}
\end{restatable}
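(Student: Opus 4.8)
The plan is to combine the pairwise reduction already recorded in \eqref{eq:pairwise} with a uniform concentration argument over the random source classes. Starting from \eqref{eq:pairwise}, I would first replace the hard $0/1$ loss by the soft-margin surrogate: since $\bI[r>0]\le\ell_\Delta(r)$ pointwise with $r=\|f(x_1)-\mu_f(S_1)\|-\|f(x_1)-\mu_f(S_2)\|$, the inner triple expectation over $S_1\sim P_1^n$, $S_2\sim P_2^n$ and $x_1\sim P_1$ is at most $\ell_\Delta(f;P_1,P_2)$, giving $\cL_\cD(f)\le(k-1)\,\E_{P_1,P_2}[\ell_\Delta(f;P_1,P_2)]$. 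This reduces the whole problem to controlling the single quantity $\E_{P_1,P_2}[\ell_\Delta(f;P_1,P_2)]$ by its empirical counterpart $\Avg_{i\ne j}[\ell_{2\Delta}(f;\tP_i,\tP_j)]$, uniformly over $f\in\cF$.

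The heart of the argument is this uniform-convergence step, and here the essential difficulty is that $\Avg_{i\ne j}[\ell_\Delta(f;\tP_i,\tP_j)]$ is not an average of independent terms: the data points are the i.i.d.\ class-conditionals $\tP_1,\dots,\tP_l$, but each summand depends on a \emph{pair} of them, so the $l(l-1)$ terms are strongly interdependent and built from only $l$ independent draws. To handle this I would view $\psi_f(Q,Q'):=\ell_\Delta(f;Q,Q')\in[0,1]$ as a bounded pairwise interaction and invoke Corollary~3 of \citet{Maurer2019UniformCA}, which yields, with probability at least $1-\delta$ and simultaneously over $f$, a bound of the form $\E_{P_1,P_2}[\psi_f]\le\Avg_{i\ne j}[\psi_f(\tP_i,\tP_j)]+\frac{c}{\sqrt l}\,\mathfrak R(\Psi)+\frac{c'}{\sqrt l}\sqrt{\log(1/\delta)}$, where $\mathfrak R(\Psi)$ is the Rademacher complexity of $\Psi=\{\psi_f:f\in\cF\}$ and the effective sample size $l$ explains the $1/\sqrt l$ scaling in the statement.

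Bounding $\mathfrak R(\Psi)$ is the main obstacle. Because $\ell_\Delta$ is $1/\Delta$-Lipschitz, I would first peel it off by the Talagrand contraction principle (cost $1/\Delta$), reducing to the complexity of the class $x\mapsto\|f(x)-\mu_f(\hS_i)\|-\|f(x)-\mu_f(\hS_j)\|$; since each distance is $1$-Lipschitz in the feature vector $f(\cdot)$ and the empirical centers are averages of feature vectors, this in turn reduces to the Rademacher complexity of the network class $\cF$ on inputs of norm at most $B$. For this I would use the norm-based complexity bound for depth-$q$ Frobenius-norm-bounded ReLU networks in the spirit of \citet{golowich}, together with $\|f(x)\|\le\cC(f)B$ from the setup, producing the factor $\frac{\lceil\cC(f)\rceil B}{\Delta\sqrt l}\sqrt{\log(l)(q\log 2+\log p)}$. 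The subtle point is that $\psi_f$ is itself an expectation of a pairwise loss, so the contraction and the net-over-$\cF$ bookkeeping must be carried out for the ``integrated'' loss rather than a pointwise one; the internal expectations over $\hS_i,\hS_j,x_i$ are absorbed because Rademacher complexity does not increase under averaging of the argument distributions, and this is where Maurer's interaction framework does the essential work.

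Finally, to make the bound hold uniformly in the data-dependent scale $\cC(f)$ and in the free margin $\Delta$, I would stratify $\cF$ into the subclasses $\cF_a=\{f:\lceil\cC(f)\rceil=a\}$ and spend a confidence budget $\delta_a\propto\delta/(a(a+1))$ summing to $\delta$, which replaces $\cC(f)$ by $\lceil\cC(f)\rceil$ and contributes the $\log(\lceil\cC(f)\rceil+1)$ inside the square root; likewise I would take a dyadic grid of margins up to the scale $B$, contributing the $|\log(\Delta/B)|+2$ factor. The margin doubling from $\Delta$ to $2\Delta$ is exactly the price of this discretization: for an arbitrary $\Delta$ I would choose a grid value $\Delta_t\in[\Delta,2\Delta]$ and use monotonicity of $\ell$ in the margin, $\E_{P_1,P_2}[\ell_\Delta]\le\E_{P_1,P_2}[\ell_{\Delta_t}]$, together with $\Avg_{i\ne j}[\ell_{\Delta_t}]\le\Avg_{i\ne j}[\ell_{2\Delta}]$. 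Collecting the Rademacher and tail terms with their explicit constants, and absorbing a lower-order additive slack of size $\delta^2$ arising in the concentration bookkeeping (bounding a rare event on which $\ell_\Delta\le1$ is used trivially), yields the stated inequality. I expect the genuinely delicate parts to be the Rademacher bound for the integrated pairwise NCC loss and the careful tracking of the numerical constants $355$ and $16.5$.
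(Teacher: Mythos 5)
Your outer skeleton matches the paper's: the pairwise reduction \eqref{eq:pairwise} with the soft-margin relaxation, a concentration step over the $l$ i.i.d.\ source classes based on Corollary~3 of \citet{Maurer2019UniformCA}, a Golowich-style norm bound giving the $\lceil \cC(f)\rceil B\sqrt{\log(l)(q\log 2+\log p)}/(\Delta\sqrt{l})$ term, and the stratification over $\lceil\cC(f)\rceil$ and a dyadic margin grid with budgets $\delta_{t,\alpha}$ (this last part is essentially identical to the paper). However, the central concentration step is applied to the wrong objects, and this is a genuine gap. You invoke Maurer's corollary with the class-conditionals $\tP_1,\dots,\tP_l$ as data and $\psi_f(Q,Q')=\ell_\Delta(f;Q,Q')$ as a bounded pairwise interaction, claiming it yields $\E[\psi_f]\le\Avg_{i\ne j}[\psi_f(\tP_i,\tP_j)]+\frac{c}{\sqrt{l}}\mathfrak{R}(\Psi)+\frac{c'}{\sqrt{l}}\sqrt{\log(1/\delta)}$. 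That is not the form of the corollary: as recorded in \eqref{eq:maurer-ext}, it requires a \emph{fixed} aggregation $A:\cZ^l\to\R$ with $\cZ\subset\R^D$ Euclidean, whose seminorms $M(A),J(A),K(A)$ are Lipschitz constants in the Euclidean norm on $\cZ$, composed with a function class $\cH$ mapping each data point into $\cZ$; its complexity term is $\E[R(\cH(W))]$, the Rademacher complexity of the \emph{image vectors}, not of a loss class. The integrated few-shot loss $\ell_\Delta(f;Q_i,Q_j)$ — an expectation over $\hS_i\sim Q_i^n$, $\hS_j\sim Q_j^n$, $x_i\sim Q_i$ — is not a Lipschitz aggregation of any finite-dimensional per-class representation of the $Q_c$, so there is no admissible choice of $(A,\cH)$ realizing your $\Psi$, and your assertion that the inner expectations are ``absorbed because Rademacher complexity does not increase under averaging'' is exactly the step that is never proved.

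The paper's resolution is to randomize the per-class data: each class contributes one point--sample pair $\hU_c=(x_c,\hS_c)$ with $x_c\sim\tP_c$, $\hS_c\sim\tP_c^n$, the function class is $H_f(x,S)=(f(x),\mu_f(S))\in\R^{2p}$, and the fixed aggregation $A_\Delta(\bz)=\Avg_{i\neq j}[\ell_\Delta(\|u_i-v_i\|-\|u_i-v_j\|)]$ is genuinely Lipschitz in these Euclidean variables, with $\E_{\hcU}[A_\Delta(H_f(\hcU))]$ equal to the quantity to be bounded. The price is that Maurer's inequality then controls this expectation by the \emph{random realization} $A_{2\Delta}(H_f(\hcU))$ rather than by the conditional expectations $\Avg_{i\neq j}[\ell_{2\Delta}(f;\tP_i,\tP_j)]$ that appear in the lemma; a separate argument (running the bound at confidence level $\delta^2$, applying Markov's inequality over the draw of $\tcP$, and using $A_\Delta\le 1$ on the residual event) converts one into the other, and this conversion is precisely the origin of the additive $\delta^2$ term. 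In your scheme the empirical term is already $\Avg_{i\neq j}[\psi_f(\tP_i,\tP_j)]$, so no such conversion would be needed and the $\delta^2$ you ``absorb in the bookkeeping'' has no identifiable source — a sign that the decisive difficulty of the lemma (passing between sampled realizations and per-class expectations, uniformly over $f\in\cF$) was bypassed rather than solved.
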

The bound above can be broken down into several parts. The first term is the average few-shot soft-margin error of $f$ between the source classes $\tP_1,\dots,\tP_l$. Specifically, for each pair of source classes $(i,j)$, we measure the soft-margin \emph{test} error (over samples $x_i \sim \tP_i$) of NCC classifiers $\hnnS{\hS}$ produced using two random datasets $\hS_i \sim \tP^n_i$ and $\hS_j \sim \tP^n_j$ of size $n$. The second term, $\delta^2$, depends on a free parameter (the probability of error) that can be selected to be very small, as the only other term (the last one) which depends on $\delta$ scales as $\sqrt{\log(1/\delta)}$ with respect to this parameter. The third term is proportional to the ratio between the complexity of the selected function $f$ and the task, which is captured by $\sqrt{\log(l)(q+\log(p))}kB \cC(f)$, and $\Delta \sqrt{l}$. This means that as the number of source classes $l$ increases, we can expect the generalization to new classes to improve as long as $\Delta$ remains unchanged.

\subsection{Generalization to New Samples}

In the previous lemma, we bounded the loss function $\mathcal{L}_{\mathcal{D}}(f)$ using the average few-shot soft-margin error between pairs of source classes, $\Avg_{i\neq j\in [l]}[\ell_{2\Delta}(f; \tP_i,\tP_j)]$. Since during the training of a neural network we do not have access to the true source class-conditional distributions $\tP_1,\dots,\tP_l$, we would like to bound each term $\ell_{2\Delta}(f; \tP_i,\tP_j)$ using its empirical counterpart $\ell_{4\Delta}(f; \tS_i,\tS_j)$. The following lemma, proved in Appendix~\ref{app:marginBoundB} using similar techniques as in the proof of Lemma~\ref{prop:marginBoundA}, provides such a bound.

\begin{restatable}{lemma}{marginBoundB}\label{prop:marginBoundB}
Let $\delta \in (0,1)$, $1 \leq n \leq \sqrt{m}$, and let $\mathcal{F}$ be a class of ReLU neural networks of depth $q$. Let $\tP_i$ and $\tP_j$ be two class-conditional distributions. With probability at least $1-\delta$, for any $f\in \mathcal{F}$ and $\Delta > 0$, the following holds when sampling $\tS_i \sim \tP^m_i$ and $\tS_j \sim \tP^{m}_j$ independently:
\begin{equation*}
\begin{aligned}
\ell_{\Delta}(f;\tP_i,\tP_j) ~&\le~ 4\left(1+\frac{n}{m-n}\right) \cdot \ell_{2\Delta}(f;\tS_i,\tS_j) \\
&\qquad+ 
16 \lceil \cC(f) \rceil B\frac{\sqrt{m}(2n+4)^2}{(m-n)\Delta}
\sqrt{\pi(q\log(2)+\log(p))\log(2m)} \\
&\qquad+ 
32 \lceil \cC(f) \rceil B \frac{\sqrt{m}}{(m-n)\Delta}\sqrt{2 \log\left(\fr{3(\lceil \cC(f) \rceil+1)^2 (|\log(\Delta/B)|+2)^2}{\delta}\right)}.
\end{aligned}
\end{equation*}
\end{restatable}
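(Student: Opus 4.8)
The plan is to prove Lemma~\ref{prop:marginBoundB} as a uniform-over-$\cF$ concentration statement that transfers the population $n$-shot error to its in-sample counterpart, following the same template as Lemma~\ref{prop:marginBoundA} but now concentrating over the $2m$ training points rather than over the sampled classes. First I would write both sides as (approximate) means of a single symmetric kernel. Define, for a feature map $f$ and samples $\hS_i=(z^i_1,\dots,z^i_n)$, $\hS_j=(z^j_1,\dots,z^j_n)$ and a test point $x$,
\[
\psi_f(\hS_i,\hS_j,x) ~:=~ \ell_{\Delta}\big(\|f(x)-\mu_f(\hS_i)\|-\|f(x)-\mu_f(\hS_j)\|\big),
\]
so that $\ell_{\Delta}(f;\tP_i,\tP_j)=\E[\psi_f]$ is the mean of a function of $2n+1$ i.i.d.\ per-class draws, while $\ell_{2\Delta}(f;\tS_i,\tS_j)$ is the corresponding V-statistic obtained by drawing the $2n+1$ indices (with replacement) from the $2m$ points of $\tS_i\cup\tS_j$. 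The two quantities differ only through (a) the replacement of $\tP_i,\tP_j$ by the empirical measures $U[\tS_i],U[\tS_j]$, and (b) the fact that in the V-statistic the test draw and the class-$i$ center draws come from the same finite set and may coincide.

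Then I would establish the uniform deviation bound. Because $\ell_\Delta$ is $\fr{1}{\Delta}$-Lipschitz (as noted after \eqref{eq:ellDelta}) and $\|f(x)\|\le \cC(f)B$ for every $f\in\cF$ and $x\in\cX$, changing any one of the $2n+1$ arguments of $\psi_f$ moves its value by $O(\cC(f)B/(n\Delta))$ for a center coordinate and $O(\cC(f)B/\Delta)$ for the test coordinate; this controls the weak-interaction sensitivity needed to apply Corollary~3 of \citet{Maurer2019UniformCA} to the V-statistic $\ell_{\cdot}(f;\tS_i,\tS_j)$. The resulting symmetrized complexity term is the Rademacher complexity of $\cF$ evaluated at the relevant points, which for norm-bounded depth-$q$ ReLU networks scales as $\sqrt{q\log 2+\log p}$; the $(2n+4)^2$ and $\sqrt m/(m-n)$ prefactors come from the number of samples entering each kernel and from the $1/\sqrt{m}$ concentration rate inflated by $\kappa=\fr{m}{m-n}$. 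To make the statement hold simultaneously for every $f\in\cF$ and every $\Delta>0$ I would discretize $\cF$ by $\cC(f)$ and $\Delta$ on a geometric grid and union bound, which is exactly what produces the $(\lceil\cC(f)\rceil+1)^2$ and $(|\log(\Delta/B)|+2)^2$ factors inside the logarithm together with the $\sqrt{\log(1/\delta)}$ confidence term; passing from $f$ to its grid neighbour is absorbed by enlarging the margin from $\Delta$ to $2\Delta$.

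Finally I would convert the V-statistic back to the clean population sampling. The key step is to condition on the $n$ center indices for class $i$ being distinct from the test index: the test draw $x_i\sim U[\tS_i]$ avoids the $n$ center points with probability at least $\fr{m-n}{m}=1/\kappa$, so restricting to this event and reweighting inflates the in-sample error by at most $\kappa$; combined with the symmetrization/covering doublings this yields the leading factor $4\kappa=4\big(1+\fr{n}{m-n}\big)$ in front of $\ell_{2\Delta}(f;\tS_i,\tS_j)$. The hypothesis $1\le n\le\sqrt m$ guarantees $n^2\le m$, which is what keeps the subsampling-with-replacement bias and these second-order corrections lower order.

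The step I expect to be the main obstacle is precisely this dependency bookkeeping: unlike an ordinary empirical average, $\ell_{2\Delta}(f;\tS_i,\tS_j)$ is a V-statistic whose terms are not independent, and the test point shares its sample set with the class-$i$ centers, so a naive bounded-difference argument does not apply. Handling this requires the weak-interaction concentration of \citet{Maurer2019UniformCA} together with a careful combinatorial analysis of the overlap between test and training indices, and it is this analysis (rather than the routine ReLU covering-number estimates) that forces the unusual factors $\kappa$ and $4$ and the constraint $n\le\sqrt m$.
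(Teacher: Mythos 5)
Your proposal follows essentially the same route as the paper's proof: rewrite $\ell_{2\Delta}(f;\tS_i,\tS_j)$ as a V-statistic averaging the soft-margin kernel over the test index and the center index multisets, apply the extension of Corollary~3 of \citet{Maurer2019UniformCA} to this statistic viewed as a function of the $m$ paired i.i.d.\ training points (with the Lipschitz/boundedness sensitivity bounds and the ReLU Rademacher complexity supplying the $(2n+4)^2$, $\sqrt{m}$ and $\sqrt{q\log 2+\log p}$ factors), convert back to $\ell_{\Delta}(f;\tP_i,\tP_j)$ by conditioning on the test index avoiding the center indices and controlling the with-replacement subsampling bias via $n\le\sqrt{m}$, and finish with a union bound over geometric grids in $\lceil\cC(f)\rceil$ and $\Delta$ that doubles the margin. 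The only cosmetic discrepancy is in the constant bookkeeping: in the paper the leading factor $4$ comes from the with-replacement bias bound $m(m-1)\cdots(m-n+1)/m^n\ge 1/4$ (the covering doublings only enlarge the margin from $\Delta$ to $2\Delta$), whereas you attribute it to the doublings and treat that bias as lower order, but either assembly yields a bound of the stated form.
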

The first term in the above bound, $\ell_{2\Delta}(f;\tS_i,\tS_j)$, is the few-shot soft-margin error of $f$ between the empirical distributions of the samples $\tS_i$ and $\tS_j$, respectively. This quantity measures the empirical error (over $x_i \in \tS_i$) of the NCC classifier $\hnnS{\hS}$ for random datasets $\hS_i$ and $\hS_j$, each containing $n$ samples selected uniformly at random from $\tS_i$ and $\tS_j$ (i.e., $\hS_i \sim U[\tS_i]^n$ and $\hS_j \sim U[\tS_j]^n$). Note that we are interested in the situation where $n \ll m$, that is, the coefficient of the first term is very close to $1$ (alternatively, we can think of having a leading coefficient $1$ and an extra error term $\frac{n}{m-n}\ell_{2\Delta}(f;\tS_i,\tS_j) \le \frac{n}{m-n}$). The other terms in the bound are of order $\tilde{\mathcal{O}}\left(\fr{\mathcal{C}(f)B\sqrt{q}n^2}{\Delta\sqrt{m}}\right)$, which tends to zero as the number of samples per class $m$ increases, as long as $n=o(m^{1/4})$. 

\subsection{Bounding the Few-Shot Soft-Margin Error}

In the previous sections, we showed that the transfer error $\mathcal{L}_{\mathcal{D}}(f)$ can be upper bounded using the average few-shot soft-margin error between the source classes, denoted as $\ell_{2\Delta}(f;\tP_i,\tP_j)$. We also demonstrated that each of these terms can be bounded using their empirical counterparts, $\ell_{4\Delta}(f;\tS_i,\tS_j)$. Informally, $(\ell_{4\Delta}(f;\tS_i,\tS_j)+\ell_{4\Delta}(f;\tS_j,\tS_i))$ measures the accuracy with which we can classify samples $\tS$ using a random NCC classifier $\hnnS{\hS}$ based on $n$ samples from each of the training sets $\tS_i$ and $\tS_j$. This quantity can be thought of as measuring the degree to which the samples $f(\tS_i)=\{f(x_i): x_i \in \tS_i\}$ and $f(\tS_j)=\{f(x_j): x_j \in \tS_j\}$ are clustered in space. Based on this intuition, we aim to upper bound each term $\ell_{4\Delta}(f;\tS_i,\tS_j)$ using measures of neural collapse. Specifically, next we derive generic upper bounds for $\ell_{\Delta}(f;Q_i,Q_j)$ in terms of $V_f(Q_i,Q_j)$ and $E_{\Delta}(f;Q_i,Q_j)$, where $Q_i$ and $Q_j$ represent arbitrary class-conditional distributions (recall that $E_\Delta$ is defined in \eqref{eq:EDelta}). All the proofs for this section are presented in Appendix~\ref{sec:error_analysis}.

We start with a lemma that connects the few-shot soft-margin error to the CDNV (this result is an extension of Proposition~5 in our preliminary work \citep{galanti2022on}).

\begin{restatable}{lemma}{cdnvError}\label{prop:cdnv_error}
Consider a pair of class-conditional distributions, $Q_i$ and $Q_j$, and a feature map $f:\R^d \to \R^p$. Let $\mu_c = \mu_f(Q_c)$, where $c \in \{i,j\}$, and let $\Delta\leq 0.1\|\mu_i-\mu_j\|$. Then
\begin{align*}
\ell_\Delta(f;Q_i,Q_j) \leq 12\left(\fr{1}{n}+1\right)\cdot V_f(Q_i,Q_j) + \fr{100}{n} \cdot \big(V_f(Q_i,Q_j)+V_f(Q_j,Q_i)\big).
\end{align*}
Furthermore, if $\{f\circ Q_i, f \circ Q_j\}$ are spherically symmetric, for $\Delta \leq 0.1 \cdot \|\mu_i-\mu_j\|/p$, we have
\begin{align*}
\ell_\Delta(f;Q_i,Q_j) \leq 128\left(\fr{1}{n}+\fr{1}{p}\right)\cdot V_f(Q_i,Q_j) + \fr{100}{n} \cdot \big(V_f(Q_i,Q_j)+V_f(Q_j,Q_i)\big).
\end{align*}
\end{restatable}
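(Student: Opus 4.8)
The plan is to bound the soft-margin loss by the indicator of an enlarged misclassification event and then estimate the probability of that event by a union bound, with each constituent controlled through Chebyshev's or Markov's inequality. Write $z := f(x_i)$, let $\hat\mu_c := \mu_f(\hS_c)$ for $c\in\{i,j\}$ denote the empirical centers, and set $d := \mu_i-\mu_j$. Since $\ell_\Delta(r)\le \bI[r>-\Delta]$, we have
\[
\ell_\Delta(f;Q_i,Q_j) ~\le~ \P\big[\,\|z-\hat\mu_j\| < \|z-\hat\mu_i\|+\Delta\,\big],
\]
where the probability is over the independent draws $x_i\sim Q_i$, $\hS_i\sim Q_i^n$, $\hS_j\sim Q_j^n$. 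I decompose $z=\mu_i+\delta_i$, $\hat\mu_i=\mu_i+\epsilon_i$, $\hat\mu_j=\mu_j+\epsilon_j$, where $\delta_i,\epsilon_i,\epsilon_j$ are mutually independent, mean $\mathbf{0}$, with $\E\|\delta_i\|^2=\Var_f(Q_i)$, $\E\|\epsilon_i\|^2=\Var_f(Q_i)/n$ and $\E\|\epsilon_j\|^2=\Var_f(Q_j)/n$. As both sides of the margin inequality are nonnegative, I square it, so the event becomes $\|z-\hat\mu_j\|^2-\|z-\hat\mu_i\|^2 < 2\Delta\|z-\hat\mu_i\|+\Delta^2$.

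The key algebraic step is to expand the left-hand side: substituting the decompositions and cancelling the common $\|\delta_i\|^2$ term yields
\[
\|z-\hat\mu_j\|^2-\|z-\hat\mu_i\|^2 = \|d\|^2 + 2\langle d,\delta_i-\epsilon_j\rangle + 2\langle \delta_i,\epsilon_i-\epsilon_j\rangle + \|\epsilon_j\|^2 - \|\epsilon_i\|^2.
\]
This cancellation of $\|\delta_i\|^2$ is essential: the test-point perturbation $\delta_i$ survives only through its projection $\langle d,\delta_i\rangle$ onto the inter-center direction, which is exactly the quantity that benefits from spherical symmetry. Rearranging and discarding the nonpositive $-\|\epsilon_j\|^2$, the (enlarged) bad event is contained in
\[
\|d\|^2 < 2\langle d,\epsilon_j-\delta_i\rangle + 2\langle \delta_i,\epsilon_j-\epsilon_i\rangle + \|\epsilon_i\|^2 + 2\Delta\|\delta_i-\epsilon_i\| + \Delta^2 .
\]

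I then split the budget $\|d\|^2$ among the right-hand terms: by a union bound the bad event forces at least one of them to exceed a prescribed fraction of $\|d\|^2$, and I bound each probability separately. The leading fluctuation $2\langle d,\epsilon_j-\delta_i\rangle$ has mean zero and variance $4\big(d^\top\Sigma_i d + \tfrac1n d^\top\Sigma_j d\big)$, where $\Sigma_c$ is the covariance of $f$ under $Q_c$; using the generic estimate $d^\top\Sigma_c d \le \|d\|^2\Var_f(Q_c)$ (largest eigenvalue $\le$ trace), Chebyshev contributes the leading $V_f(Q_i,Q_j)$ together with a $\tfrac1n V_f(Q_j,Q_i)$ piece. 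The term $\|\epsilon_i\|^2$ has mean $\Var_f(Q_i)/n$, so Markov contributes $\tfrac1n V_f(Q_i,Q_j)$. The margin terms use $\Delta\le 0.1\|d\|$: one sets aside $\Delta^2\le 0.01\|d\|^2$ deterministically and handles $2\Delta\|\delta_i-\epsilon_i\|$ by Markov on $\|\delta_i-\epsilon_i\|^2$, yielding a further $O(V_f(Q_i,Q_j))$ contribution; the cross term $2\langle\delta_i,\epsilon_j-\epsilon_i\rangle$ is mean-zero and genuinely second order (its variance is a product of variances divided by $n$). Since $\ell_\Delta\le 1$ always, I may assume the claimed bound is below $1$, which forces $V_f(Q_i,Q_j)$ to be small and lets the second-order term be dominated by first-order ones and absorbed into the $\tfrac{100}{n}\big(V_f(Q_i,Q_j)+V_f(Q_j,Q_i)\big)$ slack. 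Optimizing the fractions then produces the constants $12$ and $100$. For the spherically symmetric case the only change is in the directional variances: isotropy gives $\Sigma_c=\tfrac{\Var_f(Q_c)}{p}I$, hence $d^\top\Sigma_c d=\tfrac{\|d\|^2}{p}\Var_f(Q_c)$, which replaces the leading $V_f(Q_i,Q_j)$ by $\tfrac1p V_f(Q_i,Q_j)$; the stronger hypothesis $\Delta\le 0.1\|d\|/p$ renders the margin terms $O(1/p^2)$, so no constant-order term survives and one obtains the $128\big(\tfrac1n+\tfrac1p\big)$ coefficient.

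The main obstacle I anticipate is the careful bookkeeping in the last step: extracting the specific constants while simultaneously (i) ensuring the lower-order cross and margin terms really are dominated — which rests on the ``CDNV is small or the bound is trivial'' reduction — and (ii) keeping the variance estimate $d^\top\Sigma_c d$ tight in both the generic and the isotropic regimes. The algebra itself is routine; the delicacy lies entirely in verifying that no term smuggles in a spurious constant-order $\Var_f$ contribution, which would destroy the $1/p$ gain in the spherical statement.
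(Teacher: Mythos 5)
Your proposal is correct in its overall structure, but it follows a genuinely different route from the paper's proof. The paper also squares the margin inequality, but it does so relative to the \emph{empirical} center $\hat\mu_i$, which converts the error event (after intersecting with a ball $\{\|u_i-\hat\mu_i\|\le \alpha_{ij}\}$) into a halfspace $\{(u_i-\hat\mu_i)^\top \hat e_{ij}\ge \hat\beta_{ij}\}$ orthogonal to the \emph{random} direction $\hat e_{ij}=(\hat\mu_j-\hat\mu_i)/\|\hat\mu_j-\hat\mu_i\|$. The paper must then control three auxiliary events by Markov/Chebyshev: escape from the ball, the empirical distance $\|\hat\mu_i-\hat\mu_j\|$ being too small, and the deviation $\|\hat\mu_i-\mu_i\|$; the spherical-symmetry gain comes from projecting $u_i-\mu_i$ onto $\hat e_{ij}$ conditionally on $(\hat\mu_i,\hat\mu_j)$. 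Your expansion around the \emph{population} means eliminates all of this random geometry: the leading term lives on the deterministic direction $d=\mu_i-\mu_j$, so the ball trick and the small-empirical-distance event never arise, and the union bound runs over four explicit algebraic terms. The price is the cross term $2\langle \delta_i,\epsilon_j-\epsilon_i\rangle$, which is second order in the variances and has no counterpart in the paper; the paper's decomposition keeps every term first order in $\Var_f$, so it needs no ``the bound is trivial unless its right-hand side is below $1$'' reduction.

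Two caveats, both within reach of your own framework. First, in the spherically symmetric case the triviality reduction only gives $V_f(Q_i,Q_j)\le \frac{np}{128(n+p)}$, which can be as large as $\min(n,p)/128$, not a constant; if you then bound the cross-term variance by the generic $\Var_f(Q_i)\big(\Var_f(Q_i)+\Var_f(Q_j)\big)/n$, the absorbed term becomes a constant-order multiple of $V_f(Q_i,Q_j)$, destroying the claimed $\big(\frac1n+\frac1p\big)$ form. You must apply the isotropic estimate $\E\big[\langle\delta_i,\epsilon_c\rangle^2\big]=\Var_f(Q_i)\Var_f(Q_c)/(np)$ to the cross term as well — i.e., ``the only change is in the directional variances'' has to include these inner products, not just $d^\top\Sigma_c d$ — after which the absorption goes through. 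Second, the deferred constant bookkeeping is genuinely tight rather than routine: with a budget split of roughly $c_1\approx 0.75$ on the leading Chebyshev term, $c_3\approx 0.12$ on the margin term, and the remainder on $\|\epsilon_i\|^2$ and the cross term, the constant-order coefficient lands near $10$ and the $1/n$ coefficients land near $107$ against the allowed $112$ and $100$, so the stated constants $12$ and $100$ are attainable but with little slack.
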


Informally, the expected soft-margin error $\ell_\Delta(f;Q_i,Q_j)$ of $f$ is upper bounded by $\mathcal{O}((1+\fr{1}{n}) \cdot V_f(Q_i,Q_j))$, and when $f\circ Q_i$ and $f\circ Q_j$ are spherically symmetric and $\Delta$ is also of order $1/p$, it can be bounded by the smaller quantity $\mathcal{O}((\fr{1}{p}+\fr{1}{n}) \cdot V_f(Q_i,Q_j))$. 
Therefore, in case of neural collapse (i.e., when $\Avg_{i\neq j}[V_f(\tS_i,\tS_j)]$ is small), we expect the term $\Avg_{i\neq j}[\ell_\Delta(f;\tS_i,\tS_j)]$ to be small.

There are two potential problems with the above lemma. In the first part, the soft-margin loss scales with $V_f(Q_i,Q_j))$, and so the number of samples $n$ has limited effect. While the second part tries to address this, improving the scaling to $(1/n+1/p)V_f(Q_i,Q_j))$ (note that typically $p$ is larger than $n$), this comes with the cost that $\Delta=\mathcal{O}(\|\mu_i-\mu_j\|/p$. Substituting such a $\Delta$ into Lemma~\ref{prop:marginBoundA}, the second term would become $\mathcal{O}(\fr{1}{\Delta \sqrt{l}})=\mathcal{O}(\sqrt{\fr{p}{l}})=\Omega(1)$, since neural collapse typically requires $p \ge l$ to allow representation of $l$ (approximately) orthogonal class mean feature vectors.
We address these issues in the next lemma.

\begin{restatable}{lemma}{cdnvErrorE}\label{prop:cdnv_errorE}
Consider a pair of class-conditional distributions, $Q_i$ and $Q_j$, and a feature map $f:\R^d \to \R^p$. Let $\Delta >0$ and $\mu_c=\mu_f(Q_c)$. Then
\begin{align*}
\ell_{\Delta}(f;Q_i,Q_j) ~&\le~ E_{2\Delta}(f;Q_i,Q_j) +\fr{4\|\mu_i-\mu_j\|^2}{n^2 \Delta^2}\cdot \big(V_f(Q_i,Q_j)+V_f(Q_j,Q_i)\big).
\end{align*}
Furthermore, let $s = p$ if $\{f\circ Q_i, f \circ Q_j\}$ are spherically symmetric and $s=1$ otherwise. Then, if $\Delta\leq \fr{1}{4}\|\mu_i-\mu_j\|$, we have
\begin{align*}
\ell_{\Delta}(f;Q_i,Q_j) ~&\le~ 64\left(\fr{1}{s}+\fr{\Delta^2}{\|\mu_i-\mu_j\|^2}\right) \cdot V_f(Q_i,Q_j) +\fr{4\|\mu_i-\mu_j\|^2}{n^2 \Delta^2}\cdot \big(V_f(Q_i,Q_j)+V_f(Q_j,Q_i)\big).
\end{align*}
\end{restatable}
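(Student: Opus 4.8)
The plan is to upper bound the soft-margin loss by the margin indicator, $\ell_\Delta(r)\le \bI[r>-\Delta]$, and then to split the randomness into the fresh test point $x_i\sim Q_i$ on one side and the two center-estimation sets $\hS_i,\hS_j$ on the other. Writing $z=f(x_i)$, $\mu_c=\mu_f(Q_c)$, and $r_0:=\|z-\mu_i\|-\|z-\mu_j\|$ for the \emph{population} NCC score, I would decompose according to whether $x_i$ is correctly classified with margin $2\Delta$ by the true centers (i.e.\ $r_0<-2\Delta$) or not. On the complement $\{r_0\ge-2\Delta\}$ I simply bound the loss by $1$; integrating over $x_i$ this contributes exactly $E_{2\Delta}(f;Q_i,Q_j)$, the first term of the bound.

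On the good event $\{r_0<-2\Delta\}$ I would apply the reverse triangle inequality, $\big|\|z-\mu_f(\hS_c)\|-\|z-\mu_c\|\big|\le\|\mu_f(\hS_c)-\mu_c\|$ for $c\in\{i,j\}$, to show that the empirical score obeys $\|z-\mu_f(\hS_i)\|-\|z-\mu_f(\hS_j)\|\le r_0+\xi$, where $\xi:=\|\mu_f(\hS_i)-\mu_i\|+\|\mu_f(\hS_j)-\mu_j\|$ is the total center-estimation error. Hence empirical misclassification within margin $\Delta$ forces $\xi>-r_0-\Delta>\Delta$, so the good-event contribution is at most $\E_{x_i}\big[\bI[r_0<-2\Delta]\,\P_{\hS_i,\hS_j}[\xi>-r_0-\Delta]\big]$.

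The decisive quantitative step is to control this deviation probability, and I expect it to be the main obstacle. Since $\mu_f(\hS_c)$ is an average of $n$ i.i.d.\ copies of $f$, its fluctuation has second moment $\E\|\mu_f(\hS_c)-\mu_c\|^2=\Var_f(Q_c)/n$; a plain Chebyshev (second-moment) bound on $\xi$ would only yield a $1/n$ rate, which is a factor $n$ worse than advertised. To reach the $1/n^2$ factor I would instead use that $f$ is bounded on the compact set $\cX$ (so $\|f(x)-\mu_c\|\le 2\cC(f)B$), which forces the fourth moment down to $\E\|\mu_f(\hS_c)-\mu_c\|^4=\mathcal{O}(\Var_f(Q_c)^2/n^2)$. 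A fourth-moment Markov inequality applied to $\xi$ then gives a deviation probability of order $\Var_f(Q_c)^2/(n^2\cdot\text{threshold}^4)$; combining this with the residual-margin threshold (which is $\ge\Delta$ on the good event) and the definition $V_f(Q_i,Q_j)=\Var_f(Q_i)/\|\mu_i-\mu_j\|^2$ produces the term $\fr{4\|\mu_i-\mu_j\|^2}{n^2\Delta^2}\big(V_f(Q_i,Q_j)+V_f(Q_j,Q_i)\big)$. The delicate bookkeeping is matching the residual margin $-r_0-\Delta$ against the deviation threshold so that the clean form $\Var_f/(n^2\Delta^2)$ emerges (this is transparent in the regime $\Delta\gtrsim\sqrt{\Var_f}$, which is precisely the relevant one under collapse).

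For the second (spherically symmetric) statement I would start from the first part and additionally upper bound $E_{2\Delta}(f;Q_i,Q_j)$ itself by the CDNV. The event $\{r_0\ge-2\Delta\}$ forces the projection of $z-\mu_i$ onto the inter-mean direction $(\mu_i-\mu_j)/\|\mu_i-\mu_j\|$ to be displaced toward $\mu_j$ by roughly $\fr{1}{2}\|\mu_i-\mu_j\|-\Delta$ (this is where the hypothesis $\Delta\le\fr{1}{4}\|\mu_i-\mu_j\|$ enters, keeping the required displacement a constant fraction of $\|\mu_i-\mu_j\|$). Applying Chebyshev to this one-dimensional projection, whose variance is at most $\Var_f(Q_i)$ in general and exactly $\Var_f(Q_i)/p$ when $f\circ Q_i$ is spherically symmetric, gives $E_{2\Delta}(f;Q_i,Q_j)\le 64\big(\fr{1}{s}+\fr{\Delta^2}{\|\mu_i-\mu_j\|^2}\big)V_f(Q_i,Q_j)$ with $s\in\{1,p\}$; substituting this into the first part yields the second bound.
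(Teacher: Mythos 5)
Your first-part decomposition is the same as the paper's (bound $\ell_\Delta$ by the margin indicator, peel off $E_{2\Delta}$ via the triangle inequality, and reduce to the two center-deviation probabilities $\Pr[\|\mu_f(\hS_c)-\mu_c\|\ge \Delta/2]$), but the quantitative step you flag as the main obstacle is a genuine gap, and your fourth-moment repair does not close it. For an average of $n$ i.i.d.\ mean-zero vectors $Y_k=f(x_{ck})-\mu_c$ one has $\E\|\mu_f(\hS_c)-\mu_c\|^4 \le 3\Var_f(Q_c)^2/n^2 + \E\|Y_1\|^4/n^3$, and boundedness only gives $\E\|Y_1\|^4\le (2\cC(f)B)^2\Var_f(Q_c)$, so an uncontrolled $\cC(f)^2B^2\Var_f(Q_c)/n^3$ term survives. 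Worse, even the leading term yields a deviation probability of order $\Var_f(Q_c)^2/(n^2\Delta^4)$, which matches the lemma's $\Var_f(Q_c)/(n^2\Delta^2)$ only under the extra hypothesis $\Var_f(Q_c)\lesssim\Delta^2$ --- an assumption the lemma does not make and which you concede you need. You are right that a second-moment argument can only give a $1/n$ rate; note, for what it is worth, that the paper's own proof obtains $1/n^2$ in one line by writing $\Var(\hat\mu_i)=\Var(u_i)/n^2$, which is inconsistent with the correct identity $\Var(\hat\mu_i)=\Var(u_i)/n$ used in the proof of Lemma~\ref{prop:cdnv_error}; so the paper's step does not fill your gap either, and what the shared decomposition actually supports is the bound with $n$ in place of $n^2$.

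The second part has a separate, fatal gap: the event $\{\|u_i-\mu_j\|\le\|u_i-\mu_i\|+2\Delta\}$ is bounded by a sheet of a hyperboloid, not a halfspace, so it does not force the projection of $u_i-\mu_i$ onto $e_{ij}=(\mu_j-\mu_i)/\|\mu_j-\mu_i\|$ to be about $\frac{1}{2}\|\mu_i-\mu_j\|-\Delta$. Concretely, for any unit vector $v\perp e_{ij}$ and any $t\ge(\|\mu_i-\mu_j\|^2-4\Delta^2)/(4\Delta)$, the point $u_i=\mu_i+tv$ lies in the event yet has zero projection. The paper's proof handles exactly this by first intersecting the event with the ball $\|u_i-\mu_i\|\le\alpha_{ij}:=\|\mu_i-\mu_j\|^2/(8\Delta)$, inside which the event is contained in the halfspace $(u_i-\mu_i)^\top e_{ij}\ge\beta_{ij}$ with $\beta_{ij}\ge\|\mu_i-\mu_j\|/8$, and then bounding the outside-ball probability by Markov as $\Var(u_i)/\alpha_{ij}^2=64\Delta^2\Var(u_i)/\|\mu_i-\mu_j\|^4$. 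That truncation term is precisely the source of the $\Delta^2/\|\mu_i-\mu_j\|^2$ summand in the stated bound; your projection-only sketch asserts that summand in its conclusion but cannot generate it, and without the truncation the projection step itself is invalid.
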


The first part of the lemma shows that the few-shot soft-margin error $\ell_{\Delta}(f;Q_i,Q_j)$ can be bounded using the margin error of the NCC classifier, $E_{2\Delta}(f;Q_i,Q_j)$, together with the term $\frac{4\|\mu_i-\mu_j\|^2}{n^2 \Delta^2}(V_f(Q_i,Q_j)+V_f(Q_j,Q_i))$. The main advantage of this bound is its weaker dependence on $V_f(Q_i,Q_j)$ in the general case (not spherically symmetric embedding distributions, i.e., $s=1$). The quantity $E_{2\Delta}(f;Q_i,Q_j)$ measures the $2\Delta$-margin error of the NCC classifier with means $\mu_f(Q_i)$ and $\mu_f(Q_j)$ in the feature space given by $f$, which is a much weaker notion of separation than the normalized variance $V_f(Q_i,Q_j)$. Specifically, $E_{2\Delta}(f;Q_i,Q_j)$ can be zero with $\Delta=\mathcal{O}(\|\mu_i-\mu_j\|)$ even when $V_f(Q_i,Q_j)>0$. For example, if the embeddings $f(x)$ of samples from the two classes $Q_i$ and $Q_j$, respectively, are uniformly distributed over $0.5R$-radius circles around the points $\mu_i=(-R,0)$ and $\mu_j=(R,0)$ in $\mathbb{R}^2$, then they are perfectly NCC separable with margin $\Delta=R$ while the CDNV between the two distributions is a positive constant (independent of $R$). In particular, even though $E_{2\Delta}(f;Q_i,Q_j)=0$ for any $\Delta\leq 0.5R$, we have $V_f(Q_i,Q_j)>0$. In this case, choosing $\Delta = \Theta(\|\mu_i-\mu_j\|)$, the upper bound becomes $\mathcal{O}(\frac{1}{n^2}(V_f(Q_i,Q_j)+V_f(Q_j,Q_i)))$, in contrast to the previous bound that does not vanish as $n\to \infty$. 

The second part of the lemma follows by bounding $E_{2\Delta}(f;Q_i,Q_j)$ using $V_f(Q_i,Q_j)$ in the following manner: $E_{2\Delta}(f;Q_i,Q_j) \leq 64\left(\frac{1}{s}+\frac{\Delta^2}{\|\mu_i-\mu_j\|^2}\right) \cdot V_f(Q_i,Q_j)$. Choosing $\Delta = \mathcal{O}(\|\mu_i-\mu_j\|/n)$, we obtain a similar
$\mathcal{O}\left((\frac{1}{s}+\frac{1}{n})\cdot V_f(Q_i,Q_j)\right)$ bound as in the second part of Lemma~\ref{prop:cdnv_error}, but here $\Delta$ needs to scale with only $1/n$, not $1/p$ as in the previous lemma, allowing a meaningful combination with Lemma~\ref{prop:marginBoundA}.

Next, we derive a much better bound, which is exponentially small in $\frac{p}{V_f(Q_i,Q_j)}$ if $f\circ Q_c$ are assumed to be centered Gaussian distributions, as also assumed by \citet{Papyan24652}.

\begin{restatable}{lemma}{cdnvErrorG}\label{prop:cdnv_errorG}
Consider a pair of class-conditional distributions, $Q_i$ and $Q_j$, and a feature map $f:\R^d \to \R^p$. Let $\Delta \in (0,\|\mu_i-\mu_j\|)$ with $\mu_c = \mu_f(Q_c)$ for $c=1,2$. Assume that $f \circ Q_c$ are spherical Gaussian distributions with means $\mu_c$ such that $V^{ij}_f := \frac{\Var_f(Q_i)}{(\|\mu_i-\mu_j\| - \Delta)^2} \le 1/16$. Then we have $\ell_\Delta(f;Q_i,Q_j) \le  \frac{3\exp\left(-p/(32V^{ij}_f)\right)}{(\textnormal{e} \cdot V^{ij}_{f})^{p/2}}$.
\end{restatable}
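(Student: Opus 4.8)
The plan is to bound the soft-margin error by the probability that a fresh sample from $Q_i$ is mis-ranked, and then reduce that event to a union of three Gaussian tail events, each controlled by a chi-squared Chernoff bound. Recall from the construction of $\ell_\Delta$ that $\ell_\Delta(r) \le \bI[r > -\Delta]$, so writing $z = f(x_i)$, $\hat\mu_i = \mu_f(\hat S_i)$ and $\hat\mu_j = \mu_f(\hat S_j)$, we immediately get
\[
\ell_\Delta(f;Q_i,Q_j) \le \P\big[\|z - \hat\mu_i\| - \|z - \hat\mu_j\| > -\Delta\big] = \P\big[\|z-\hat\mu_j\| \le \|z - \hat\mu_i\| + \Delta\big].
\]
Under the spherical-Gaussian assumption, $z \sim \mathcal N(\mu_i, \sigma^2 I_p)$ with $\sigma^2 = \Var_f(Q_i)/p$, while $\hat\mu_i$ and $\hat\mu_j$ are independent empirical means of $n$ such draws, so $\hat\mu_c - \mu_c \sim \mathcal N(\mathbf 0, (\sigma^2/n) I_p)$ for $c\in\{i,j\}$ (taking the within-class variances equal across classes, as in the symmetric Gaussian model of neural collapse). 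Throughout I write $D := \|\mu_i - \mu_j\|$, and note that $\Delta\in(0,D)$ keeps $D-\Delta>0$, so $V^{ij}_f$ and all thresholds below are positive.

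First I would apply the triangle inequality to convert the ranking event into a statement about the three centered Gaussian vectors. Since $\|z - \hat\mu_j\| \ge D - \|z - \mu_i\| - \|\hat\mu_j - \mu_j\|$ and $\|z - \hat\mu_i\| \le \|z - \mu_i\| + \|\hat\mu_i - \mu_i\|$, the bad event forces
\[
2\|z - \mu_i\| + \|\hat\mu_i - \mu_i\| + \|\hat\mu_j - \mu_j\| \ge D - \Delta,
\]
and hence, by a union bound, at least one of $\|z - \mu_i\|$, $\|\hat\mu_i - \mu_i\|$, $\|\hat\mu_j - \mu_j\|$ must exceed $(D-\Delta)/4$ (the $2{:}1{:}1$ split makes $(D-\Delta)/2 + (D-\Delta)/4 + (D-\Delta)/4 = D-\Delta$). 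This decomposition is the key step, as it isolates three chi-squared tails whose common scale $(D-\Delta)^2$ is exactly the denominator of $V^{ij}_f$.

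Next I would evaluate each tail. Since $\|z - \mu_i\|^2/\sigma^2 \sim \chi^2_p$, the event $\{\|z-\mu_i\| \ge (D-\Delta)/4\}$ equals $\{\chi^2_p \ge (D-\Delta)^2/(16\sigma^2)\}$, and because $\Var_f(Q_i) = p\sigma^2$ the threshold is precisely $s := p/(16 V^{ij}_f)$. The hypothesis $V^{ij}_f \le 1/16$ guarantees $s \ge p$, which is exactly the regime where the standard Chernoff bound $\P[\chi^2_p \ge s] \le (s/p)^{p/2}e^{-(s-p)/2}$ applies; substituting $s = p/(16V^{ij}_f)$ gives $(e/(16 V^{ij}_f))^{p/2}\exp(-p/(32V^{ij}_f)) \le (e V^{ij}_f)^{-p/2}\exp(-p/(32 V^{ij}_f))$, using $e^2 \le 16$. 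The two center terms have the same form but with the threshold scaled by $n \ge 1$, so their tails are no larger. Summing the three contributions yields the factor $3$ and the claimed bound $\frac{3\exp(-p/(32V^{ij}_f))}{(e\,V^{ij}_f)^{p/2}}$.

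The main obstacle is purely bookkeeping on constants: one must route the weights in the triangle inequality so that the resulting chi-squared threshold lands exactly on $p/(16V^{ij}_f)$, which is what simultaneously makes the Chernoff bound applicable (via $V^{ij}_f \le 1/16$) and produces both the exponent $-p/(32V^{ij}_f)$ and the $(eV^{ij}_f)^{-p/2}$ prefactor. A secondary point worth stating explicitly is the equal-variance assumption, which is needed so that the $\hat\mu_j$ fluctuation is governed by the same $\sigma$ (hence the same $V^{ij}_f$) as the other two terms; without it the class-$j$ center term would carry a $\Var_f(Q_j)$-dependence not reflected in the stated bound.
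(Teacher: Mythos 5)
Your proposal is correct and follows essentially the same route as the paper's proof: the same reduction of $\ell_\Delta(f;Q_i,Q_j)$ to the margin probability, the same triangle-inequality decomposition with the $2{:}1{:}1$ split at threshold $(\|\mu_i-\mu_j\|-\Delta)/4$, and the same chi-squared Chernoff bound closed with $e^2 \le 16$. If anything, your write-up is slightly tighter in two spots: dismissing the empirical-mean terms by monotonicity of the tail in the threshold (scaled by $n\ge 1$) avoids an imprecise intermediate inequality in the paper, and your explicit equal-variance caveat for the class-$j$ center term makes visible an assumption the paper uses silently.
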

The above lemma provides an upper bound for $\ell_\Delta(f;Q_i,Q_j)$ using a term that exponentially decays with $1/V^{ij}_f$. While this bound rapidly decays with the degree of neural collapse, it cannot be immediately applied to upper bound $\ell_{\Delta}(f;\tS_i,\tS_j)$ since $f(x_i)$ and $f(x_j)$ cannot be normally distributed for $x_i \sim U[\tS_i]$ and $x_j\sim U[\tS_j]$. However, we can instead apply this bound for $\ell_\Delta(f;\tP_i,\tP_j)$ when $f \circ \tP_i$ and $f \circ \tP_j$ are spherical Gaussian distributions and obtain a bound $\ell_{\Delta}(f;\tP_i,\tP_j)\leq \frac{3\exp\left(-p/(32V^{ij}_f)\right)}{(\textnormal{e} \cdot V^{ij}_{f})^{p/2}}$. In addition, we can use standard concentration inequalities (a variation of Lemma~1 in our earlier work, \citealp{galanti2022on}) to upper bound $V^{ij}_f = \frac{\Var_f(\tP_i)}{(\|\mu_i-\mu_j\| - \Delta)^2}$ using its empirical counterpart $\hat{V}^{ij}_f := \frac{\Var_f(\tS_i)}{(\|\mu_f(\tS_i)-\mu_f(\tS_j)\| - \Delta)^2}$. This would give a bound on $\ell_{\Delta}(f;\tP_i,\tP_j)$ in terms of $\hat{V}^{ij}_f$.

\subsection{Proofs of Theorems~\ref{thm:full_bound}--\ref{thm:full_bound2}}
\label{app:fullbound}

Given the above lemmas, we are ready to prove our main theorems.
We start with proving Theorem~\ref{thm:full_bound}.
Applying Lemma~\ref{prop:marginBoundA} with $\delta/2$ in place of $\delta$ and Lemma~\ref{prop:marginBoundB} for all pairs $i \neq j \in [l]$, with $\fr{\delta}{2l(l-1)}$ in place of $\delta$, the union bound implies that
with probability at least $1-\delta$ over the selection of $\tS_1,\dots,\tS_l$, for all $f\in \cF$ and $\Delta>0$, we have
\begin{equation}
\label{eq:transfer_risk_avg}
\begin{aligned}
\mathcal{L}_{\cD}(f) ~&\le~ 4 (k-1) \left(1+\frac{n}{m-n}\right)\left( \Avg_{i\neq j\in [l]}[\ell_{4\Delta}(f; \tS_i,\tS_j)] + \fr{\delta^2}{4} \right) \\
&\qquad+ 
8 (k-1)\lceil \cC(f) \rceil B\frac{\sqrt{m}(2n+4)^2}{(m-n)\Delta}
\sqrt{\pi(q\log(2)+\log(p))\log(2m)} \\
&\qquad+ 
16 (k-1)\lceil \cC(f) \rceil B \frac{\sqrt{m}}{(m-n)\Delta}\sqrt{2 \log\left(\fr{6l^2(\lceil \cC(f) \rceil+1)^2 (|\log(\Delta/B)|+2)^2}{\delta}\right)} \\
&\qquad +
\frac{(k-1)\lceil \cC(f) \rceil B}{\Delta \sqrt{l}}
\Bigg(355 \sqrt{\log(l)(q\log(2) + \log(p))} \\
&\qquad\qquad\qquad\qquad\qquad\qquad\qquad+ 16.5\sqrt{\log\left(\fr{2\sqrt{3}(\lceil \cC(f) \rceil+1)(|\log (\Delta/B)|+2)}{\delta}\right)}\Bigg).
\end{aligned}
\end{equation}

Choosing $\Delta := 0.025 \min_{i\neq j \in [l]}\|\mu_f(\tS_i)-\mu_f(\tS_j)\| = 0.025 \Lambda$, the first part of Lemma~\ref{prop:cdnv_error} implies
\begin{equation}\label{eq:avg_ineq}
\Avg_{i\neq j \in [l]}[\ell_{4\Delta}(f;\tS_i,\tS_j)]\leq (12+\fr{212}{n})\cdot \Avg_{i\neq j \in [l]}[V_f(\tS_i,\tS_j)].
\end{equation}
Combining with \eqref{eq:transfer_risk_avg} gives the desired bound
\begin{equation}
\label{eq:thm6_consts}
\begin{aligned}
\mathcal{L}_{\cD}(f) ~&\le~ (k-1) \left(1+\frac{n}{m-n}\right)\left(48+\frac{848}{n}\right) \Avg_{i\neq j \in [l]}[V_f(\tS_i,\tS_j)]
+ \frac{(k-1)m\delta^2}{m-n} \\
&\qquad+ 
320 (k-1)\lceil \cC(f) \rceil B\frac{\sqrt{m}(2n+4)^2}{(m-n)\Lambda}
\sqrt{\pi(q\log(2)+\log(p))\log(2m)} \\
&\qquad+ 
640 (k-1)\lceil \cC(f) \rceil B \frac{\sqrt{m}}{(m-n)\Lambda}\sqrt{2 \log\left(\fr{6l^2(\lceil \cC(f) \rceil+1)^2 (|\log(\fr{\Lambda}{40B})|+2)^2}{\delta}\right)} \\
&\qquad +
\frac{(k-1)\lceil \cC(f) \rceil B}{\Lambda \sqrt{l}}
\Bigg(14200 \sqrt{\log(l)(q\log(2) + \log(p))} \\
&\qquad\qquad\qquad\qquad\qquad\qquad
+ 660\sqrt{\log\left(\fr{2\sqrt{3}(\lceil \cC(f) \rceil+1)(|\log (\fr{\Lambda}{40B})|+2)}{\delta}\right)}\Bigg),
\end{aligned}
\end{equation}
which proves Theorem~\ref{thm:full_bound}.
The proofs of Theorem~\ref{thm:full_bound1} and Theorem~\ref{thm:full_bound2} follow the same lines, but instead of \eqref{eq:avg_ineq}, we use the first part of Lemma~\ref{prop:cdnv_errorE} with $\Delta=\phi \Lambda/8$ for Theorem~\ref{thm:full_bound1} and the second part of the same lemma with $\Delta=\Lambda/\sqrt{n}$ for Theorem~\ref{thm:full_bound2}.

%%%%%%%%%%%%%%%%%%%%%%%%%%%%%%%%%%%%%%%%%%%%%%%%%%%%%%%%%%%%%%%
\section{Experiments}\label{sec:experiments}
%%%%%%%%%%%%%%%%%%%%%%%%%%%%%%%%%%%%%%%%%%%%%%%%%%%%%%%%%%%%%%%

In this section, we conduct experiments to study the neural collapse phenomenon and how it generalizes to new data points and new classes. We observe neural collapse not only during training, as previously reported by \citet{Papyan24652}, but also on test data from the same classes and on data from new classes (this generalization of the neural collapse phenomenon to new data was formally proved in Lemma~1 of our previous work, \citealp{galanti2022on}). Our results show that neural collapse is strongly correlated with accuracy in few-shot learning scenarios. Our experiments are conducted on multiple datasets and using multiple standard architectures, providing strong empirical evidence that neural collapse provides a compelling explanation for the performance of foundation models in few-shot learning tasks. The experimental results are reported as the average over 20 random initializations, with 95\% confidence intervals.

\subsection{Setup}

\paragraph{Method.} In this study, we train a neural network classifier $\tih=\tig \circ f$ on a source task using stochastic gradient descent (SGD) with momentum \citep[see, e.g.,][]{SutskeverMDH13,YanYLLY18}, with a learning rate of $\eta$ (specified in each experiment), a momentum of 0.9, and a batch size of 64. The mapping implemented by all layers of the neural network except for the top linear layer $\tig$ is referred to as $f$.

To evaluate the few-shot performance of $f$ on target classification tasks, we train a new classifier $g$ on top of $f$ by minimizing the cross-entropy loss between its logits and the one-hot encodings of the labels. Formally, given a target few-shot classification task with training data $S=\cup^{k}_{c=1}\{x_{ci}\}_{i=1}^n$, we train a new top linear layer $g$ by solving a ridge regression problem on the dataset $\cup^{k}_{c=1}\{(f(x_{ci})\}^{n}_{i=1}$ with regularization $\lambda_n = \alpha\sqrt{kn}$. This results in the weight matrix $w_{f,S} = (f(X)^\top f(X)+\lambda_n I)^{-1} f(X)^\top Y$, where $f(X)=(f(x_{ci}))_{c,i} \in \R^{nc\times d}$ is the data matrix containing the feature vectors, $Y\in \mathbb{R}^{nc\times k}$ is the label matrix, and $X \in \mathbb{R}^{nc\times d}$ is the data matrix. We did not apply any fine-tuning to $f$ during this process.

To measure the performance of the model $f$, we randomly sample 5-class classification tasks (i.e., $k=5$) from the target dataset, with $n=1$ or $n=5$ training samples for each class, and evaluate the accuracy on 100 random test samples from each class. We report the resulting accuracy rates averaged over 100 randomly chosen tasks.

\paragraph{Architectures and hyperparameters.} We experimented with two types of architectures for $\tih$: wide ResNets~\citep{Zagoruyko2016WideRN} and vanilla convolutional networks. The wide ResNets consist of a convolutional layer with $3 \times 3$ kernels and 16 output channels, followed by three groups of layers. Each group includes a convolutional layer followed by a series of $N$ residual layers. Each residual layer in the $i$th group contains two convolutional layers with $3\times 3$ kernels and $2^{i+3} M$ output channels. Each convolutional layer is followed by a ReLU activation and batch normalization. The penultimate layer of the network is a mean pooling activation with $4 \times 4$ kernels, producing an output of size 256, followed by a linear layer. The vanilla convolutional networks have the same structure as the wide ResNets, but do not include the residual connections. These networks are denoted as WRN-$N$-$M$ and Conv-$N$-$M$, respectively.

\paragraph{Datasets.} We evaluate our approach using four datasets: (i) Mini-ImageNet~\citep{NIPS2016_90e13578}, (ii) CIFAR-FS~\citep{bertinetto2018metalearning}, (iii) FC-100~\citep{NEURIPS2018_66808e32}, (iv) and EMNIST (balanced)~\citep{cohen2017emnist}. Each dataset is divided into three sets: meta-train, meta-validation, and meta-test. The meta-training and meta-test sets are used for training and evaluating our models, respectively. The meta-validation set is not used. The Mini-ImageNet dataset consists of 100 randomly selected classes from ImageNet ILSVRC-2012~\citep{ILSVRC15}, with 600 images of size 84 x 84 pixels per class. It is divided into 64 meta-train classes, 16 meta-validation classes, and 20 meta-test classes. CIFAR-FS and FC-100 are both derived from the CIFAR-100 dataset~\citep{Krizhevsky09learningmultiple}. CIFAR-FS consists of a random split of the CIFAR-100 classes into 64 meta-train classes, 14 meta-validation classes, and 20 meta-test classes. FC-100 contains 100 classes grouped into 20 superclasses. These classes are divided into 60 meta-train classes from 12 superclasses, 20 meta-validation classes from 4 superclasses, and 20 meta-test classes from 4 superclasses. The EMNIST dataset is an extension of the original MNIST dataset. We randomly split its classes into 35 meta-train classes and 12 meta-test classes (which are: 2, 10, 11, 12, 13, 16, 18, 22, 25, 33, 34, 44). We apply random cropping augmentations during training for all experiments.

\subsection{Results}

\begin{table}[t]
\centering 
\resizebox{\linewidth}{!}{
\begin{tabular}{c c c c c c c c}
    \hline
Method & Architecture & \multicolumn{2}{c}{Mini-ImageNet}  & \multicolumn{2}{c}{CIFAR-FS} & \multicolumn{2}{c}{FC-100} \\
 &  & 1-shot & 5-shot  & 1-shot & 5-shot & 1-shot & 5-shot \\
    \hline
Matching Networks~\citep{NIPS2016_90e13578} & 64-64-64-64 & $43.56\pm 0.84$ & $55.31 \pm 0.73$ & - & - & - & - \\ 
LSTM Meta-Learner~\citep{Ravi2017OptimizationAA} & 64-64-64-64 & $43.44 \pm 0.77$ & $60.60\pm 0.71$ & - & - & - & - \\
MAML~\citep{pmlr-v70-finn17a} & 32-32-32-32 & $48.70\pm 1.84$ & $63.11\pm 0.92$ & $58.9 \pm 1.9$ & $71.5\pm 1.0$ & - & - \\
Prototypical Networks~\citep{NIPS2017_cb8da676} & 64-64-64-64 & $49.42\pm 0.78^\dagger$ & $68.20\pm 0.66^\dagger$ & $55.5 \pm 0.7$ & $72.0\pm 0.6$ & $35.3 \pm 0.6$ & $48.6\pm 0.6$ \\
Relation Networks~\citep{Sung_2018_CVPR} & 64-96-128-256 & $50.44\pm 0.82$ & $65.32\pm 0.7$ & $55.0 \pm 1.0$ & $69.3\pm 0.8$ & - & - \\
SNAIL~\citep{mishra2018a} & ResNet-12 & $55.71\pm 0.99$ & $68.88\pm 0.92$ & - & - & - & - \\
TADAM~\citep{NEURIPS2018_66808e32} & ResNet-12 & $58.50\pm 0.30$ & $76.7\pm 0.3$ & - & - & $ 40.1 \pm 0.4$ & $56.1\pm 0.4$\\
AdaResNet~\citep{pmlr-v80-munkhdalai18a} & ResNet-12 & $56.88\pm 0.62$ & $71.94\pm 0.57$ & - & - & - & - \\
Dynamics Few-Shot~\citep{Gidaris_2018_CVPR} &  64-64-128-128 & $56.20\pm 0.86$ & $73.0\pm 0.64$ & - & - & - & - \\
Activation to Parameter~\citep{Act2Param} & WRN-28-10 & $ 59.60 \pm 0.41^\dagger$ & $73.74\pm 0.19^\dagger$ & - & - & - & - \\
R2D2~\citep{bertinetto2018metalearning} & 96-192-384-512 & $51.2\pm 0.6$ & $68.8\pm 0.1$ & $65.3 \pm 0.2$ & $79.4\pm 0.1$ & - & - \\
Shot-Free~\citep{Ravichandran2019FewShotLW} & ResNet-12 & $59.04\pm n/a$ & $77.64\pm n/a$ & $69.2 \pm n/a$ & $84.7\pm n/a$ & - & - \\
TEWAM~\cite{Qiao_2019_ICCV} & ResNet-12 & $60.07\pm n/a$ & $75.90\pm n/a$ & $70.4 \pm n/a$ & $81.3\pm n/a$ & - & - \\
 TPN~\citep{liu2018learning} & ResNet-12 & $55.51 \pm 0.86$ & $75.64\pm n/a$ & - & - & - & - \\
 LEO~\citep{rusu2018metalearning} & WRN-28-10 &  $61.76\pm 0.08^\dagger$ & $77.59\pm 0.12^\dagger$ & - & - & - & - \\
 MTL~\citep{sun2019mtl} & ResNet-12 & $61.20\pm 1.80$ & $75.50 \pm 0.80$ & - & - & - & - \\
 OptNet-RR~\citep{lee2019meta} & ResNet-12 & $61.41 \pm 0.61$ & $77.88\pm 0.46$ & $72.6 \pm 0.7$ & $84.3\pm 0.5$ & $40.5 \pm 0.6$ & $57.6\pm 0.9$\\
 MetaOptNet~\citep{lee2019meta} & ResNet-12 & $62.64\pm 0.61$ & $78.63\pm 0.46$ & $72.0 \pm 0.7$ & $84.2\pm 0.5$ & $41.1 \pm 0.6$ & $55.3\pm 0.6$\\
 Transductive Fine-Tuning~\citep{Dhillon2020A} & WRN-28-10 &  $65.73 \pm 0.68$ & $78.40\pm 0.52$ & $76.58 \pm 0.68$ & $85.79\pm 0.5$ & $43.16 \pm 0.59$ & $57.57\pm 0.55$ \\
 Distill-simple~\citep{DBLP:conf/eccv/TianWKTI20} & ResNet-12 & $62.02\pm 0.63$ & $79.64\pm 0.44$ & $71.5 \pm 0.8$ & $86.0\pm 0.5$ & $42.6 \pm 0.7$ & $59.1\pm 0.6$\\
 Distill~\citep{DBLP:conf/eccv/TianWKTI20} & ResNet-12 & $64.82\pm 0.60$ & $82.14\pm 0.43$ & $73.9 \pm 0.8$ & $86.9\pm 0.5$ & $44.6 \pm 0.7 $ & $60.9\pm 0.6$\\
 \hline
 Ours (simple) & WRN-28-4 & $58.12 \pm 1.19$ & $72.0 \pm 0.99$ & $68.81\pm 1.20$ & $81.49\pm 0.98$ & $44.96 \pm 1.14$ & $57.21 \pm 10.89$ \\
 Ours (lr scheduling) & WRN-28-4 & $60.37 \pm 1.25$ & $72.35\pm 0.99$ & $70.0 \pm 1.29$ & $81.39\pm 0.96$ & $43.42 \pm 1.0$ & $54.14 \pm 1.1$ \\
 Ours (lr scheduling + model selection) & WRN-28-4 & $61.27 \pm 1.14$ & $74.74\pm 0.76$ & $72.37\pm 1.12$ & $82.94\pm 0.89$ & $45.81\pm 1.27$ & $56.85\pm 1.30$ \\
    \hline
\end{tabular}
}\vspace{-0.2cm}\caption{{\bf Comparison to prior work on Mini-ImageNet, CIFAR-FS, and FC-100 on 1-shot and 5-shot 5-class classification.} 
Reported numbers are test target classification accuracy of various methods for various data sets. The notation  a-b-c-d denotes a 4-layer convolutional network with a, b, c, and d filters in each layer. $^\dagger$The algorithm was trained on both training and validation classes. We took the data from \citet{Dhillon2020A} and \citet{DBLP:conf/eccv/TianWKTI20}.}
\label{tab:main_results}
\end{table}

\begin{figure}[t]
\centering
\begin{tabular}{@{}c@{~}c@{~}c@{~}c}
\includegraphics[width=.23\linewidth]{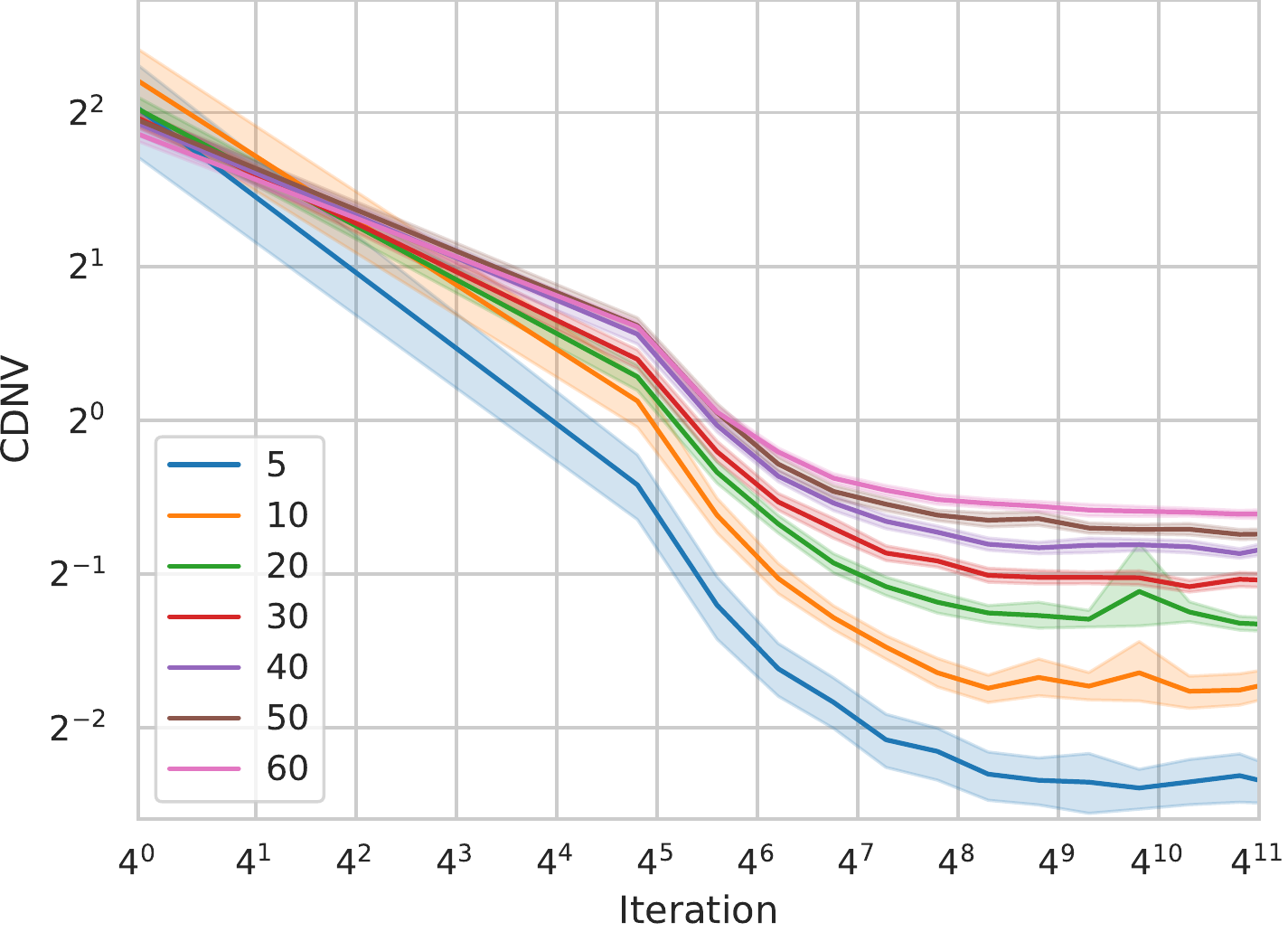}&
\includegraphics[width=.23\linewidth]{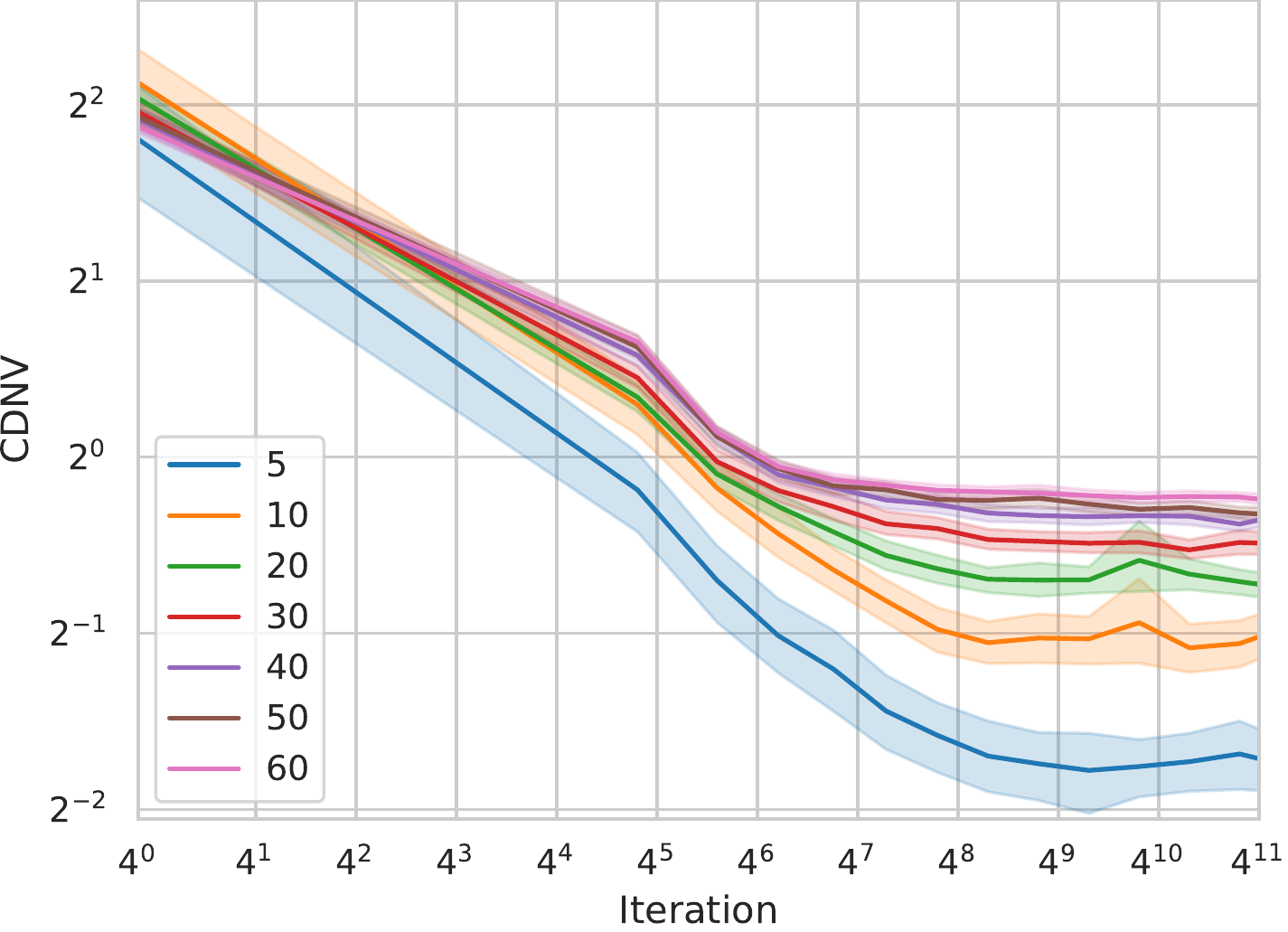}&
\includegraphics[width=.23\linewidth]{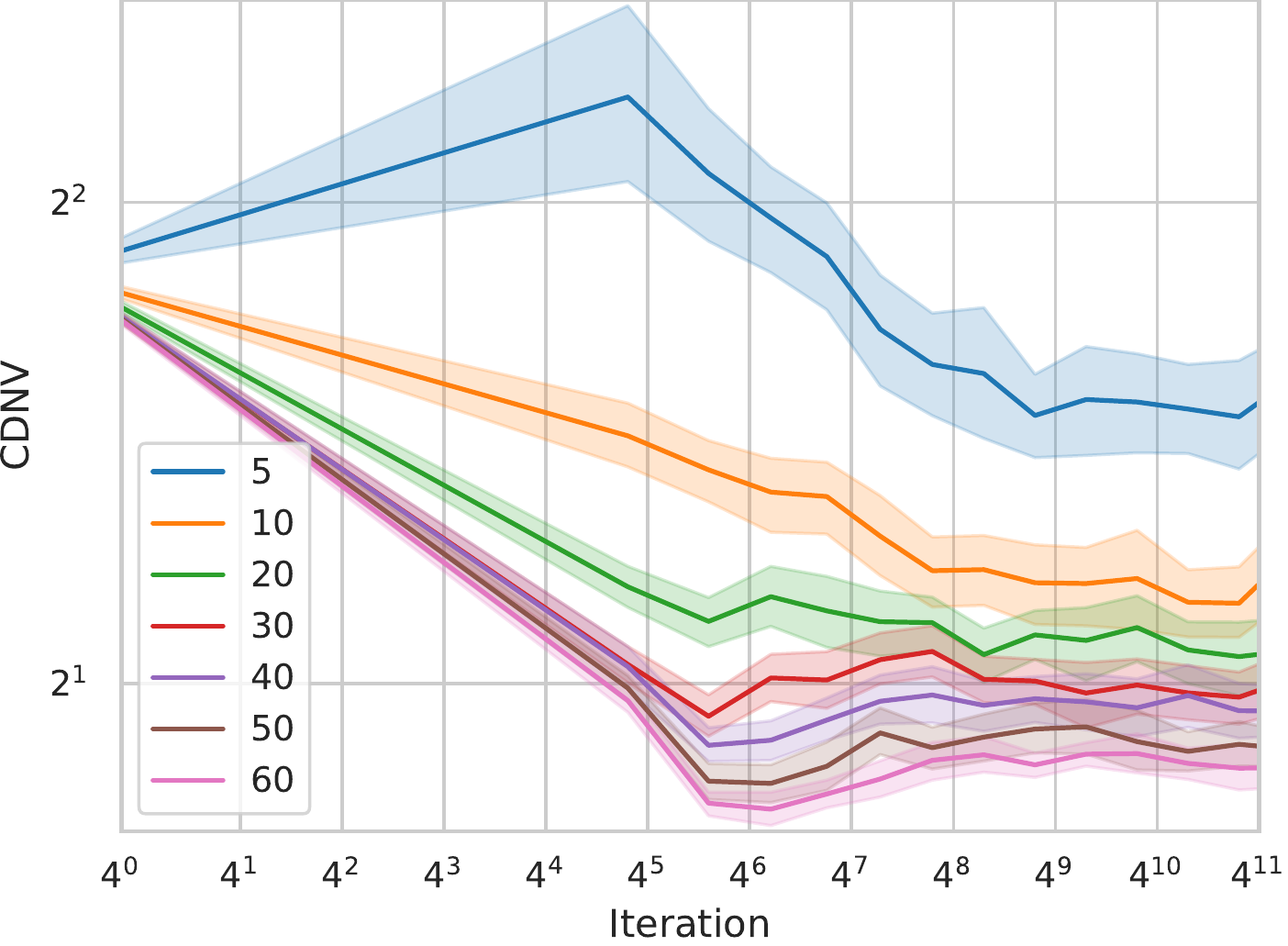}&
\includegraphics[width=.23\linewidth]{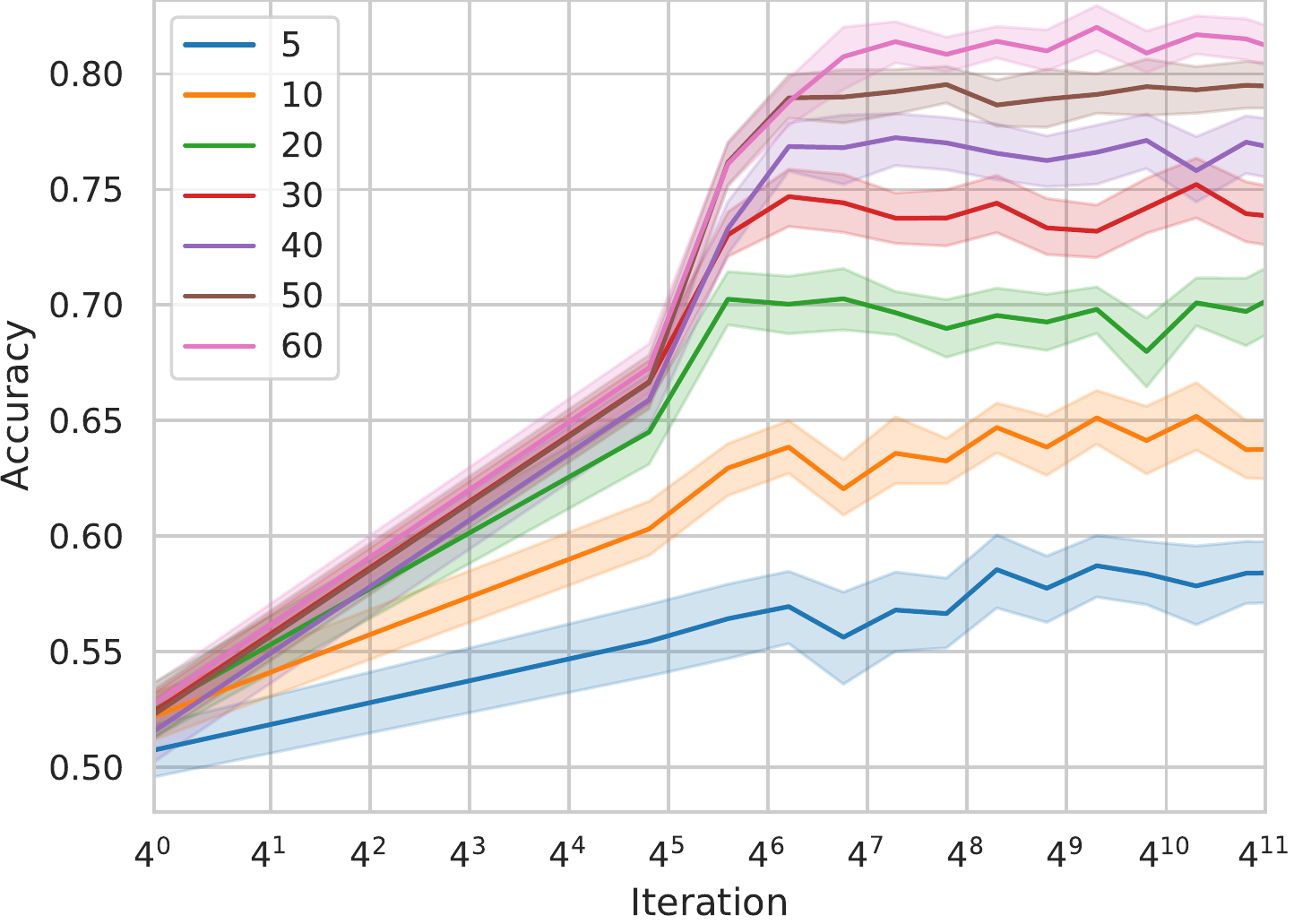}
\\
\multicolumn{4}{c}{\small{WRN-28-4 on CIFAR-FS}} \\[0.5em]
\includegraphics[width=.23\linewidth]{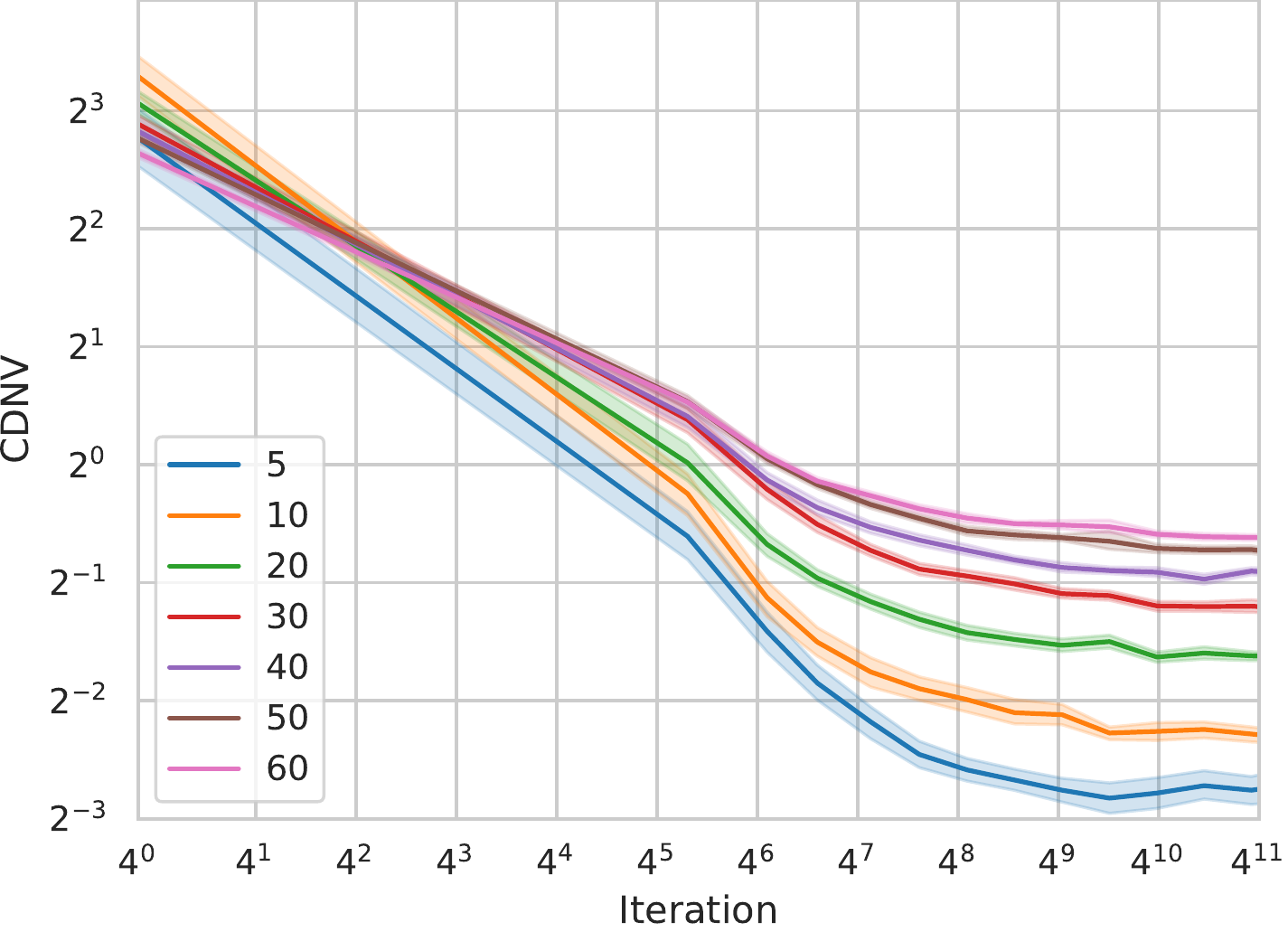}&
\includegraphics[width=.23\linewidth]{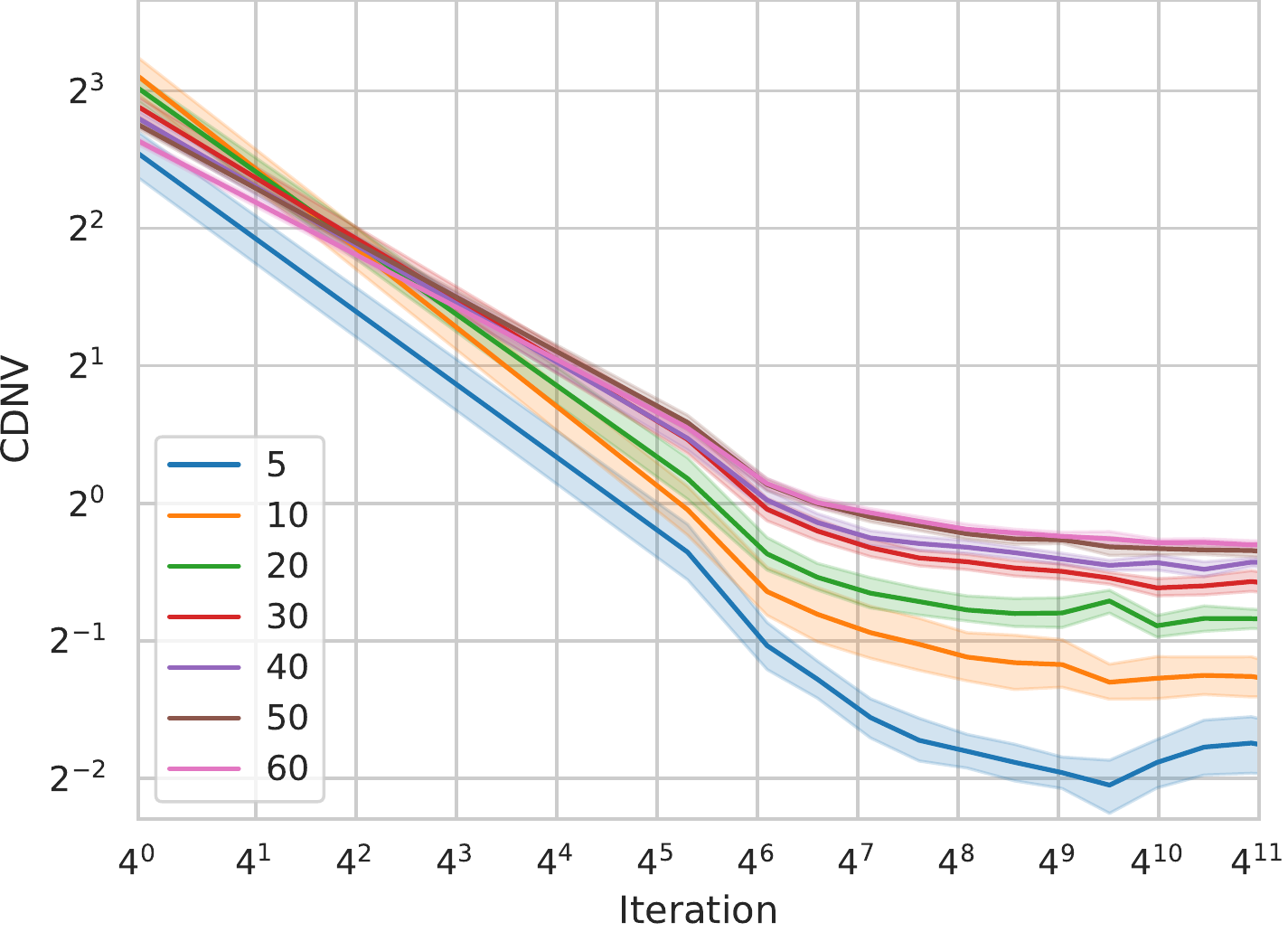}&
\includegraphics[width=.23\linewidth]{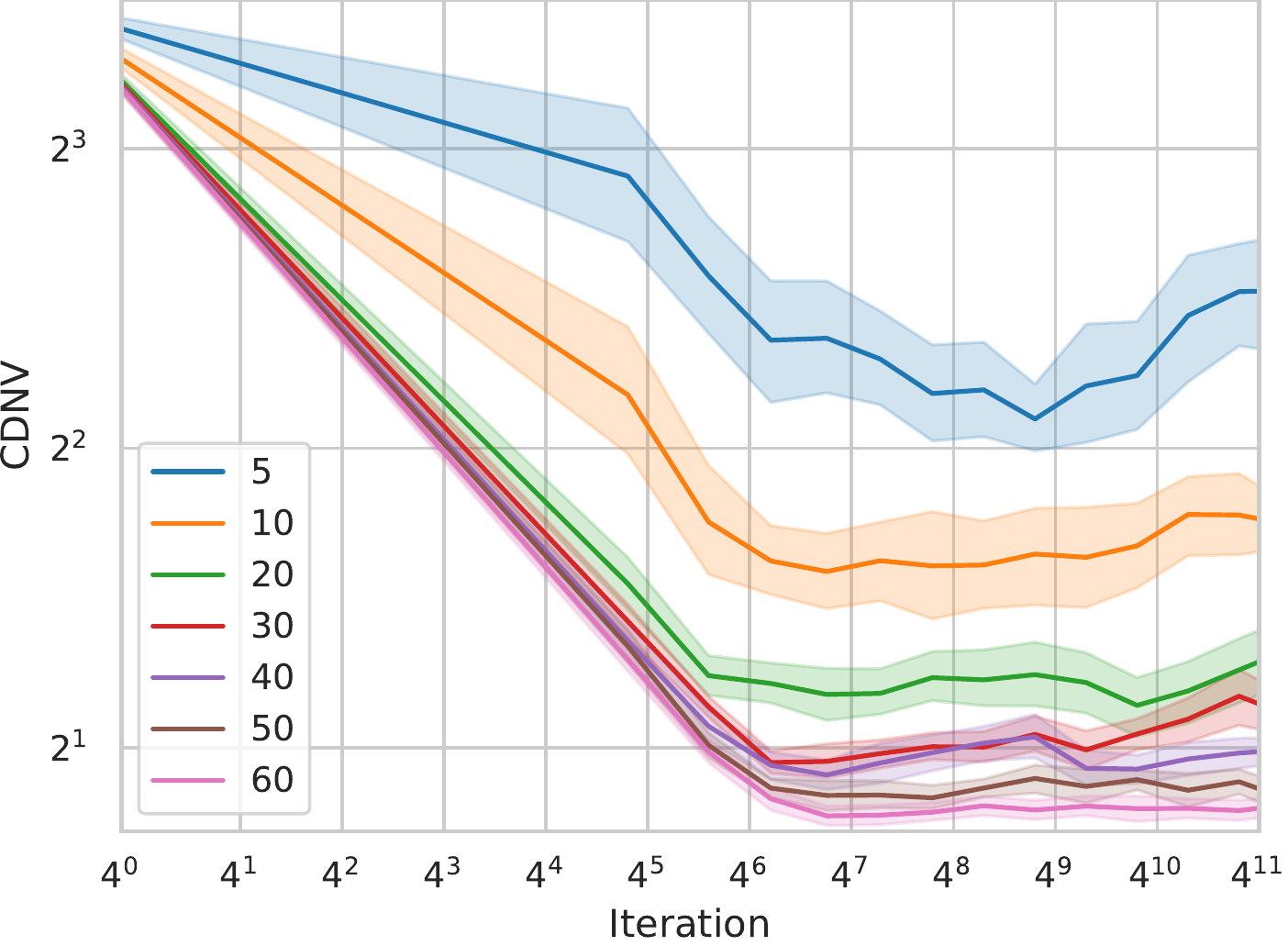}&
\includegraphics[width=.23\linewidth]{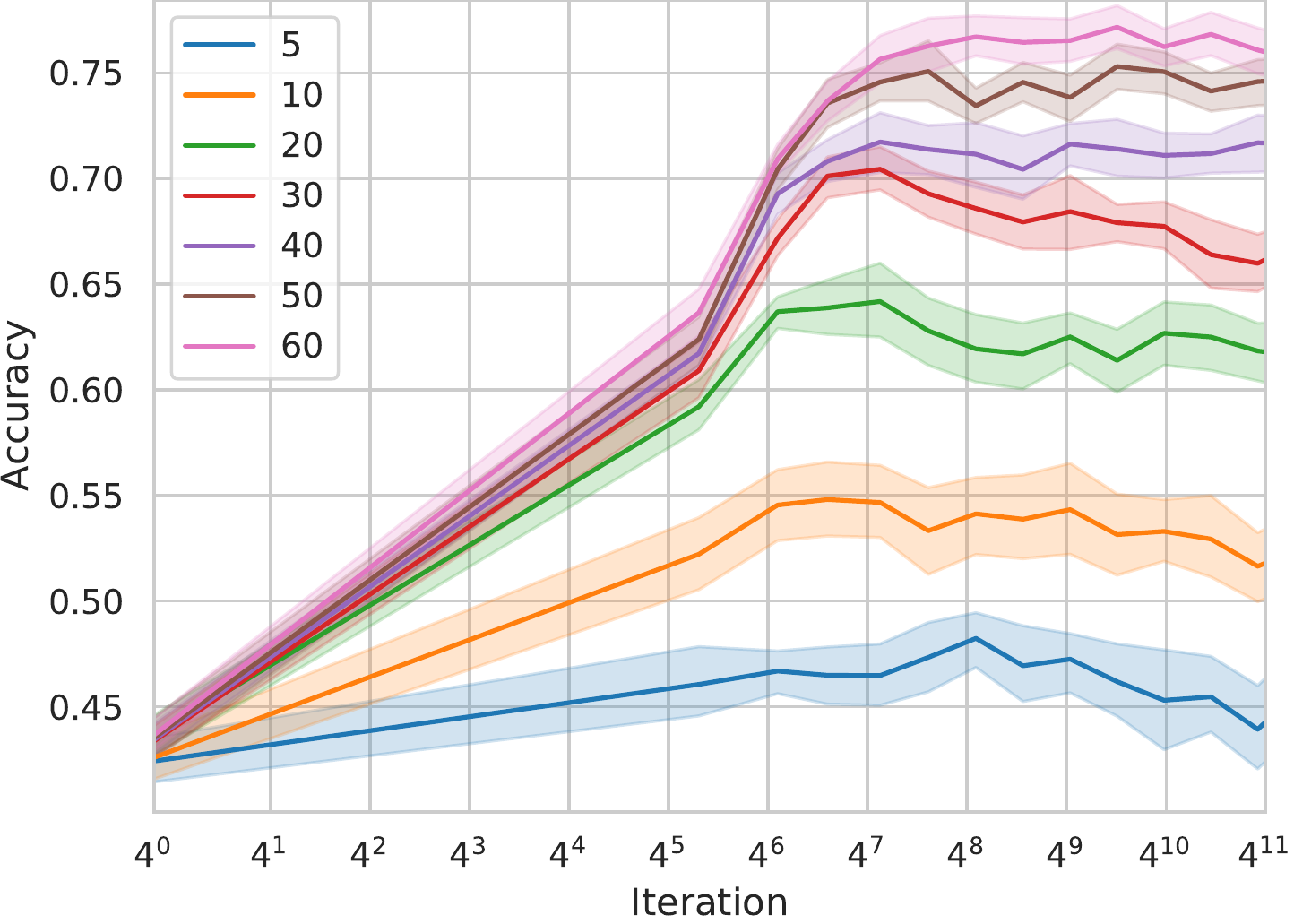}
\\
\multicolumn{4}{c}{\small{Conv-28-4 on CIFAR-FS}} \\[0.5em]
\includegraphics[width=.23\linewidth]{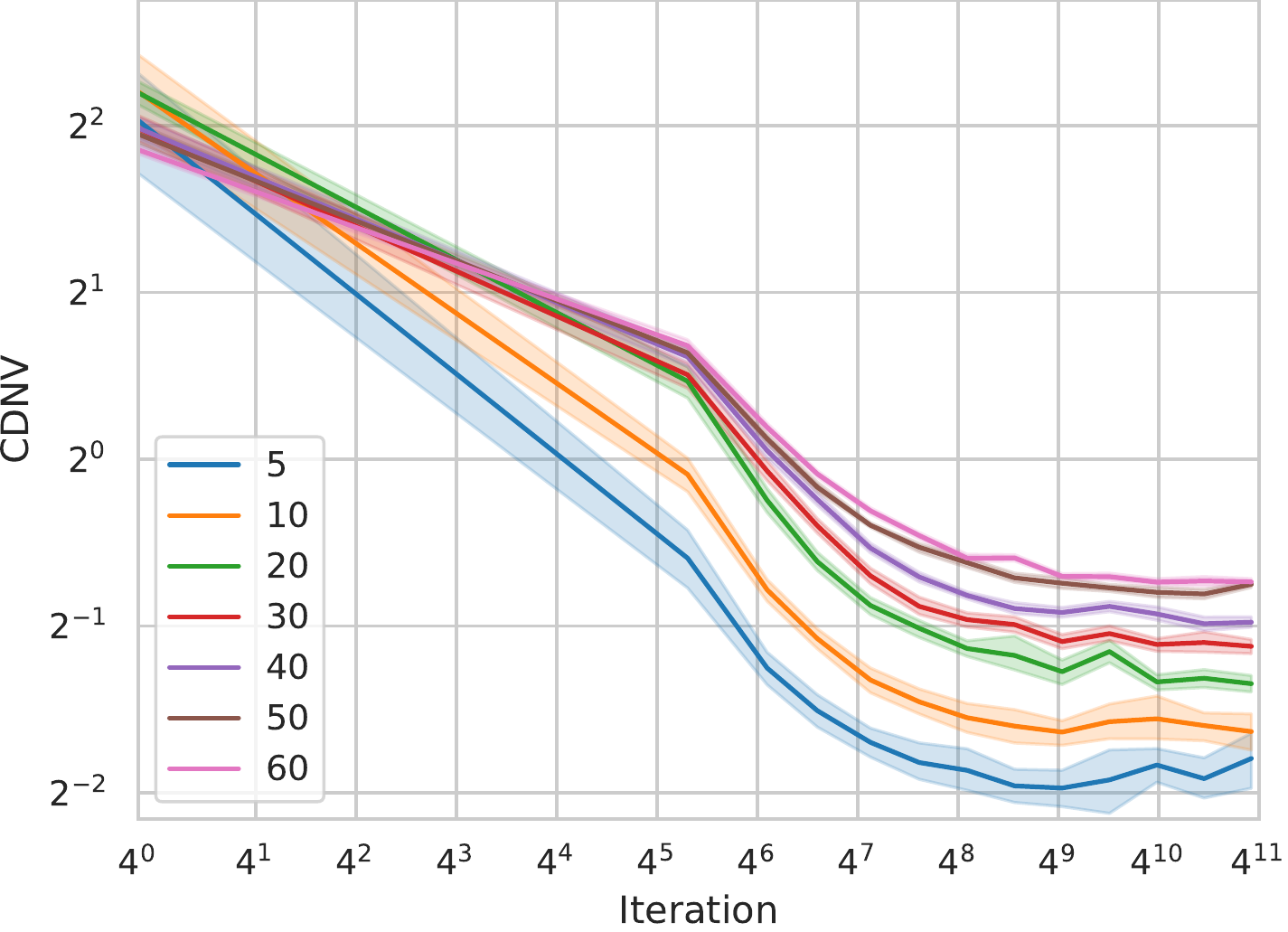}&
\includegraphics[width=.23\linewidth]{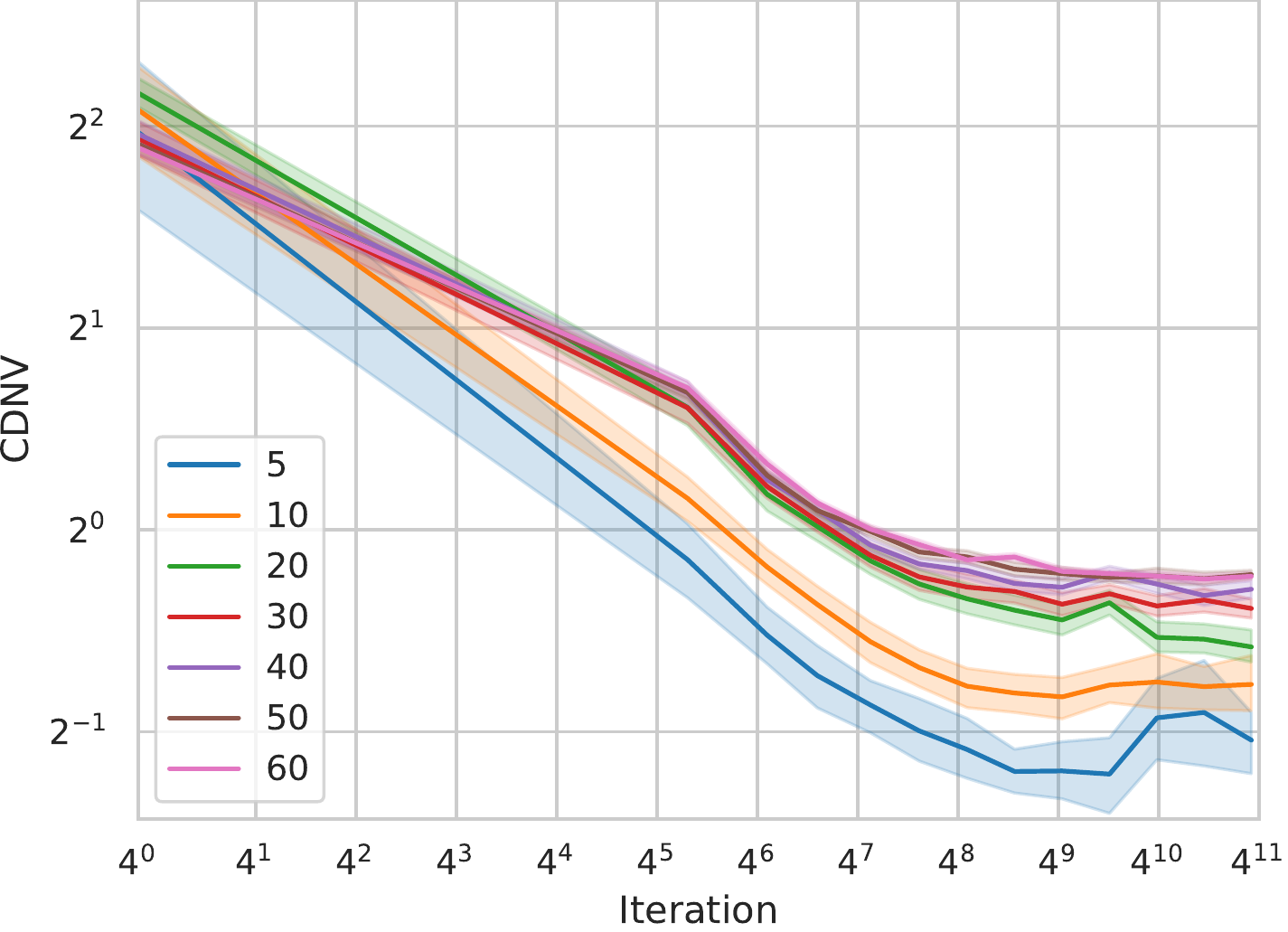}&
\includegraphics[width=.23\linewidth]{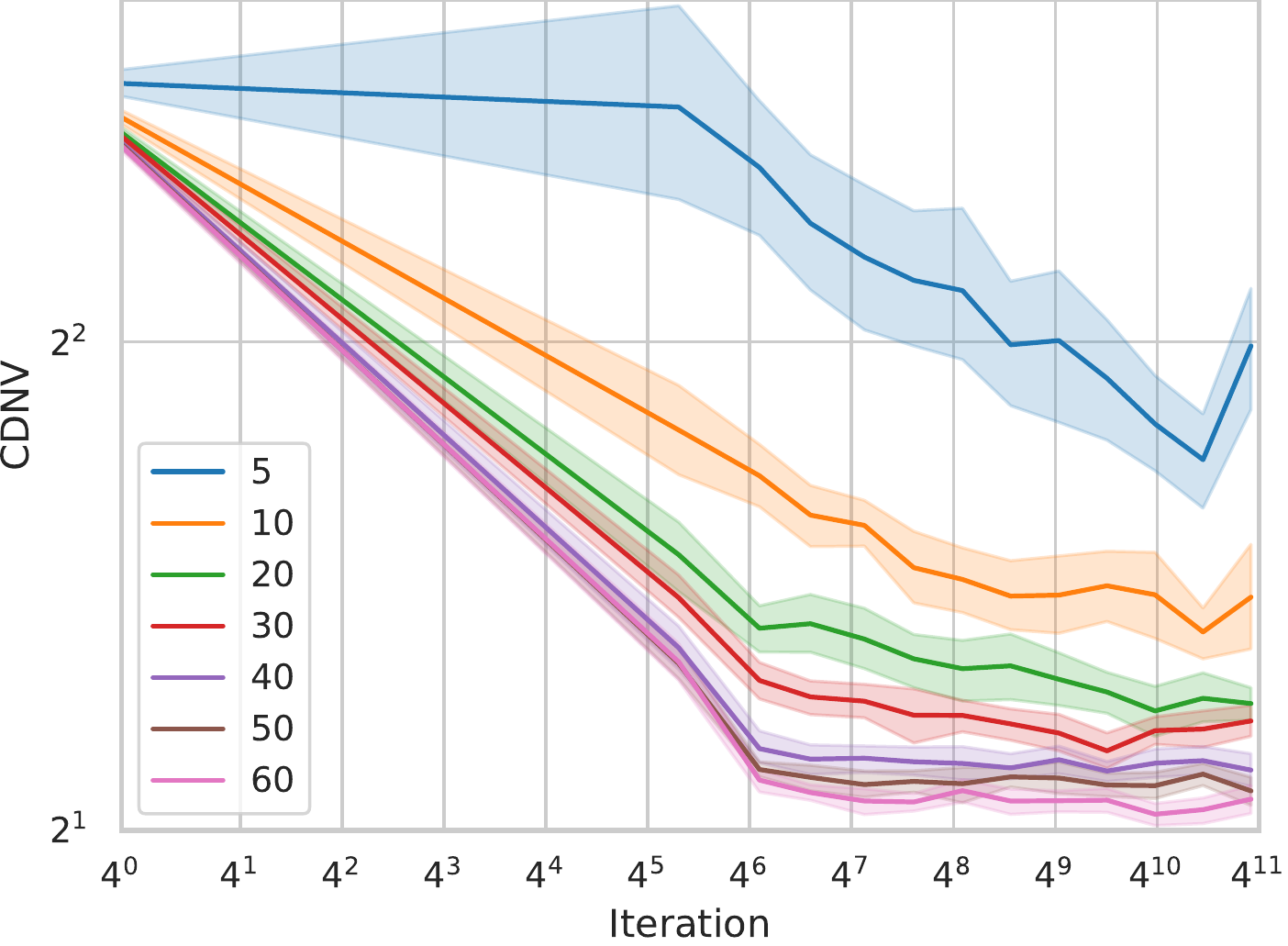}&
\includegraphics[width=.23\linewidth]{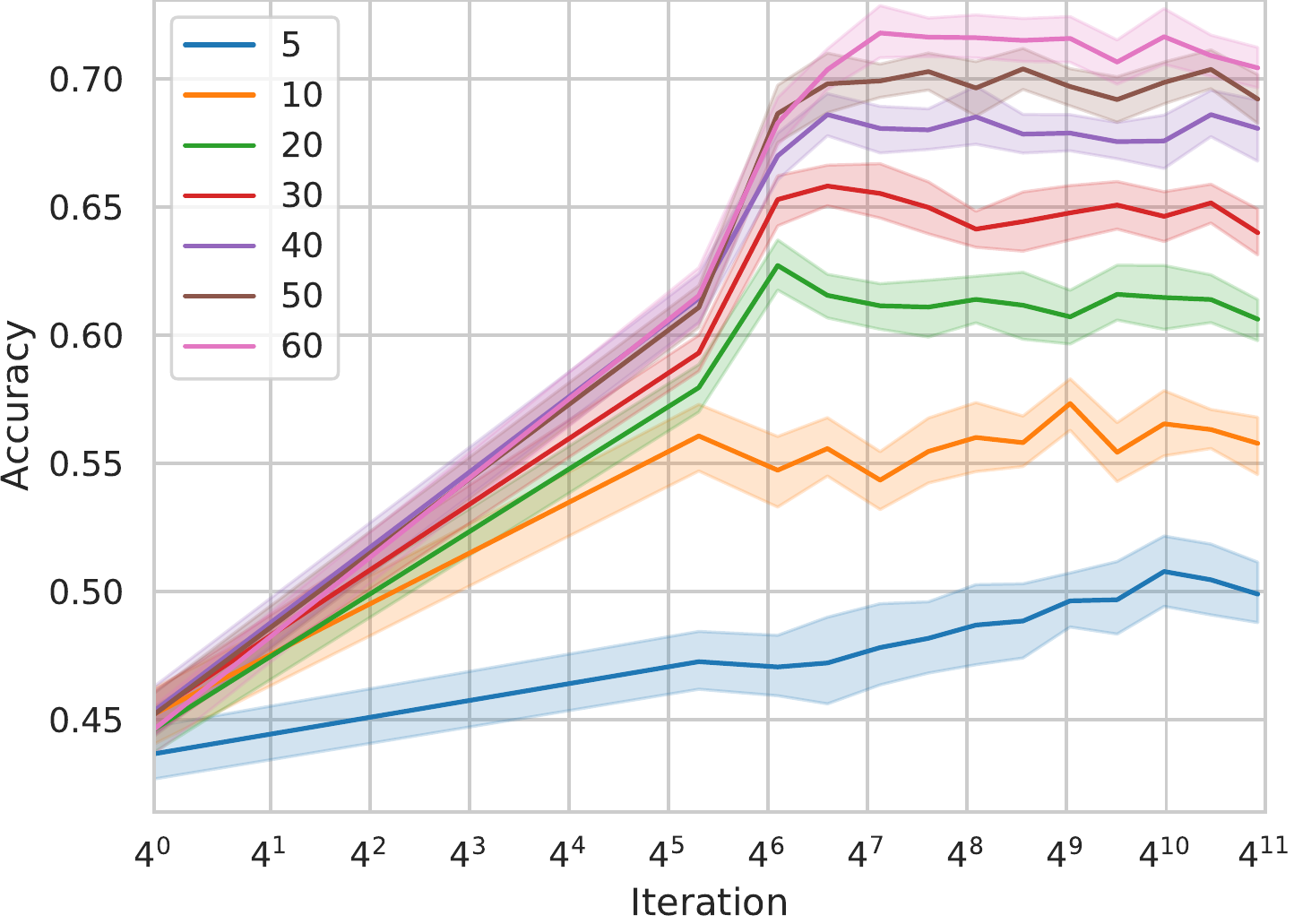}
\\
\multicolumn{4}{c}{\small{WRN-28-4 on Mini-ImageNet}} \\[0.5em]
\includegraphics[width=.23\linewidth]{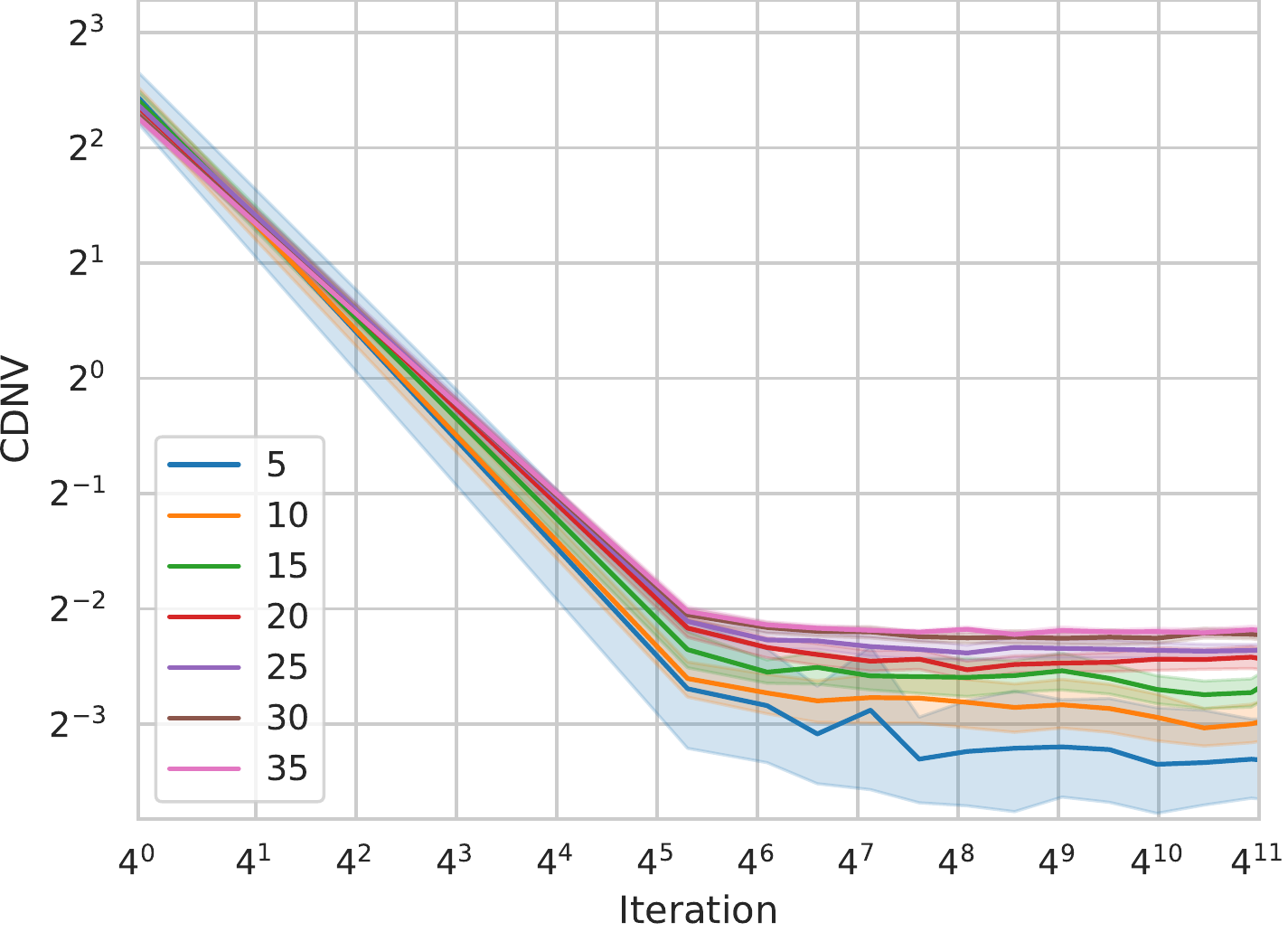}&
\includegraphics[width=.23\linewidth]{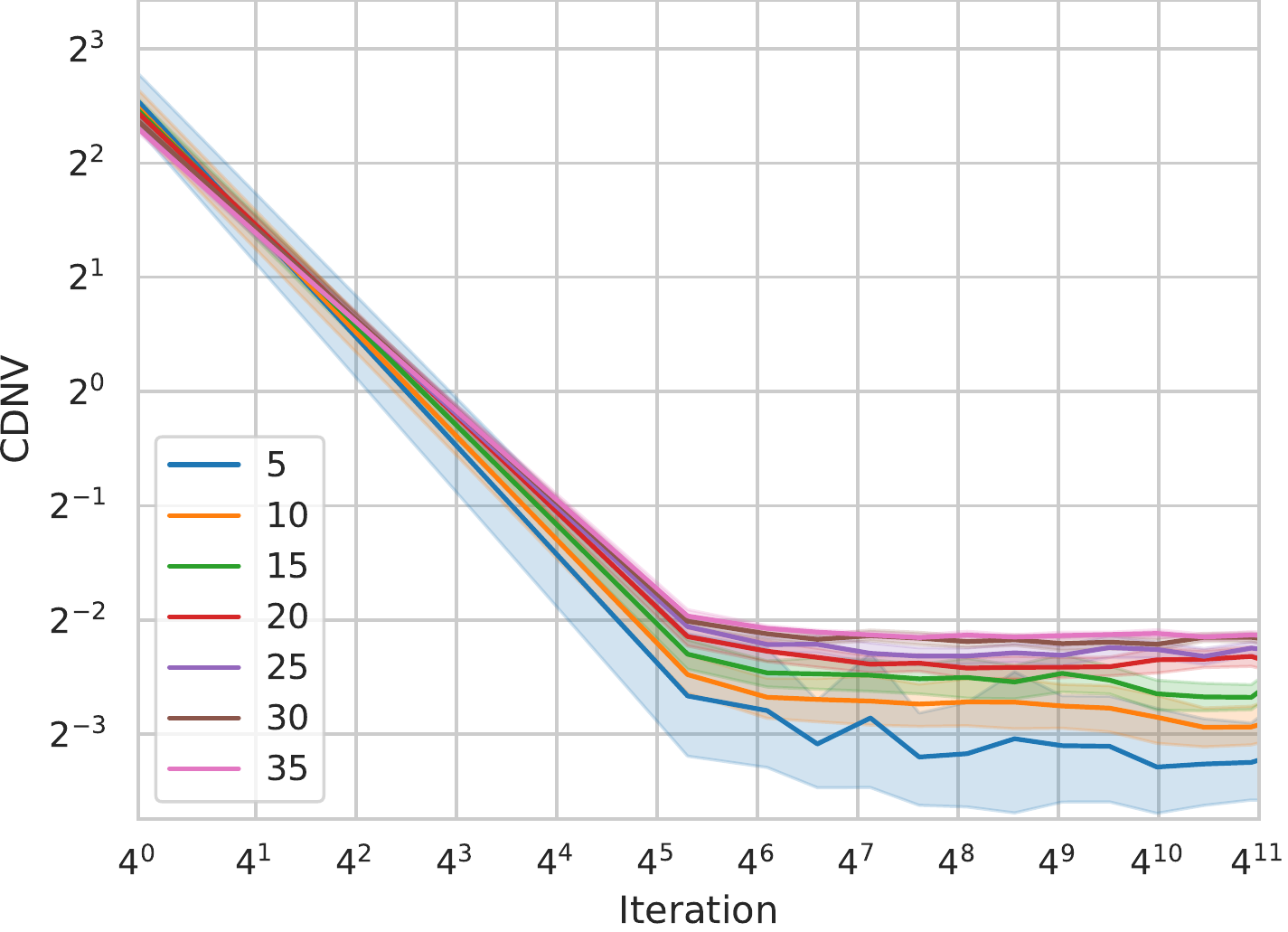}&
\includegraphics[width=.23\linewidth]{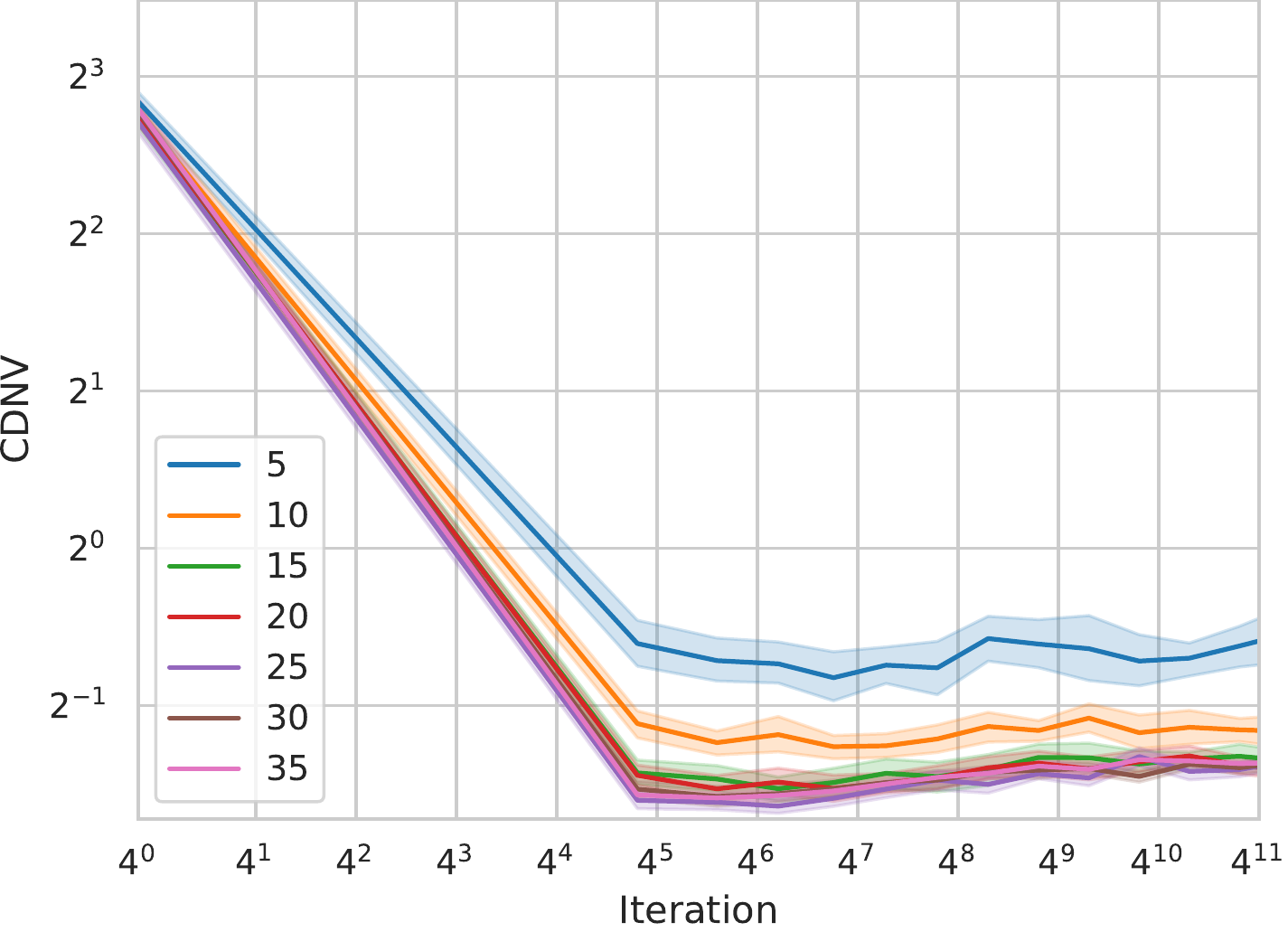}&
\includegraphics[width=.23\linewidth]{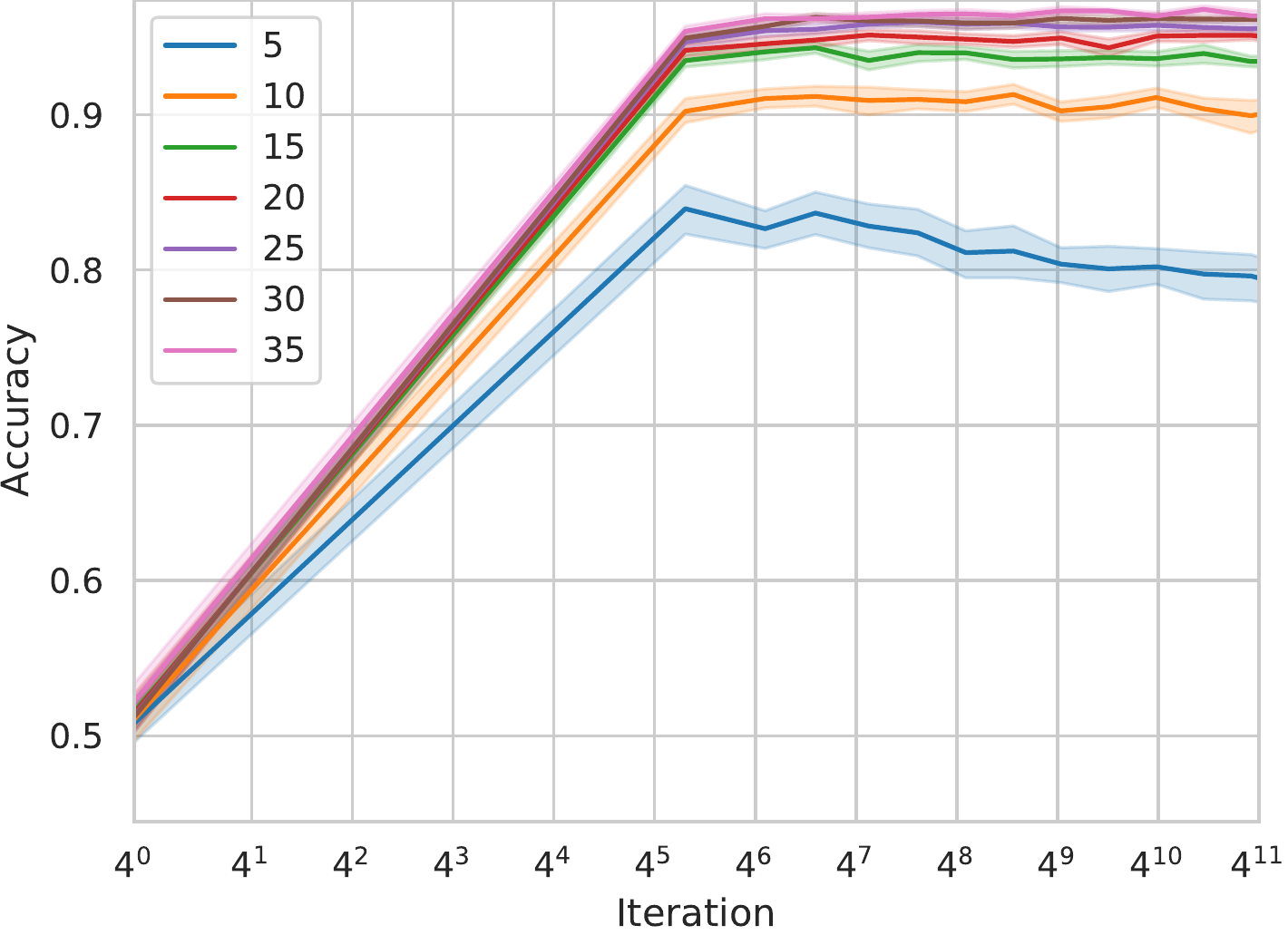}
\\
\multicolumn{4}{c}{\small{Conv-16-2 on EMNIST}}\\[0.5em]
\small{{\bf(a)} training CDNV} & \small{{\bf(b)} test CDNV} & \small{{\bf(c)} target CDNV} & \small{{\bf(d)} target accuracy}
\end{tabular}
    \caption{{\bf Class-features variability collapse.} {\bf (a)} CDNV on the source training data; {\bf (b)} CDNV over the source test data; {\bf (c)} CDNV over the target classes, all plotted in log-log scale. {\bf (d)} Target accuracy rate (lin-log scale).
    In each experiment we trained the model on a different number of source classes $l \in \{5,10,20,30,40,50,60\}$ (as indicated in the legend). 
    } \label{fig:var_nc_main}
\end{figure}

\paragraph{Few-shot performance of transfer learning.} While the use of ridge regression to train the few-shot classifier $g$ may seem simple, it has been shown to provide reasonable performance and is therefore suitable for evaluating the effectiveness of recent transfer/transductive learning methods for few-shot learning~\citep{Dhillon2020A,DBLP:conf/eccv/TianWKTI20}. To demonstrate this, we compared the 1- and 5-shot performance of our approach to several other few-shot learning algorithms on the Mini-ImageNet, CIFAR-FS, and FC-100 datasets, as summarized in Table~\ref{tab:main_results}. For each dataset, we report the average performance of our method on epochs 90 to 100. As shown in the table, our method performs competitively with other approaches on the three benchmarks, particularly in the 1-shot case (achieving state-of-the-art performance on FC-100). Note that the inferior performance of our method compared to Distill-simple~\citep{DBLP:conf/eccv/TianWKTI20} is likely due to the choice of the final few-shot classifier: our ridge regression classifier is not as effective as their logistic regression solution (as shown in Table~\ref{tab:main_results}), but is superior to their nearest neighbor classifier (see Table5 in their paper).

To improve the performance of our method slightly, we employed a standard learning rate scheduling with an initial learning rate of $\eta=0.05$, which is decayed twice by a factor of 0.1 after 30 epochs each (accuracy rates are reported as the average over epochs 90 to 100, as before). Since the performance of these networks plateaued slightly after the first learning rate decay on the source test data, we also applied model selection based on this information and used the network from the first 60 epochs (to avoid overfitting to the source data and classes at the smallest learning rate) with the best source test performance. This combination of variations typically resulted in a small improvement of a few percentage points on the problems considered, as shown in the last line of Table~\ref{tab:main_results}.

\paragraph{Neural collapse and few-shot learnability.} In our main experiment, we test the theoretical conclusions we made in Section~\ref{sec:analysis}. We argued that in the presence of neural collapse on the training classes, the trained feature map can be used to train a classifier with a small number of samples on new, unseen classes. In this section, we demonstrate that neural collapse generalizes to a certain degree to new samples from the same classes and also to new classes, and we show that it is correlated with good few-shot performance.

To validate these conclusions, we trained classifiers $\tih$ with a varying number of $l$ randomly selected source classes. For each run, we plot the CDNV as a function of epoch for the training and test datasets of the source classes and the test samples of the target classes in Figure~\ref{fig:var_nc_main}. In addition, we plot the 5-shot accuracy of ridge regression using the learned feature map $f$. Similar experiments with different numbers of target samples are shown in Figures~\ref{fig:multi_shots} and~\ref{fig:lr_scheduling}.

As shown in Figure~\ref{fig:var_nc_main}, the CDNV decreases over the course of training on both training and test datasets of the source classes, showing that neural collapse generalizes to new samples from the training classes. Since the classification tasks with fewer source classes are easier to learn, the CDNV tends to be larger when training with a wider set of source classes. In contrast, we observe that when increasing the number of source classes, the presence of neural collapse in the target classes strengthens. This is in alignment with our theoretical observations (the more ``training'' classes, the better generalization to new classes), and the few-shot performance also consistently improves when the overall number of source classes is increased. Note that in practice the CDNV does not decrease to zero (due to the limited sample sizes), and plateaus at a value above 1 on the target classes since we use a relatively small number of source classes (at most 60). To validate the generality of our results, this phenomenon is demonstrated in several settings, e.g., using different network architectures and datasets in Figure~\ref{fig:var_nc_main}. As can be seen, the values of the CDNV on the target classes are relatively large compared to those on the source classes, except for the results on EMNIST. However, these values still indicate a reasonable few-shot learning performance, as demonstrated in the experiments. These results consistently validate our theoretical findings, that is, that neural collapse generalizes to new source samples, it emerges for new classes, and its presence immediately facilitates good performance in few-shot learning.

\subsection{Dynamics of the Class-Embedding Distances}\label{sec:distances}

\begin{figure}[t]
    \centering
    \begin{tabular}{@{}c@{~}c@{~}c@{~}c}
\includegraphics[width=.23\linewidth]{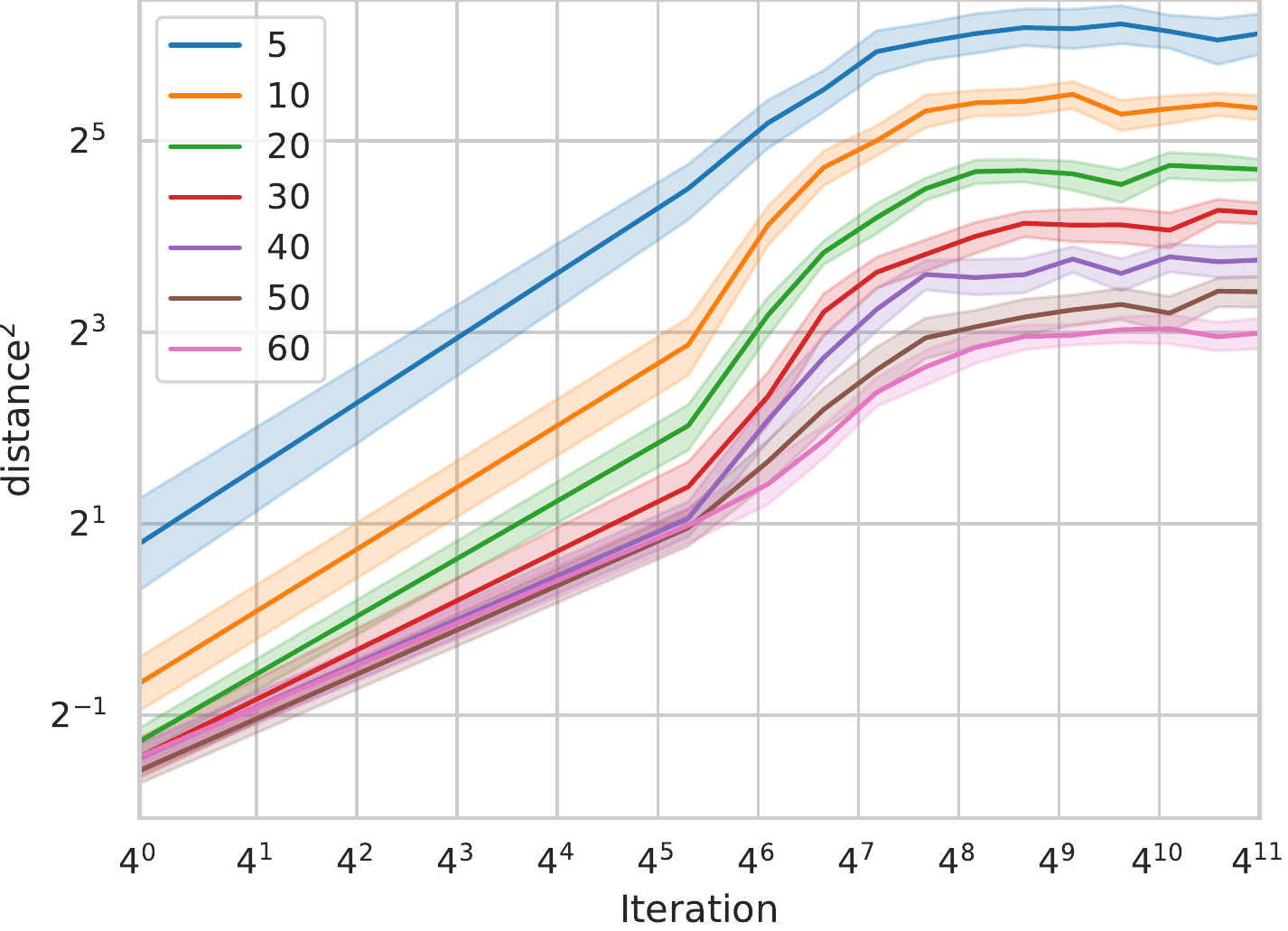}&
\includegraphics[width=.23\linewidth]{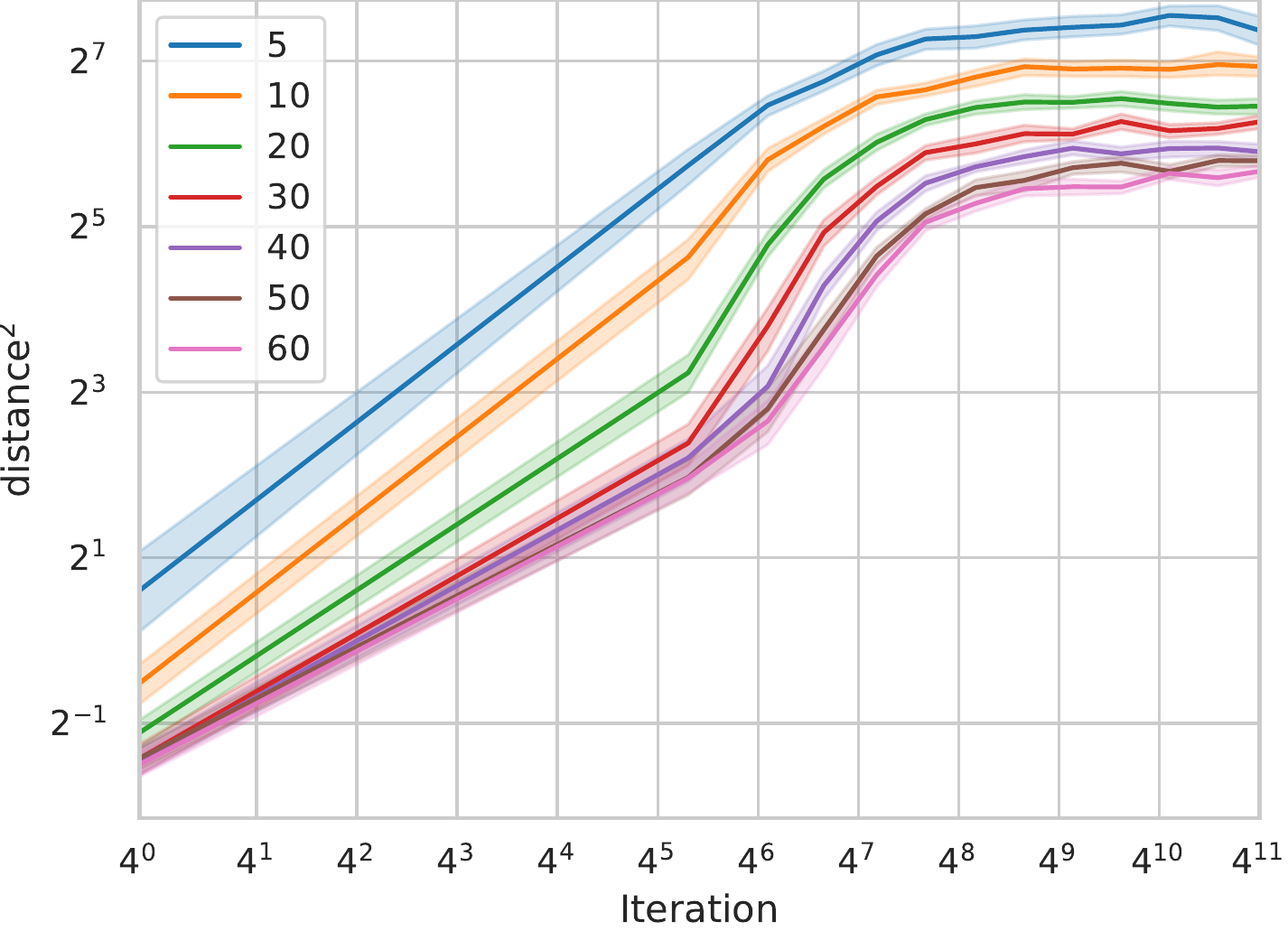}&
\includegraphics[width=.23\linewidth]{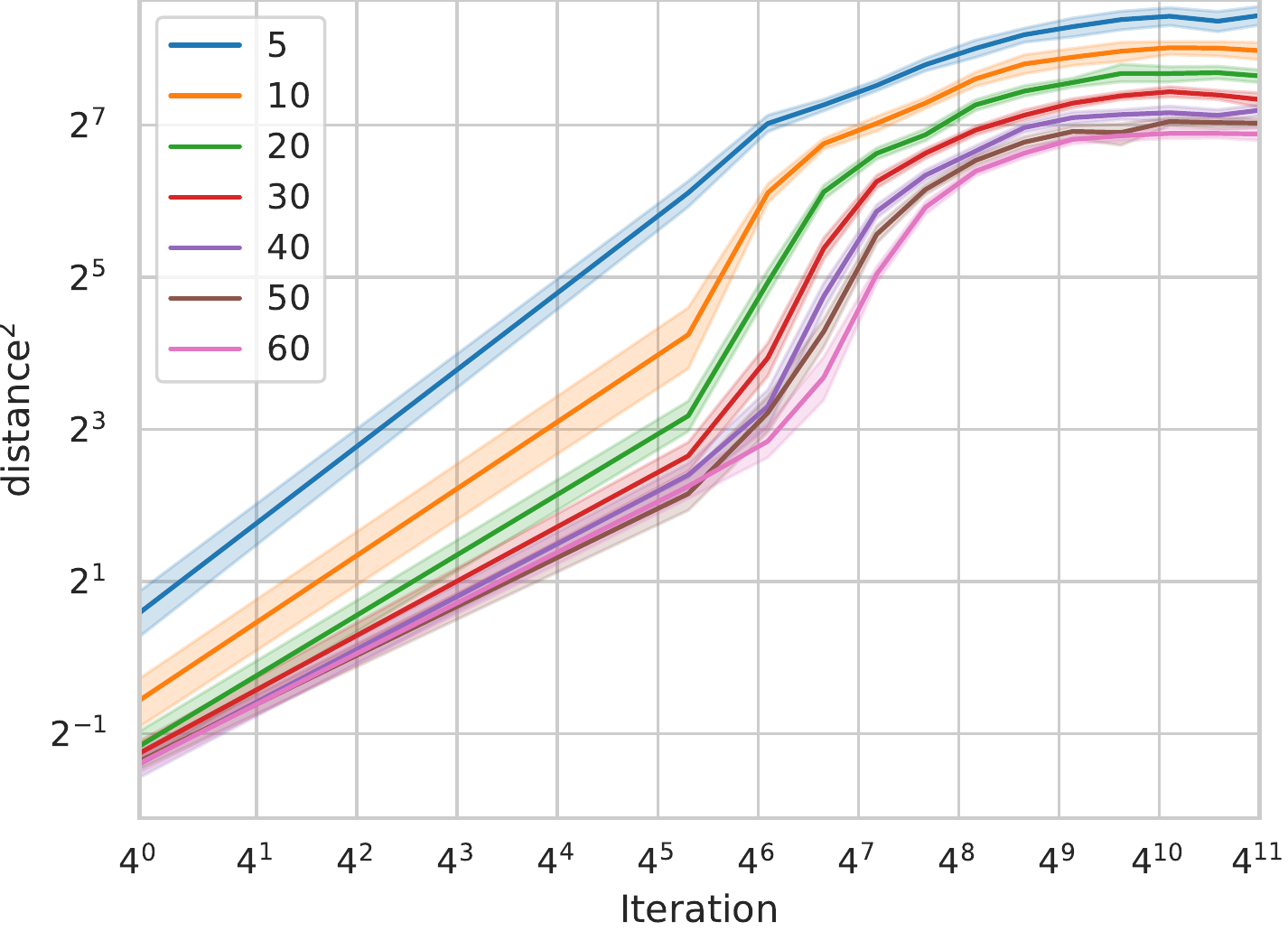}&
\includegraphics[width=.23\linewidth]{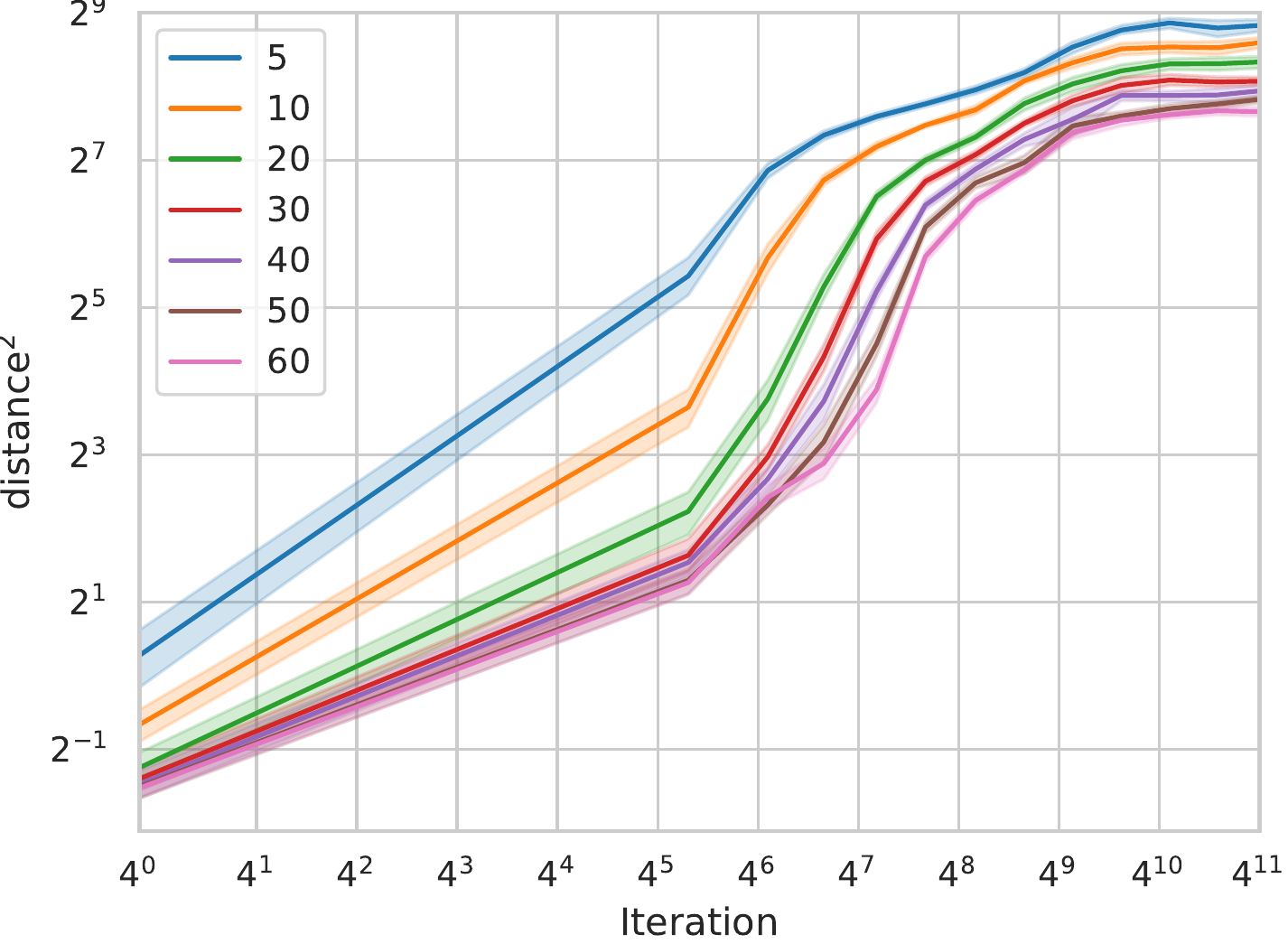}\\
\multicolumn{4}{c}{{\bf (a)} Source classes, train data} \\[0.5em]
\includegraphics[width=.23\linewidth]{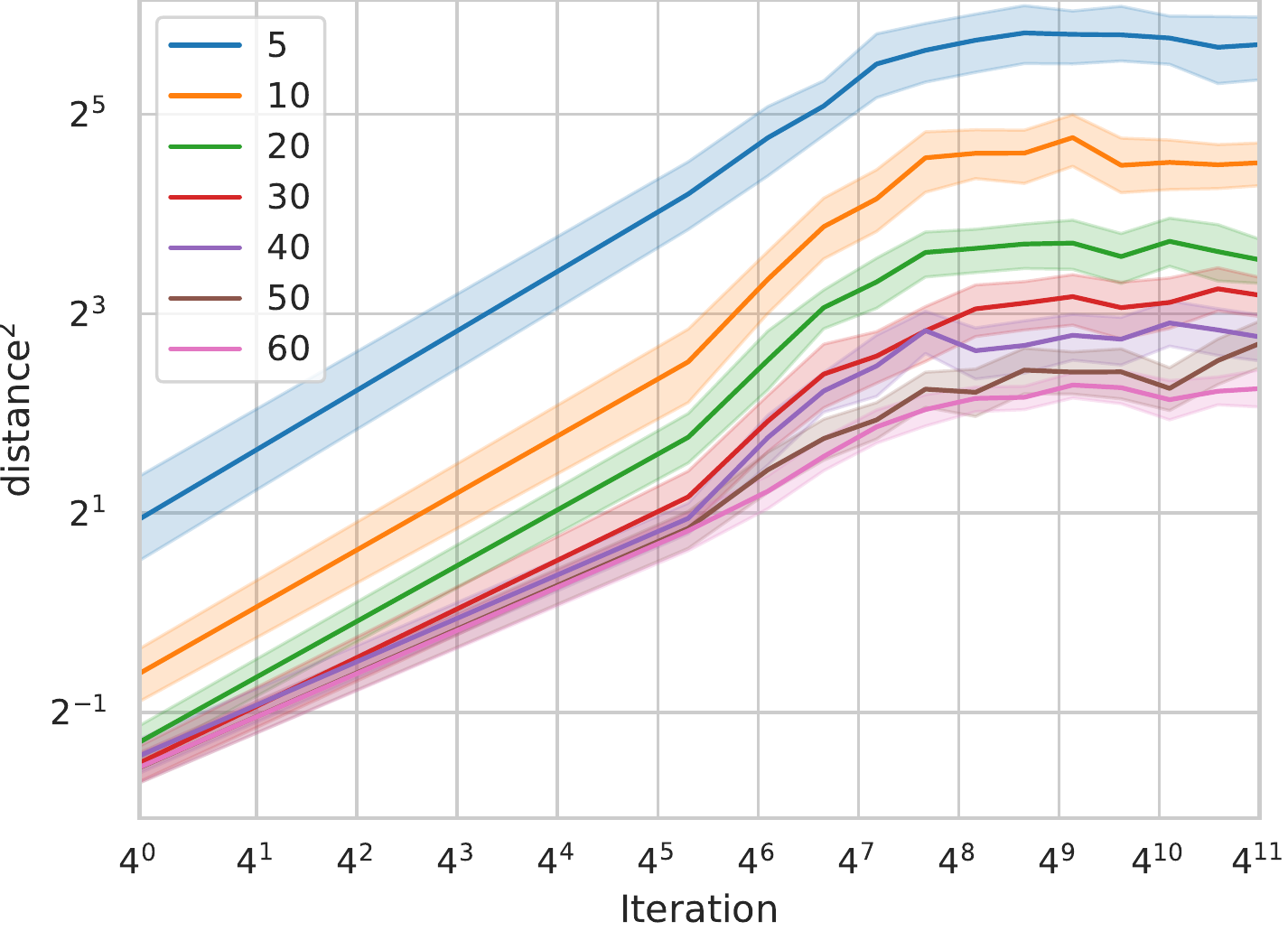}&
\includegraphics[width=.23\linewidth]{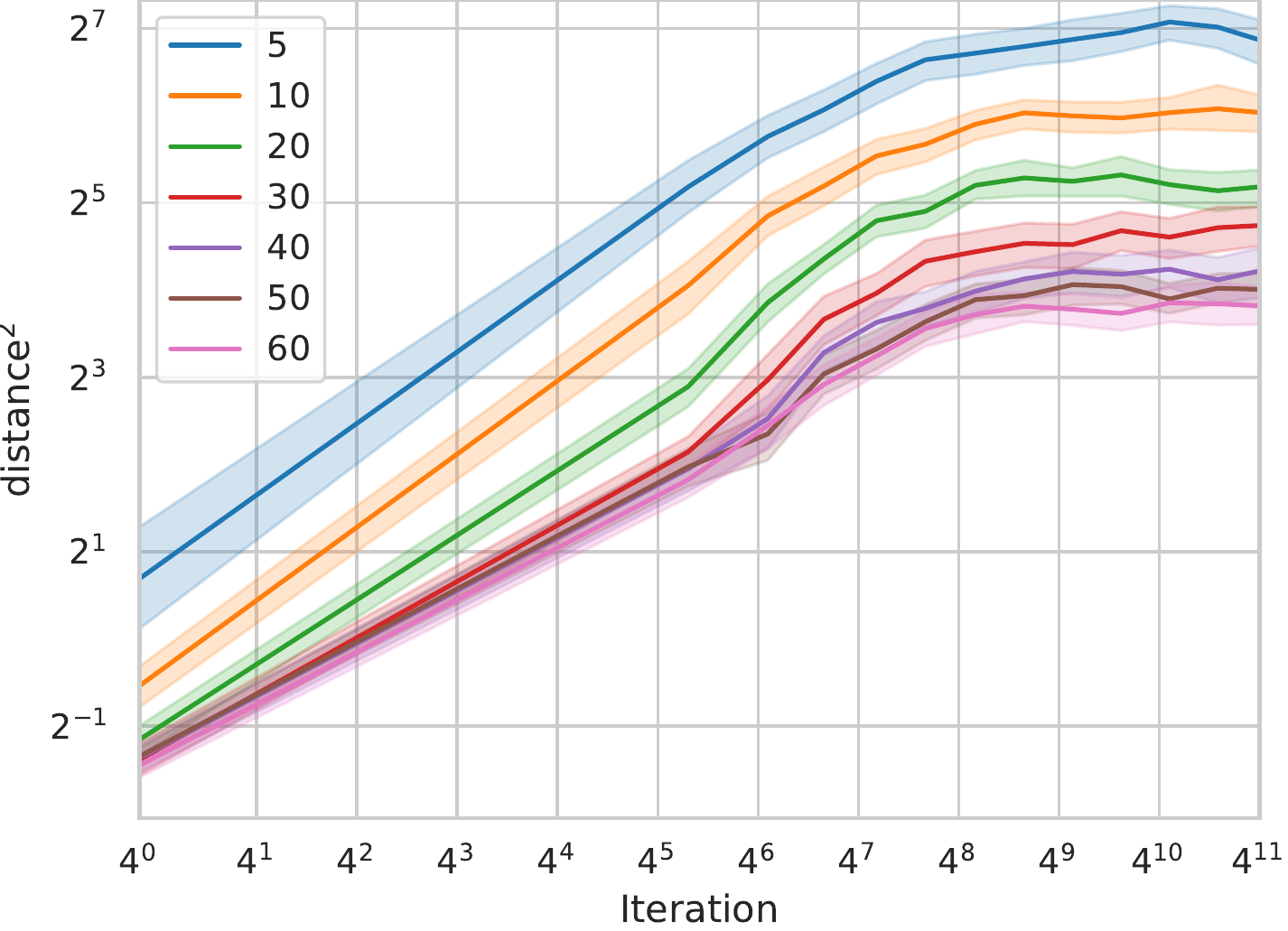}&
\includegraphics[width=.23\linewidth]{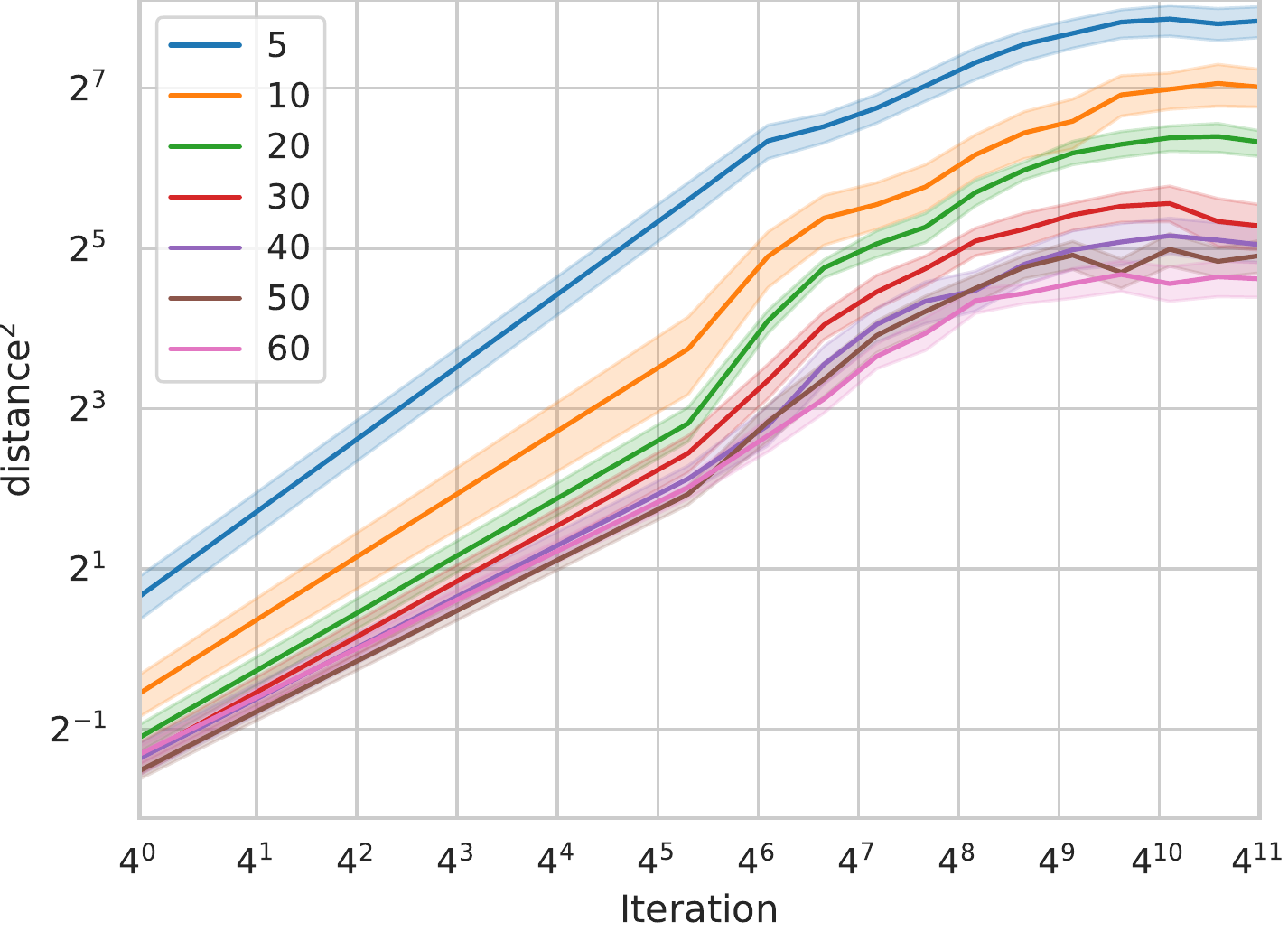}&
\includegraphics[width=.23\linewidth]{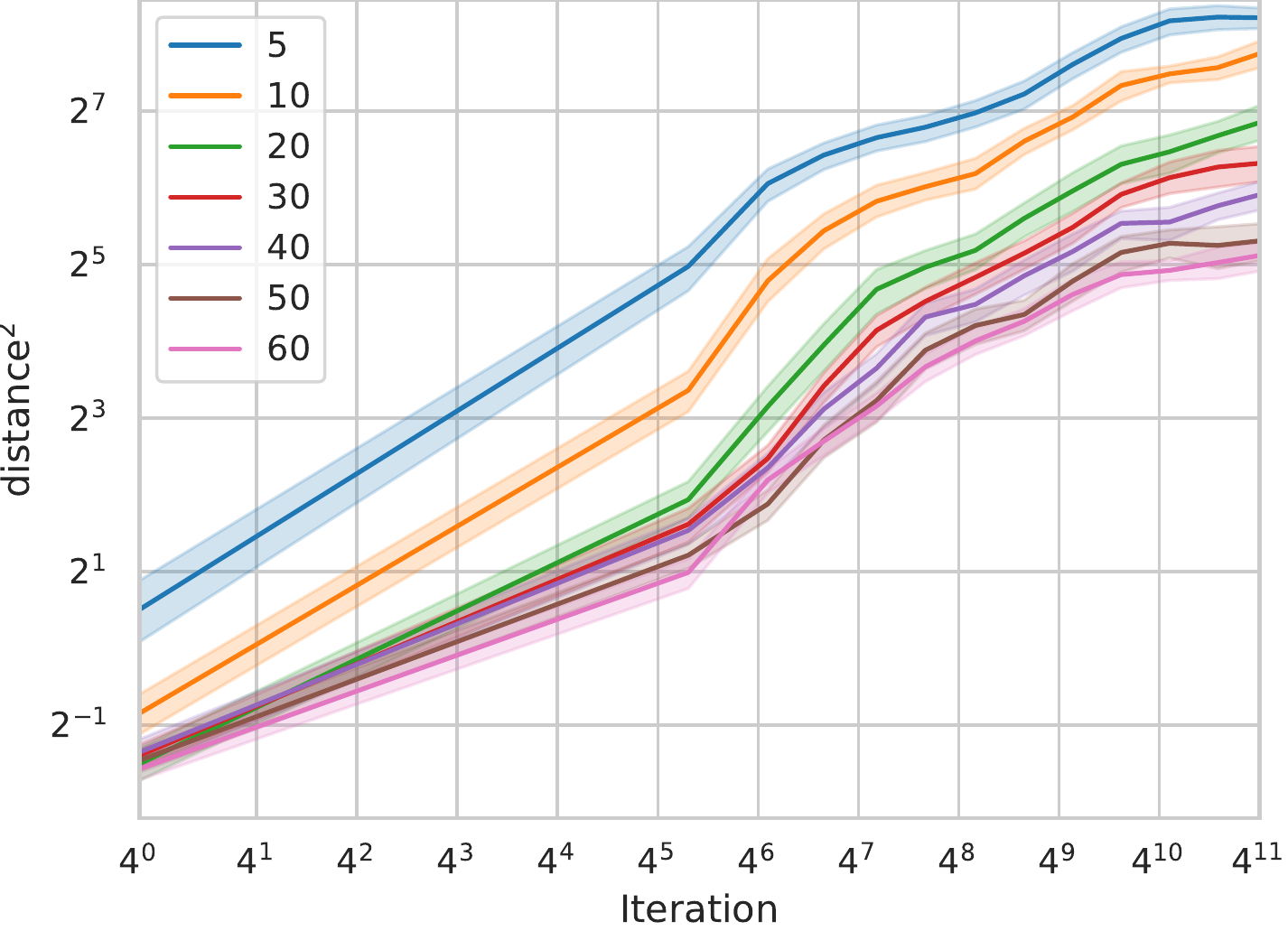}\\
\multicolumn{4}{c}{{\bf (b)} Source classes, test data} \\[0.5em]
\includegraphics[width=.23\linewidth]{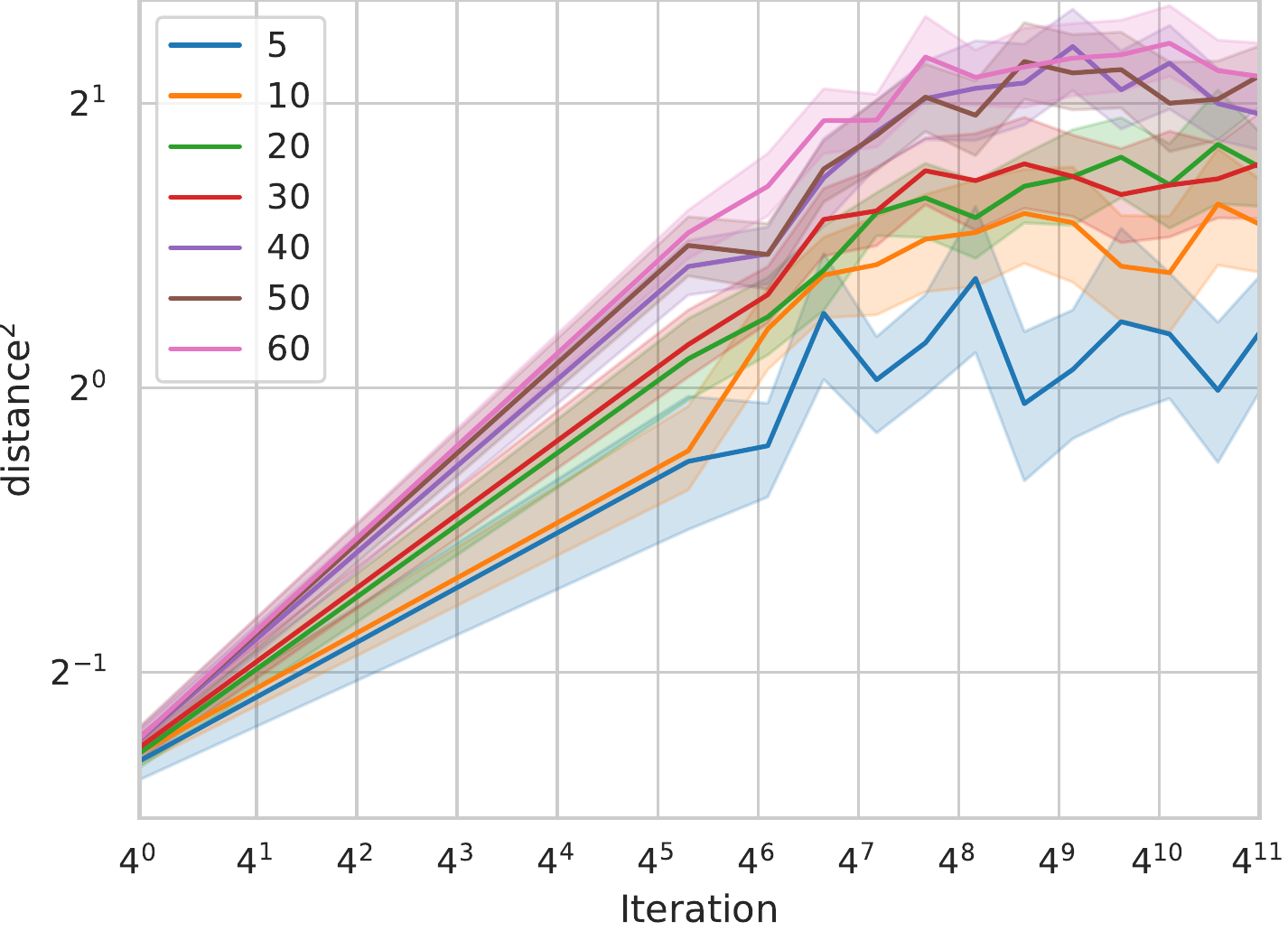}&
\includegraphics[width=.23\linewidth]{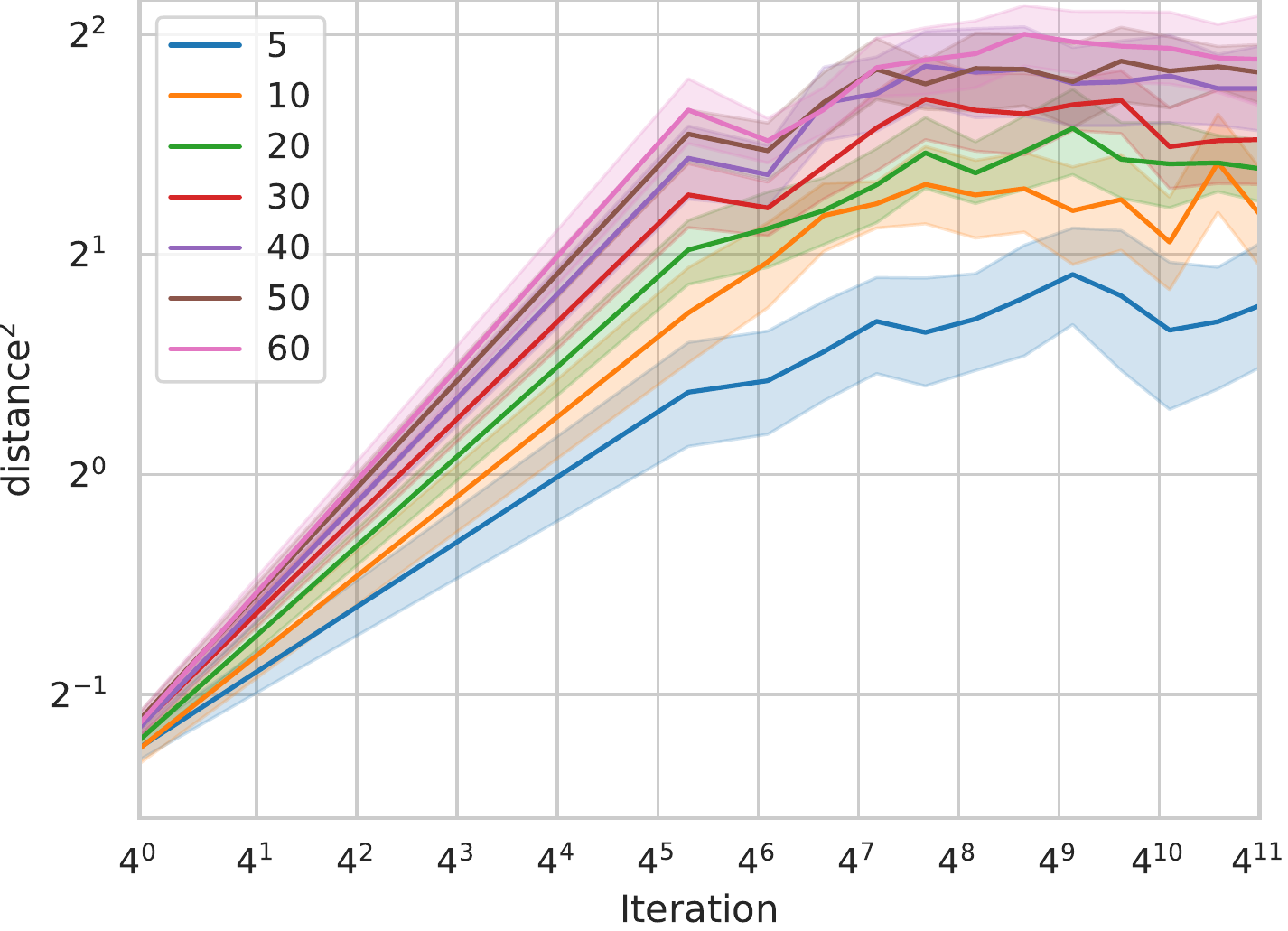}&
\includegraphics[width=.23\linewidth]{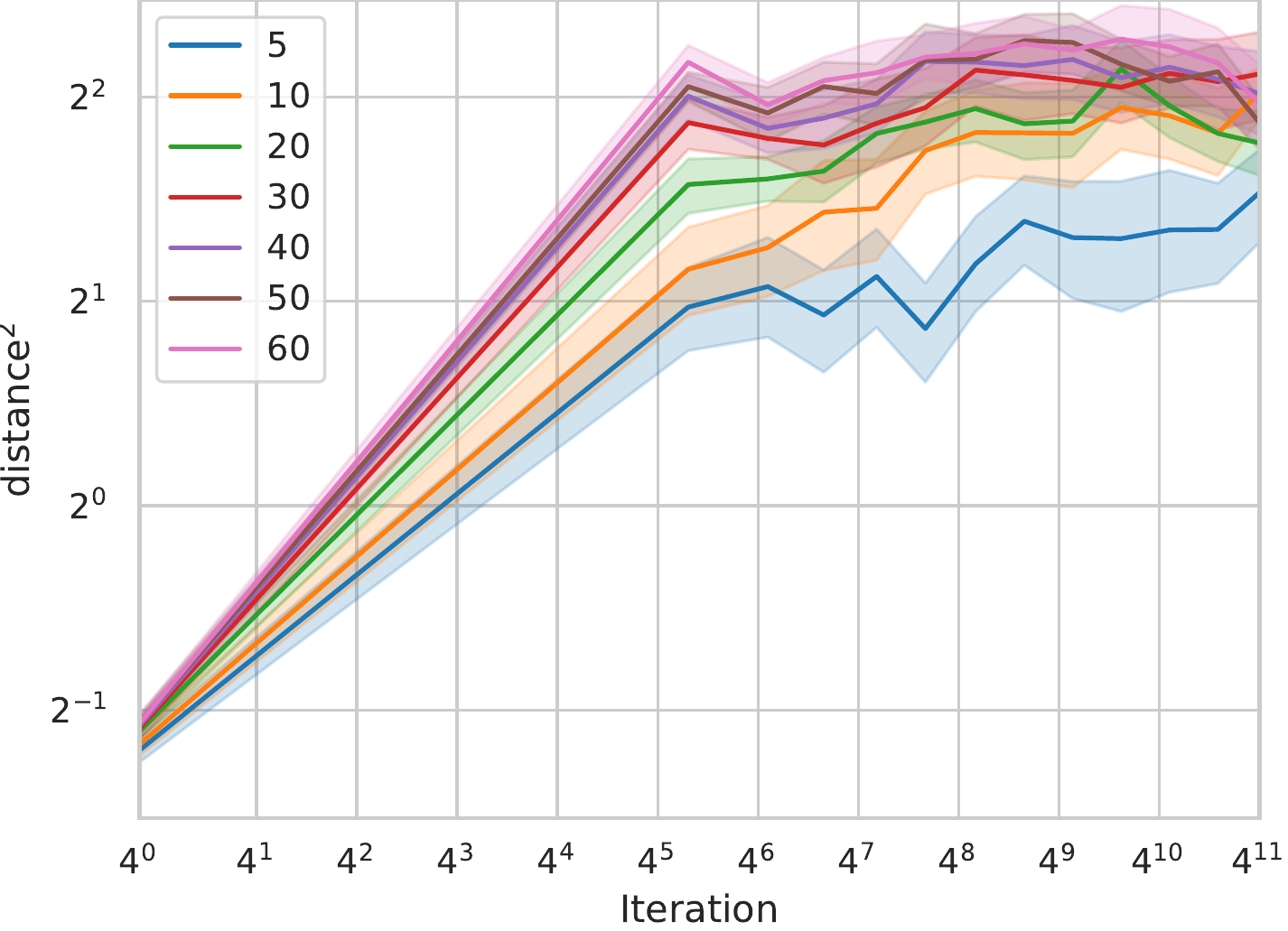}&
\includegraphics[width=.23\linewidth]{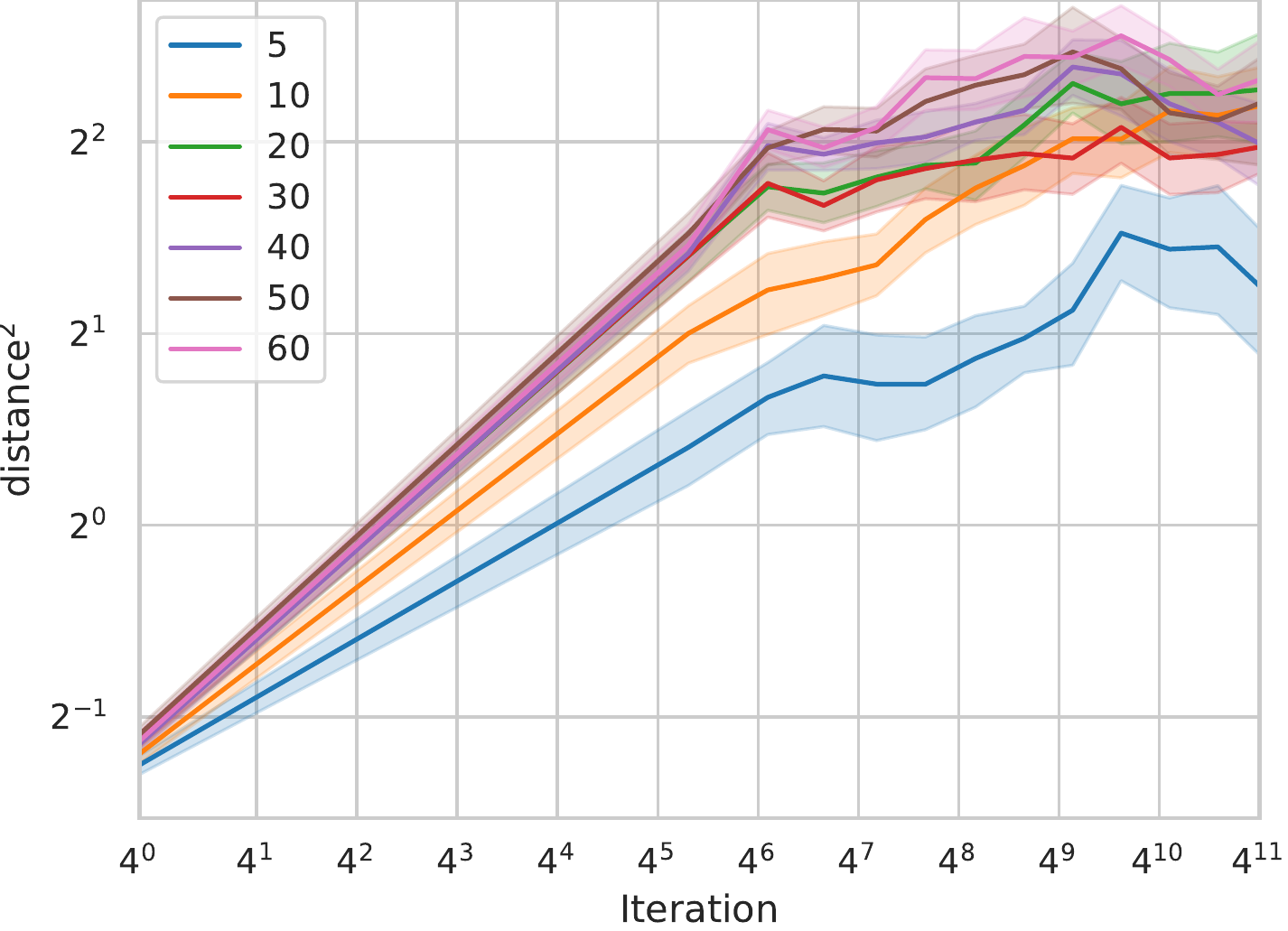}\\
\multicolumn{4}{c}{{\bf (c)} Target classes, test data} \\[0.5em]
\end{tabular}
    \caption{{\bf Dynamics of minimal class-means distance.} In {\bf (a)} we plot $\min_{i\neq j} \|\mu_f(\tS_i)-\mu_f(\tS_j)\|$, in {\bf (b)} we plot $\min_{i\neq j} \|\mu_f(\tP_i)-\mu_f(\tP_j)\|$ and in {\bf (c)} we plot $\min_{i\neq j} \|\mu_f(P_i)-\mu_f(P_j)\|$ as a function of the number of training iterations. In each experiment we trained a WRN-28-4 using SGD on a set of $l\in \{5,10,20,30,40,50,60\}$ source classes on CIFAR-FS (as indicated in the legend). The $i$'th column corresponds to the results of training with SGD with $\eta = 2^{-2i-2}$. 
    } \label{fig:dist_fs}
\end{figure}

Recall that Theorem~\ref{thm:full_bound} scales with $\Lambda^{-1} = \mathcal{O}(\fr{1}{\min_{i\neq j\in [l]}\|\mu_f(\tS_i) - \mu_f(\tS_j)\|})$. Therefore, we empirically investigated the dynamics of $\min_{i\neq j\in [l]}\|\mu_f(\tS_i) - \mu_f(\tS_j)\|$ during training in our standard setting (WRN-28-4 with the default hyperparameters, see Section~\ref{sec:setup}) on CIFAR-FS, considering a varying number source classes ($l\in \{5,10,20,30,40,50,60\}$) and learning rates ($\eta \in \{2^{-2i-2}\}^{4}_{i=1}$). As can be seen in Figure~\ref{fig:dist_fs}, the values of $\min_{i\neq j \in [l]}\|\mu_f(\tS_i) - \mu_f(\tS_j)\|$ tend to increase during training and is tends to be larger when using a smaller learning rate. For completeness, we also plotted the values of $\min_{i\neq j\in [l]}\|\mu_f(\tP_i) - \mu_f(\tP_j)\|$ and $\min_{i\neq j\in [k]}\|\mu_f(P_i) - \mu_f(P_j)\|$. Interestingly, $\min_{i\neq j \in [k]}\|\mu_f(P_i) - \mu_f(P_j)\|$ tends to increase with the number of source classes, indicating improved generalization to new classes.

\subsection{Varying the Number of Target Samples}

\begin{figure}[t]
    \centering
    \begin{tabular}{@{}c@{~}c@{~}c@{~}c}
\includegraphics[width=.23\linewidth]{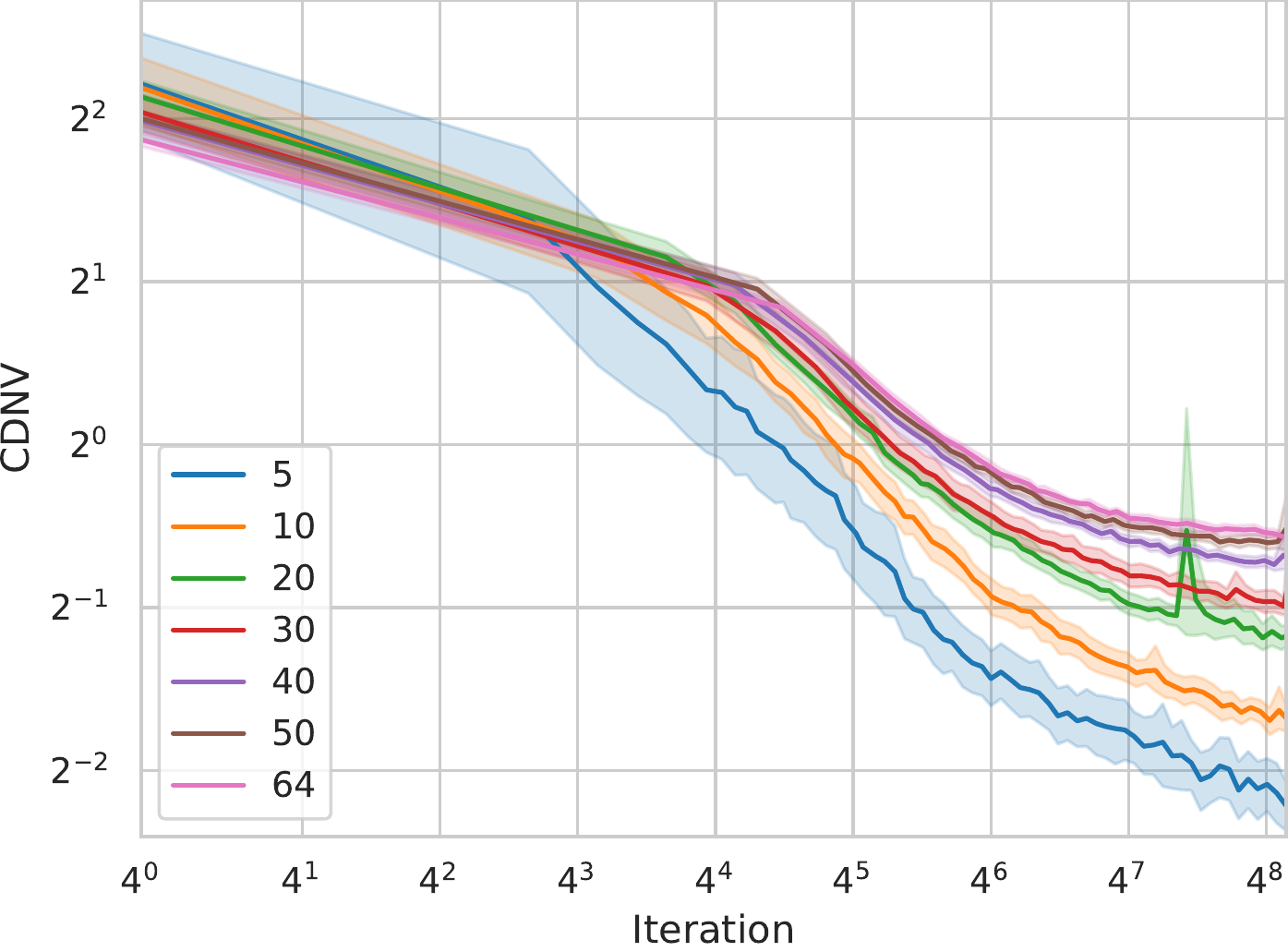}&
\includegraphics[width=.23\linewidth]{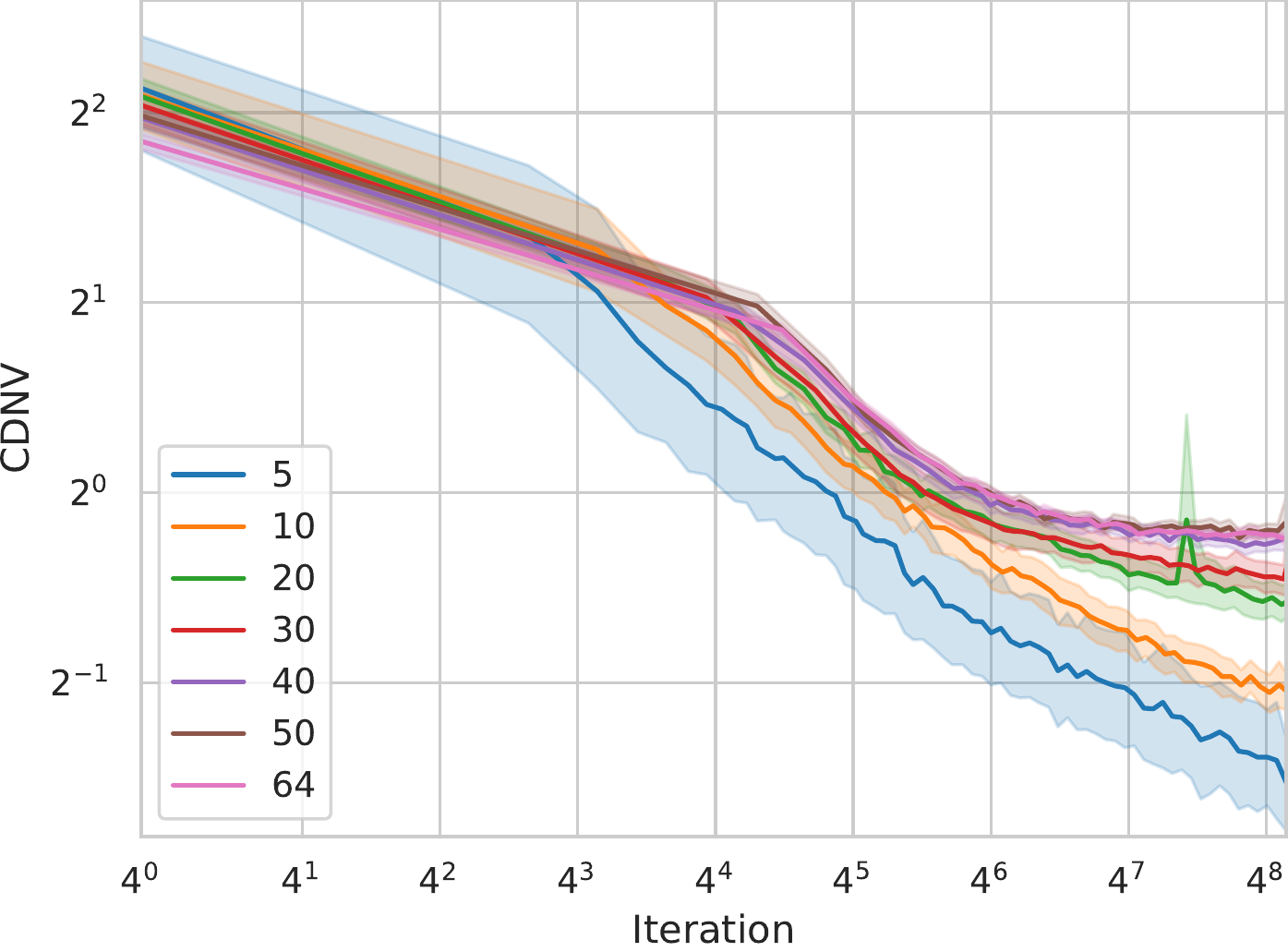}&
\includegraphics[width=.23\linewidth]{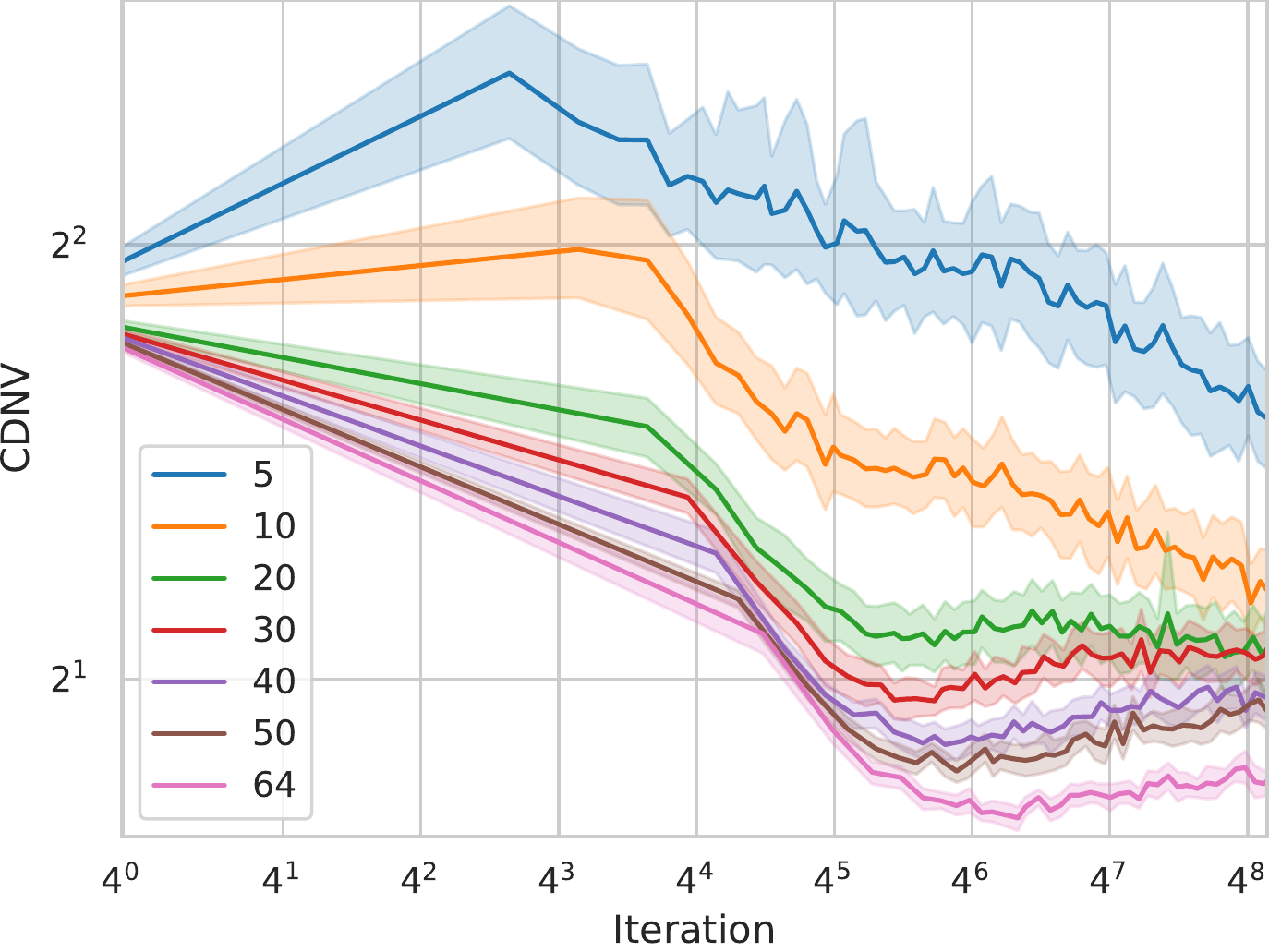}\\
{\bf (a)} CDNV, train & {\bf(b)} CDNV, test & {\bf(c)} CDNV, target \\ 
\includegraphics[width=.23\linewidth]{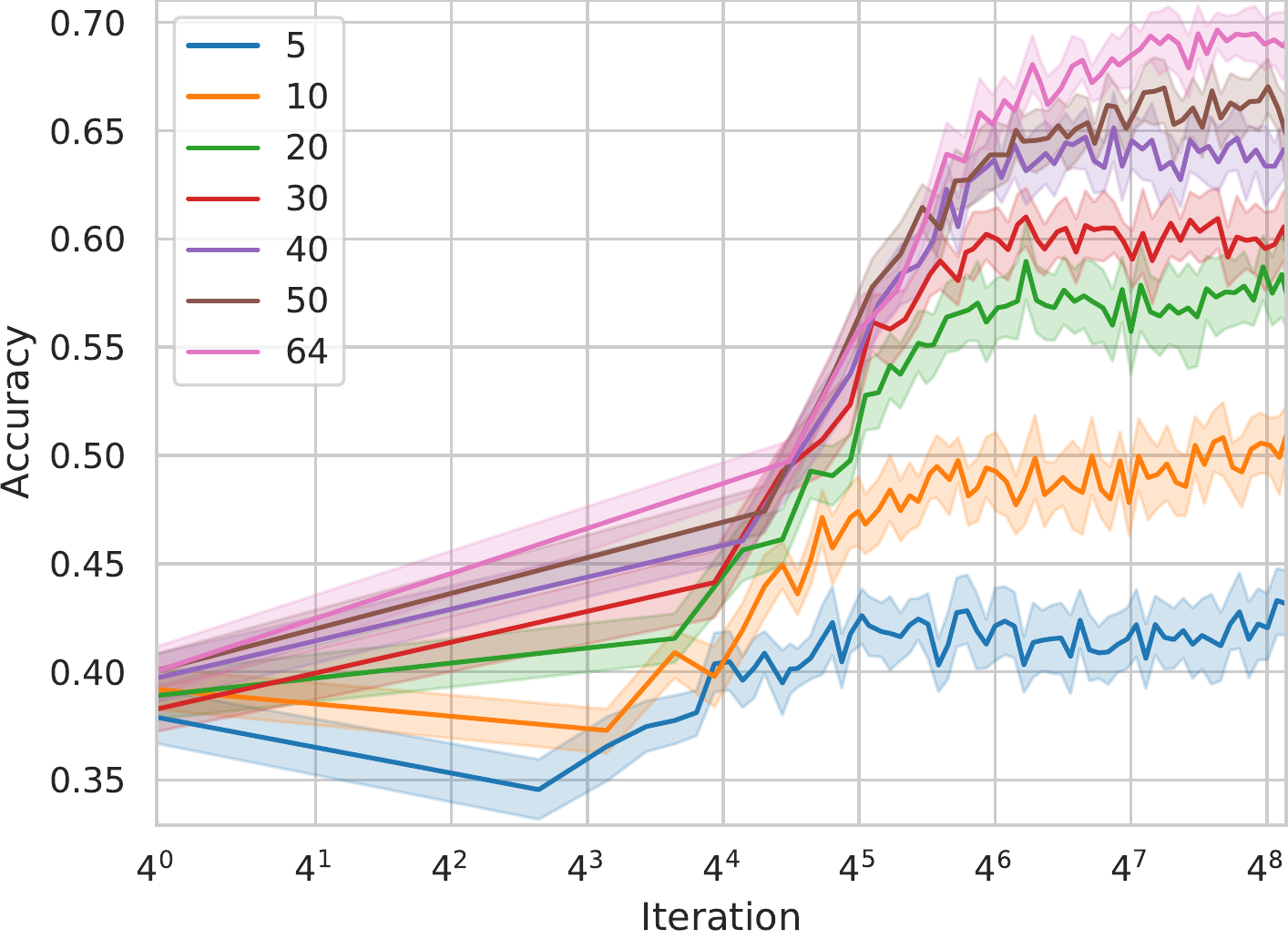}& 
\includegraphics[width=.23\linewidth]{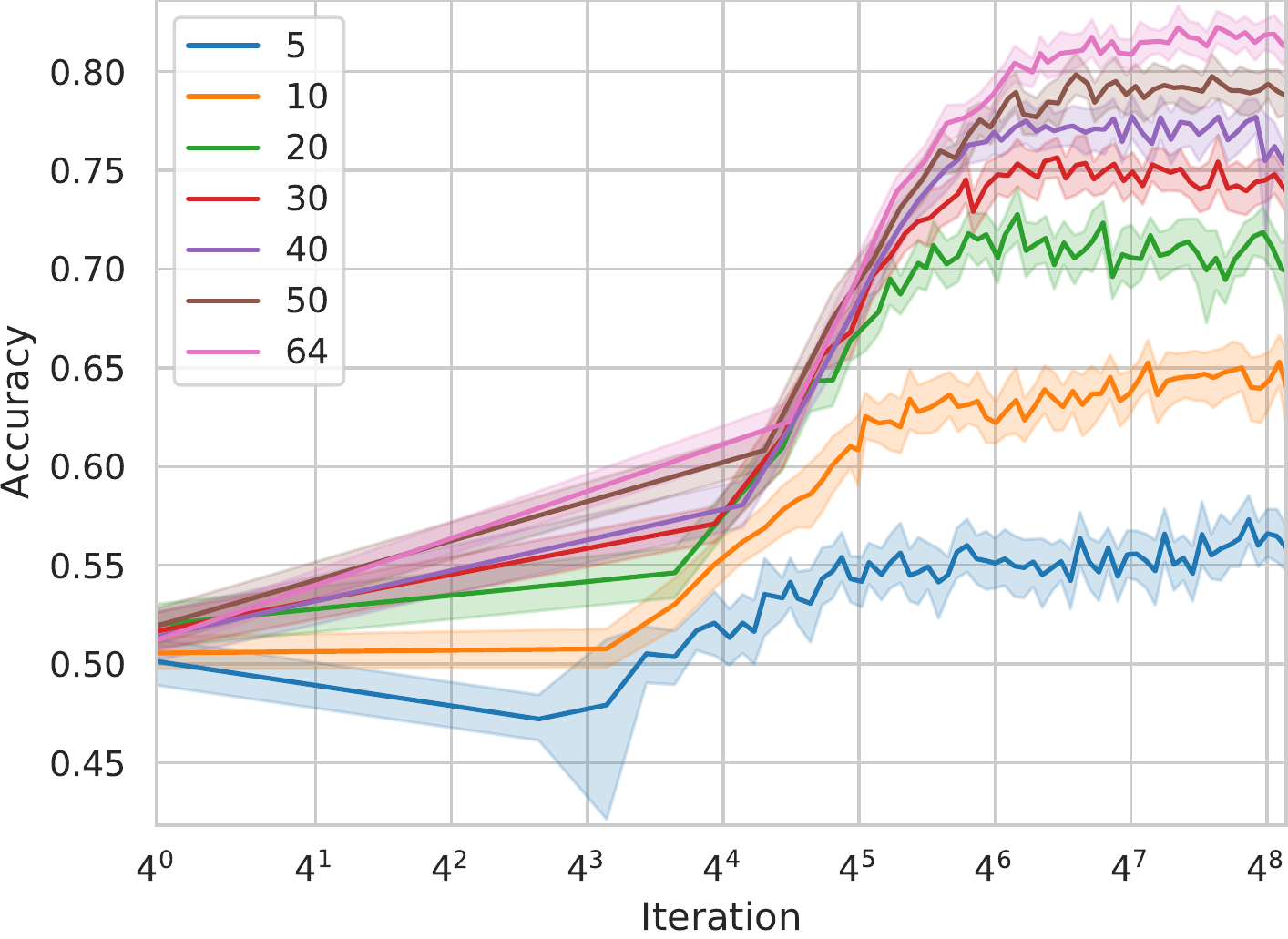}& 
\includegraphics[width=.23\linewidth]{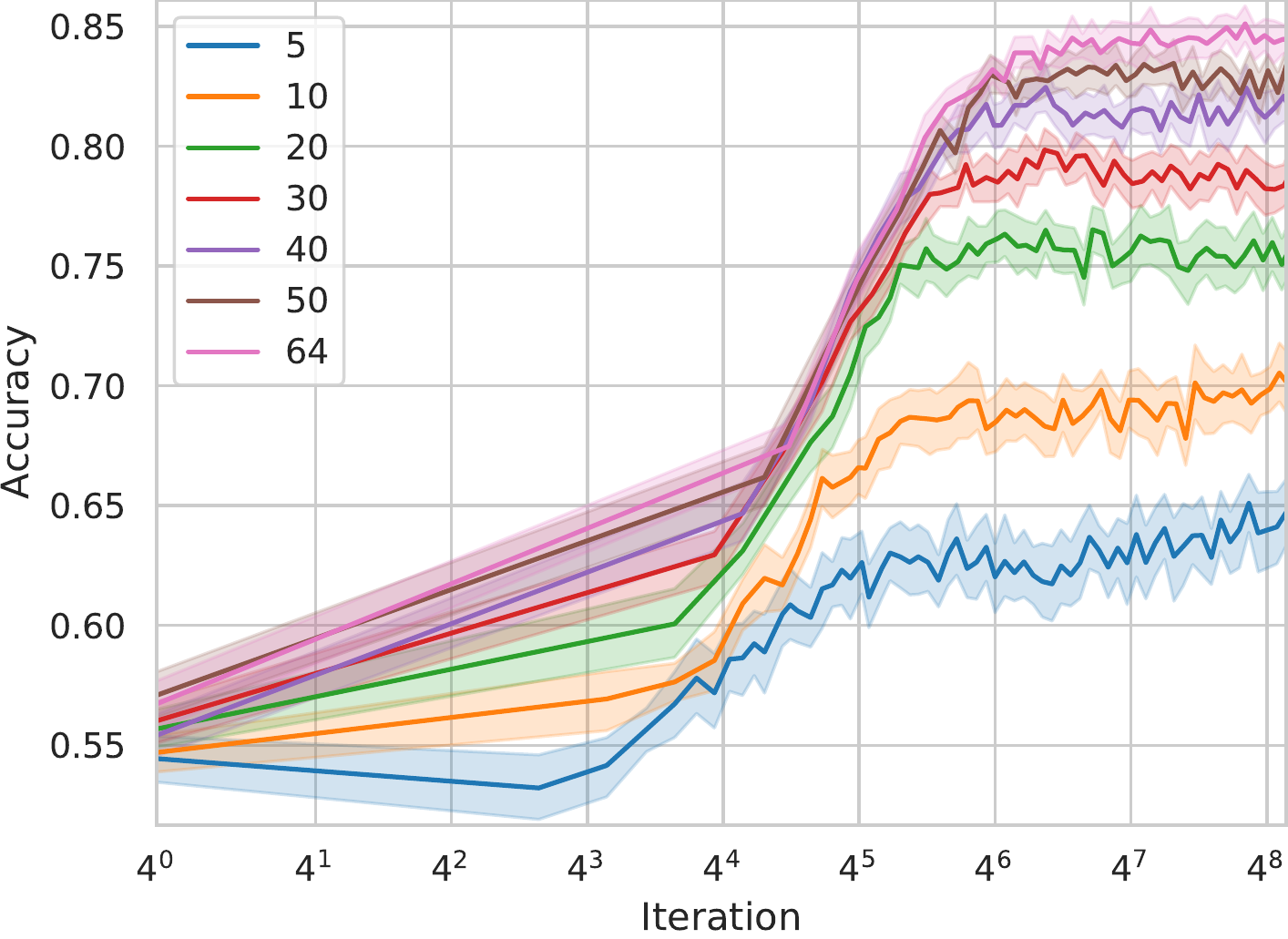}&
\includegraphics[width=.23\linewidth]{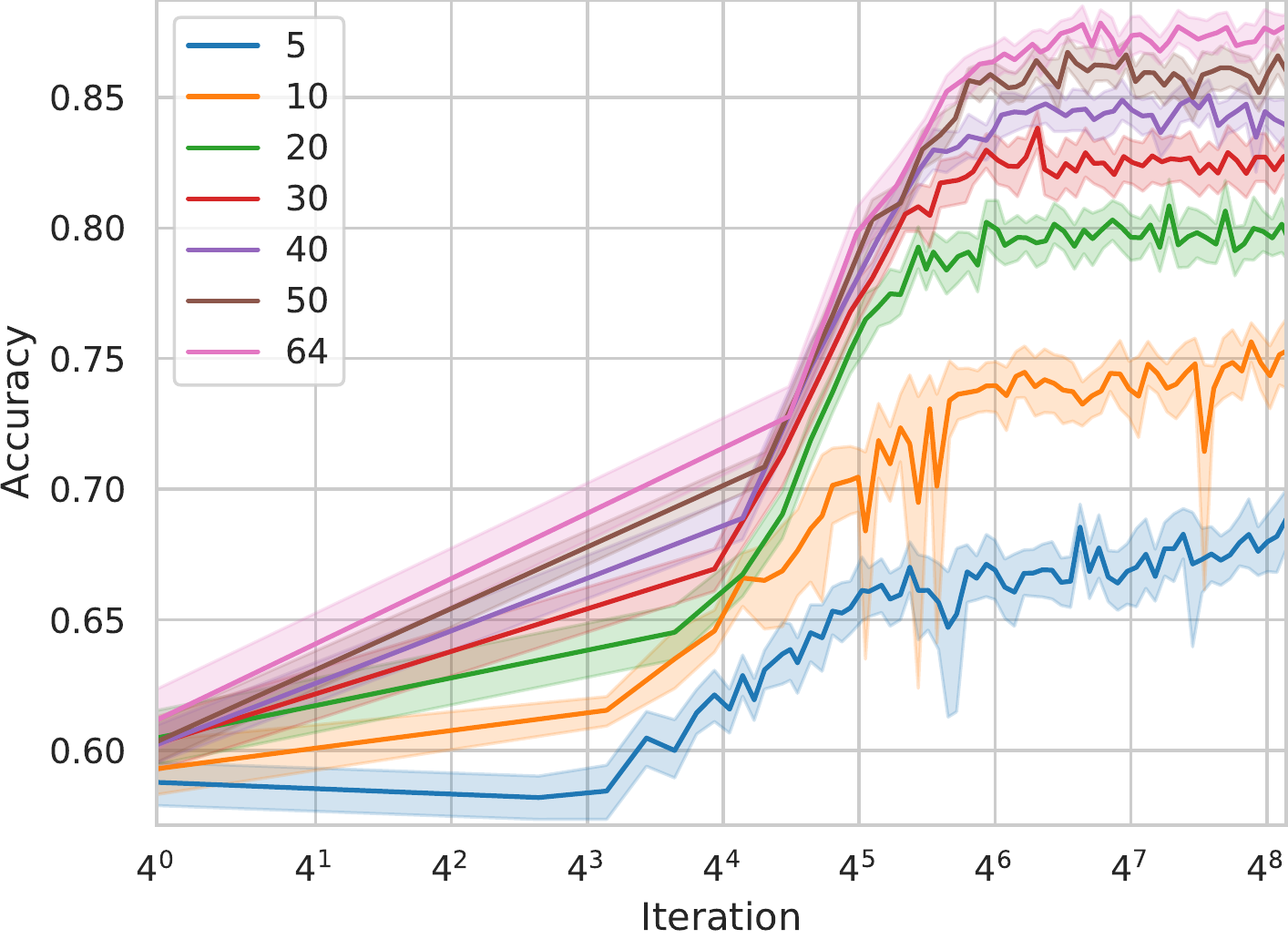}\\
{\bf(d)} 1-shot accuracy & {\bf(e)} 5-shot accuracy & {\bf(f)} 10-shot acc & {\bf(g)} 20-shot acc \\
\multicolumn{4}{c}{CIFAR-FS} \\[0.5em]
\includegraphics[width=.23\linewidth]{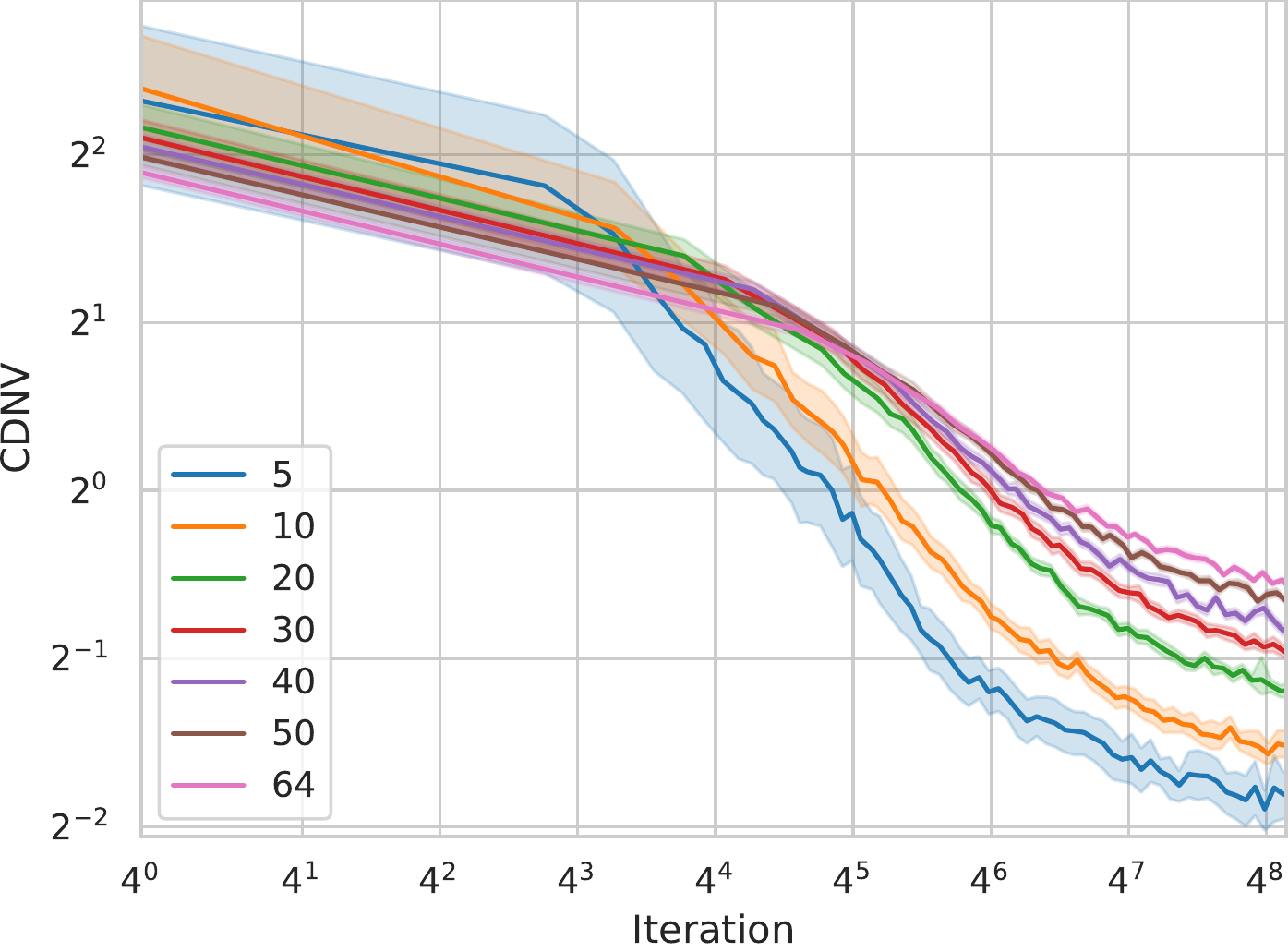}&
\includegraphics[width=.23\linewidth]{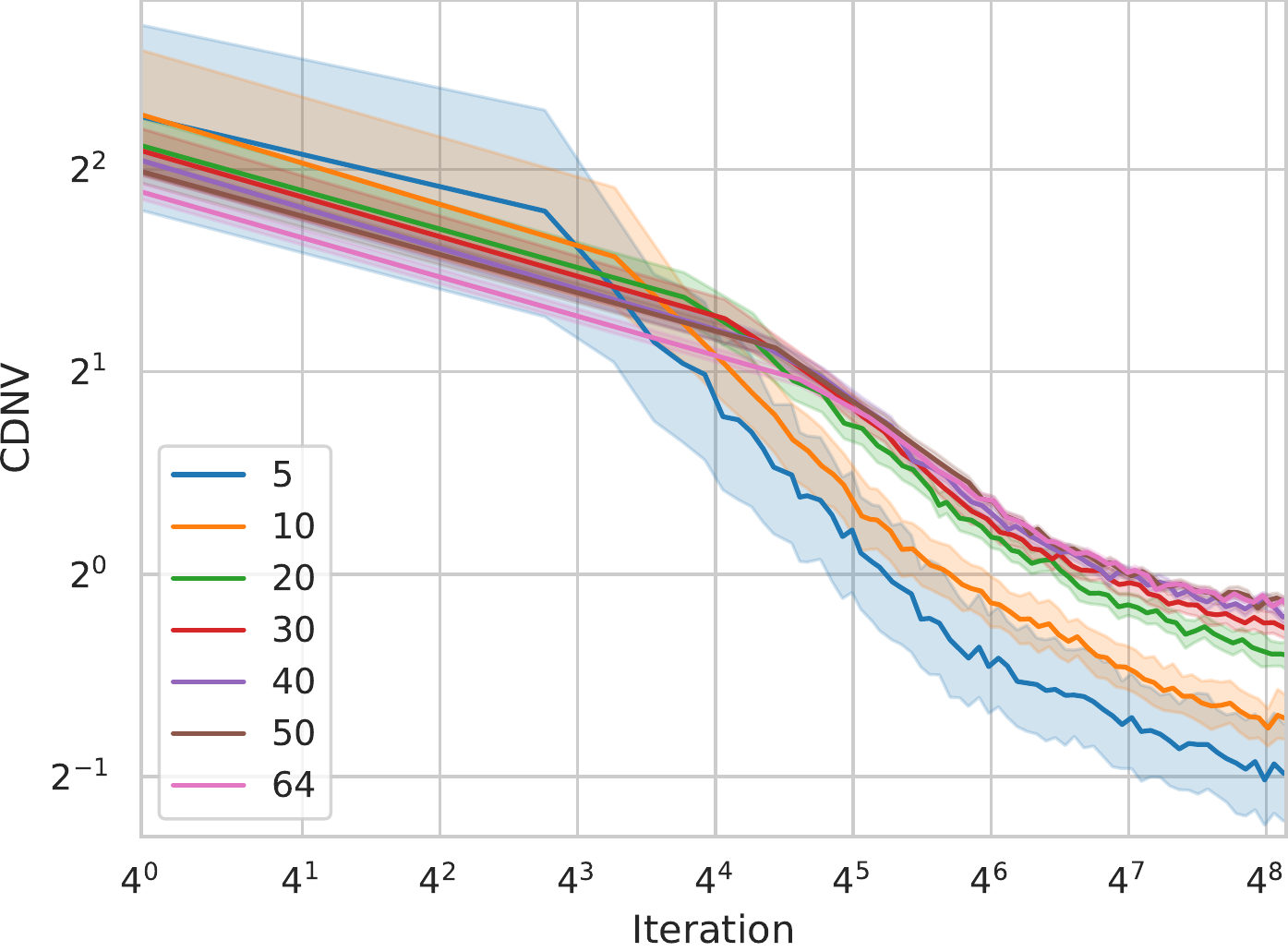}&
\includegraphics[width=.23\linewidth]{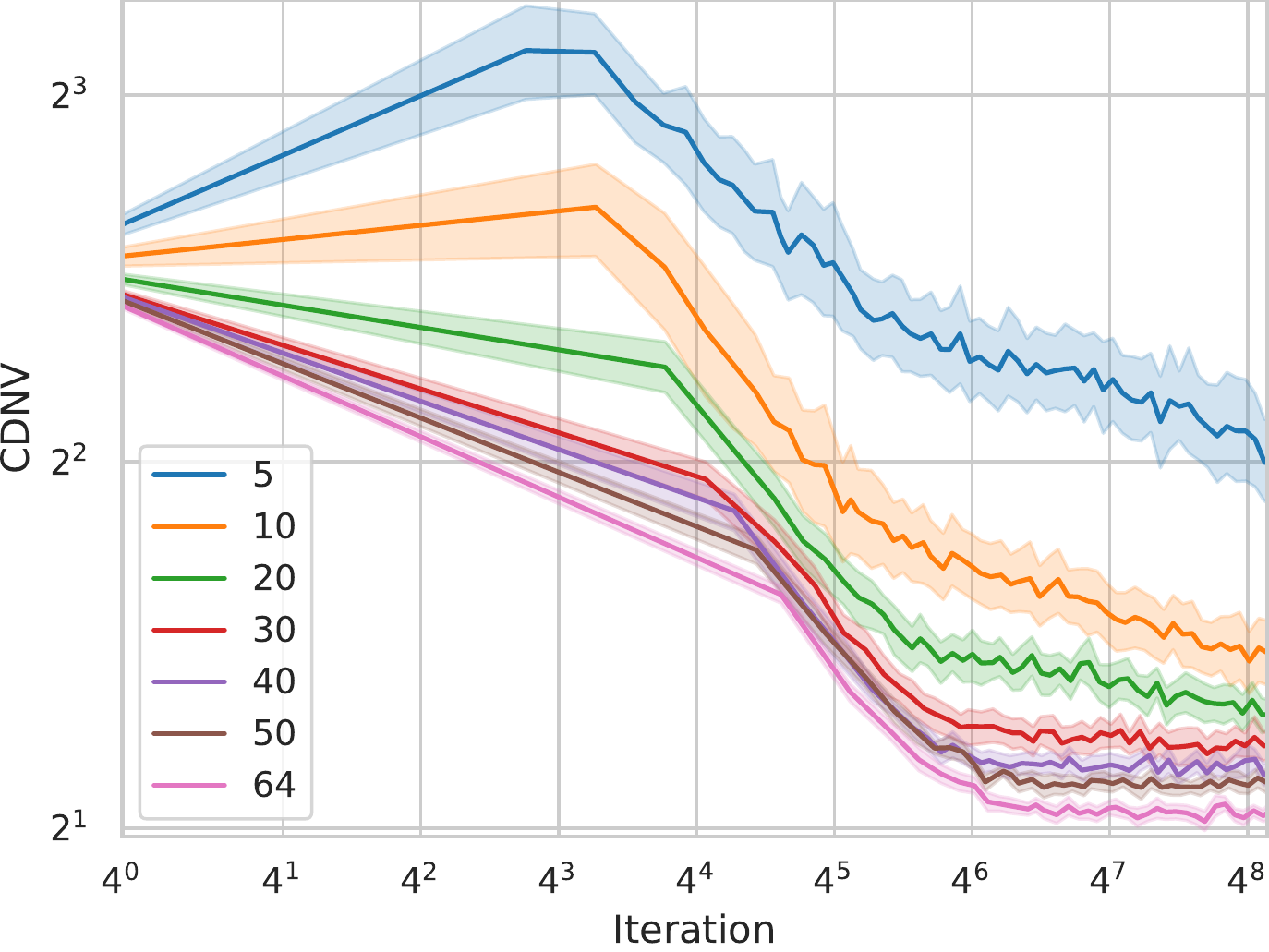}\\
{\bf (a)} CDNV, train & {\bf(b)} CDNV, test & {\bf(c)} CDNV, target \\
\includegraphics[width=.23\linewidth]{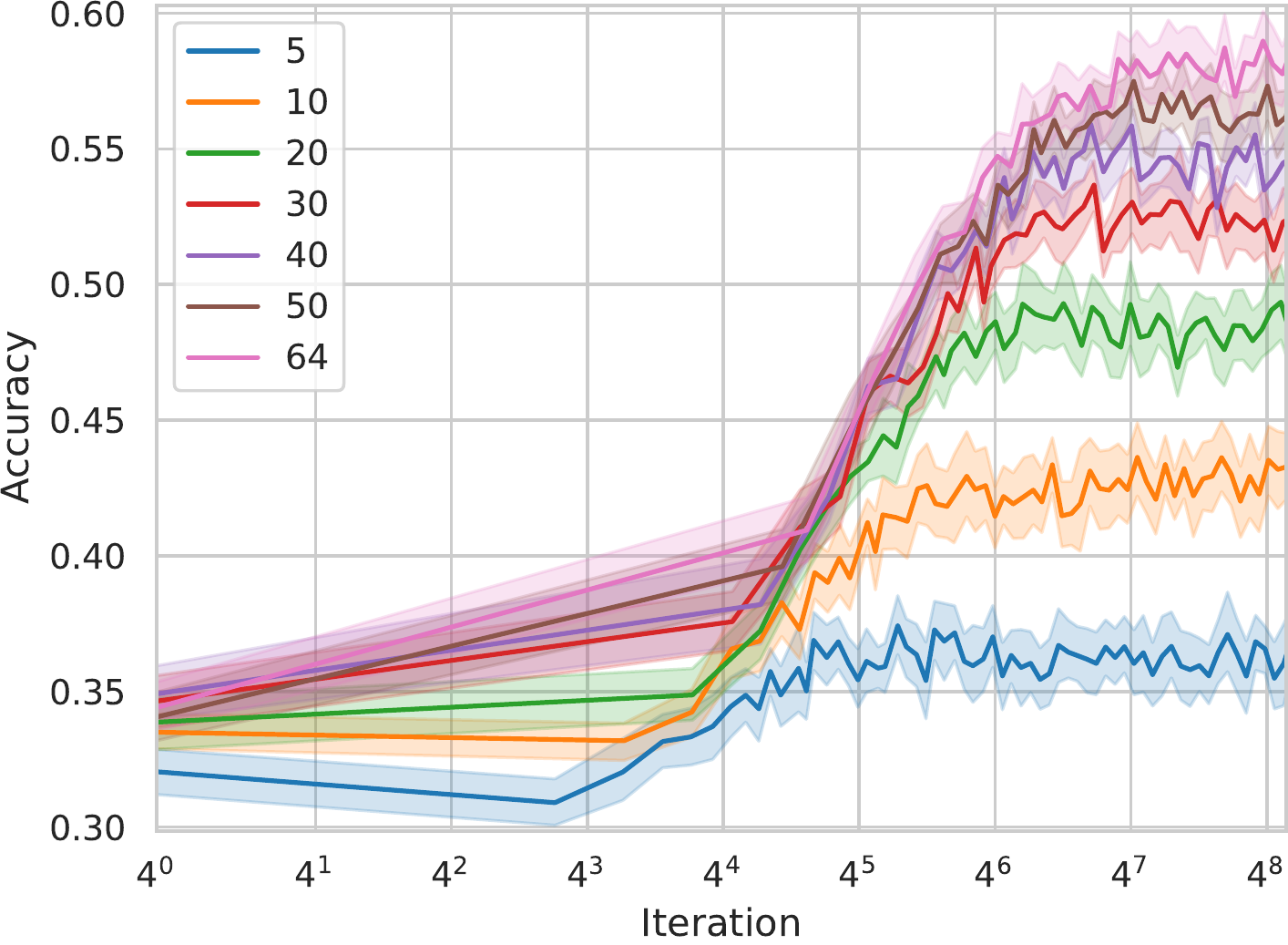}& 
\includegraphics[width=.23\linewidth]{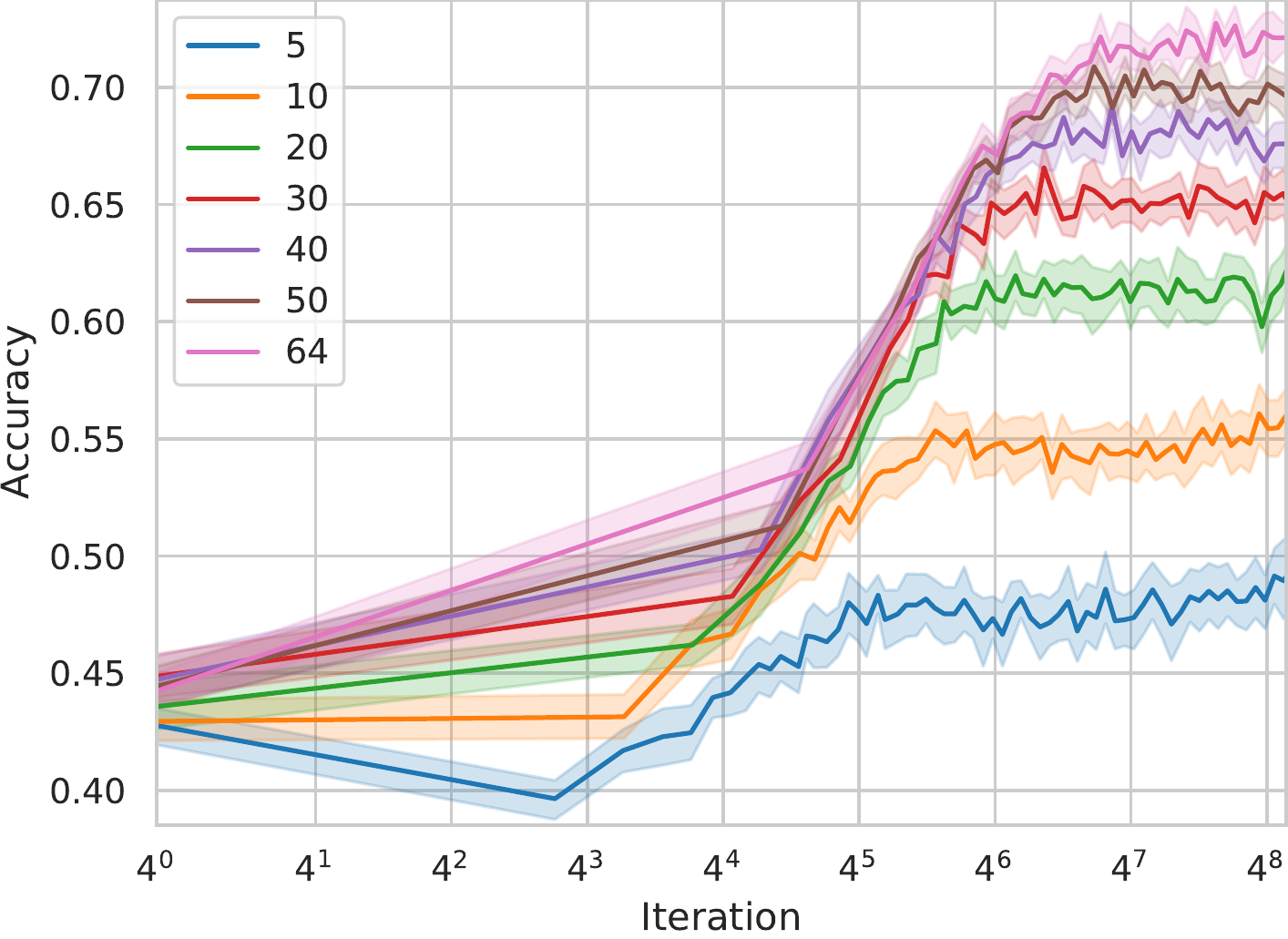}& 
\includegraphics[width=.23\linewidth]{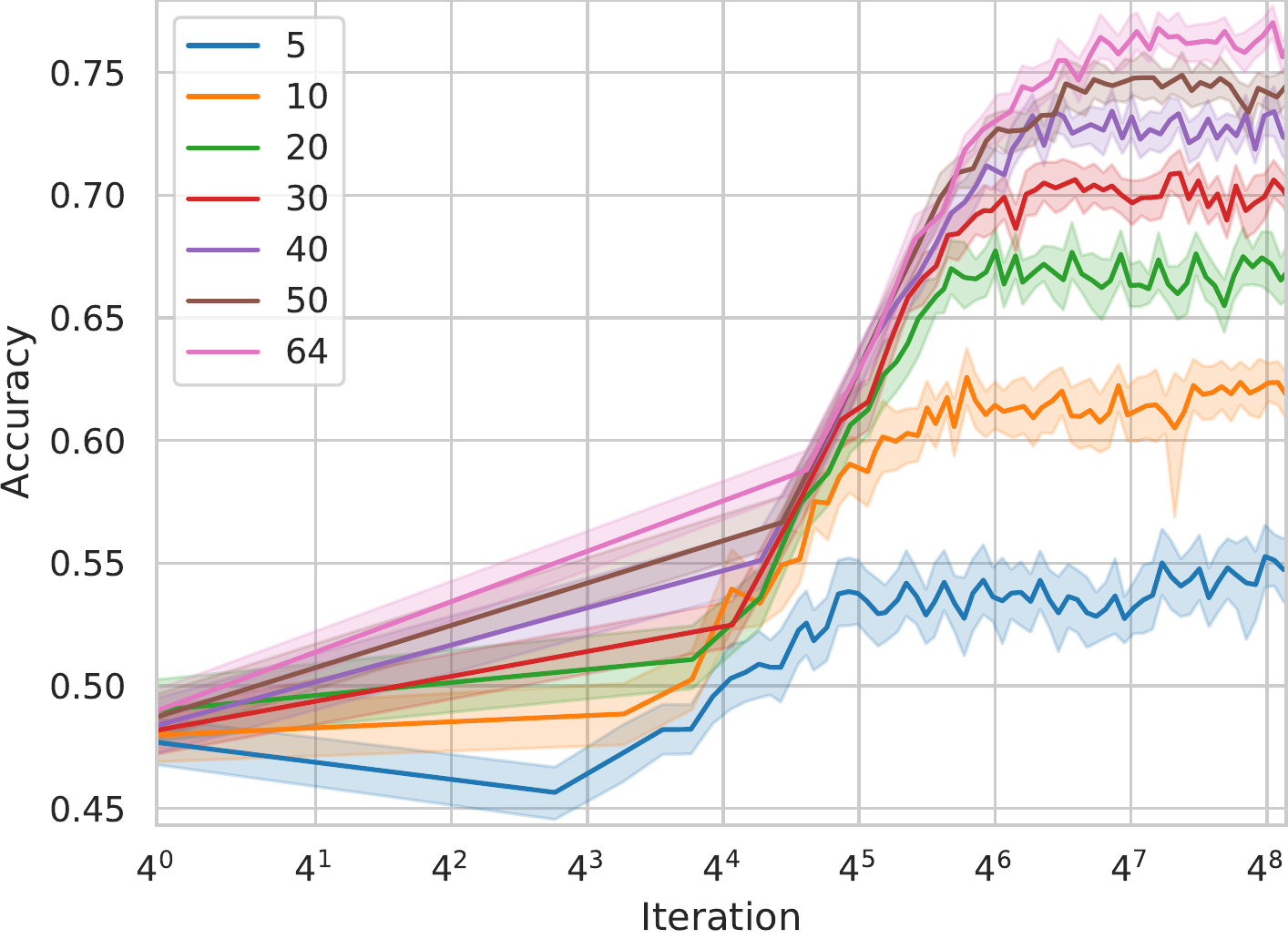}&
\includegraphics[width=.23\linewidth]{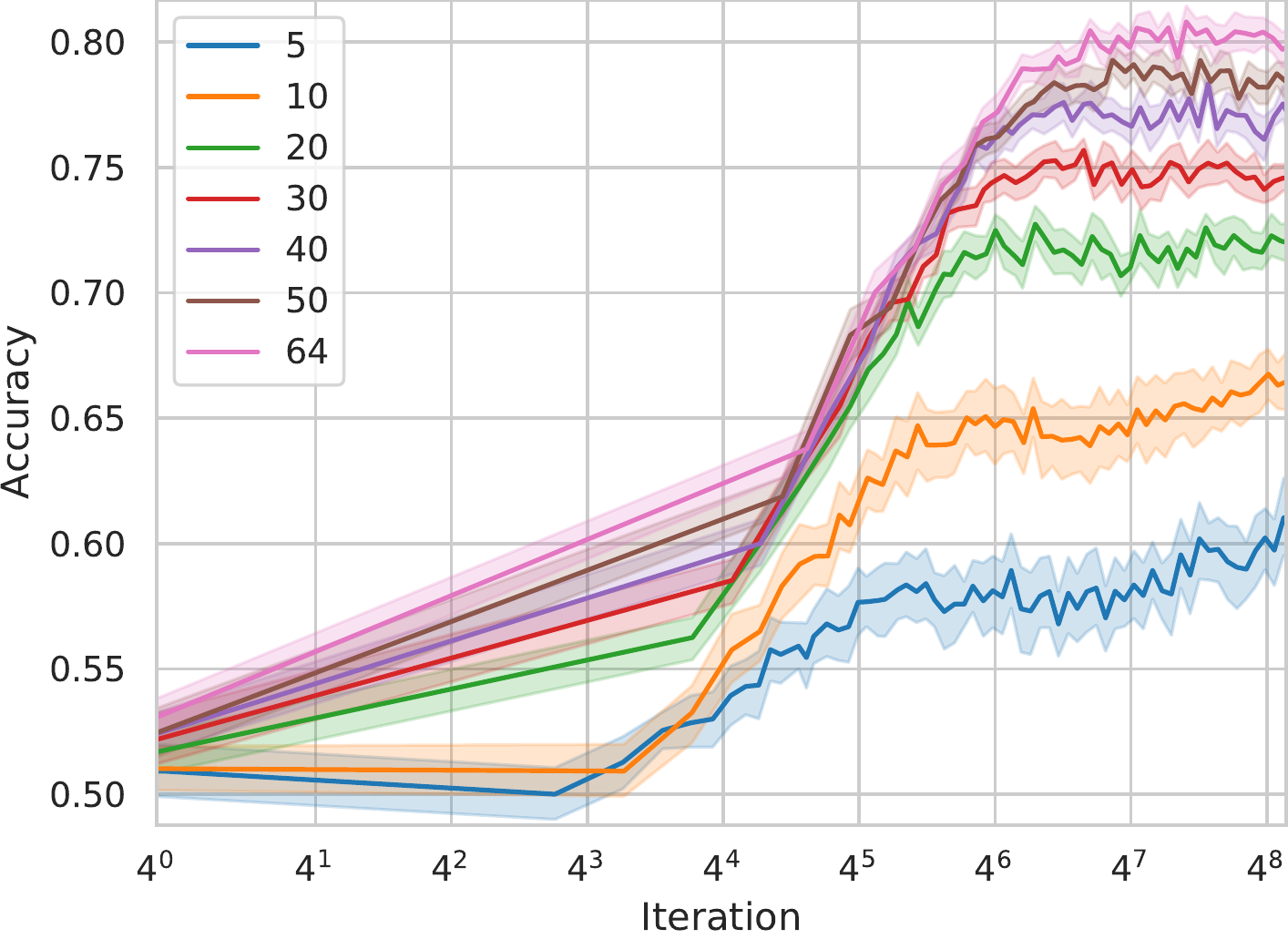}\\
{\bf(d)} 1-shot acc & {\bf(e)} 5-shot acc & {\bf(f)} 10-shot acc & {\bf(g)} 20-shot acc \\
\multicolumn{4}{c}{Mini-ImageNet} \\[0.5em]
\end{tabular}
\vspace{-0.3cm}
    \caption{ {\bf Class-features variability collapse and few-shot performance with learning rate $\eta=2^{-4}$.} We trained WRN-28-4 using SGD with learning rate scheduling on $l\in \{5,10,20,30,40,50,64\}$ source classes (as indicated in the legend). For each dataset, in {\bf(a-c)} we plot the CDNV on the training and test data and the target classes (resp.). In {\bf (d-g)} we plot the 1,5,10 and 20-shot accuracy rates (resp.). 
    } \label{fig:multi_shots}
    \vspace{-0.5cm}
\end{figure}

\begin{figure}[t]
    \centering
    \begin{tabular}{@{}c@{~}c@{~}c@{~}c}
\includegraphics[width=.23\linewidth]{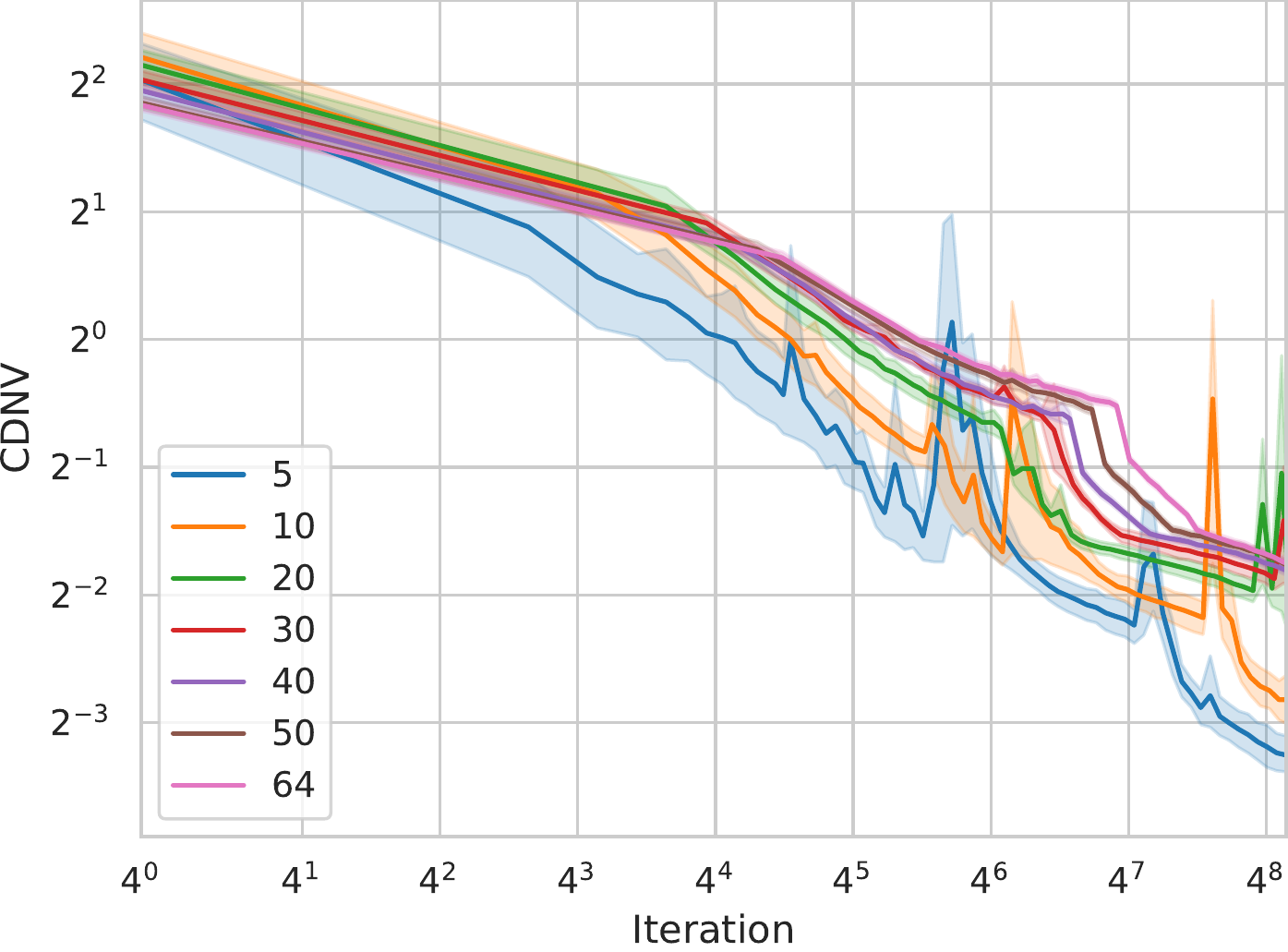}&
\includegraphics[width=.23\linewidth]{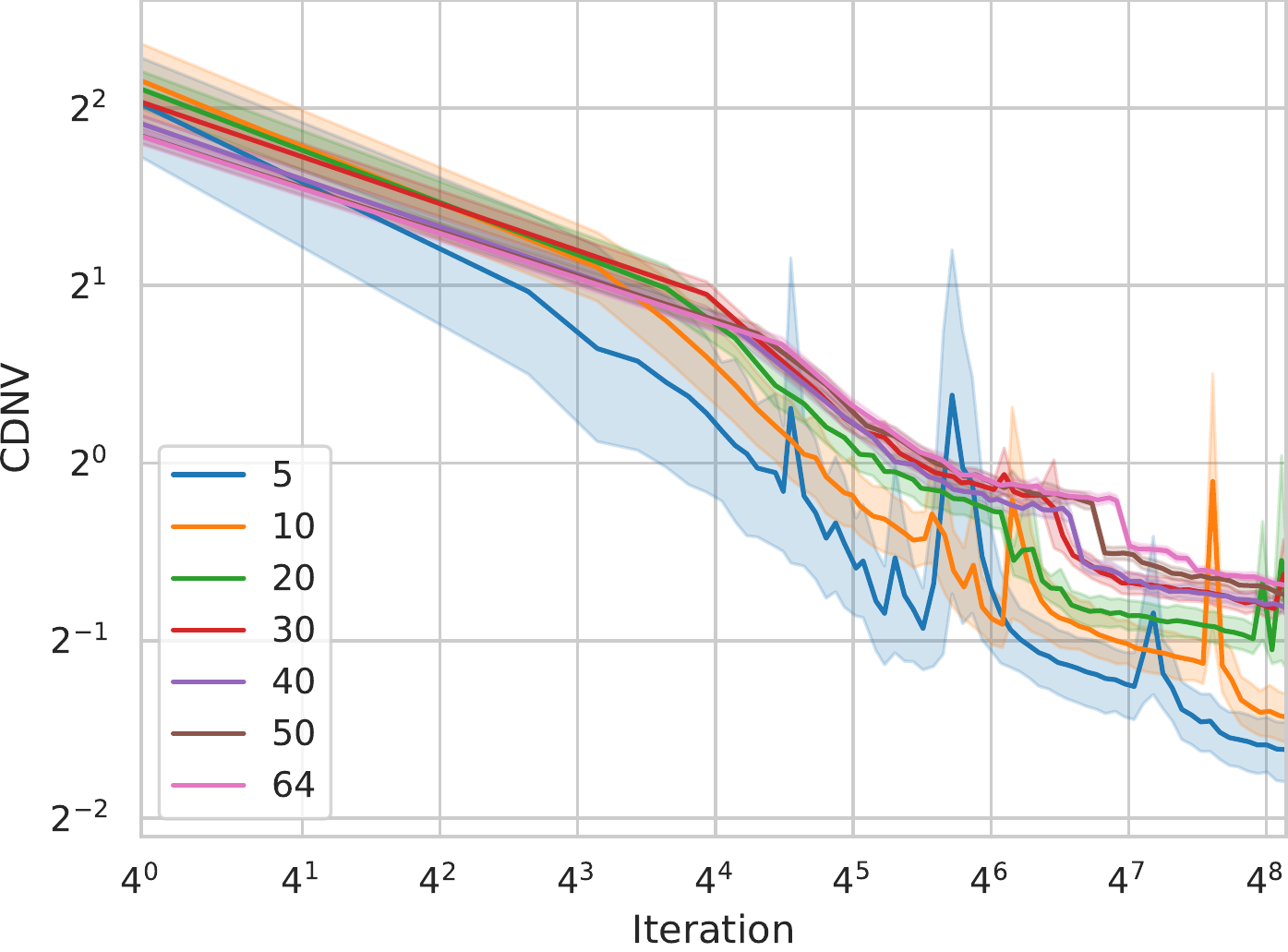}&
\includegraphics[width=.23\linewidth]{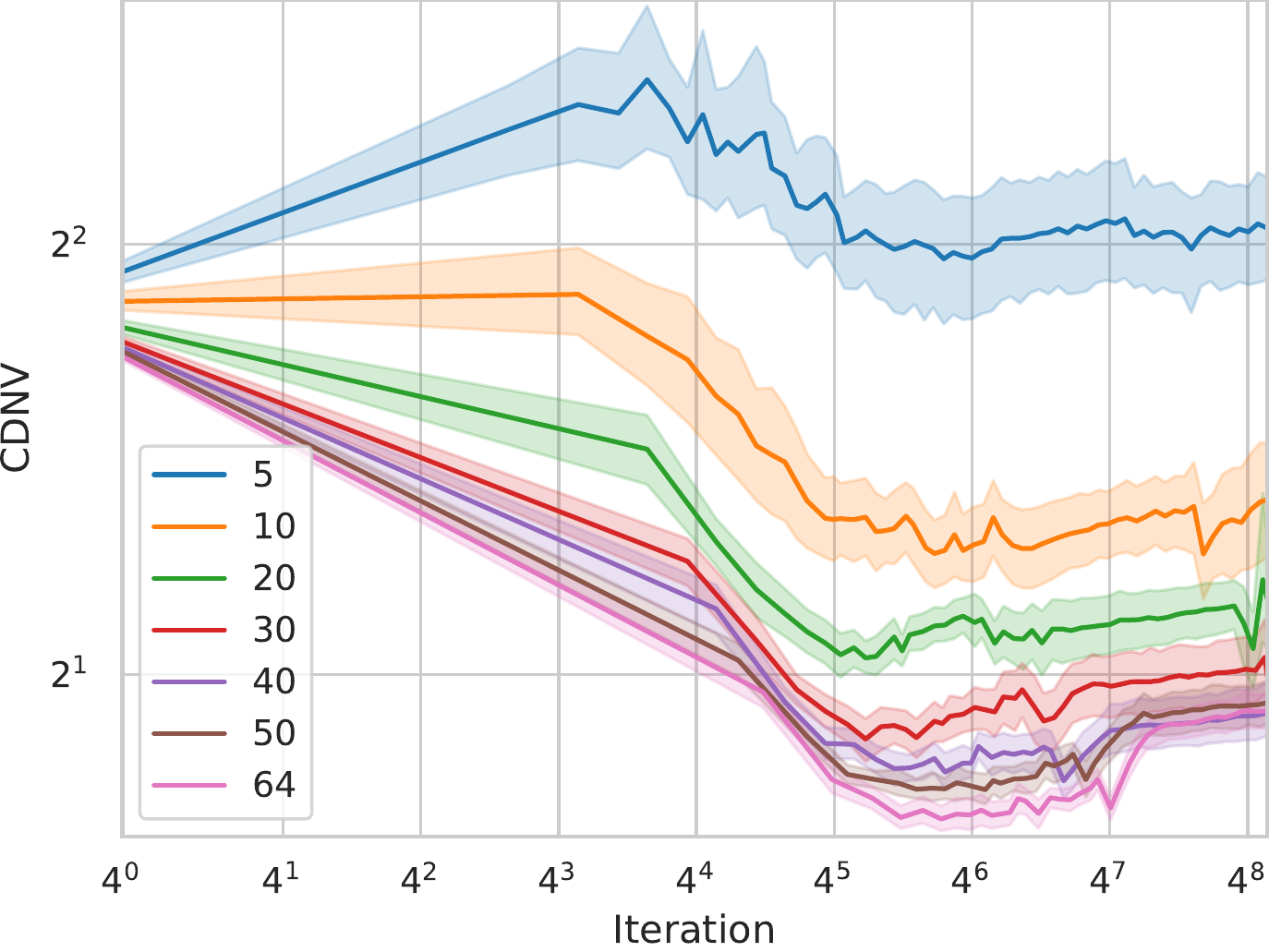}\\
{\bf (a)} CDNV, train & {\bf(b)} CDNV, test & {\bf(c)} CDNV, target \\ 
\includegraphics[width=.23\linewidth]{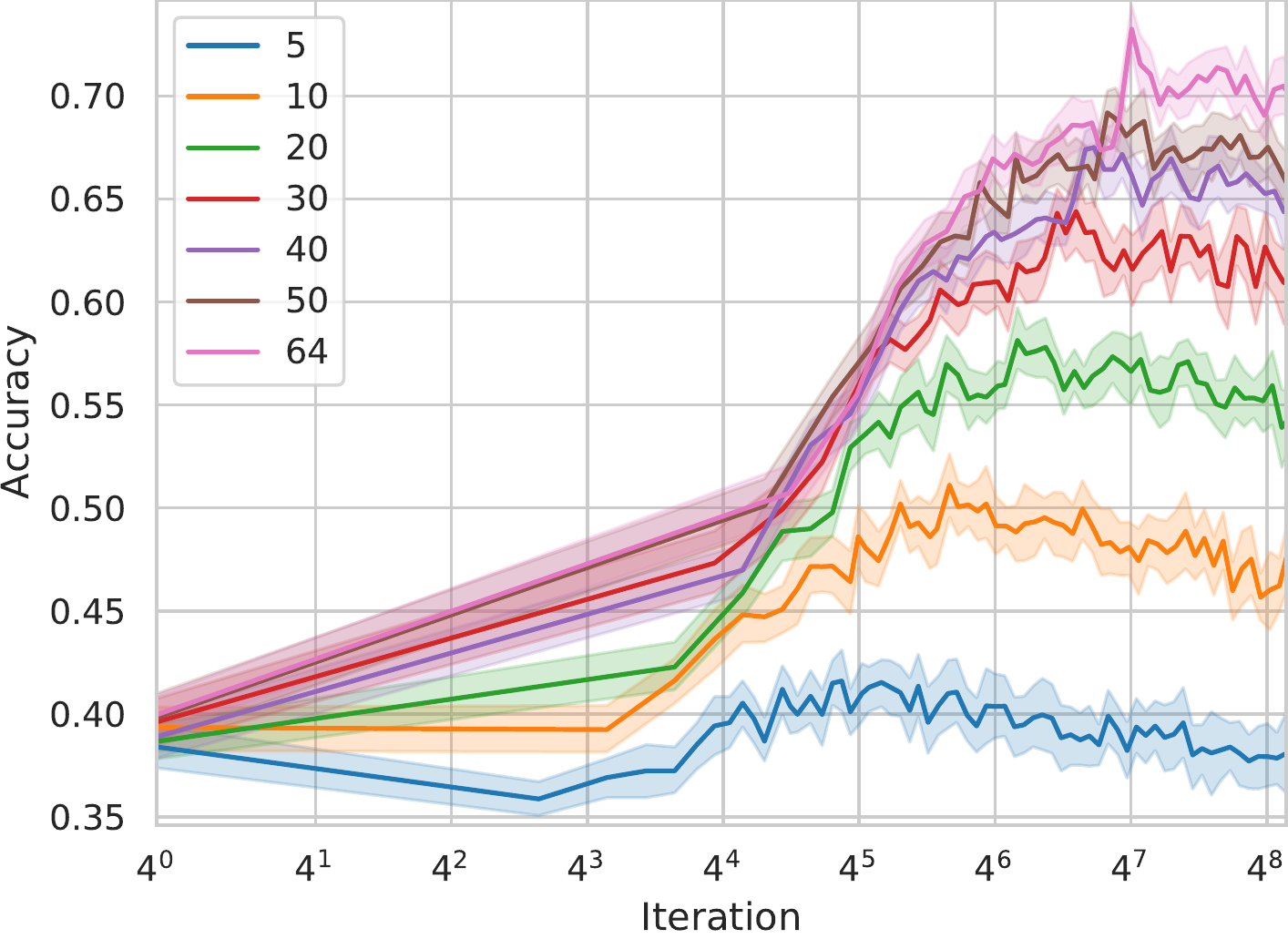}& 
\includegraphics[width=.23\linewidth]{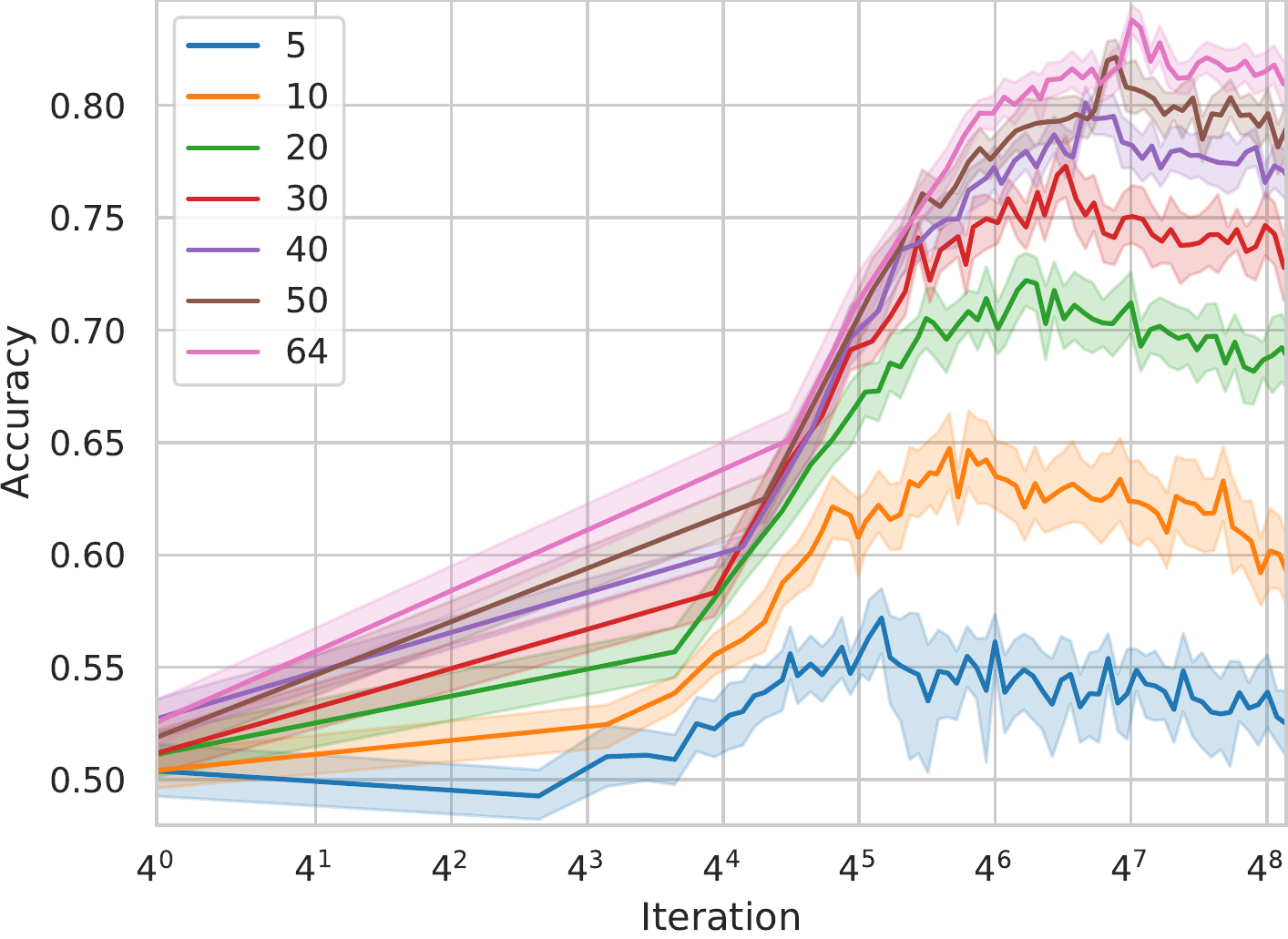}& 
\includegraphics[width=.23\linewidth]{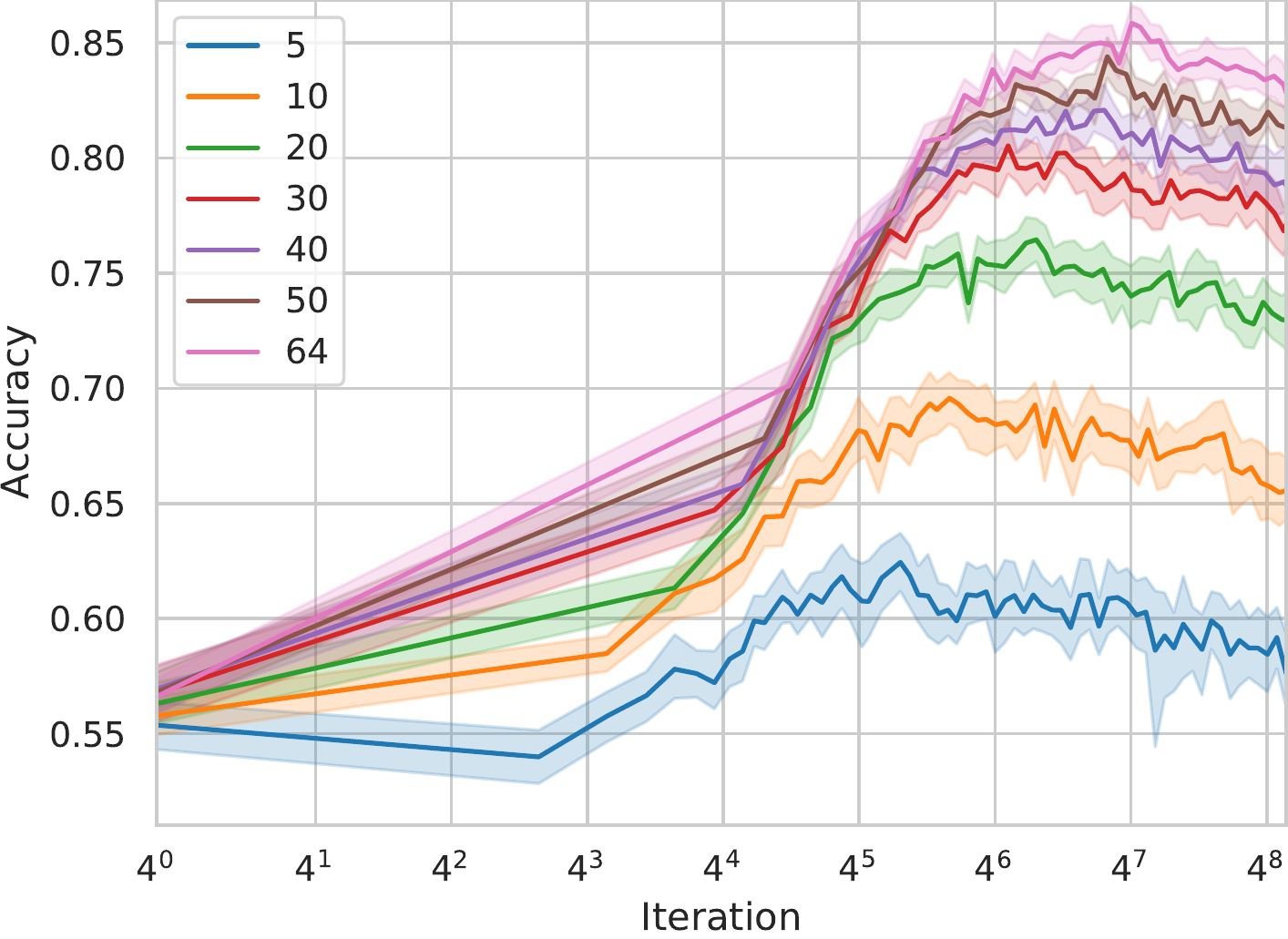}&
\includegraphics[width=.23\linewidth]{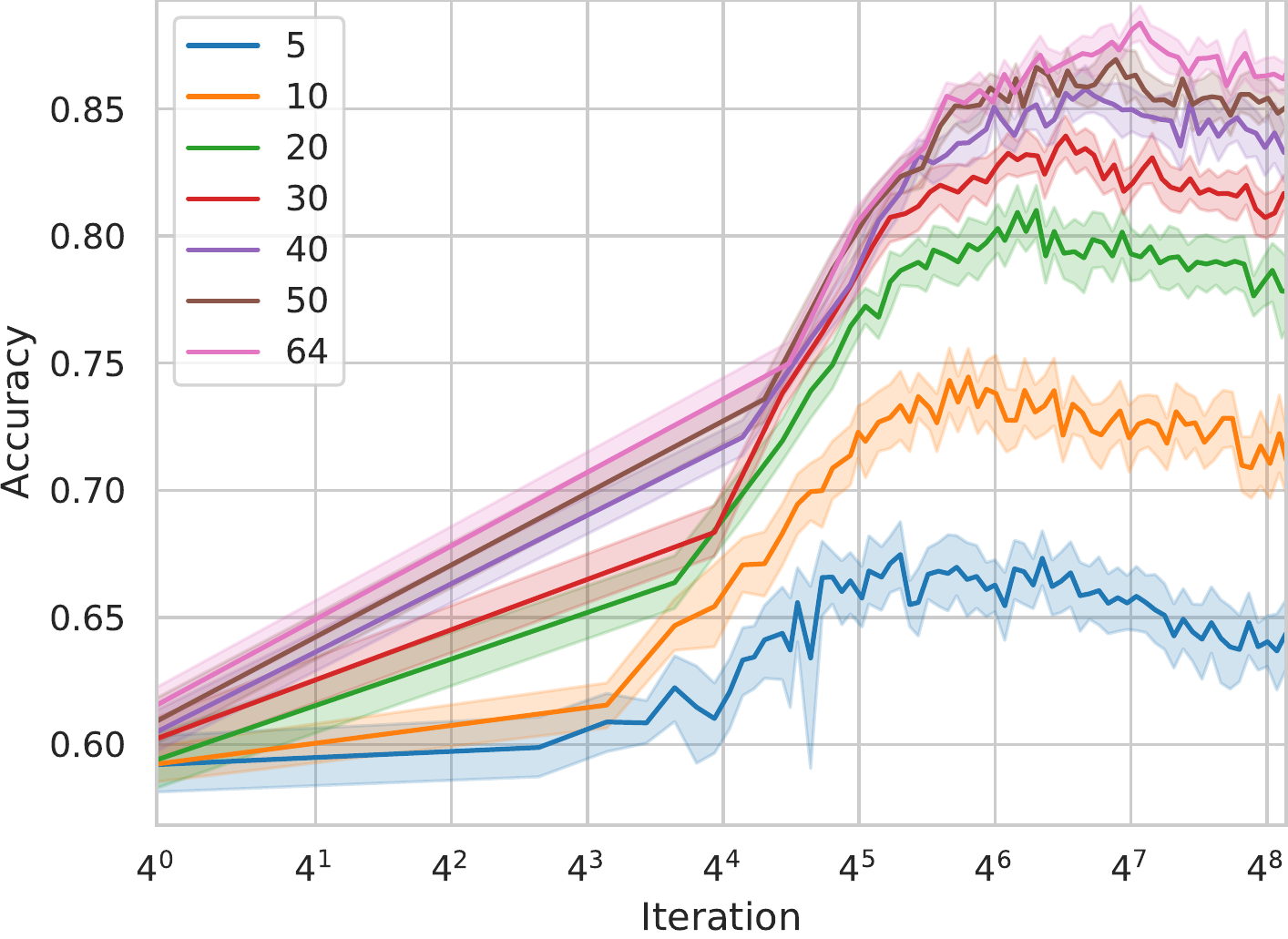}\\
{\bf(d)} 1-shot accuracy & {\bf(e)} 5-shot accuracy & {\bf(f)} 10-shot acc & {\bf(g)} 20-shot acc \\
\multicolumn{4}{c}{CIFAR-FS} \\[0.5em]
\includegraphics[width=.23\linewidth]{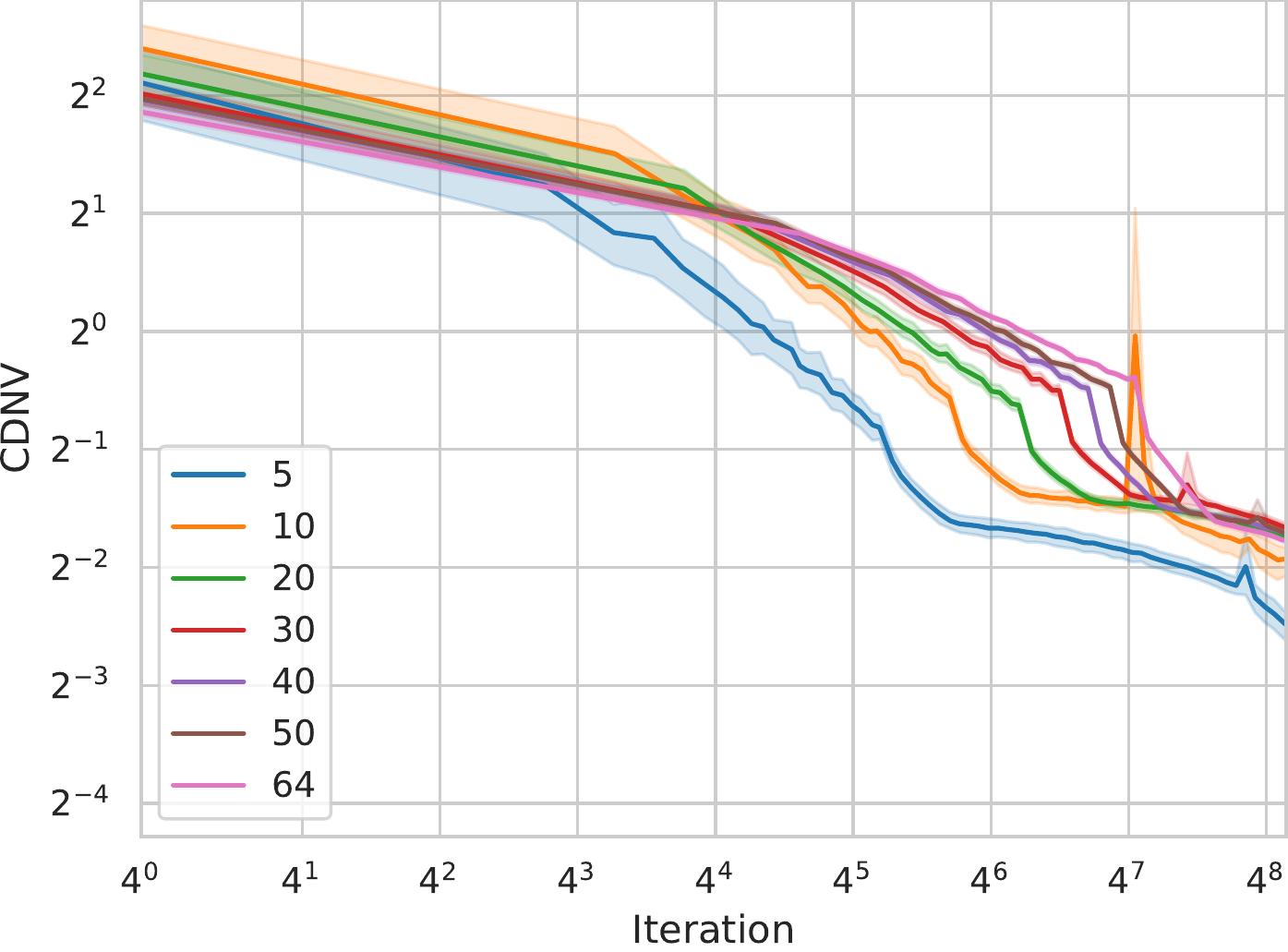}&
\includegraphics[width=.23\linewidth]{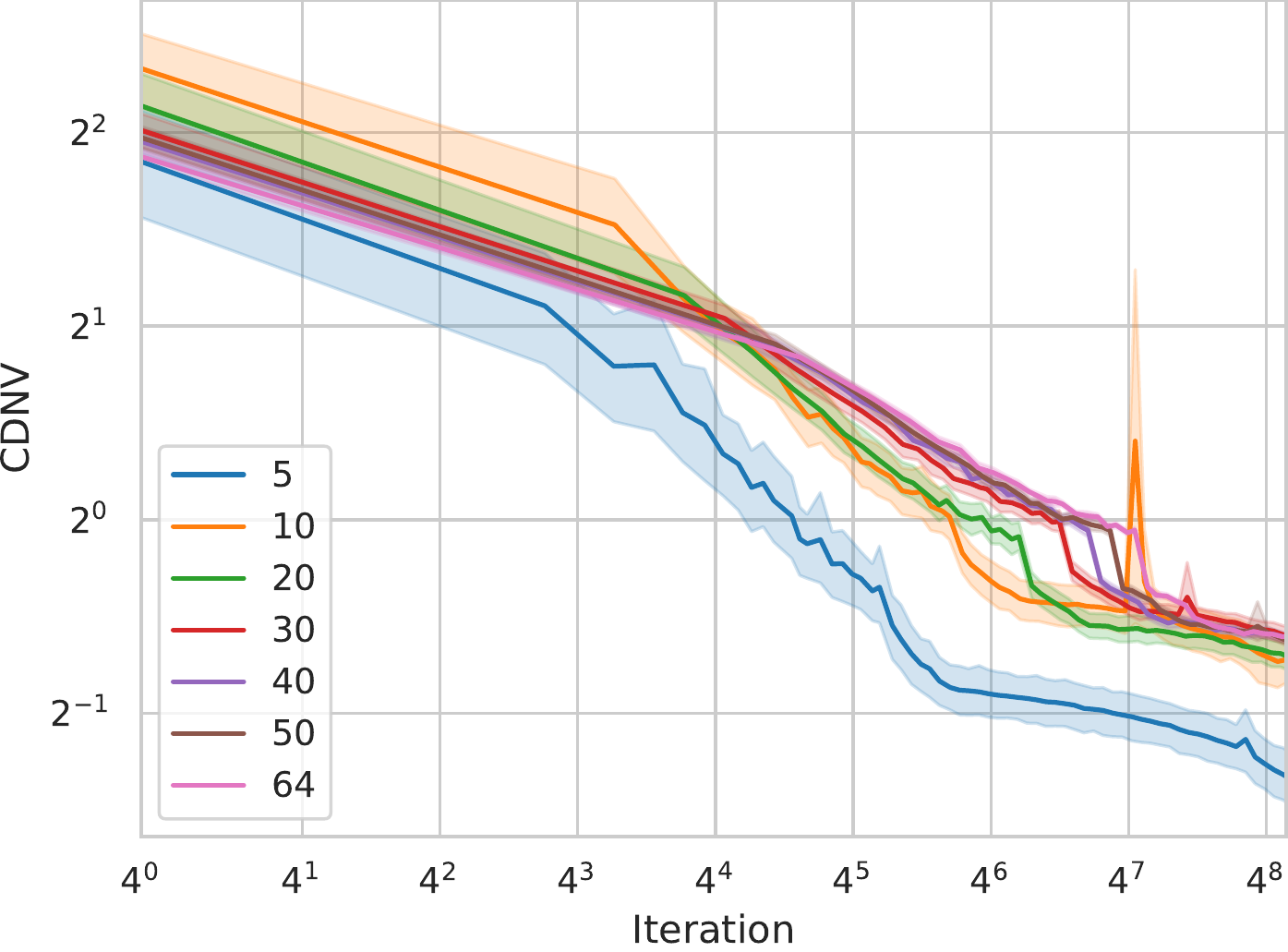}&
\includegraphics[width=.23\linewidth]{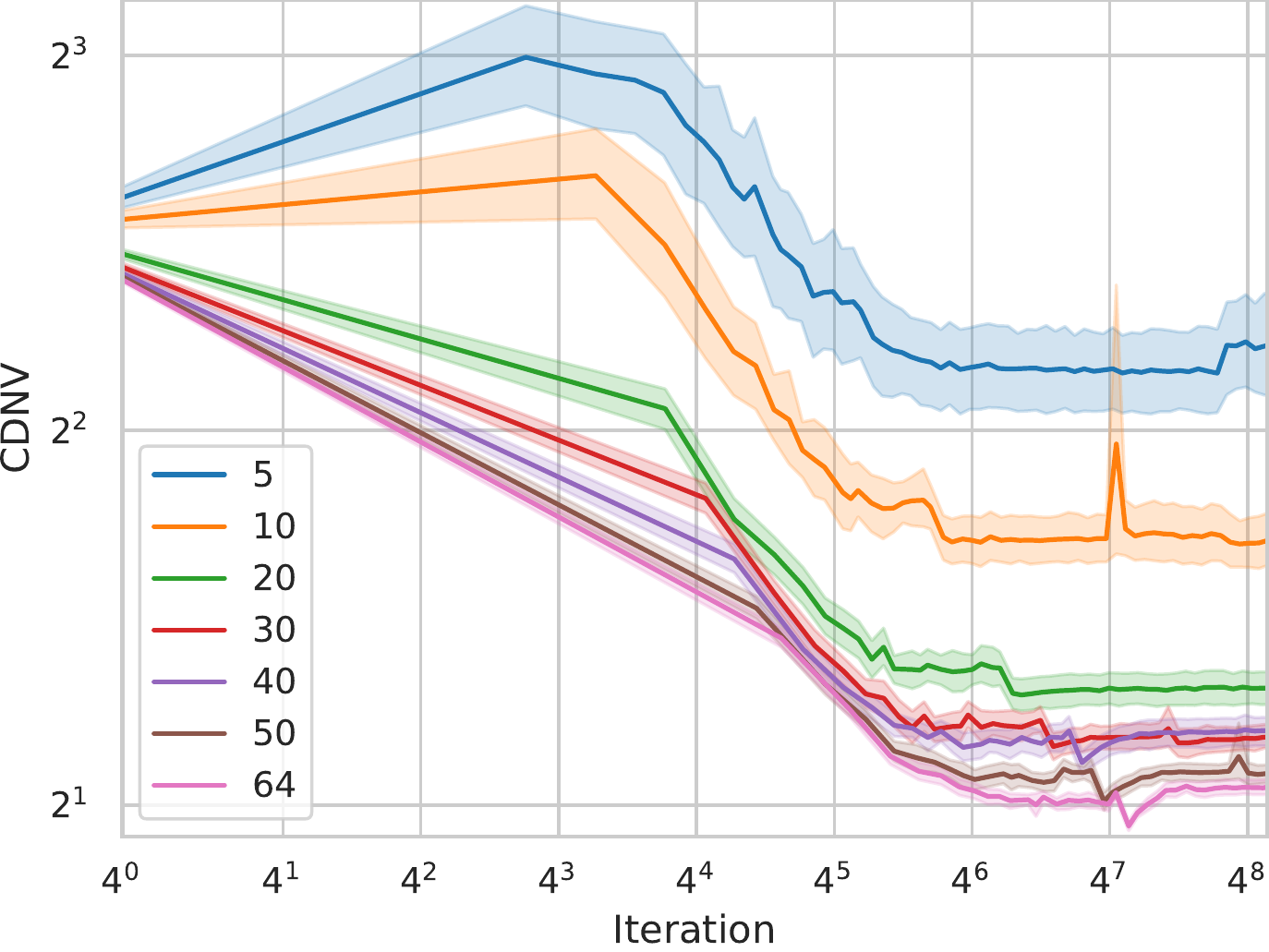}\\
{\bf (a)} CDNV, train & {\bf(b)} CDNV, test & {\bf(c)} CDNV, target \\
\includegraphics[width=.23\linewidth]{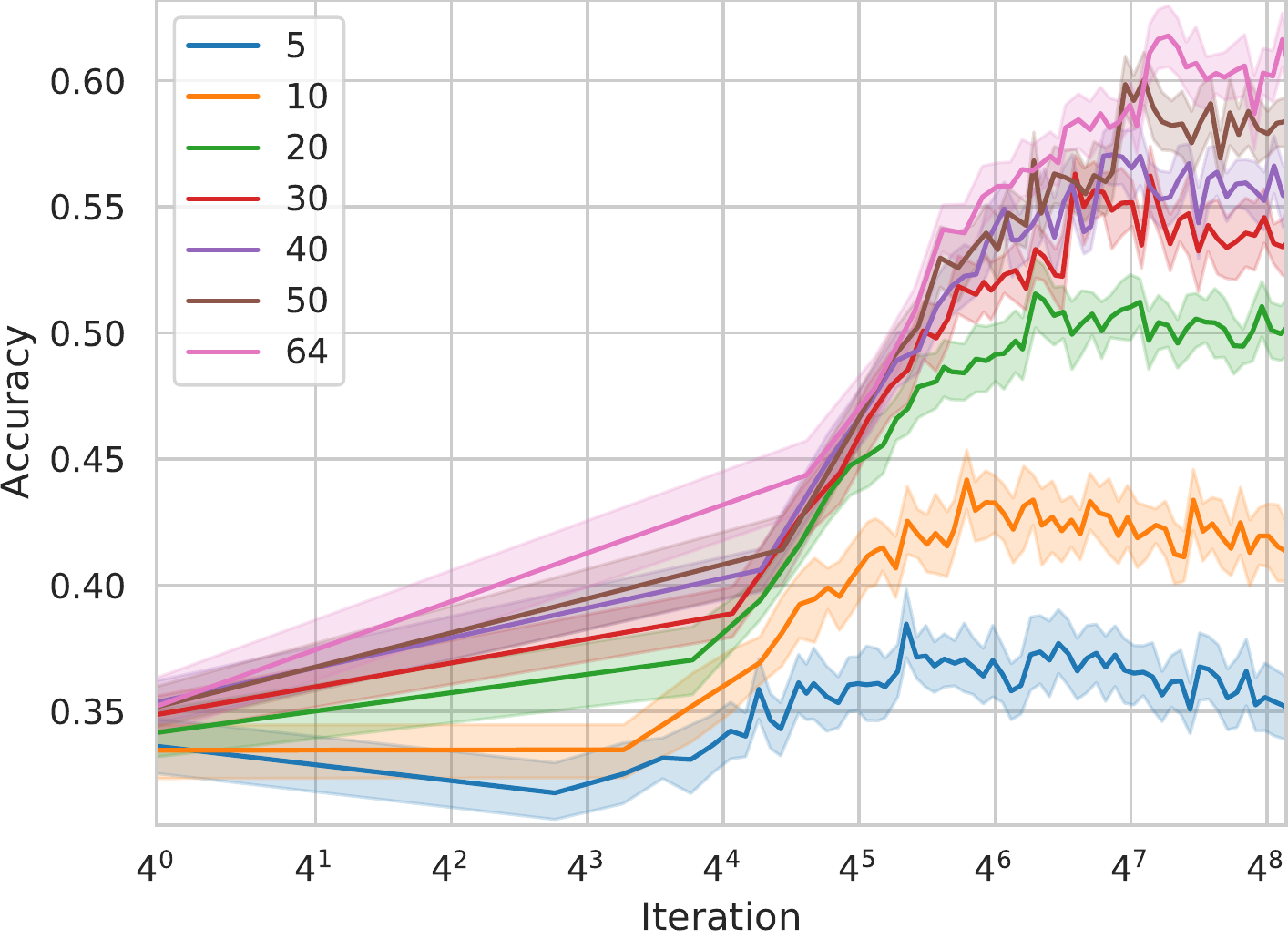}& 
\includegraphics[width=.23\linewidth]{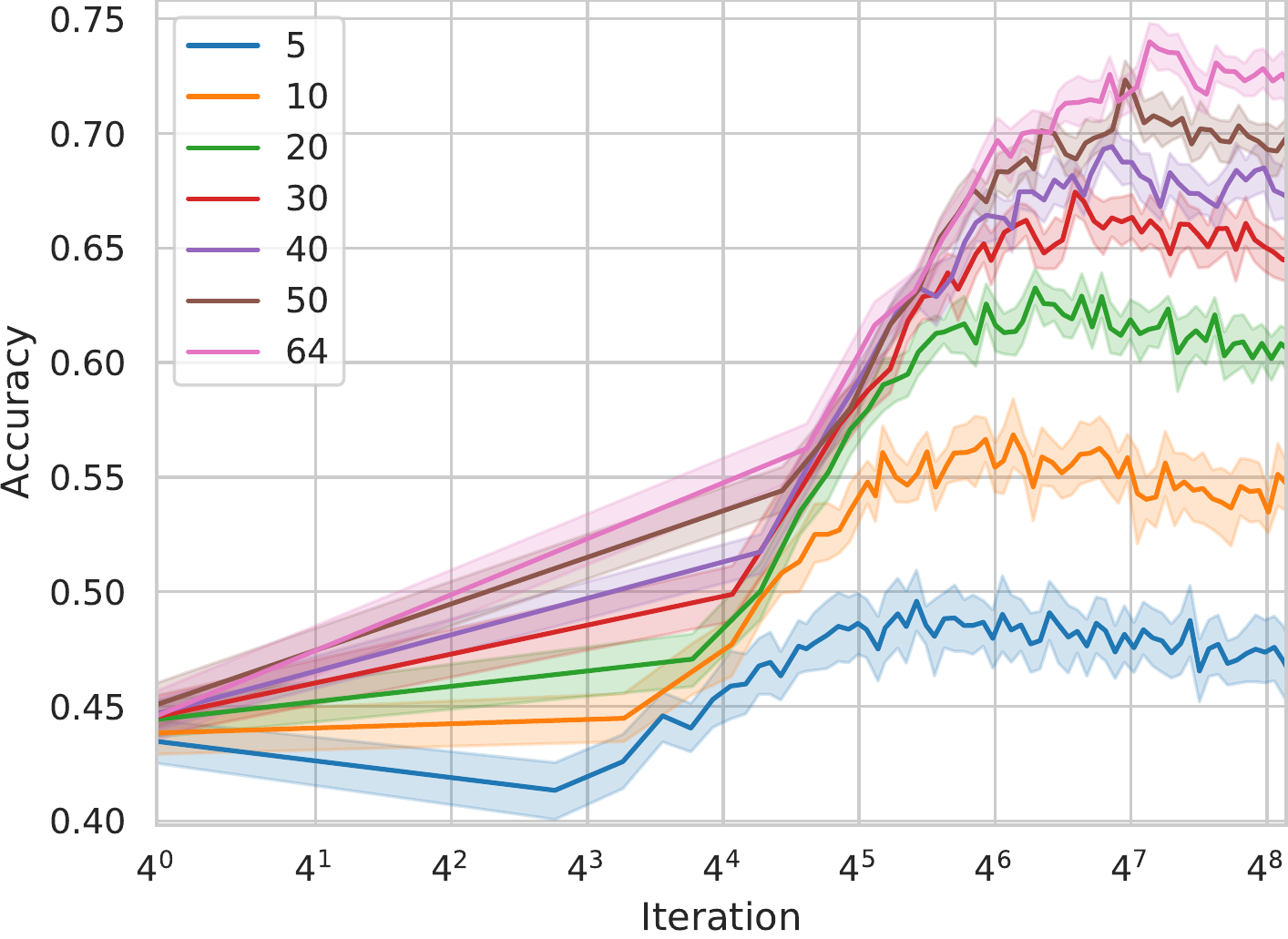}& 
\includegraphics[width=.23\linewidth]{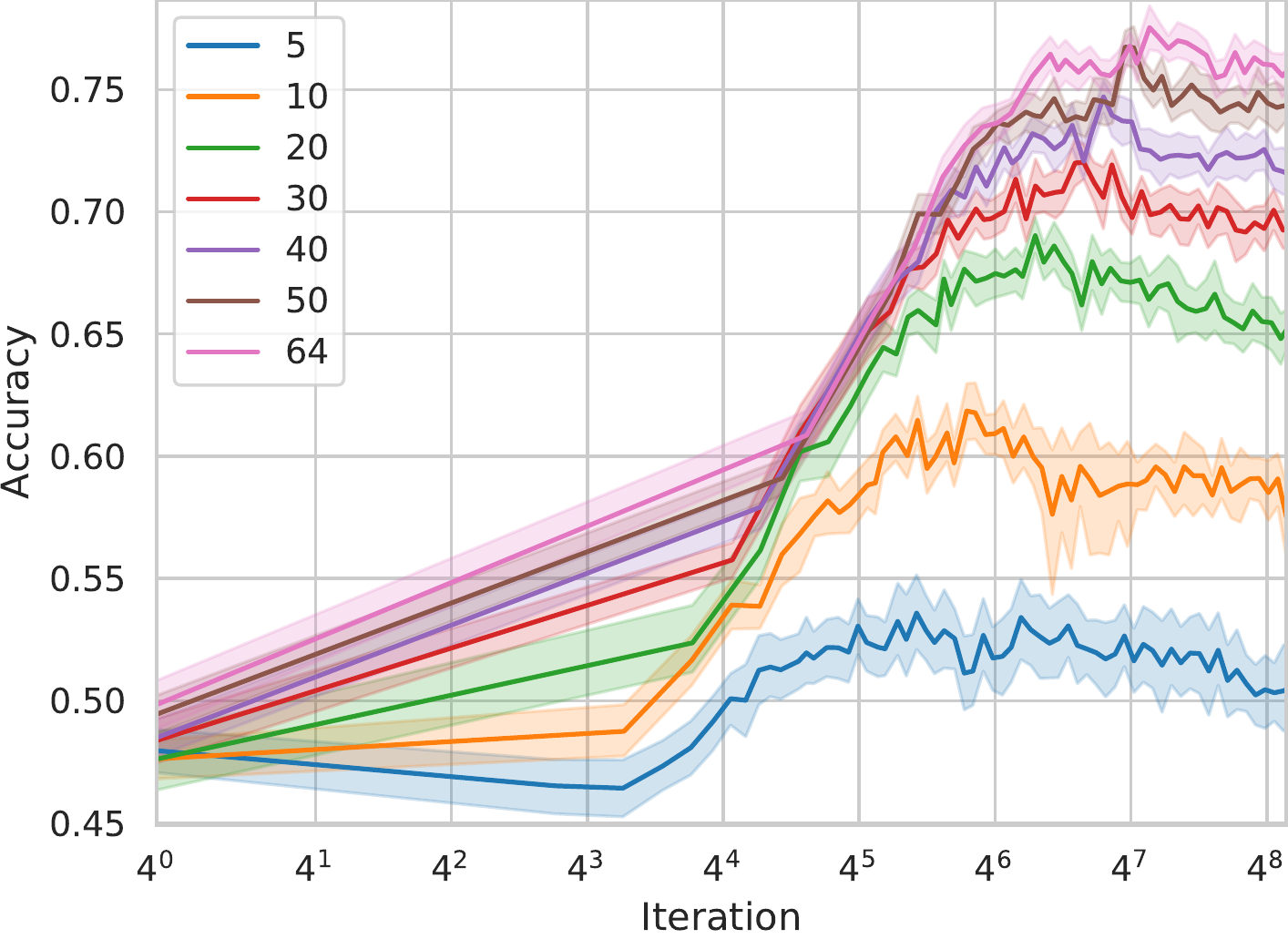}&
\includegraphics[width=.23\linewidth]{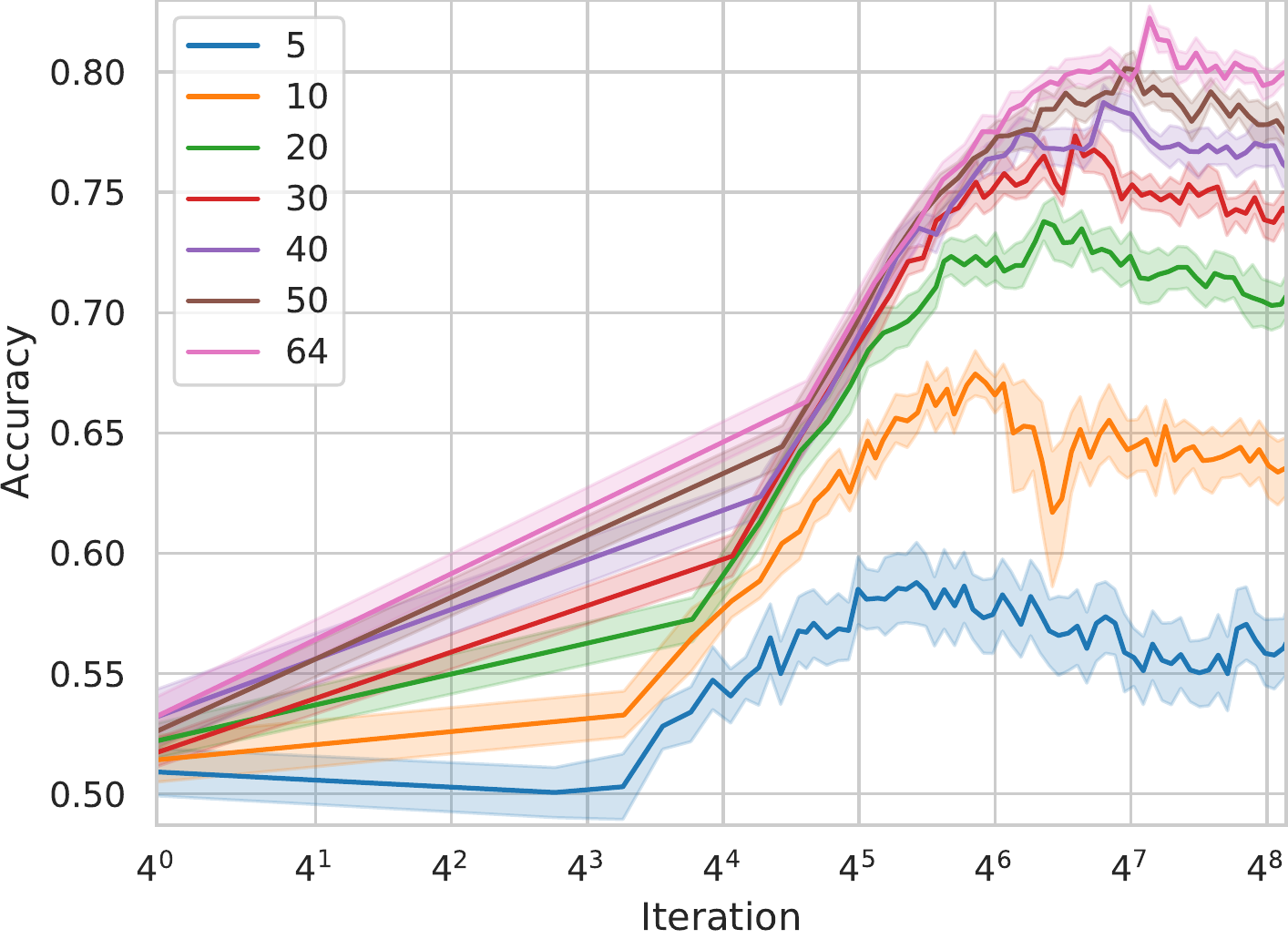}\\
{\bf(d)} 1-shot acc & {\bf(e)} 5-shot acc & {\bf(f)} 10-shot acc & {\bf(g)} 20-shot acc \\
\multicolumn{4}{c}{Mini-ImageNet} \\[0.5em]
\end{tabular}
    \caption{ {\bf Class-features variability collapse and few-shot performance with learning rate scheduling.} We trained WRN-28-4 using SGD with learning rate scheduling on $l\in \{5,10,20,30,40,50,64\}$ source classes (as indicated in the legend). For each dataset, in {\bf(a-c)} we plot the CDNV on the training and test data and the target classes (resp.). In {\bf (d-g)} we plot the 1,5,10 and 20-shot accuracy rates (resp.).
    } \label{fig:lr_scheduling}
\end{figure}

To further demonstrate the relationship between neural collapse and generalization to new classes, we conducted an experiment to examine the effect of varying the number of target samples. In this experiment, we trained WRN-28-4 using SGD and reported the CDNV on the source training data, the source test data (i.e., unseen test samples from the source classes), and the target classes (resp.). In Figure~\ref{fig:multi_shots}(d-g), we plotted the 1, 5, 10 and 20-shot accuracy rates of the network during training time. We also experimented with a learning rate scheduling, starting with learning rate $\eta=0.05$ and decreasing it by a factor of 0.1 every 30 epochs twice. The results of this experiment are provided in Figure~\ref{fig:lr_scheduling}.

Similarly to Figure~\ref{fig:var_nc_main}, as expected, we observed that the few-shot performance improved when the number of source classes increased, while the CDNV on the target classes tended to decrease. Additionally, in line with our theory, the CDNV on the target classes was negatively correlated with the few-shot performance, meaning that better neural collapse resulted in better performance. For example, in Figure~\ref{fig:lr_scheduling}(d-g), it is clear that the peak performance on all few-shot experiments for the case of 64 source classes was achieved around the minimal value of the CDNV on the target classes in Figure~\ref{fig:lr_scheduling}(c). This occurred around iteration 16,000 for CIFAR-FS and around iteration 20,000 for Mini-ImageNet, slightly after the first learning rate decay (at 15,000 and 18,000 steps, respectively). As shown, the performance slightly decreased after the peak iteration, while the CDNV on the target classes slightly increased, as the training started to overfit to the source data and classes. This effect can be mitigated by selecting the final network based on the performance on the source test data, as demonstrated in Table~\ref{tab:main_results}.

%%%%%%%%%%%%%%%%%%%%%%%%%%%%%%%%%%%%%%%%%%%%%%%%%%%%%%%%%%%%%%%
\section{Conclusions}\label{sec:conclusions}
%%%%%%%%%%%%%%%%%%%%%%%%%%%%%%%%%%%%%%%%%%%%%%%%%%%%%%%%%%%%%%%

The use of pre-trained models for transfer learning is a successful approach for addressing overfitting in the low-data regime, but the reasons for this success are not clear. Recently, \citet{galanti2022on} proposed a new perspective on this problem using the concept of neural collapse. However, their theoretical analysis was based on several unrealistic assumptions. In this work, we provide a more comprehensive theoretical analysis by significantly relaxing these assumptions. We study the transfer error of pre-trained feature maps, which measures the expected error of random classifiers trained on few samples using the feature maps for randomly generated classification tasks. We derive a generalization bound on the transfer error of a pre-trained feature map. This bound is decomposed into a measure of the variability collapse between the training samples and additional terms that scale as $\mathcal{O}(m^{-1/2}\textnormal{poly}(n))$ and $\mathcal{O}(l^{-1/2})$, where $m$ is the number of training samples per source class, $n$ is the number of samples per target class, and $l$ is the number of source classes. We show that if neural collapse occurs when training with many source classes and training samples per class, then it requires very few samples to train a linear classifier on top of the learned feature representation that accurately predicts the new classes. These results provide a justification for the recent successes of transfer learning in few-shot tasks, as observed by \citet{DBLP:conf/eccv/TianWKTI20} and \citet{Dhillon2020A}.

\renewcommand\theHsection{\thesection}
\appendix

\newpage

%%%%%%%%%%%%%%%%%%%%%%%%%%%%%%%%%%%%%%%%%%%%%%%%%%%%%%%%%%%%%%%
\section{Proof of Lemma~\ref{prop:marginBoundA}}\label{app:marginBoundA}
%%%%%%%%%%%%%%%%%%%%%%%%%%%%%%%%%%%%%%%%%%%%%%%%%%%%%%%%%%%%%%%

To prove Lemma~\ref{prop:marginBoundA}, we apply Corollary~3 of \citet{Maurer2019UniformCA}, which provides generalization bounds for settings where the total loss over a dataset is a nonlinear function of the dataset (and not the usual average of losses computed independently for every data point). We will apply this result to the average pairwise error for distinguishing two classes (note that although we will consider average loss functions, here the terms in the sum are not independent as there is overlap among the classes used in each pair).  In order to do so,  we use the following definitions (their Definition~1):
Let $\|\cdot \|$ be the Euclidean norm, $\cZ \subset \R^{d}$ be any set and $A:\cZ^l \to \mathbb{R}$. For $\bz = (z_1,\dots,z_l)$, $j \in \{1, \dots, l\}$ and $z'_j \in \cZ$, we define the $j$-th partial difference operator on $A$ as
\begin{equation*}
D^{j}_{z'_j}A(\bz) ~:=~ A(z_1,\dots,z_{j-1},z_j,z_{j+1}\dots,z_l) - A(z_1,\dots,z_{j-1},z'_j,z_{j+1},\dots,z_l).
\end{equation*}
In addition, we denote
\begin{equation*}
M(A) ~:=~ \max_{j \in [l]} \sup_{\substack{\bz \in \cZ^{l} \\ z'_j \in \cZ:~z'_j\neq z_j}} \frac{D^{j}_{z'_j}A(\bz)}{\|z_j-z'_j\|},
\end{equation*}
and 
\begin{equation*}
J(A) ~:=~ l\cdot \max_{\substack{r,j \in [l]\\ r\neq j}}\sup_{\substack{\bz \in \cZ^l\\ z'_j\in \cZ:~ z'_j\neq z_j \\ z'_r \in \cZ:~ z'_r \neq z_r}}  \frac{D^{r}_{z'_r}D^{j}_{z'_j} A(\bz)}{\|z_j-z'_j\|},
\end{equation*}
and the maximal difference
\begin{equation*}
K(A) ~:=~ \max_{j \in [l]} \sup_{\substack{\bz \in \cZ^{l} \\ z'_j \in \cZ:~z'_j\neq z_j}} D^{j}_{z'_j}A(\bz).
\end{equation*}

In addition, we need the notion of Rademacher complexity: the Rademacher complexity of a set of vectors $Y \subset \R^s$ is defined as $R(Y) := \E_{\epsilon}\sup_{y\in Y} \left[\langle \epsilon, y\rangle\right]$, where $\epsilon = (\epsilon_1,\dots,\epsilon_s) \sim U[\{\pm 1\}]^s$ \citep[see, e.g.,][]{MohriRostamizadehTalwalkar18}.

Now assume that $W=(W_1,\ldots,W_l)$ is an i.i.d.\ random vector with each $W_i, i \in [l]$ taking values in a set $\cW$, and let $\cH$ be a finite class of functions $H: \cW \to \cZ$. When $H$ is applied to the vector $W$, it is applied coordinatewise, that is, $H(W):=(H(W_1),\ldots,H(W_l))$, and we denote $\cH(W):=\{H(W): H \in \cH\}$.
For this setting, the following generalization bound is proved in Corollary~3 of \citet{Maurer2019UniformCA}:\footnote{The original statement makes use of Gaussian complexity instead of Rademacher complexity. The Gaussian complexity of a set $Z\subset \R^l$ is upper bounded by $2\sqrt{\log(l)} R(Z)$ \citep[Exercise~5.5]{wainwright19}.} for any $\delta' \in (0,1)$, with probability at least $1-\delta'$ over the selection of $W$, for all $H \in \cH$ we have
\begin{equation}\label{eq:maurer-ext}
\begin{aligned}
\E_{W}\left[A(H(W))\right] ~&\le~ A(H(W)) + 2\sqrt{2\pi\log(l)} \big(2M(A) + J(A)\big)\cdot \E_{W}[R(\cH(W))] \\
& \qquad + K(A)\cdot \sqrt{l\log\left(\fr{1}{\delta'}\right)}.
\end{aligned}
\end{equation}
Importantly, the above inequality does not depend on the size of $\cH$, which is only required to be finite. Therefore, the inequality can be easily extended to any infinite function class $\cH$ that admits a sequence of finite covers $\cH_t \subset \cH, t=1,2,\ldots$ such that for every $H \in \cH$ there exists a $H_t \in \cH_t$ such that $|H(w)-H_t(w)| \le 1/t$ for any $w \in \cW$. Applying \eqref{eq:maurer-ext} to $\cH_t$ and letting $t$ to infinity then implies that \eqref{eq:maurer-ext} also holds for $\cH$, since
$R(\cH_t(W)) \le R(\cH(W))$ for any $W \in \cW^l$ by definition.
In the sequel, we use this slightly extended version of Corollary~3 of \cite{Maurer2019UniformCA}.

Now we are ready to proceed with the proof of Lemma~\ref{prop:marginBoundA}; we start with repeating the statement for convenience.

\marginBoundA*

\begin{proof} We start the proof by reducing the error bound to a classification problem between two classes: For any $f \in \cF$,
\begin{align*}
\mathcal{L}_{\cD}(f) ~&=~ \E_{P_1,\dots,P_k}\E_{S_1,\dots,S_k}\E_{(x,y) \sim P}\big[\bI[\hnn(x) \neq y] \big]\\
~&=~ \E_{P_1,\dots,P_k}\E_{S_1,\dots,S_k}\E_{(x,y) \sim P}\big[\bI\left[\exists i \neq y:~\|f(x) - \mu_f(S_i)\| \le \|f(x) - \mu_f(S_y)\|\right] \big]\\
~&\le~ \E_{P_1,\dots,P_k}\E_{S_1,\dots,S_k}\E_{(x,y) \sim P}\left[\sum_{i\neq y}\bI\left[\|f(x) - \mu_f(S_i)\| \le \|f(x) - \mu_f(S_y)\|\right]\right] \\
~&=~ \E_{P_1,\dots,P_k}\E_{S_1,\dots,S_k}\E_{y \sim U[\{1,\ldots,k\}]}\E_{x \sim P_y}\left[\sum_{i\neq y}\bI\left[\|f(x) - \mu_f(S_i)\| < \|f(x) - \mu_f(S_y)\|\right]\right] \\
~&=~ \E_{y \sim U[\{1,\ldots,k\}]}\sum_{i\neq y}\E_{P_i,P_y}\E_{S_i,S_y}\E_{x \sim P_y}\big[\bI\left[\|f(x) - \mu_f(S_y)\| < \|f(x) - \mu_f(S_i)\|\right]\big] \\
~&=~ (k-1) \cdot \E_{P_1,P_2}\E_{S_1,S_2}\E_{x_1 \sim P_1}\big[\bI\left[\|f(x_1) - \mu_f(S_2)\| < \|f(x_1) - \mu_f(S_1)\|\right] \big]\\
\end{align*}
where $P_i \sim \mathcal{D}$ and $S_i \sim P^n_i$. To further bound the right-hand side above, we use the \emph{soft-margin} relaxation $\ell_\Delta$ (defined in \eqref{eq:ellDelta}):
\begin{align*}
\bI\left[\|f(x_1) - \mu_f(S_2)\| < \|f(x_1) - \mu_f(S_1)\|\right]
&\le \ell_\Delta\left(\|f(x_1) - \mu_f(S_1)\| - \|f(x_1) - \mu_f(S_2)\|\right),   
\end{align*}
implying
\begin{equation}
\label{eq:LD1}
   \mathcal{L}_{\cD}(f) \leq  (k-1) \cdot \E_{P_1,P_2}\E_{S_1,S_2}\E_{x_1 \sim P_1}\big[
   \ell_\Delta\left(\|f(x_1) - \mu_f(S_1)\| - \|f(x_1) - \mu_f(S_2)\|\right)
   \big].
\end{equation}
By the random selection of $P_1, P_2$, $\tP_c, c \in [l]$, the main term on the right-hand side above (i.e., omitting the $(k-1)$ factor) is equal to
\begin{equation}
\label{eq:A-motivation}
\Avg_{i \neq j \in [l]} \E_{\tP_i,\tP_j}\E_{\hS_i,\hS_j}\E_{x_i \sim \tP_i}[
   \ell_\Delta(\|f(x_i) - \mu_f(\hS_i)\| - \|f(x_i) - \mu_f(\hS_j)\|)].
\end{equation}
To write this averaging succinctly, we introduce some notation.
Let $\hU_c := (x_c,\hS_c)$ for $c \in [l]$ where $x_c \sim \tP_c$ and $\hS_c \sim \tP_c^n$, independently for all $c$, and $\hcU := \{\hU_c\}^{l}_{c=1}$ (recall that this also depends on the random choice of the $\tP_c$). Furthermore, let $\mathcal{Q} := \{(x,S) \mid x \in \cX,~ S \in \cX^{n} \}$ denote the set of all point-sample pairs (note that $\hU_c \in \mathcal{Q}$).
For any $f \in \cF$, define $H_f: \mathcal{Q} \to \R$ as $H_f(x,S) = (f(x),\mu_f(S))$ and for any $\cF' \subset \cF$ and $\mathcal{Q}' \subset \mathcal{Q}$, let $H_{\cF'}:=\{ H_f: f \in \cF'\}$ and $H_{\cF'}(\mathcal{Q}'):=\{H_f(x,S): (x,S) \in \mathcal{Q}', f \in \cF'\}$. Finally define the aggregation function $A_\Delta:H_{\cF}(\mathcal{Q})^{l} \to \R$ as
\begin{equation*}
A_\Delta(\bz) ~:=~ \Avg_{i\neq j \in [l]}[\ell_\Delta(\|u_{i}-v_{i}\|-\|u_{i}-v_{j}\|)],
\end{equation*}
where $\bz = (z_1,\dots,z_l)$ such that $z_i = (u_i,v_i) \in H_{\cF}(\mathcal{Q})$ for all $i \in [l]$.
Then 
\begin{equation*}
A_\Delta(H_f(\hcU)) = \Avg_{i\neq j \in [l]} [\ell_\Delta(\|f(x_i)-\mu_f(\hS_i)\| - \|f(x_i)-\mu_f(\hS_j)\|)]
\end{equation*}
by definition, and \eqref{eq:A-motivation} can be rewritten as $\E_{\hcU}[A_\Delta(H_{f}(\hcU))]$ and from \eqref{eq:LD1} we get 
\begin{equation}
\label{eq:LD2}
  \mathcal{L}_{\cD}(f) \leq  (k-1) \cdot \E_{\hcU}[A_\Delta(H_{f}(\hcU))].
\end{equation}
Our next step is to bound the expectation on the right-hand side above with the randomly selected samples of class-conditional distributions $\tP_c, c\ \in [l]$ for a fixed $\Delta$. Let $\delta' \in (0,1)$ and $\cF' \subset \cF$ with $\alpha=\sup_{f \in \cF'} \cC(f)<\infty$. It is easy to see that $\cF'$ can be approximated arbitrarily closely with a finite subset $\cF''$ of $\cF$ in $L_\infty$ distance, and hence $H_{\cF}$ can also be arbitrarily closely approximated by the corresponding finite set $H_{\cF''}$. Then, by the extension of Corollary~3 of \cite{Maurer2019UniformCA} discussed after \eqref{eq:maurer-ext}, 
with probability at least $1-\delta'$ over the selection of $\hcU$, for all $f \in \cF'$ we have
\begin{equation}\label{eq:maurer2}
\begin{aligned}
\E_{\hcU}\left[A_\Delta(H_{f}(\hcU))\right] &\le A_\Delta(H_{f}(\hcU)) + 2\sqrt{2\pi\log(l)} \big(2M(A_\Delta) + J(A_\Delta)\big)\cdot \E_{\hcU}[R(H_{\cF'}(\hcU))] \\
& \qquad + K(A_\Delta)\cdot \sqrt{l\log\left(\fr{1}{\delta'}\right)}.
\end{aligned}
\end{equation}
Next we bound the terms $K(A_\Delta)$, $M(A_\Delta)$, and $J(A_\Delta)$.

\paragraph{Bounding $K(A_\Delta)$, $M(A_\Delta)$ and $J(A_\Delta)$.} Let $\Delta > 0$, $j\in [l]$ and $z_i = (u_i,v_i) \in \R^{p}\times \R^{p}$ for $i\in [l]$. Let $\bz = (z_1,\dots,z_l)$ and $\bz = (z'_1,\dots,z'_l)$, where $z_i = z'_i$ for all $i\neq j$. We have
\begin{align}
\lefteqn{D^j_{z'_j}A_\Delta(\bz) = A_\Delta(\bz)-A_\Delta(\bz')} \nonumber \\ 
&= \frac{1}{l(l-1)}\sum_{i_1\neq i_2\in [l]}\left(\ell_\Delta(\|u_{i_1} - v_{i_1}\| - \|u_{i_1} - v_{i_2}\|) - \ell_\Delta(\|u'_{i_1} - v'_{i_1}\| - \|u'_{i_1} - v'_{i_2}\|)\right) \nonumber  \\
&= \frac{1}{l(l-1)}\sum_{i\in [l]:~i\neq j}\left(\ell_\Delta(\|u_i - v_i\| - \|u_i - v_j\|) - \ell_\Delta(\|u_i - v_i\| - \|u_i - v'_j\|)\right) \nonumber \\
&\quad+ \frac{1}{l(l-1)}\sum_{i\in [l]:~i\neq j}\left(\ell_\Delta(\|u_j - v_j\| - \|u_j - v_i\|) - \ell_\Delta(\|u'_j - v'_j\| - \|u'_j - v_i\|)\right). \label{eq:lipsc0}
\end{align}
Since $\ell_\Delta$ is $\Delta^{-1}$-Lipschitz and by the triangle inequality
\begin{align}
\lefteqn{
\ell_\Delta(\|u_i - v_i\| - \|u_i - v_j\|) - \ell_\Delta(\|u_i - v_i\| - \|u_i - v'_j\| )} \nonumber \\
&\le \Delta^{-1}\Big\vert \|u_i - v_i\| - \|u_i - v_j\| - \|u_i - v_i\| + \|u_i - v'_j\| \Big\vert \nonumber \\
&= \Delta^{-1}\Big\vert \|u_i - v'_j\| - \|u_i - v_j\| \Big\vert \nonumber \\
&\le \Delta^{-1}\|v_j - v'_j\| \label{eq:lipsc1} \\
&\le \Delta^{-1} \|(u_j,v_j) - (u'_j,v'_j)\| \nonumber \\
&= \Delta^{-1}\|z_j-z'_j\|. \label{eq:lipsc2}
\end{align}
Similarly,
\begin{align}
\lefteqn{\ell_\Delta(\|u_j - v_j\| - \|u_j - v_i\|) - \ell_\Delta(\|u'_j - v'_j\| - \|u'_j - v_i\|)} \nonumber \\
&\le \Delta^{-1} \Big\vert \|u_j - v_j\| - \|u_j - v_i\| - \|u'_j - v'_j\| + \|u'_j - v_i\| \Big\vert \nonumber \\
&\le \Delta^{-1} \left(\Big\vert \|u_j - v_j\| - \|u'_j - v'_j\| \Big\vert + \Big\vert \|u_j - v_i\| - \|u'_j - v_i\| \Big\vert\right) \nonumber \\
&\le \Delta^{-1} \left(\|u_j -u'_j\| + \|v_j - v'_j\| + \|u_j - u'_j\| \right) \nonumber \\
&\le \Delta^{-1} \left(\|u_j -u'_j\| + \|v_j - v'_j\| + \|(u_j,v_j) - (u'_j,v'_j)\| \right) \nonumber \\
&\le (1+\sqrt{2})\Delta^{-1} \|(u_j,v_j) -(u'_j,v'_j)\| \label{eq:lipsc3} \\
&= (1+\sqrt{2})\Delta^{-1} \|z_j-z'_j\|,
\label{eq:lipsc4}
\end{align}
where the last inequality follows from the arithmetic mean--quadratic mean inequality. Combining \eqref{eq:lipsc0}, \eqref{eq:lipsc2} and \eqref{eq:lipsc4}, we obtain that 
\begin{equation}\label{eq:Mbound}
M(A_\Delta) = \max_{j \in [l]} \sup_{\substack{\bz \in H_{\cF}(\mathcal{Q})^{l} \\ z'_j \in H_{\cF}(\mathcal{Q}):~z'_j\neq z_j}} \frac{D^j_{z'_j} A(\bz)}{\|z_j - z'_j\|} \leq \frac{2+\sqrt{2}}{\Delta l}. 
\end{equation}
Using that $\|u_j-u'_j\|$ and $\|v_j - v'_j\|$ are both bounded by $2\sup_{x \in \cX, f\in \cF'} \|f(x)\| \le 2 \alpha B$ for any $\bz,\bz'$, we obtain from \eqref{eq:lipsc0}, \eqref{eq:lipsc1} and \eqref{eq:lipsc3} that
\[
D^j_{z'_j} A_\Delta(\bz) \le \frac{6+4\sqrt{2}}{\Delta l} \sup_{x \in \cX, f\in \cF'} \|f(x)\| \leq \frac{(6+4\sqrt{2}) \alpha B}{\Delta l}.
\]
Hence,
\begin{equation}\label{eq:Kbound}
K(A_\Delta) \le \frac{(6+4\sqrt{2}) \alpha B}{\Delta l}.
\end{equation}
Finally, we bound $J(A_\Delta)$. Let $j,r\in [l]$ with $j\neq r$ and $\bz = (z_1,\dots,z_l) \in H_{\cF}(\mathcal{Q})^{l}$. We have
\begin{align}
\lefteqn{D^r_{z'_r}D^j_{z'_j} A_\Delta(\bz) } \nonumber \\
&= \frac{1}{l(l-1)}\sum_{i\in [l]:~i\neq j}D^r_{z'_r}\left(\ell_\Delta(\|u_i - v_i\| - \|u_i - v_j\|) - \ell_\Delta(\|u_i - v_i\| - \|u_i - v'_j\|)\right) \nonumber \\
&\quad+ \frac{1}{l(l-1)}\sum_{i\in [l]:~i\neq j} D^r_{z'_r}\left(\ell_\Delta(\|u_j - v_j\| - \|u_j - v_i\|) - \ell_\Delta(\|u'_j - v'_j\| - \|u'_j - v_i\|)\right) \nonumber \\
&= \frac{1}{l(l-1)}\cdot D^r_{z'_r}\left(\ell_\Delta(\|u_r - v_r\| - \|u_r - v_j\|) - \ell_\Delta(\|u_r - v_r\| - \|u_r - v'_j\|)\right) \nonumber \\
&\quad+ \frac{1}{l(l-1)}\cdot D^r_{z'_r}\left(\ell_\Delta(\|u_j - v_j\| - \|u_j - v_r\|) - \ell_\Delta(\|u'_j - v'_j\| - \|u'_j - v_r\|)\right), \label{eq:Dr0}
\end{align}
where the second equality follows as all terms are $0$ for which $i \neq r$, since the function in that case is constant with respect to $u_r$ (since $j\neq r$ by our definition). 
Using again that $\ell_\Delta$ is $\Delta^{-1}$-Lipschitz, we have
\begin{align}
\lefteqn{D^r_{z'_r}\left(\ell_\Delta(\|u_r - v_r\| - \|u_r - v_j\|) - \ell_\Delta(\|u_r - v_r\| - \|u_r - v'_j\|)\right)} \nonumber \\
&= \ell_\Delta(\|u_r - v_r\| - \|u_r - v_j\|) - \ell_\Delta(\|u_r - v_r\| - \|u_r - v'_j\|) \nonumber \\
&\quad- \ell_\Delta(\|u'_r - v'_r\| - \|u'_r - v_j\|) + \ell_\Delta(\|u'_r - v'_r\| - \|u'_r - v'_j\|) \nonumber \\
&\le 2\Delta^{-1} \|v_j-v'_j\|. \label{eq:Dr1}
\end{align}
Similarly, 
\begin{align}
\lefteqn{
D^r_{z'_r}\left(\ell_\Delta(\|u_j - v_j\| - \|u_j - v_r\|) - \ell_\Delta(\|u'_j - v'_j\| - \|u'_j - v_r\|)\right)} \nonumber \\
&= \ell_\Delta(\|u_j - v_j\| - \|u_j - v_r\|) - \ell_\Delta(\|u'_j - v'_j\| - \|u'_j - v_r\|) \nonumber \\
&\quad - \ell_\Delta(\|u_j - v_j\| - \|u_j - v'_r\|) + \ell_\Delta(\|u'_j - v'_j\| - \|u'_j - v'_r\|) \nonumber \\
&\le \vert \ell_\Delta(\|u_j - v_j\| - \|u_j - v_r\|) -
\ell_\Delta(\|u'_j - v'_j\| - \|u'_j - v_r\|) \vert \nonumber \\
&\quad + \vert \ell_\Delta(\|u_j - v_j\| - \|u_j - v'_r\|) 
 - \ell_\Delta(\|u'_j - v'_j\| - \|u'_j - v'_r\|) \nonumber \\
&\le 2\Delta^{-1} \left(\|u_j-u'_j\| + \|u_j-u'_j + v'_j-v_j\|\right) \nonumber \\
& \le 2\Delta^{-1} \left(2\|u_j-u'_j\| + \|v_j-v'_j\|\right)
\label{eq:Dr2}
\end{align}
Combining \eqref{eq:Dr0}--\eqref{eq:Dr2} with the arithmetic mean--quadratic mean inequality, we obtain
\[
\frac{D^r_{z'_r}D^j_{z'_j} A_\Delta(\bz)}{\|z_j-z'_j\|}
\leq \frac{4\left(\|u_j-u'_j\| + \|v_j-v'_j\|\right)}{l(l-1)\Delta \|z_j-z'_j\|}
\leq \frac{4\sqrt{2}}{l(l-1)\Delta},
\]
which gives
\begin{equation}\label{eq:Jbound}
J(A_\Delta) \leq \frac{4\sqrt{2}}{\Delta(l-1)}.    
\end{equation}
Substituting \eqref{eq:Mbound}, \eqref{eq:Kbound}, and \eqref{eq:Jbound} into \eqref{eq:maurer2}, we get that with probability at least $1-\delta'$ over the selection of $\hcU$ for any $f \in \cF'$ with $\sup_{f \in \cF'} \cC(f) \le \alpha$,
\begin{equation}\label{eq:maurer3}
\begin{aligned}
\E_{\hcU}\left[A_\Delta(H_{f}(\hcU))\right] ~&\le~ A_\Delta(H_{f}(\hcU)) +  \frac{8(3+\sqrt{2}) \sqrt{\pi\log(l)} \cdot \E_{\hcU}[R(H_{\cF'}(\hcU))]}{\Delta (l-1)}\\
&\quad + \frac{(6+4\sqrt{2}) \alpha B}{\Delta \sqrt{l}} \sqrt{\log\left(\fr{1}{\delta'}\right)}.
\end{aligned}
\end{equation}
To be able to adapt to the complexity $\cC(f)$ of $f$ and prove a bound for all $\Delta$, we apply the above to $\cF'=\cF^\alpha := \{f \in \cF: \cC(f) \le \alpha\}$ with positive integers $\alpha$ and for an exponential grid for $\Delta$ defined as $\Delta_t = 2^{-t}B$ for any integer $t$ and $\delta'=\delta_{t,\alpha}$ where 
$\delta_{t,\alpha}:=\fr{\delta}{3\alpha(\alpha+1)\vert t\vert (\vert t \vert +1)}$ for all $t \neq 0$ and 
$\delta_{0,\alpha}:=\fr{\delta}{3\alpha(\alpha+1)}$. Note that by definition, 
$\sum_{t=-\infty}^\infty\sum_{\alpha=1}^\infty \delta_{t,\alpha} = \delta$.

\paragraph{Bounding $\E_{\hcU}[R(H_{\cF^\alpha}(\hcU))]$.} As a next step, we upper bound $\E_{\hcU}[R(H_{\cF^\alpha}(\hcU))]$. Let $\epsilon$ be the collection of i.i.d.\ Rademacher random variables $\epsilon^1_{cj}$ and $\epsilon^2_{cj}$ for $c\in [l]$ and $j\in [p]$. Then
\begin{align}
\lefteqn{\E_{\hcU}[R(H_{\cF^\alpha}(\hcU))]} \nonumber \\
&= \E_{\hcU}\E_{\epsilon} \left[\sup_{f \in \cF^\alpha} \sum^{l}_{c=1} \sum^{p}_{j=1} \left(f(x_c)_j \epsilon^1_{cj} + \mu_f(\hS_c)_j \epsilon^2_{cj}\right)\right] \nonumber \\
&\leq \E_{\hcU}\E_{\epsilon} \left[\sup_{f \in \cF^\alpha} \sum^{p}_{j=1}\sum^{l}_{c=1}  f(x_c)_j \epsilon^1_{cj}\right]
+ \E_{\hcU}\E_{\epsilon} \left[\sup_{f \in \cF^\alpha} \sum^{p}_{j=1} \sum^{l}_{c=1} \mu_f(\hS_c)_j \epsilon^2_{cj}\right] \nonumber \\
&= \E_{\hcU}\E_{\epsilon} \left[\sup_{f \in \cF^\alpha} \sum^{p}_{j=1}\sum^{l}_{c=1}  f(x_c)_j \epsilon^1_{cj}\right] 
+ \E_{\hcU}\E_{\epsilon} \left[\sup_{f \in \cF^\alpha} \sum^{p}_{j=1}\sum^{l}_{c=1} \E_{x_c \sim U[\hS_c]} f(x_c)_j \epsilon^2_{cj}\right] \nonumber \\
&\leq \E_{\hcU}\E_{\epsilon} \left[\sup_{f \in \cF^\alpha} \sum^{p}_{j=1}\sum^{l}_{c=1}  f(x_c)_j \epsilon^1_{cj}\right] 
+\E_{\hcU}\E_{x_c \sim U[\hS_c]}\E_{\epsilon} \left[\sup_{f \in \cF^\alpha} \sum^{p}_{j=1}\sum^{l}_{c=1} f(x_c)_j \epsilon^2_{cj}\right] \nonumber \\
&= 2\E_{x_1,\dots,x_l}\left[R(\cF^\alpha(\{x_c\}^{l}_{c=1}))\right]. \label{eq:rad_bound_thm0}
\end{align}
Due to the homogenity of the ReLU nonlinearities, we can rescale the layers of a ReLU network so that all layer matrix have norm at most $1$ except for the last layer, that is, any $f \in \cF^\alpha$ can be written in a canonical form as $f=W^q \tilde{f}$ with $\|W^q\|_F \le \alpha$ and some $\tilde{f} \in \tilde\cF := \{\sigma(W^{q-1} (\dots W^2\sigma(W^1x) \dots ) \mid \forall i \leq q-1:~\|W^i\|_F\leq 1\}$. Denoting the $j$th row of $W^q$ by $W^q_{(j)}$, for any $\{x_1,\dots,x_l\} \subset \mathcal{X}$ we can bound the Rademacher complexity as
\begin{align*}
R(\cF^\alpha(\{x_c\}^{l}_{c=1})
&= \E_{\epsilon} \left[\sup_{f \in \cF^\alpha} \sum^{p}_{j=1}\sum^{l}_{c=1} f(x_c)_j \epsilon_{cj}\right] \\
&= \E_{\epsilon} \left[\sup_{\substack{W^q:~\|W^q\|_F \leq \alpha\\ \tilde{f} \in \tilde\cF}} \sum^{p}_{j=1}\sum^{l}_{c=1} W^q_{(j)} \tilde{f}(x_{c}) \epsilon_{cj}\right] \nonumber\\
&\leq \alpha \E_{\epsilon} \left[\sup_{g \in \cG} \max_{j \in [p]} \left\|\sum^{l}_{c=1} \tilde{f}(x_{c}) \epsilon_{cj}\right\|\right] \nonumber\\
&\leq \frac{\alpha}{\lambda} \log\left( \E_{\epsilon} \left[\max_{j \in [p]}\sup_{\tilde{f} \in \tilde\cF} \exp \left(\lambda \left\|\sum^{l}_{c=1} \tilde{f}(x_{c}) \epsilon_{cj}\right\|\right)\right]\right) \nonumber \\
&\leq \frac{\alpha}{\lambda} \log\left( \E_{\epsilon} \left[\sum_{j \in [p]}\sup_{\tilde{f} \in \tilde\cF} \exp \left(\lambda \left\|\sum^{l}_{c=1} \tilde{f}(x_{c}) \epsilon_{cj}\right\|\right)\right]\right) \nonumber \\
&= \frac{\alpha}{\lambda} \log\left(p\cdot \E_{\epsilon} \left[\sup_{\tilde{f} \in \tilde\cF} \exp \left(\lambda \left\|\sum^{l}_{c=1} \tilde{f}(x_{c}) \epsilon_{c1}\right\|\right)\right]\right), \nonumber
\end{align*}
where (i) the first inequality follows by moving all the norms of $W^q$ to the row maximizing $W^q_{(j)}\sum_{c=1}^l W^q_{(j)} \tilde{f}(x_{c}) \epsilon_{cj}$ and applying the Cauchy-Schwartz inequality; (ii) the second inequality follows for arbitrary $\lambda>0$ (to be chosen later) by Jensen's inequality and flipping the order of the $\sup$ and the $\max$; (iii) the thrid inequality follows by upper bounding the maximum over $j$ by the sum; while (iv) the last equality holds by exchanging the expectation and the summation over $j$, and noticing that the resulting expectation is the same for all $j$.
Finally, by the proof of Theorem~1 of \cite{golowich}, when choosing $\lambda = \fr{\sqrt{2(q\log(2)+\log(p))}}{\sqrt{\sum^{l}_{c=1}\|x_c\|^2}}$, we have
\begin{equation}\label{eq:RFalpha}
\begin{aligned}
R\left(\cF^\alpha(\{x_c\}^{l}_{c=1})\right) &\le \alpha \sqrt{2l(q\log(2)+\log(p))} \cdot \max_{c\in [l]} \|x_c\|~.
\end{aligned}
\end{equation}
Plugging into \eqref{eq:rad_bound_thm0} (and using that $B=\sup_{x\in\cX}\|x\|$ by definition) gives
\begin{equation}\label{eq:rad_bound_thm}
\E_{\hcU}[R(H_{\cF^\alpha}(\hcU))] \leq 2\alpha B \sqrt{2l(q\log(2)+\log(p))}.
\end{equation} 
Substituting this into \eqref{eq:maurer3} and using that $l-1 \ge l/2$ for $l \ge 2$ and the union bound, we obtain that with probability at least $1-\delta$, for all integers $t$, positive integers $\alpha$, and $f \in \cF^\alpha$, 
\begin{align*}
\lefteqn{\E_{\hcU}\left[A_{\Delta_t}(H_{f}(\hcU))\right]}\\
&\le A_{\Delta_t}(H_{f}(\hcU)) +  
\frac{\alpha B}{\Delta_t \sqrt{l}}
\left(355 \sqrt{\log(l)(q\log(2)+\log(p))} + (6\!+\!4\sqrt{2}) \sqrt{\log\!\left(\!\frac{3(\alpha+1)^2(|t|+1)^2}{\delta}\right)}\right).
\end{align*}
Now, for any $\Delta>0$ there exists a $t$ such that $\Delta \le \Delta_t \le 2\Delta$; then clearly $|t| \le |\log (\fr{\Delta}{B})| + 1$. Since $\ell_\Delta$ and $A_\Delta$ monotonically increase in $\Delta$, the above inequality with $\alpha=\lceil \cC(f) \rceil$ implies that with probability at least $1-\delta$ over the choice of $\hcU$, for all $f \in \cF$ we have
\begin{small}
\begin{equation}\label{eq:CfBound}
\begin{aligned}
\lefteqn{\E_{\hcU}\left[A_\Delta(H_{f}(\hcU))\right]}\\
&\le A_{2\Delta}(H_{f}(\hcU)) +
\frac{\lceil \cC(f) \rceil B}{\Delta \sqrt{l}}
\left(355 \sqrt{\log(l)(q\log(2)+\log(p))} + 16.5\sqrt{\log\left(\frac{\sqrt{3}(\lceil \cC(f) \rceil+1)(|\log (\fr{\Delta}{B})|+2)}{\sqrt\delta}\right)}\right).
\end{aligned}
\end{equation}
\end{small}
Let $\cZ_\delta$ denote the event that the above inequality (Eq.~\ref{eq:CfBound}) does not hold; then we have that $\Pr[\cZ_\delta] \le \delta$.
Let $p_{\delta^2,\tcP}:=\Pr[\cZ_{\delta^2}|\tcP]$ denote the conditional probability that \eqref{eq:CfBound} holds with $\delta^2$ in place of $\delta$ given the random choice of $\tcP$. Then, 
\[
\E_{\tcP} [p_{\delta^2,\tcP}] 
= \Pr[\cZ_{\delta^2}] \leq \delta^2
\]
and hence by Markov's inequality,
\[
\Pr[p_{\delta^2,\tcP} \ge \delta]
\leq \frac{\E_{\tcP} [p_{\delta^2,\tcP}]}{\delta} 
\leq \delta.
\]
That is, with probability at least $1-\delta$ over the choice of $\tcP$, with probability at least $1-\delta^2$ over the choice of $\hcU$ (given $\tcP$), for all $f \in \cF$ we have
\begin{small}
\begin{equation}\label{eq:CfBound2}
\begin{aligned}
\lefteqn{\E_{\hcU}\left[A_\Delta(H_{f}(\hcU))\right]}\\
~&\le~ A_{2\Delta}(H_{f}(\hcU)) +
\frac{\lceil \cC(f) \rceil B}{\Delta \sqrt{l}}
\left(355 \sqrt{\log(l)(q\log(2)+\log(p))} + 16.5\sqrt{\log\left(\frac{\sqrt{3}(\lceil \cC(f) \rceil+1)(|\log (\fr{\Delta}{B})|+2)}{\delta}\right)}\right).
\end{aligned}
\end{equation}
\end{small}
Taking expectation with respect to $\hcU$ given $\tcP$, we have
\begin{align*}
\E_{\hcU}[A_{2\Delta}(H_{f}(\hcU)) \mid \tcP] 
~&=~ \Avg_{i\neq j\in [l]}\E_{\hcU}\left[\ell_{2\Delta}(\|f(x_i) - \mu_{f}(\hS_j)\| - \|f(x_i) - \mu_{f}(\hS_i)\|) \mid \tcP\right] \\
~&=~ \Avg_{i\neq j\in [l]}\E_{\hS_i,\hS_j}[\ell_{2\Delta}(\|f(x_i) - \mu_{f}(\hS_j)\| - \|f(x_i) - \mu_{f}(\hS_i)\|) \mid \tcP] \\
~&=~ \Avg_{i\neq j\in [l]}\E_{\hS_i,\hS_j}[\ell_{2\Delta}(f;\tP_i,\tP_j)] \\
\end{align*}
Combining with \eqref{eq:CfBound2} and also that $A_\Delta(H_{f}(\hcU)) \le 1$ for all $\hcU$ yields
that with probability at least $1-\delta$ over the choice of $\tcP$, for all $f \in \cF$
\begin{small}
\begin{align*}
\E_{\hcU}\left[A_\Delta(H_{f}(\hcU))\right] 
~&\le~ \Avg_{i\neq j\in [l]}\E_{\hS_i,\hS_j}[\ell_{2\Delta}(f;\tP_i,\tP_j)] +  \delta^2 \\
&\quad +
\frac{\lceil \cC(f) \rceil B}{\Delta \sqrt{l}}
\left(355 \sqrt{\log(l)(q\log(2)+\log(p))} + 16.5\sqrt{\log\left(\fr{\sqrt{3}(\lceil \cC(f) \rceil+1)(|\log (\fr{\Delta}{B})|+2)}{\delta}\right)}\right).
\end{align*}
\end{small}
The statement of the lemma now follows by combining this inequality with \eqref{eq:LD2}.
\end{proof}

\section{Proof of Lemma~\ref{prop:marginBoundB}}\label{app:marginBoundB}

\marginBoundB*

\begin{proof}
As in the proof of Lemma~\ref{prop:marginBoundA}, we proceed by considering feature maps $f$ belonging to the function class $\cF^\alpha := \{f \in \cF: \cC(f) \le \alpha\}$ for  a sequence of $\alpha$ growing to infinity. For now, fix $\alpha>0$, $f \in \cF^\alpha$, and $\Delta>0$.
Our plan is to rewrite $\ell_{\Delta}(f;\tS_i,\tS_j)$ as an average so that we can apply our extension \eqref{eq:maurer-ext} of Corollary~3 of~\cite{Maurer2019UniformCA} to get a generalization bound. 

To this end, consider a vector $\bz=(\bz_i,\bz_j) \in \cF(\cX)^m \times \cF(\cX)^m$ with $\bz_r=(z_{r1},\ldots,z_{rm})$ for $r \in \{i,j\}$ (representing $m$ samples from classes $i$ and $j$, respectively), and multi-sets $I_i,I_j \subset [m]$ of size $n$ representing the indices of $n$ samples selected from $\bz_i$ and $\bz_j$ without replacement. For $r \in \{i,j\}$, let $\mu_{r,I_r}(\bz)=\Avg_{b \in I_r} [z_{rb}]$ denote the mean of the selected subsamples, and for any $I=(I_i,I_j)$ and $b \in [m]$, denote the soft-margin loss of $z_{ib}$ of the nearest neighbor classifier defined by $\mu_{i,I_i}(\bz)$ and $\mu_{j,I_j}(\bz)$ as  
\[
Q_{\Delta}(\bz;I,b) := \ell_\Delta(\|z_{ib} - \mu_{i,I_i}(\bz)\|-\|z_{ib}-\mu_{j,I_j}(\bz)\|).    
\]
Finally, define the aggregation function $B_\Delta:\cF(\cX)^{2m} \to \R$ as
\[
B_\Delta(\bz) := B_\Delta(\bz_i,\bz_j) := \Avg_{b, I}[Q_{\Delta}(\bz; I,b)],
\]
where in the average $b$ ranges over $[m]$ and $I=(I_i,I_j)$ over any pairs of multi-sets $I_i,I_j \subset [m]$ of cardinality $n$.
Then clearly
\begin{equation}
\label{eq:l4-B=ell}
B_\Delta(f(\tS_i), f(\tS_j)) = \ell_{\Delta}(f;\tS_i,\tS_j).
\end{equation}
Denoting $\tS_i=\{\tx_{i1},\ldots,\tx_{im}\}$
and $\tS_j=\{\tx_{j1},\ldots,\tx_{jm}\}$, we can think of $(f(\tS_i),f(\tS_j))$ as a collection of $m$ i.i.d.\ vectors $(f(\tx_{ib}),f(\tx_{jb}))_{b \in [m]}$, hence
by extension \eqref{eq:maurer-ext} of Corollary~3 of~\cite{Maurer2019UniformCA}, for any $\cF' \subset \cF^\alpha$ with $\alpha=\sup_{f \in \cF'} \cC(f)$ and any $\delta' \in (0,1)$, with probability at least $1-\delta'$ over the selection of $\tS_i$ and $\tS_j$ we have
\begin{equation}
\label{eq:maurer-l4}
\begin{aligned}
&\E_{\tS_i,\tS_j}[B_\Delta(f(\tS_i), f(\tS_j))]\\ &\le B_\Delta(f(\tS_i), f(\tS_j))  + 2\sqrt{2\pi\log(m)} \big(2M(B_\Delta) + J(B_\Delta)\big)\cdot \E_{\tS_i,\tS_j}[R(\cF'(\tS_i\cup \tS_j))] \\
& \qquad + K(B_\Delta)\cdot \sqrt{m\log\left(\fr{1}{\delta'}\right)}.
\end{aligned}
\end{equation}
Next we explore how $\E_{\tS_i,\tS_j}[B_\Delta(f(\tS_i), f(\tS_j))]$ and $\ell_{\Delta}(f;\tP_i,\tP_j)$ are related.
\begin{equation*}
\begin{aligned}
&\E_{\tS_i, \tS_j}[B_\Delta(f(\tS_i), f(\tS_j))]\\ 
~&=~ \E_{\tS_i, \tS_j}[\ell_{\Delta}(f;\tS_i,\tS_j)] \\
~&=~ \E_{\tS_i, \tS_j}\E_{\bar{S}_i \sim U[\tS_i]^n}\E_{\bar{S}_j \sim U[\tS_j]^n}\E_{\tx_i \sim U[\tS_i]}[\ell_{\Delta}(\|f(\tx_i) - \mu_f(\bar{S}_i)\| - \|f(\tx_i) -\mu_f(\bar{S}_j)\|)] \\
~&=~ \E_{\tS_i, \tS_j}\E_{\bar{S}_i \sim U[\tS_i]^n}\E_{\bar{S}_j \sim U[\tS_j]^n}\E_{b \sim U[m]}[\ell_{\Delta}(\|f(\tx_{ib}) - \mu_f(\bar{S}_i)\| - \|f(\tx_{ib}) -\mu_f(\bar{S}_j)\|)]. \\
\end{aligned}    
\end{equation*}
Note that by replacing $\tx_i \sim U[\tS_i]$ with $\tx_i \sim \tP_i$, the above expression would be equal to 
\begin{equation*}
\ell_{\Delta}(f;\tP_i,\tP_j) = \E_{\hS_i \sim \tP^n_i}\E_{\hS_j \sim \tP^n_j}\E_{\tx_i\sim \tP_i}[\ell_{\Delta}(\|f(\tx_i) - \mu_f(\hS_i)\| - \|f(\tx_i) -\mu_f(\hS_j)\|)],    
\end{equation*}
which is the term that we would like to bound. The problem is that because both $\tx_i$ and $\bar{S}_i$ are selected from $\tS_i$, they are not independent. However, since $n \ll m$, it is likely that $\tx_i$ is not an element of $\bar{S}_i$; to put it differently,  the index $b$ of $\tx_i$ (i.e., $\tx_i=\tx_{ib}$) does not belong to the multi-set of indices $I_i$ defining $\bar{S}_i$ (i.e., $\bar{S}_i=\{\tx_{ib'}: b' \in I_i\}$. Under this assumption, $\tx_i$ and $\bar{S}_i$ become independent with $\tx_i$ following distribution $\tP_i$.
Since $I_i$ contains at most $n$ different indices, we indeed have $\Pr[b \notin I_i]\ge 1-\fr{n}{m}$. These, together with the nonnegativity of $\ell_\Delta$ imply that
\begin{align}
&\E_{\tS_i,\tS_j}[B_\Delta(f(\tS_i), f(\tS_j))] \nonumber \\
~&\ge~ 
\E_{\tS_i,\tS_j}\E_{\bar{S}_i,\bar{S}_j}\E_{b \sim U[m]}[\ell_{\Delta}(\|f(\tx_{ib}) - \mu_f(\bar{S}_i)\| - \|f(\tx_{ib})  -\mu_f(\bar{S}_j)\|) \mid b \notin I_i] \cdot \Pr[b \notin I_i] \nonumber \\
~&\ge~ 
\E_{\tS_i,\tS_j}\E_{\bar{S}_i,\bar{S}_j}\E_{b \sim U[m]}[\ell_{\Delta}(\|f(\tx_{ib}) - \mu_f(\bar{S}_i)\| - \|f(\tx_{ib})  -\mu_f(\bar{S}_j)\|) \mid b \notin I_i] \cdot \left(1-\fr{n}{m}\right) \nonumber \\
~&=~ 
\E_{\tS_i,\tS_j}\E_{\bar{S}_i,\bar{S}_j}\E_{b \sim U[[m]\setminus I_i]}[\ell_{\Delta}(\|f(\tx_{ib}) - \mu_f(\bar{S}_i)\| - \|f(\tx_{ib})  -\mu_f(\bar{S}_j)\|)] \cdot \left(1-\fr{n}{m}\right) \nonumber \\
~&=~ \E_{\tS_i,\tS_j}\E_{\bar{S}_i,\bar{S}_j}\E_{\tx_i\sim \tP_i}[\ell_{\Delta}(\|f(\tx_{i}) - \mu_f(\bar{S}_i)\| - \|f(\tx_{i}) -\mu_f(\bar{S}_j)\|)]\left(1-\fr{n}{m}\right). \label{eq:BDelta-lower1}
\end{align}
Finally, we would like to replace the distributions of $\bar{S}_i$, and $\bar{S}_j$ above from $U[\tS_i]^n$ and $U[\tS_j]^n$ to $\tP_i^n$ and $\tP_j^n$, respectively. Note that sampling $\bar{S}_i\sim \tP^n_j$ is equivalent to sampling $\tS_i \sim \tP^m_i$ and then $\bar{S}_i\sim U[\tS_i]^n$ conditioned on the event that the indices of the samples we choose to put in $\bar{S}_i$ are unique. This event happens with probability $\fr{m(m-1)\cdots(m-n+1)}{m^n}$, 
and hence
\begin{align*}
& \E_{\tS_i,\tS_j}\E_{\bar{S}_i \sim U[\tS_i]^n,\bar{S}_j \sim U[\tS_j]^n}\E_{\tx_i\sim \tP_i}[\ell_{\Delta}(\|f(\tx_{i}) - \mu_f(\bar{S}_i)\| - \|f(\tx_{i}) -\mu_f(\bar{S}_j)\|)] \\
~&\ge~ 
\E_{\tS_i,\tS_j}\E_{\bar{S}_i \sim \tP_i^n,\bar{S}_j \sim \tP_j^n}\E_{\tx_i\sim \tP_i}[\ell_{\Delta}(\|f(\tx_{i}) - \mu_f(\bar{S}_i)\| - \|f(\tx_{i}) -\mu_f(\bar{S}_j)\|)]
\cdot \fr{m(m-1)\cdots(m-n+1)}{m^n} \\
~&=~\E_{\bar{S}_i \sim \tP_i^n}\E_{\bar{S}_j \sim \tP_j^n}\E_{\tx_i\sim \tP_i}[\ell_{\Delta}(\|f(\tx_{i}) - \mu_f(\bar{S}_i)\| - \|f(\tx_{i}) -\mu_f(\bar{S}_j)\|)]
\cdot \fr{m(m-1)\cdots(m-n+1)}{m^n} \\
~&\ge~ \fr{1}{4} \ell_{\Delta}(f;\tP_i,\tP_j),
\end{align*}
where in the last step we used that
$\fr{m(m-1)\cdots(m-n+1)}{m^n} \ge 2/9$ which is easy to see: for $m \ge 4$, we have
\[
\fr{m(m-1)\cdots(m-n+1)}{m^n} = \left(1-\fr{1}{m}\right)\cdots\left(1-\fr{n-1}{m}\right)
\ge \left(1-\fr{1}{\sqrt{m}}\right)^{\sqrt{m}} \ge \fr{1}{4}
\]
because $n \le \sqrt{m}$, and the bound follows by inspection for $m \in \{1,2,3\}$.

Combining with \eqref{eq:l4-B=ell}, \eqref{eq:maurer-l4}, and \eqref{eq:BDelta-lower1}, we obtain
\begin{equation}
\begin{aligned}
\ell_{\Delta}(f;\tP_i,\tP_j) ~&\le~ \fr{4m}{m-n} \cdot \ell_{\Delta}(f;\tS_i,\tS_j) \\
&\quad+ \frac{8m\sqrt{2\pi\log(2m)}}{m-n} \big(2M(B_\Delta) + J(B_\Delta)\big)\cdot \E_{\tS_i,\tS_j}[R(\cF'(\tS_i\cup \tS_j)))] \\
&\quad+ \frac{4m}{m-n} K(B_\Delta)\cdot \sqrt{2m\log(\fr{1}{\delta'})}.
\label{eq:marginboundB-middle}
\end{aligned}
\end{equation}

It remains to bound $M(B_{\Delta})$, $J(B_{\Delta})$ and $K(B_{\Delta})$.

\paragraph{Bounding $K(B_{\Delta})$ and $M(B_{\Delta})$.} Let $z_{rt} \in \R^{p}$ for $r \in \{i,j\}$, $t\in [m]$ with $\bz_r = (z_{r1},\dots,z_{rm})$ and $\bz'_r = (z'_{r1},\dots,z'_{rm})$, where $z_{rb} = z'_{rb}$ for all $b\neq t$. By definition, for $\bz=(\bz_i,\bz_j)$ and $\bz'=(\bz'_i,\bz'_j)$,
\begin{align}
D^{t}_{(z'_{it},z'_{jt})}B_\Delta(\bz) &= B_\Delta(\bz)-B_\Delta(\bz')
= \Avg_{b,I}\left[Q_{\Delta}(\bz;I,b) - Q_{\Delta}(\bz';I,b)\right].
\label{eq:D_rt}
\end{align}
Note that the actual organization of the variables in $B_\Delta(\bz)$ is according to pairs $(z_{ib},z_{jb})_{b \in [m]}$, so in the partial difference operator above we consider changing a pair.
To bound \eqref{eq:D_rt}, we consider the difference of the $Q_\Delta$ functions above. Using that $\ell_\Delta$ is $\Delta^{-1}$-Lipschitz, the triangle inequality and that $\bz$ and $\bz'$ only differ in a single coordinate $(r,t)$, we obtain the following:
\begin{align}
&Q_{\Delta}(\bz;I,b) - Q_{\Delta}(\bz';I,b) \nonumber \\
~&=~\ell_\Delta(\|z_{ib} - \mu_{i,I_i}(\bz)\|-\|z_{ib}-\mu_{j,I_j}(\bz)\|) - \ell_\Delta(\|z'_{ib} - \mu_{i,I_i}(\bz')\|-\|z'_{ib}-\mu_{j,I_j}(\bz')\|) \nonumber \\
~&\le~ \Delta^{-1}\left\vert \|z_{ib} - \mu_{i,I_i}(\bz)\| - \|z_{ib}-\mu_{j,I_j}(\bz)\| - \|z'_{ib} - \mu_{i,I_i}(\bz')\| + \|z'_{ib}-\mu_{j,I_j}(\bz')\| \right\vert \nonumber \\
~&\le~ \Delta^{-1}\left(\left\vert \|z_{ib} - \mu_{i,I_i}(\bz)\| - \|z'_{ib} - \mu_{i,I_i}(\bz')\| \right\vert + \left\vert \|z_{ib}-\mu_{j,I_j}(\bz)\| - \|z'_{ib}-\mu_{j,I_j}(\bz')\| \right\vert \right) \nonumber \\
~&\le~ \Delta^{-1} \left(\|z_{ib} - z'_{ib}\| + \| \mu_{i,I_i}(\bz) - \mu_{i,I_i}(\bz')\| + \|z_{ib}- z'_{ib}\| + \| \mu_{j,I_j}(\bz) -\mu_{j,I_j}(\bz')\|\right) \nonumber \\
~&=~ \Delta^{-1}\left(2\|z_{ib} - z'_{ib}\| + \|\mu_{i,I_i}(\bz) - \mu_{i,I_i}(\bz')\| + \|\mu_{j,I_j}(\bz) - \mu_{j,I_j}(\bz')\|\right) \nonumber \\
~&=~ 
\frac{\|z_{it} - z'_{it}\|}{\Delta}\left(
2 \bI[b=t] 
+\frac{\bI[t \in I_i] }{n} \right)
+\frac{\|z_{jt} - z'_{jt}\|\cdot \bI[t \in I_j]}{\Delta n} 
\label{eq:Ib1}  \\
~&\le~
\frac{\|(z_{it},z_{jt}) - (z'_{it},z'_{jt})\|}{\Delta}
\left(2 \bI[b=t] 
+\frac{\bI[t \in I_i] }{n} +\frac{\bI[t \in I_j]}{n}\right), 
\label{eq:Ib}  
\end{align}
Now trivially
\begin{align}
\label{eq:B-ave1}
\Avg_{b,I}\Big[\bI[b=t]\Big]
&= \Avg_{b}\Big[\bI[b=t]\Big]
= \frac{1}{m},
\end{align}
and for $r \in \{i,j\}$,
\begin{align}
\label{eq:B-ave2}
\Avg_{b,I}\Big[\bI[t \in I_r]\Big]
& = \Avg_{I_r} \Big[\bI[t \in I_r]\Big]
= 1-\frac{(m-1)^n}{m^n} 
\le \frac{n}{m},
\end{align}
where in the last step we used that
$(1-\fr{1}{m})^n \ge 1-\fr{n}{m}$ for $n \le m$.

Combining with \eqref{eq:D_rt} and \eqref{eq:Ib} implies
$\frac{D^{t}_{(z'_{it},z'_{jt})}B_\Delta(\bz)}{\|(z_{it},z_{jt}) - (z'_{it},z'_{jt})\|} \leq 
\fr{4}{\Delta m}$
and hence
\begin{equation}\label{eq:Mbound2}
M(B_{\Delta}) = \max_{t \in [m]} \sup_{\substack{\bz_i,\bz_j\in \cF(\cX)^{m} \\ (z'_{it},z'_{jt}) \neq (z_{it},z_{jt})}} \frac{D^{t}_{(z'_{it},z'_{jt})}B_\Delta(\bz)}{\|(z_{it},z_{jt}) - (z'_{it},z'_{jt})\|}
\leq \frac{4}{\Delta m}.
\end{equation}

In addition, since $f \in \cF^\alpha$, 
$\|z_{rt}-z'_{rt}\| \le 2 \sup_{x \in \cX} \|f(x)\| \le 2 \alpha B$ for any $r \in \{i,j\}$, \eqref{eq:D_rt}, \eqref{eq:Ib1}, \eqref{eq:B-ave1}, and \eqref{eq:B-ave2} imply
which gives
\begin{equation}\label{eq:Kbound2}
K(B_\Delta) = \max_{t \in [m]} \sup_{\substack{\bz_i,\bz_j\in \cF(\cX)^{m} \\ (z'_{it},z'_{jt}) \neq (z_{it},z_{jt})}}
D^{t}_{(z'_{it},z'_{jt})} B_\Delta(\bz) \le \frac{8\alpha B}{\Delta m}.
\end{equation}

\paragraph{Bounding $J(B_{\Delta})$.} Consider $t_1\neq t_2 \in [m]$ and for $r \in \{i,j\}$, let 
$\bz_r = (z_{r1},\dots,z_{rm})$, $\bz'_r = (z'_{r1},\dots,z'_{rm})$ where $z'_{rb} = z_{rb}$ for all $b \neq t_1$, and $\bz''_{r} = (z''_{r1},\dots,z''_{rm})$ where $z''_{rb} = z'_{rb}$ for all $b \neq t_2$. Finally, let $\bar{\bz}_r = (\bar{z}_{r1},\dots,\bar{z}_{rm})$ where $\bar{z}_{rb} = z_{rb}$ for all $b \neq t_2$ and $\bar{z}_{r t_2}=z''_{r t_2}$.
Then it is easy to see that
\begin{align}
&D^{t_2}_{(z''_{i t_2},z''_{j t_2})}D^{t_1}_{(z'_{i t_2},z'_{j t_2})}B_\Delta(\bz_i,\bz_j) \nonumber \\
~&=~ B_\Delta(\bz_i,\bz_j)-B_\Delta(\bz'_i,\bz'_j) +B_\Delta(\bz''_i,\bz''_j) - B_\Delta(\bar{\bz}_i,\bar{\bz}_j) \nonumber \\
~&=~ \Avg_{b,I}\left[Q_{\Delta}(\bz;I,b) - Q_{\Delta}(\bz';I,b) 
+ Q_{\Delta}(\bz'';I,b) - Q_{\Delta}(\bar{\bz};I,b) \right]
\label{eq:D2B}
\end{align}
Now consider the terms inside the average.
First notice that the first and second, and respectively the third and fourth terms only differ in the $t_1$th coordinate of the variables, that is, the differences are nonzero only if $b=t_1$ or $t_1 \in I_i \cup I_j$. Similarly, the first and fourth, respectively the second and third terms differ in the $t_2$th coordinate, so again their difference is nonzero only if $b=t_2$ or $t_2 \in I_i \cup I_j$. Therefore,
\begin{align*}
&Q_{\Delta}(\bz;I,b) - Q_{\Delta}(\bz';I,b) 
+ Q_{\Delta}(\bz'';I,b) - Q_{\Delta}(\bar{\bz};I,b) \\
~&=~ 
\left(Q_{\Delta}(\bz;I,b) - Q_{\Delta}(\bz';I,b) 
+ Q_{\Delta}(\bz'';I,b) - Q_{\Delta}(\bar{\bz};I,b) \right) \\
& \qquad \times \left(\bI[b=t_1]+\bI[t_1 \in I_i \cup I_j]\right) \cdot \left(\bI[b=t_2]+\bI[t_2 \in I_i \cup I_j]\right)
\end{align*}
Similarly to \eqref{eq:Ib}, we obtain that
\begin{align*}
&Q_{\Delta}(\bz;I,b) - Q_{\Delta}(\bz';I,b) \\
~&\le~
\frac{\|(z_{i t_1},z_{j t_1}) - (z'_{i t_1},z'_{j t_1})\|}{\Delta}
\bigg(2 \bI[b=t_1] 
+\frac{\bI[t_1 \in I_i] }{n} +\frac{\bI[t_1 \in I_j]}{n}\bigg)
\Big(\bI[b=t_2]+\bI[t_2 \in I_i \cup I_j]\Big) \\
~&\le~ \frac{\|(z_{i t_1},z_{j t_1}) - (z'_{i t_1},z'_{j t_1})\|}{\Delta}\bigg(2 \bI[b=t_1] \big(\bI[t_2 \in I_i]+\bI[t_2 \in I_j]\big)
+ \bI[b=t_2]\frac{\bI[t_1 \in I_i]+\bI[t_1 \in I_j]}{n}\\
&\qquad\qquad\qquad\qquad\qquad\qquad\qquad\qquad\qquad+ \big(\bI[t_2 \in I_i]+\bI[t_2 \in I_j]\big)\frac{\bI[t_1 \in I_i]+\bI[t_1 \in I_j]}{n}\bigg),
\end{align*}
where in the last step we used that $t_1\neq t_2$ and hence
$\bI[b=t_1]\bI[b=t_2]=0$. As in \eqref{eq:B-ave1} and \eqref{eq:B-ave2}, for $v \in \{1,2\}$ and $r \in \{i,j\}$
\begin{align*}
\Avg_{b,I}\Big[\bI[b=t_v]\Big] = \frac{1}{m} 
\quad \text{and} \quad
\Avg_{b,I}\Big[\bI[t_v \in I_r]\Big] \le \frac{n}{m}.
\end{align*}
Finally,
\begin{align*}
&\Avg_{b,I}\Big[\bI[t_1 \in I_r]\bI[t_2 \in I_r]\Big]
= \Avg_{I_r}\Big[\bI[t_1,t_2 \in I_r]\Big]  \\
~&=~ 1 - \Avg_{I_r}\Big[\bI[t_1,t_2 \notin I_r]
+ \bI[t_1 \in I_r, t_2 \notin I_r] + \bI[t_1 \notin I_r, t_2 \in I_r]\Big] \\
~&=~ 1-\frac{(m-2)^n + 2 ((m-1)^n - (m-2)^n)}{m^n} 
\frac{m^n -2(m-1)^n + (m-2)^n}{m^n}
\le \frac{2n(n-1)}{m^2}
\end{align*}
where the last step holds since
\begin{align*}
m^n -2(m-1)^n + (m-2)^n &= \int_{m-1}^m n u^{n-1} \text{d}u - \int_{m-2}^{m-1}n u^{n-1} \text{d}u \le n\left(m^{n-1} - (m-2)^{n-1}\right) \\
&= n(n-1) \int_{m-2}^m u^{n-2} \text{d}u
\le 2n(n-1)m^{n-2}.
\end{align*}
Therefore, since we can independently average over the different variables ($b,I_1,I_2$), we obtain
\begin{align*}
\Avg_{b,I}\Big[
Q_{\Delta}(\bz;I,b) - Q_{\Delta}(\bz';I,b)\Big] 
&\le \frac{\|(z_{i t_1},z_{j t_1}) - (z'_{i t_1},z'_{j t_1})\|}{\Delta}
\left(\frac{2}{m^2} + \frac{6n}{m^2} +\frac{4n(n-1)}{m^2}\right)
\end{align*}
and the same bound holds for 
$\Avg_{b,I}\Big[
Q_{\Delta}(\bz'';I,b) - Q_{\Delta}(\bar{\bz};I,b)\Big]$.
Substituting into \eqref{eq:D2B}, we obtain
\begin{align*}
D^{t_2}_{(z''_{i t_2},z''_{j t_2})}D^{t_1}_{(z'_{i t_2},z'_{j t_2})}B_\Delta(\bz_i,\bz_j)& \le   \frac{\|(z_{i t_1},z_{j t_1}) - (z'_{i t_1},z'_{j t_1})\|}{\Delta}
\left(\frac{2}{m^2} + \frac{6n}{m^2} +\frac{4n(n-1)}{m^2}\right).
\end{align*}
Pairing the $Q_\Delta$ functions the other way around, we get the same bound but with the norm
$\|(z_{i t_2},z_{j t_2}) - (z''_{i t_1},z''_{j t_1})\|$,
\begin{align*}
D^{t_2}_{(z''_{i t_2},z''_{j t_2})}D^{t_1}_{(z'_{i t_2},z'_{j t_2})}B_\Delta(\bz_i,\bz_j)& \le   \frac{\|(z_{i t_2},z_{j t_2}) - (z''_{i t_1},z''_{j t_1})\|}{\Delta}
\left(\frac{2}{m^2} + \frac{6n}{m^2} +\frac{4n(n-1)}{m^2}\right).
\end{align*}
Averaging the above two bounds and using that
\[
\|(z_{i t_1},z_{j t_1}) - (z'_{i t_1},z'_{j t_1})\|+ \|(z_{i t_2},z_{j t_2}) - (z''_{i t_1},z''_{j t_1})\| \le \sqrt{2}\|\bz - \bz''\|,
\]
by the definition of $J(B_\Delta)$ we get
\begin{equation}
\label{eq:Jbound2}
J(B_\Delta) ~\le~ \frac{4n^2 + 2n + 2}{\sqrt{2}\Delta m} \le \frac{(2n+1)^2}{\sqrt{2}\Delta m}.
\end{equation}

\paragraph{Bounding $\E_{\tS_i,\tS_j}[R(\cF^{\alpha}(\tS_i\cup \tS_j)))]$.} Similarly to \eqref{eq:RFalpha}, 
\begin{equation}\label{eq:rad_bound_SiSj}
\E_{\tS_i,\tS_j}[R(\cF^{\alpha}(\tS_i\cup \tS_j)))] ~\le~ 2\alpha B \sqrt{m(q\log(2)+\log(p))}
\end{equation}
as $f(\tS_i\cup \tS_j)$ is a collection of $2m$ $p$-dimensional vectors with norm at most $B$.

Substituting the bounds
\eqref{eq:Mbound2}, \eqref{eq:Kbound2}, \eqref{eq:Jbound2} and \eqref{eq:rad_bound_SiSj} into
\eqref{eq:marginboundB-middle}, we conclude that with probability at least $1-\delta'$ over the selection of $\tS_i$ and $\tS_j$, for any $f \in \cF'$ with $\sup_{f \in \cF'} \cC(f) \le \alpha$,
\begin{equation}
\begin{aligned}
\label{eq:marginboundB-middle2}
\ell_{\Delta}(f;\tP_i,\tP_j) ~&\le~ 4\left(1+\frac{n}{m-n}\right) \cdot \ell_{\Delta}(f;\tS_i,\tS_j) 
\\
&\qquad+ 
16 \alpha B\frac{\sqrt{m}(2n+4)^2}{(m-n)\Delta}
\sqrt{\pi(q\log(2)+\log(p))\log(2m)} \\
&\qquad+ 
32 \alpha B \frac{\sqrt{m}}{(m-n)\Delta}\sqrt{2 \log(\fr{1}{\delta'})}
\end{aligned}
\end{equation}

To finish the proof, we extend the above bound to all $\Delta>0$ and all $f \in \cF$ with $\alpha$ replaced with the complexity $\cC(f)$. We proceed exactly as in the proof of Lemma~\ref{prop:marginBoundA}: We apply \eqref{eq:marginboundB-middle2} to the function class $\cF'=\cF^\alpha := \{f \in \cF: \cC(f) \le \alpha\}$ with positive integers $\alpha$ and for an exponential grid for $\Delta$ defined as $\Delta_t = 2^{-t}B$ for any integer $t$ and $\delta'=\delta_{t,\alpha}$ where 
$\delta_{t,\alpha}:=\fr{\delta}{3\alpha(\alpha+1)\vert t\vert (\vert t \vert +1)}$ for all $t \neq 0$ and 
$\delta_{0,\alpha}:=\fr{\delta}{3\alpha(\alpha+1)}$. Note that by definition, 
$\sum_{t=-\infty}^\infty\sum_{\alpha=1}^\infty \delta_{t,\alpha} = \delta$. Therefore, by the union bound, with probability at least $1-\delta$ over the selection of $\tS_i$ and $\tS_j$, for any $f \in \cF^{\alpha}$,
\eqref{eq:marginboundB-middle2} holds for $\Delta=\Delta_t$ and $\delta'=\delta_{t,\alpha}$ simultaneously for all integers $t$ and positive integers $\alpha$.

The final bound of the lemma follows for any $\Delta>0$ and $f\in\cF$ by instantiating this bound with $\alpha=\lceil \cC(f) \rceil$ and $t$ such that $\Delta \le \Delta_t \le 2\Delta$ (which always exists by the definition of the grid $\{\Delta_t\}$), and then 
applying the bound $\ell_{\Delta}(f;\tP_i,\tP_j) \le \ell_{\Delta_t}(f;\tP_i,\tP_j) \le \ell_{2\Delta}(f;\tP_i,\tP_j)$ which follows from the monotonicity of $\ell_\Delta$, bounding $1/\Delta_t$ by $1/\Delta$, and bounding $\delta_{t,\alpha}$ using that $|t| \le |\log (\fr{\Delta}{B})| + 1$ (which again holds by the definition of $\Delta_t$).
\end{proof}

%%%%%%%%%%%%%%%%%%%%%%%%%%%%%%%%%%%%%%%%%%%%%%%%%%%%%%%%%%%%%%%
\section{Proofs of Lemmas~\ref{prop:cdnv_error}-\ref{prop:cdnv_errorG}}\label{sec:error_analysis}
%%%%%%%%%%%%%%%%%%%%%%%%%%%%%%%%%%%%%%%%%%%%%%%%%%%%%%%%%%%%%%%
\def\dij{\Lambda_{ij}}
\def\dijhat{\hat \Lambda_{ij}}

\cdnvError*

\begin{proof} For $c \in \{i,j\}$, let $\hS_c \sim Q^n_c$ and $x_c \sim Q_c$,
$u_c = f(x_c)$, $\mu_c = \mu_f(Q_c)$, $\hat\mu_c = \mu_f(\hS_c)$ and $\sigma^2_c = \Var(\hat u_c)$ and let $s$ be a variable that we can choose to be $p$ if $\{f \circ Q_i, f\circ Q_j\}$ are spherically symmetric distributions and $s=1$ otherwise. Let $\dij:=\|\mu_i-\mu_j\|$ denote the considered centers' true distance, which sets the overall scale,
and $\dijhat:=\|\hat\mu_i-\hat\mu_j\|$ its estimate. Let $\gamma_1,\gamma_2,\gamma_3,\gamma_4 \in (0,1)$ be constants that satisfy $\gamma_4 - 2\gamma_3 - 4\gamma_1 - 2\gamma^2_2/s > 0$ (to be selected by the end of the proof). Moreover, let $\alpha_{ij}:=\dij^2\gamma_1/\Delta$, where $\Delta\leq \dij\gamma_2/\sqrt{s}$. Since $\bI[r < \Delta]\leq \ell_{\Delta}(-r)$, we have
\begin{align*}
\ell_{\Delta}(f;Q_i,Q_j) ~&=~ \E_{\hS_i,\hS_j}\E_{x_i}[\ell_{\Delta}(\|u_i - \hat\mu_i \| \le \|u_i - \hat\mu_j \|)]  \\
~&\le~ \Pr\left[\|u_i - \hat\mu_j \| \le \|u_i - \hat\mu_i \| + \Delta\right].
\end{align*}

\begin{wrapfigure}{r}{0.4\textwidth}
\usetikzlibrary{decorations.pathreplacing}
\usetikzlibrary{decorations.pathreplacing}
\begin{tikzpicture}[scale=1.75,dot/.style={circle,inner sep=0.5pt,fill,label={#1},name=#1},
extended line/.style={shorten >=-#1,shorten <=-#1},
extended line/.default=1cm]
\pgfmathsetmacro{\a}{1}  
\pgfmathsetmacro{\b}{0.3}
\pgfmathsetmacro{\d}{1.4}
\pgfmathsetmacro{\D}{sqrt(\a*\a-2*\b*\d+\d*\d)-\a} 
% Axes
\draw[->] (-1.25,0) -- (1.85,0) coordinate (x axis);
\draw[->] (0,-1.85) -- (0,1.85) coordinate (y axis);
\node at (1.7,0.1) {$\hat{e}_{ij}$};
\node at (-0.2,1.7) {$\perp\!\!\hat{e}_{ij}$};
% Ball B
\draw[blue, fill=blue, fill opacity=0.1] (0,0) circle (\a);
\node at (-0.5*\a,-0.5*\a) {\color{blue}$B$};
\draw[blue,-] (0,0) -- (-0.71*\a,0.71*\a);
\node at (-0.5*\a,0.3*\a) {\color{blue}$\alpha_{ij}$};
% ErrU region
\draw[draw opacity=0,fill=red,fill opacity=0.1] plot[domain=-88.8:88.8] ({(\d*\d-\D*\D)/2/(\d*cos(\x)+\D)*cos(\x)},{(\d*\d-\D*\D)/2/(\d*cos(\x)+\D)*sin(\x)}) -- plot(1.8,1.8) -- plot(1.8,-1.8);
\draw[draw=red] plot[domain=-88.8:88.8] ({(\d*\d-\D*\D)/2/(\d*cos(\x)+\D)*cos(\x)},{(\d*\d-\D*\D)/2/(\d*cos(\x)+\D)*sin(\x)});
\node at (1,-1.3) {\color{red}ErrU};
% Points and Distances
\node[rotate=75] at (0.65,-0.36) {\footnotesize ${\color{blue}B}\!\cap\! {\color{red}\textnormal{ErrU}}$};
\draw[violet] (\b,-1.8) -- (\b,1.7) node[right] {\color{violet}$H$};
\draw[densely dotted] (0,0) node[dot,label=below left:$\hat{\mu}_i$]{} 
             -- (0.16,-0.7) node[dot,label=below:$\mu_i$]{} 
             -- (1.05,-0.6) node[dot,label=below:$\mu_j$]{} 
             -- (\d,0) node[dot,label=below:$\hat\mu_j$]{}
             -- (1.1,1.2) node[dot,label=above:$u_i$]{} -- (0,0);
\draw[decorate,decoration={brace,amplitude=7pt,mirror}] (\d,0) -- (0,0) node[pos=0.5,above=5pt]{$\dijhat$};
\draw[decorate,decoration={brace,amplitude=7pt,mirror}] (0,0) -- (\b,0) node[pos=0.5,below=5pt]{$\hat\beta_{ij}$};
\node[dot] at (\b,0) {};
\end{tikzpicture}
\caption{A geometric illustration of the various components in the proof of Lemma \ref{prop:cdnv_error}. The blue section denotes the ball $B$ of radius $\alpha_{ij}$ and the red section denotes the set $\textnormal{ErrU}$.}
\label{fig:illustration}
\vspace{-1em}
\end{wrapfigure}
The proof idea is graphically illustrated in Figure \ref{fig:illustration}: Consider the error event 
$\text{Err}:=[\|u_i-\hat\mu_j\| \le \|u_i-\hat\mu_i\| + \Delta]$. For $\Delta=0$, $\text{Err}$ is a $p$-dimensional half-space orthogonal to the unit vector $\hat e_{ij} := (\hat\mu_j-\hat\mu_i)/\dijhat$, and to determine its probability,
we can simply project all quantities onto the line $\mathbb{R}\cdot \hat e_{ij}$. This makes the problem one-dimensional and improves the bound by a factor of $p^{-1}$ in the spherically symmetric case. 

However, one complication is that $\hat e_{ij}$ itself is random, so in order to achieve a scale of $p^{-1}$, 
one has to carefully avoid using triangle inequalities involving $u_i$, which complicates the analysis, and also gives an extra $1/n$ term.

Another complicated arises from assuming $\Delta>0$: 
Consider the error set $\text{ErrU}(\hat\mu_i,\hat\mu_i):=\{u_i:\|u_i-\hat\mu_j\| \le \|u_i-\hat\mu_i\| + \Delta\}$.
Its boundary $\partial\text{ErrU}$ is now (one sheet of an) elliptic hyperboloid, and direct projection onto $\hat e_{ij}$ does not work anymore. If we confine $u_i$ to the hyperball $B:=\{u_i:\|u_i-\hat\mu_i\| \le \alpha_{ij}\}$, 
then $B$ cuts $\text{ErrU}$ into a finite moon-shaped set, and the projection method works again,
while the probability of $u_i\not\in B$ scales with $1/p$.

We derive a sequence of bounds (A)-(H) which we chain all together at the end.

\paragraph{(A)} 
As a first step we determine the halfspace (given $\hat \mu_i$ and $\hat \mu_j$) that covers the moon $B\cap\text{ErrU}$.
Assume that $\|u_i-\hat\mu_j\| \le \|u_i-\hat\mu_i\| + \Delta$ and $\|u_i-\hat\mu_i\| \le \alpha_{ij}$. Then,
\begin{align*}
\|u_i-\hat\mu_i\|^2 + 2\alpha_{ij} \Delta + \Delta^2 ~&\geq~ \|u_i-\hat\mu_i\|^2 + 2\|u_i-\hat\mu_i\| \cdot \Delta + \Delta^2 \\
~&=~ (\|u_i-\hat\mu_i\|+\Delta)^2\\
~&\geq~ \|u_i-\hat\mu_j\|^2 \\
~&=~ \|(u_i-\hat\mu_i) - (\hat\mu_j-\hat\mu_i)\|^2 \\
~&=~ \|u_i-\hat\mu_i\|^2 - 2(u_i-\hat\mu_i)^\top (\hat\mu_j-\hat\mu_i) + \dijhat^2.
\end{align*}
Therefore, we have $2 (u_i-\hat\mu_i)^\top (\hat\mu_j-\hat\mu_i) ~\geq~ \dijhat^2 - 2\alpha_{ij} \Delta - \Delta^2$, 
which can also be written as follows
\begin{align*}
(u_i-\hat\mu_i)^\top \hat e_{ij} ~\geq~ \hat\beta_{ij} ~:=~ \fr12\dijhat - \frac{2\alpha_{ij} \Delta + \Delta^2}{2\dijhat}.
\end{align*}
This is the halfspace $H$ that covers $B\cap\text{ErrU}$ most tightly.

\paragraph{(B)} 
We can use this to bound the probability of the event $\text{Err}$ in terms of $\alpha_{ij}$ and $\Delta$ by conditioning:
\begin{align*}
  \Pr[\text{Err}] ~&=~ \Pr[\|u_i-\hat\mu_j\| \le \|u_i-\hat\mu_i\| + \Delta] \\
  ~&=~ \Pr[\|u_i-\hat\mu_j\| \le \|u_i-\hat\mu_i\| + \Delta ~\land~ \|u_i-\hat\mu_i\| \le \alpha_{ij}] \\
  ~&~~~ + \Pr[\|u_i-\hat\mu_j\| \le \|u_i-\hat\mu_i\| + \Delta ~\land~ \|u_i-\hat\mu_i\| > \alpha_{ij}] \\
  ~&\smash{\stackrel{(A)}\le}~ \Pr[(u_i-\hat\mu_i)^\top \hat e_{ij} \geq \hat\beta_{ij}] ~+~ \Pr[\|u_i-\hat\mu_i\| > \alpha_{ij}] ~=:~ (C) ~+~ (F).
\end{align*}

\paragraph{(C)} 
Next we upper bound the term $\Pr[(u_i-\hat\mu_i)^\top \hat e_{ij} \geq \hat\beta_{ij}]$. Namely, 
\begin{align*}
~&\Pr[(u_i-\hat\mu_i)^\top \hat e_{ij} \geq \hat\beta_{ij}] \\
 &=~    \Pr[(u_i-\mu_i)^\top \hat e_{ij} + (\mu_i-\hat\mu_i)^\top \hat e_{ij} \geq \hat\beta_{ij}] \\
~&\le~ \Pr[(u_i-\mu_i)^\top \hat e_{ij} + (\mu_i-\hat\mu_i)^\top \hat e_{ij} \geq \gamma_3\dij ~\lor~ \hat\beta_{ij}\le \gamma_3\dij ] \\
~&\le~ \Pr[(u_i-\mu_i)^\top \hat e_{ij} + (\mu_i-\hat\mu_i)^\top \hat e_{ij} \geq \gamma_3\dij  ] 
        ~+~ \Pr[\hat\beta_{ij}\le \gamma_3\dij ] ~=:~ (D) ~+~ (H),
\end{align*}
for $\gamma_3 \in (0,1)$.

\paragraph{(D)}
By applying the union bound and use that a projection onto $\hat e_{ij}$ decreases the length of the vector $\mu_i-\hat\mu_i$, we get
\begin{align*}
(D) ~&=~ \Pr\left[ (u_i-\mu_i)^\top \hat e_{ij} + (\mu_i-\hat\mu_i)^\top \hat e_{ij} \geq \gamma_3\dij \right] \\
~&\le~ \Pr\left[ (u_i-\mu_i)^\top \hat e_{ij} \geq \gamma_3\dij /2 ~\lor~ (\mu_i-\hat\mu_i)^\top \hat e_{ij} \geq \gamma_3\dij/2 \right] \\
~&\le~ \Pr\left[(u_i-\mu_i)^\top \hat e_{ij} \geq \gamma_3\dij/2\right] + \Pr\left[ \|\mu_i-\hat\mu_i\| \geq \gamma_3\dij/2 \right] ~=:~ (E) ~+~ (G).
\end{align*}

\paragraph{(E)} Next we analyze each term separately. For a fixed pair $\hat\mu_i$ and $\hat\mu_j$, the variance of the random variable $(u_i-\mu_i)^\top \hat e_{ij}$ is at most $\Var(u_i)/s$, where we have exploited the projection in the spherically symmetric case. Therefore, by Markov's inequality,
\begin{align*}
\Pr\left[(u_i-\mu_i)^\top \hat e_{ij} \geq \dij\gamma_3/2\right] 
~&\le~ \E_{\hat\mu_i,\hat\mu_j}\left[\textstyle{\Pr_{u_i}}\left[(u_i-\mu_i)^\top \hat e_{ij} 
\geq \dij\gamma_3/2\right]\right]\\
~&\le~ \E_{\hat\mu_i,\hat\mu_j}\left[\frac{4\gamma_3^{-2}\Var(u_i)}{s \cdot \dij^2}\right] ~=~ \frac{4\Var(u_i)}{s \cdot \gamma^{2}_3\dij^2}~.
\end{align*}

\paragraph{(F)} Again, by Markov's inequality,
and using $\Delta \le \dij\gamma_2/\sqrt{s}$, hence $\alpha_{ij} = \dij^2\gamma_1/\Delta \geq \dij\sqrt{s}(\gamma_1/\gamma_2)$, we have
\begin{align*} 
\Pr\left[\|u_i-\hat\mu_i\| \geq \alpha_{ij}\right] ~&\le~ \Pr\left[\|\hat\mu_i-\mu_i\| \geq \alpha_{ij}/2 \right] + \Pr\left[\|u_i-\mu_i\| \geq \alpha_{ij}/2 \right] \\
~&\le~ \frac{4\Var(\hat\mu_i)}{\alpha^2_{ij}} + \frac{4\Var(u_i)}{\alpha^2_{ij}}\\
~&\le~ \frac{4\gamma^2_2\Var(\hat\mu_i)}{\gamma^2_1 \cdot s \cdot \dij^2} + \frac{4\gamma^2_2\Var(u_i)}{\gamma^2_1\cdot s \cdot \dij^2}~,
\end{align*}

\paragraph{(G)} and
\begin{equation*} 
\Pr\left[\|\hat\mu_i-\mu_i\| \geq \gamma_3\dij/2\right] \leq \frac{4\Var(\hat\mu_i)}{\gamma^2_3\dij^2} = \frac{4\Var(u_i)}{n\gamma^2_3\dij^2}~.
\end{equation*}

\paragraph{(H)} 
Next, by conditioning,
\begin{align*}
\Pr[\hat\beta_{ij}\le \gamma_3\dij] 
~&=~ \Pr[\hat\beta_{ij}\le \gamma_3\dij ~\land~ \dijhat \le \gamma_4\dij ]
+ \Pr[\hat\beta_{ij}\le \gamma_3\dij ~\land~ \dijhat > \gamma_4\dij ] \\
~&\le~ \Pr[\dijhat \le \gamma_4\dij]
+ \Pr[\hat\beta_{ij}\le \gamma_3\dij ~\land~ \dijhat > \gamma_4\dij ] \\
~&\le~ \Pr[\|\mu_j-\hat\mu_j\| + \|\mu_i-\hat\mu_i\| \geq (1-\gamma_4)\dij ]
+ \Pr[\hat\beta_{ij}\le \gamma_3\dij ~\land~ \dijhat > \gamma_4\dij ],
\end{align*}
where the last inequality follows from solving the triangle inequality
\begin{align*}
  \dij \leq \|\mu_i-\hat\mu_i\| + \dijhat + \|\hat\mu_j-\mu_j\|.
\end{align*}
We can bound the first term by
\begin{align*}
&\Pr[\|\mu_j-\hat\mu_j\| + \|\mu_i-\hat\mu_i\| \geq (1-\gamma_4)\dij]  \\
~&\le~ \Pr[\|\mu_j-\hat\mu_j\| \geq \gamma_4\dij/2] 
+ \Pr[\|\mu_i-\hat\mu_i\| \geq (1-\gamma_4)\dij/2] \\
~&\le~ \frac{4(\Var(\hat\mu_j)+\Var(\hat\mu_i))}{(1-\gamma_4)^2\dij^2}. 
\end{align*}
We notice that if $\hat\beta_{ij}\le \gamma_3\dij$ and $\dijhat>\gamma_4\dij$, then
\begin{align*}
\gamma_3\dij ~\geq~ \hat\beta_{ij} ~&>~ \frac{\gamma_4\dij}{2} - \frac{2\alpha_{ij} \Delta + \Delta^2}{\dij} \\
~&\geq~ \frac{\gamma_4\dij}{2} - \frac{2\gamma_1\dij^2 + \gamma^2_2 \dij^2/p}{\dij} \\
~&\geq~ \dij (\gamma_4/2 - 2\gamma_1 - \gamma^2_2/p),
\end{align*}
which is a contradiction to $\gamma_4 - 2\gamma_3 - 4\gamma_1 - 2\gamma^2_2/s > 0$ for appropriate choices of $\gamma_1,\dots,\gamma_4 \in (0,1)$. Hence 
\begin{align*}
\Pr[\hat\beta_{ij}\le \gamma_3\dij] ~&\le~ \frac{4(\Var(\hat\mu_j)+\Var(\hat\mu_i))}{(1-\gamma_4)^2\dij^2}
\end{align*}

\paragraph{Combining everything together} 
$(B)\le(C)+(F)\le(D)+(H)+(F)\le(G)+(E)+(F)+(H)$
and using $\Var(\hat\mu_i) = \Var(u_i)/n$ we finally obtain
\begin{align*}
\Pr[\text{Err}] ~&\le~ \frac{4\Var(u_i)}{n\gamma^2_3\dij^2} + \frac{4\Var(u_i)}{s \cdot \gamma^{2}_3\dij^2}
+ \frac{4\gamma^2_2\Var(u_i)}{n\gamma^2_1\cdot s \cdot \dij^2} + \frac{4\gamma^2_2\Var(u_i)}{\gamma^2_1\cdot s \cdot \dij^2} +\frac{4(\Var(u_j)+\Var(u_i))}{n(1-\gamma_4)^2\dij^2}.
\end{align*}
Finally, by choosing $\gamma_1=\gamma_2=0.1$, $\gamma_3=0.18$ and $\gamma_4=0.8$ (satisfying $\gamma_4 - 2\gamma_3 - 4\gamma_1 - 2\gamma^2_2/s > 0$), we obtain
\begin{align*}
\Pr[\text{Err}] ~&\le~ \frac{124\Var(u_i)}{n\dij^2} + \frac{124\Var(u_i)}{s \dij^2}
+ \frac{4\Var(u_i)}{n\dij^2} + \frac{4\Var(u_i)}{s \dij^2} +\frac{100(\Var(u_j)+\Var(u_i))}{n\dij^2} \\
~&=~ \frac{128\Var(u_i)}{n\dij^2} + \frac{128\Var(u_i)}{s \dij^2}
+\frac{100(\Var(u_j)+\Var(u_i))}{n\dij^2}. \\
\end{align*}
\end{proof} 

\cdnvErrorE*

\begin{proof} For $c \in \{i,j\}$, let $\hS_c \sim Q^n_c$ and $x_c \sim Q_c$, $u_c = f(x_c)$, $\mu_c = \mu_f(Q_c)$, $\hat\mu_c = \mu_f(\hS_c)$ and $\dij = \|\mu_i-\mu_j\|$. Since $\bI[r < \Delta]\leq \ell_{\Delta}(-r)$, we have
\begin{equation*}
\begin{aligned}
\ell_{\Delta}(f;Q_i,Q_j) = \E_{\hS_i,\hS_j}\E_{x_i}[\ell_{\Delta}(\|u_i - \hat\mu_i \| - \|u_i - \hat\mu_j \|)] \leq \Pr\left[\|u_i - \hat\mu_j \| \le \|u_i - \hat\mu_i \| + \Delta\right].
\end{aligned}
\end{equation*}
Next, by the triangle inequality and the union bound,
\begin{equation*}
\begin{aligned}
&\Pr\left[\|u_i - \hat\mu_j \| \le \|u_i - \hat\mu_i \| + \Delta\right] \\
~&\le~ \Pr\left[\|u_i-\mu_j\| \le \|u_i-\mu_i\| + \|\hat\mu_i-\mu_i\| + \|\hat\mu_j-\mu_j\| + \Delta\right] \\
~&\le~ \Pr\left[\|u_i-\mu_j\| \le \|u_i-\mu_i\| + 2\Delta ~\lor~ \|\hat\mu_i-\mu_i\| \ge 0.5\Delta ~\lor~ \|\hat\mu_j-\mu_j\| \ge 0.5\Delta\right]\\ 
~&\le~ \Pr\left[\|u_i-\mu_j\| \le \|u_i-\mu_i\| + 2\Delta\right] + \Pr\left[\|\hat\mu_i-\mu_i\| \ge 0.5\Delta\right] + \Pr\left[\|\hat\mu_j-\mu_j\| \ge 0.5\Delta\right] \\
~&=~ E_{2\Delta}(f;Q_i,Q_j) + \Pr\left[\|\hat\mu_i-\mu_i\| \ge 0.5\Delta\right] + \Pr\left[\|\hat\mu_j-\mu_j\| \ge 0.5\Delta\right].
\end{aligned}
\end{equation*}
By Markov's inequality,
\begin{equation*}
 \Pr\left[\|\hat\mu_i-\mu_i\| \ge 0.5\Delta\right] \leq \frac{4\Var(\hat\mu_i)}{\Delta^2} = \frac{4\Var(u_i)}{n^2\Delta^2} = \frac{\|\mu_i-\mu_j\|^2}{\Delta^2} \cdot \frac{4\Var(u_i)}{n^2\|\mu_i-\mu_j\|^2}~,
\end{equation*}
and similarly
\begin{equation*}
\begin{aligned}
\Pr\left[\|\hat\mu_j-\mu_j\| \ge 0.5\Delta\right] \leq \frac{\|\mu_i-\mu_j\|^2}{\Delta^2} \cdot \frac{4\Var(u_j)}{n^2\|\mu_i-\mu_j\|^2}~.
\end{aligned}
\end{equation*}
In particular, we obtain that
\begin{equation*}
\ell_{\Delta}(f;Q_i,Q_j) \leq E_{2\Delta}(f;Q_i,Q_j) +\fr{4\|\mu_i-\mu_j\|^2}{n^2 \Delta^2}\cdot (V_f(Q_i,Q_j)+V_f(Q_j,Q_i)).
\end{equation*}

Finally, we want to bound $E_{2\Delta}(f;Q_i,Q_j)$:
\paragraph{(A)} 
Assume that $\|u_i-\mu_j\| \le \|u_i-\mu_i\| + 2\Delta$ and $\|u_i-\mu_i\| \le \alpha_{ij}$. Then,
\begin{align*}
\|u_i-\mu_i\|^2 + 4\alpha_{ij} \Delta + 4\Delta^2 ~&\geq~ \|u_i-\mu_i\|^2 + 2\|u_i-\mu_i\| \cdot 2\Delta + 4\Delta^2 \\
~&=~ (\|u_i-\mu_i\|+2\Delta)^2\\
~&\geq~ \|u_i-\mu_j\|^2 \\
~&=~ \|(u_i-\mu_i) - (\mu_j-\mu_i)\|^2 \\
~&=~ \|u_i-\mu_i\|^2 - 2(u_i-\mu_i)^\top (\mu_j-\mu_i) + \dij^2.
\end{align*}
Therefore, we have $2 (u_i-\mu_i)^\top (\mu_j-\mu_i) ~\geq~ \dij^2 - 4\alpha_{ij} \Delta - 4\Delta^2$, 
which can also be written as follows
\begin{align*}
(u_i-\mu_i)^\top e_{ij} ~\geq~ \beta_{ij} ~:=~ \fr12\dij - \frac{2(\alpha_{ij} \Delta + \Delta^2)}{\dij},
\end{align*}
where $e_{ij} := (\mu_j-\mu_i)/\|\mu_j-\mu_i\|$.

\paragraph{(B)} 
We can use this to bound the probability of the event $\text{Err}$ in terms of $\alpha_{ij}$ and $\Delta$ by conditioning:
\begin{align*}
E_{2\Delta}(f;Q_i,Q_j) ~&=~ \Pr[\|u_i-\mu_j\| \le \|u_i-\mu_i\| + 2\Delta] \\
  ~&=~ \Pr[\|u_i-\mu_j\| \le \|u_i-\mu_i\| + 2\Delta ~\land~ \|u_i-\mu_i\| \le \alpha_{ij}] \\
  ~&\quad + \Pr[\|u_i-\mu_j\| \le \|u_i-\mu_i\| + 2\Delta ~\land~ \|u_i-\mu_i\| > \alpha_{ij}] \\
  ~&\smash{\stackrel{(A)}\le}~ \Pr[(u_i-\mu_i)^\top e_{ij} \geq \beta_{ij}] + \Pr[\|u_i-\mu_i\| > \alpha_{ij}] ~=:~ (C) ~+~ (D).
\end{align*}

\paragraph{(C,D)} 
Next we upper bound the terms $\Pr[(u_i-\mu_i)^\top e_{ij} \geq \beta_{ij}]$ and $\Pr[\|u_i-\mu_i\| > \alpha_{ij}]$. Namely, by Markov's inequality,
\begin{align*}
\Pr[(u_i-\mu_i)^\top e_{ij} \geq \beta_{ij}] \leq \frac{\Var(u_i)}{s \cdot \beta^2_{ij}}~,
\end{align*}
and
\begin{align*}
\Pr[\|u_i-\mu_i\| > \alpha_{ij}] \leq \frac{\Var(u_i)}{\alpha^2_{ij}}~.
\end{align*}
We choose $\alpha_{ij} = \fr{1}{8}\dij^2/\Delta$, denote $\gamma=\Delta/\dij$ and assume $\Delta \leq \dij/4$ ($\gamma\leq 1/4$). In particular, $\alpha_{ij}=\dij/(8\gamma)$ and $\beta_{ij}\geq (\fr{1}{4}-2\gamma^2)\dij \geq \dij/8$. Hence,
\begin{equation*}
\begin{aligned}
E_{2\Delta}(f;Q_i,Q_j) ~&\leq~ \frac{\Var(u_i)}{s \cdot (\fr{1}{4}-2\gamma^2)^2 \dij^2} + \frac{64\gamma^2\Var(u_i)}{\dij^2} \\
~&\leq~ \frac{64\Var(u_i)}{s \dij^2} + \frac{64\gamma^2\Var(u_i)}{\dij^2} \\
~&\leq~ 64(\fr{1}{s}+\gamma^2) \cdot V_f(Q_i,Q_j) \\
~&\leq~ 64(\fr{1}{s}+\fr{\Delta^2}{\dij^2}) \cdot V_f(Q_i,Q_j).
\end{aligned}
\end{equation*}
\paragraph{Combining everything together.} We obtain that
\begin{equation*}
\begin{aligned}
\ell_{\Delta}(f;Q_i,Q_j) ~&\le~ E_{2\Delta}(f;Q_i,Q_j) +\fr{4\|\mu_i-\mu_j\|^2}{n^2 \Delta^2}\cdot (V_f(Q_i,Q_j)+V_f(Q_j,Q_i)) \\
~&\le~ 64(\fr{1}{s}+\fr{\Delta^2}{\|\mu_i-\mu_j\|^2}) \cdot V_f(Q_i,Q_j) +\fr{4\|\mu_i-\mu_j\|^2}{n^2 \Delta^2}\cdot (V_f(Q_i,Q_j)+V_f(Q_j,Q_i)).
\end{aligned}
\end{equation*}
\end{proof}

\cdnvErrorG*

\begin{proof}
Let $\hS_c \sim Q^n_c$ and $x_c \sim Q_c$, $u_c = f(x_c)$, $\mu_c = \mu_f(Q_c)$, $\hat\mu_c = \mu_f(\hS_c)$ and $\sigma^2_c = \Var(u_c)$. Let $\dij:=\|\mu_i-\mu_j\|$ be the considered centers' distance, which sets the overall scale. By the triangle inequality,
\begin{equation}\label{eq:exp1}
\begin{aligned}
\Pr\left[\|u_i - \hat\mu_j \| \le \|u_i - \hat\mu_i \| + \Delta\right] ~&\le~ \Pr\left[\dij \le 2\|u_i-\mu_i\| + \|\hat\mu_i-\hat\mu_i\| + \|\hat\mu_j-\mu_j\| + \Delta\right] \\
~&\le~ \Pr\left[(\dij-\Delta)/4 \le \|u_i-\mu_i\|\right]\\ &\quad+ \Pr\left[(\dij-\Delta)/4 \le \|\hat\mu_i-\hat\mu_i\|\right] \\
&\quad+ \Pr\left[(\dij-\Delta)/4 \le \|\hat\mu_j-\mu_j\|\right].
\end{aligned}
\end{equation}
Since $u_c\sim\cN(\mu_c,\sigma_c^2\cdot \mathbb{I}/p)$ and $\hat\mu_c\sim\cN(\mu_c,\sigma_c^2\cdot \mathbb{I}/(n p))$, the random variables $E_i = \frac{\|u_i-\mu_i\|^2 p}{\sigma^2_i}$ and $U_i = \frac{\|\hat\mu_i-\mu_i\|^2 p}{n\sigma^2_i}$ are chi-squared random variables with $p$ components. We can write
\begin{small}
\begin{equation*}
\begin{aligned}
\Pr\left[\|u_i-\mu_i\|\geq \fr{\dij-\Delta}{4}\right] = \Pr\left[E_i \geq \fr{(\dij-\Delta)^2 p}{16\sigma^2_i} \right] \leq 1-F\left(\fr{(\dij-\Delta)^2p}{16\sigma^2_i},p\right) 
= 1-F\left(zp,p\right), 
\end{aligned}
\end{equation*}
\end{small}
where $F(x,p)$ is the CDF of the chi-squared distribution with $p$ components and $z=\left(\frac{\dij-\Delta}{4\sigma_c}\right)^2$. By Chernoff's bound, if $z > 1$, we have
\begin{equation}\label{eq:exp2}
1-F\left(zp,~ p\right) \leq \left(z \exp(1-z)\right)^{p/2} \leq \frac{\exp\left(-p/(32V^{ij}_f)\right)}{(\textnormal{e} \cdot V^{ij}_{f})^{p/2}} 
\end{equation}
Similarly, we have
\begin{equation}\label{eq:exp3}
\Pr\left[\|\hat\mu_i-\mu_i\|\geq \fr{\dij-\Delta}{4}\right] \leq \frac{\exp\left(-np/(32V^{ij}_f)\right)}{(\textnormal{e} \cdot V^{ij}_{f})^{p/2}} \leq \frac{\exp\left(-p/(32V^{ij}_f)\right)}{(\textnormal{e} \cdot V^{ij}_{f})^{p/2}}.
\end{equation}
Finally, by combining \eqref{eq:exp1}-\eqref{eq:exp3} we obtain the desired inequality.
\end{proof}

\newpage
%%%%%%%%%%%%%%%%%%%%%%%%%%%%%%%%%%%%%%%%%%%%%%%%%%%%%%%%%%%%%%%
\section{List of Notation}\label{app:Notation}
%%%%%%%%%%%%%%%%%%%%%%%%%%%%%%%%%%%%%%%%%%%%%%%%%%%%%%%%%%%%%%%

\begin{center}
\begin{tabular}{r c p{10cm} }
\toprule
$\cX$, $\mathcal{Y}_k$ & & instances and labels spaces \\
$\tP$, $P$ & & source and target distributions \\
$\tP_c$, $P_c$ & & source and target class-conditional distributions \\
$\bI$ & & indicator function \\
$L_{\tP}(h)$, $L_P(h)$ & & source and target test errors \\
$L_{\cD}(f)$ & & transfer risk \\
$c \in[l]|[k]$ & & class index \\
$\mathcal{E}$ & & set of class-conditional distributions \\
$\cD$ & & distribution over class-conditional distributions \\
$l \in \mathbb{N}$ & & number of classes in the train=source task \\
$k \in \mathbb{N}$ &  & number of classes in the test=target task \\
$m,n$ & & number of samples per source/target class \\
$p\in\mathbb{N}$ & & features dimension \\
$d \in \mathbb{N}$ & & input dimension \\
$q \in \mathbb{N}$ & & depth of a neural network \\
$f\in \cF$ & & feature map $\mathbb{R}^{d}\to \mathbb{R}^{p}$. Penultimate layer of a neural network \\
$\mathcal{C}(f)$ & & complexity of a neural network \\
$g \in \mathcal{G}$ & & mapping $\mathbb{R}^{p}\to \mathbb{R}^{k}$ from features to soft test labels \\
$\tilde{g} \in \tilde{\mathcal{G}}$ & & mapping $\mathbb{R}^{p}\to \mathbb{R}^{l}$ from features to soft train labels \\
$\lambda_n$ & & regularization parameter $\lambda_n = \alpha \sqrt{n}$\\
$\Delta>0$ & & margin \\
$B>0$ & & maximal norm of $x\in \mathcal{X}$ \\
$\ell_{\Delta}$ & & the soft-margin loss function \\
$S,S_c$ & & target training data and its subset consisting of all samples from class $c$ \\
$\tS, \tS_c$ & & source training data and its subset consisting of all samples from class $c$\\
$\mu_f(P)$ & & mean of $f(x)$ for $x\sim P$ \\
$\Var_f(P)$ & & variance of $f(x)$ for $x\sim P$ \\
$V_f(Q_1,Q_2)$ & & class-distance normalized variance (CDNV)\\
$\Lambda>0$ & & minimal distance between the embeddings means of two different source classes \\
$E_\Delta(f;Q_1,Q_2)$ & & the expected (one sided) $\Delta$-margin NCC classification error of $f$ \\
$\ell_\Delta(f;Q_1,Q_2)$ & & the expected (one sided) soft-margin NCC classification error of $f$ \\
$R(A)$ & & Rademacher complexity of $A\subset\R^d$ \\
$\Avg^{k}_{i=1}[a_i]$ & & average \\
$u(A)$ & & the projection of $u:B\to R$ over $A\subset B$ \\
$\cU(A)$ & & the set $\{u(A) : u \in \cU\}$ \\
\bottomrule
\end{tabular}
\end{center}
\label{tab:TableOfNotationForMyResearch}

\section*{Acknowledgements}

The authors would like to thank Ilja Kuzborskij, Razvan Pascanu, Tomaso Poggio, and Csaba Szepesv\'ari  for illuminating discussions during the preparation of this manuscript, and Miruna P\^islar for her priceless technical support. During parts of this work, Tomer Galanti was a Research Scientist Intern at Deepmind, and then was supported by the Center for Minds, Brains and Machines (CBMM), funded by NSF STC award CCF-1231216.

\vskip 0.2in
\bibliography{journal_version_arxiv}

\end{document}